\documentclass[11pt,oneside]{article}

\pdfoutput=1 

\usepackage{style_mjh_forarXiv}
\usepackage[numbers]{natbib}
\usepackage{amsmath}                                          
\usepackage{amsthm}                                           
\theoremstyle{definition} \newtheorem{defn}{Definition}       
\theoremstyle{plain}      
\theoremstyle{plain} \newtheorem{thm}[defn]{Theorem}          
\theoremstyle{plain} \newtheorem{lem}[defn]{Lemma}            
\theoremstyle{plain}         
\theoremstyle{remark} \newtheorem{rmk}[defn]{Remark}          
\theoremstyle{remark} \newtheorem{ex}[defn]{Example}          

\usepackage{enumitem}
\makeatletter
\def\namedlabel#1#2{\begingroup
    #2%
    \def\@currentlabel{#2}%
    \phantomsection\label{#1}\endgroup
}
\makeatother

\usepackage[pdfauthor={Matthew James Holland},%
pdftitle={holland2018rgdmult},%
colorlinks=true,%
linkcolor=blue,%
citecolor=blue,%
pdftex]{hyperref} %

\begin{document}

\title{\textbf{Robust descent using smoothed multiplicative noise}}
\author{
  Matthew J.~Holland\thanks{Please direct correspondence to \texttt{matthew-h@ids.osaka-u.ac.jp}.}\\
  Osaka University\\
  Yamada-oka 2-8, Suita, Osaka, Japan
}
\date{} 

\maketitle

\begin{abstract}
To improve the off-sample generalization of classical procedures minimizing the empirical risk under potentially heavy-tailed data, new robust learning algorithms have been proposed in recent years, with generalized median-of-means strategies being particularly salient. These procedures enjoy performance guarantees in the form of sharp risk bounds under weak moment assumptions on the underlying loss, but typically suffer from a large computational overhead and substantial bias when the data happens to be sub-Gaussian, limiting their utility. In this work, we propose a novel robust gradient descent procedure which makes use of a smoothed multiplicative noise applied directly to observations before constructing a sum of soft-truncated gradient coordinates. We show that the procedure has competitive theoretical guarantees, with the major advantage of a simple implementation that does not require an iterative sub-routine for robustification. Empirical tests reinforce the theory, showing more efficient generalization over a much wider class of data distributions.
\end{abstract}

\section{Introduction}\label{sec:intro}

The risk minimization model of learning is ubiquitous in machine learning, and it effectively captures the key facets of any effective learning algorithm: we must have reliable statistical inference procedures, and practical implementations of these procedures. Formulated using the expected loss, or risk $\risk(\ww) \defeq \exx\loss(\ww;\zz)$, induced by a loss $\loss$, where $\ww$ is the parameter (vector, function, set, etc.) to be learned, and expectation is taken with respect to $\zz$. In practice, all we are given is data $\zz_{1},\ldots,\zz_{n}$, and based on this the algorithm outputs some candidate $\wwhat$. If $\risk(\wwhat)$ is small with high confidence over the random sample, it provides some evidence for good generalization, subject to the assumptions placed on the underlying distribution. The statistical side is important because the risk $\risk$ is always unknown, and the implementation is important since the only $\wwhat$ we ever have in practice is one we can actually compute given finite data, time, and memory.

The vast majority of popular algorithms used today can be viewed as different implementations of empirical risk minimization (ERM), which admits any minimizer of $n^{-1}\sum_{i=1}^{n}\loss(\cdot;\zz_{i})$. From an algorithmic perspective, ERM is ambiguous; there are countless ways to implement the ERM procedure, and important work in recent years has highlighted the fact that a tremendous gap exists between the quality of good and bad ERM solutions \citep{feldman2016a}, for tasks as simple as multi-class pattern recognition \citep{daniely2014a}, let alone tasks with unbounded losses. Furthermore, even tried-and-true implementations such as ERM by gradient descent (ERM-GD) only have appealing guarantees when the data is distributed sharply around the mean in a sub-Gaussian sense, as demonstrated in important work by \citet{lin2016a}. These facts are important because ERM is ubiquitous in modern learning algorithms, and heavy-tailed data by no means exceptional \citep{finkenstadt2003Extreme}. Furthermore, these works suggest that procedures which have been designed to deal with finite samples of heavy-tailed data may be much more efficient than traditional ERM-based approaches, and indeed the theoretical promise of robust learning algorithms is being studied rigorously \citep{lecue2017a,lugosi2016a,lugosi2017a,lugosi2017b}.

\paragraph{Review of related work}

Here we review the technical literature most closely related to our work. The canonical benchmark to be compared against is ERM-GD, for which \citet{lin2016a} in pathbreaking work provide generalization guarantees under sub-Gaussian data. There are naturally two points of interest: (1) How do competing algorithms perform in settings when ERM is optimal? (2) What about robustness to settings in which ERM is sub-optimal? Many interesting robust learning algorithms have been studied in the past few years. One important procedure is from \citet{brownlees2015a}, based on fundamental results due to \citet{catoni2012a}. The basic idea is to minimize an M-estimator of the risk, namely
\begin{align*}
\wwhat & = \argmin_{\ww} \widehat{\loss}(\ww)\\
\widehat{\loss}(\ww) & = \argmin_{\theta \in \RR} \sum_{i=1}^{n} \rho\left(\loss(\ww;\zz_{i})-\theta\right).
\end{align*}
While the statistical guarantees are near-optimal under weak assumptions on the data, and the proxy loss $\widehat{\loss}$ can be computed accurately by an iterative procedure, its definition is implicit, and leads to rather significant computational roadblocks. Even if $\loss$ and $\risk$ and convex, the proxy loss need not be, and the non-linear optimization required by this method can be both unstable and costly in high dimensions.

Another important body of work looks at generalization of the classical ``median of means'' technique to higher dimensions. From \citet{minsker2015a} and \citet{hsu2016a}, the core idea is to partition the data into $k$ disjoint subsets $\DD_{1} \cup \cdots \cup \DD_{k} = \{1,2,\ldots,n\}$, obtain ERM solutions on each subset, and then robustly aggregate these solutions such that poor candidates are effectively ignored. For example, using the geometric median approach of aggregation, we have
\begin{align*}
\wwhat & = \argmin_{\ww} \sum_{m=1}^{k} \|\ww - \wwtil_{m}\|\\
\wwtil_{m} & = \argmin_{\ww} \sum_{i \in \DD_{m}} \loss(\ww;\zz_{i}), \quad m=1,\ldots,k.
\end{align*}
These robust aggregation methods can be implemented \citep{vardi2000a}, and have appealing formal properties. An application of this technique to construct a robust loss was very recently proposed by \citet{lecue2018a}. The main limitation of all these approaches is practical: when sample size $n$ is small relative to the number of parameters to be determined, very few subsets can be created, and significant error due to bias occurs; conversely, when $n$ is large enough to make many candidates, cheaper and less sophisticated methods often suffice. Furthermore, in the case of \citet{lecue2018a} where an expensive sub-routine must be run at every iteration, the computational overhead is substantial.

Also in the recent literature, interesting work has begun to appear looking at ``robust gradient descent'' algorithms, which is to say steepest descent procedures which utilize a robust estimate of the gradient vector of the risk \citep{chen2017a,chen2017b,prasad2018a}. The basic idea is as follows. Assuming partial derivatives exist, writing $\rgrad(\ww) \defeq (\partial_{1} \risk(\ww),\ldots,\partial_{d}\risk(\ww))$ for the risk gradient, we could iteratively solve this task by the following update:
\begin{align}\label{eqn:update_GD_ideal}
\wwstar_{(t+1)} = \wwstar_{(t)} - \alpha_{(t)} \, \rgrad(\wwstar_{(t)})
\end{align}
Naturally, this procedure is ideal, since the underlying distribution is never known in practice, meaning $\risk$ is always unknown. As such, we must approximate this objective function and optimize it with incomplete information. In taking a steepest descent approach, all that is required is an accurate approximation of $\rgrad$. Instead of first approximating $\risk$ and then using that approximation to infer $\rgrad$, computational resources are better spent approximating $\rgrad$ directly with some data-dependent $\rgest$ constructed using the loss gradients $\lgrad(\ww;\zz_{1}),\ldots,\lgrad(\ww;\zz_{n})$, and plugging this in to the iterative update, as
\begin{align}\label{eqn:update_GD_approx}
\wwhat_{(t+1)} = \wwhat_{(t)} - \alpha_{(t)} \, \rgest(\wwhat_{(t)}).
\end{align}
Once again here, the median-of-means idea pops up in the literature, with \citet{prasad2018a} using a robust aggregation of empirical mean estimates of the gradient. That is, after partitioning the data into $k$ subsets as before, the estimate vector $\rgest$ is constructed as
\begin{align*}
\rgest(\ww) & = \argmin_{\uu} \sum_{m=1}^{k} \|\uu - \widetilde{\rgrad}_{m}(\ww)\|\\
\widetilde{\rgrad}_{m}(\ww) & = \frac{1}{|\DD_{m}|} \sum_{i \in \DD_{m}} \lgrad(\ww;\zz_{i}), \quad m=1,\ldots,k.
\end{align*}
and substituted within the gradient update (\ref{eqn:update_GD_approx}). While conceptually a very appealing new proposition, there are a number of issues to be overcome. Their theoretical guarantees have statistical error terms which depend only on $d$ rather than $d^{2}$, where $d$ is the dimension of the gradient, but the same error terms also grow with the number of iterations, leading to very slow error rates when the number of iterations grows with $n$, as is required for $\varepsilon$-good performance (see Remark \ref{rmk:compare_rgd}). Furthermore, computing $\rgest$ via a geometric median sub-routine introduces the exact same overhead and bias issues as the procedures of \citet{hsu2016a} just discussed, only that this time these costs are incurred at \textit{each step} of the gradient descent procedure, and thus these costs and errors accumulate, and can propagate over time. Iterative approximations at each update take time and are typically distribution-dependent, while fast approximations leave a major gap between the estimators studied in theory and those used in practice.

\paragraph{Our contributions}

To address the limitations of both ERM-GD and the robust alternatives discussed above, we take an approach that allows us to obtain a robust gradient estimate directly, removing the need for iterative approximations, without losing the theoretical guarantees. In this paper, we provide both theoretical and empirical evidence that using the proposed procedure, paying a small price in terms of bias and computational overhead is worth it when done correctly, leading to a large payout in terms of distributional robustness. Key contributions are as follows:

\begin{itemize}
\item A practical learning algorithm which can closely mimic ERM-GD when ERM is optimal, but which performs far better under heavy-tailed data when ERM deteriorates.

\item Finite-sample risk bounds that hold with high probability for the proposed procedure, under weak moment assumptions on the distribution of the loss gradient.

\item We demonstrate the ease of use and flexibility of our procedure in a series of experiments, testing performance using both controlled simulations and real-world datasets, and compared with numerous standard competitors.
\end{itemize}

\paragraph{Content overview}

Key concepts and basic technical ideas underlying the proposed algorithm are introduced in section \ref{sec:algorithm}. We fill in the details and provide performance guarantees via theoretical analysis in section \ref{sec:theory}, and reinforce our formal argument with a variety of empirical tests in section \ref{sec:tests}. Finally, concluding remarks and a look ahead are given in section \ref{sec:conclusion}.

\begin{figure}[t]
\centering
\includegraphics[width=1.0\textwidth]{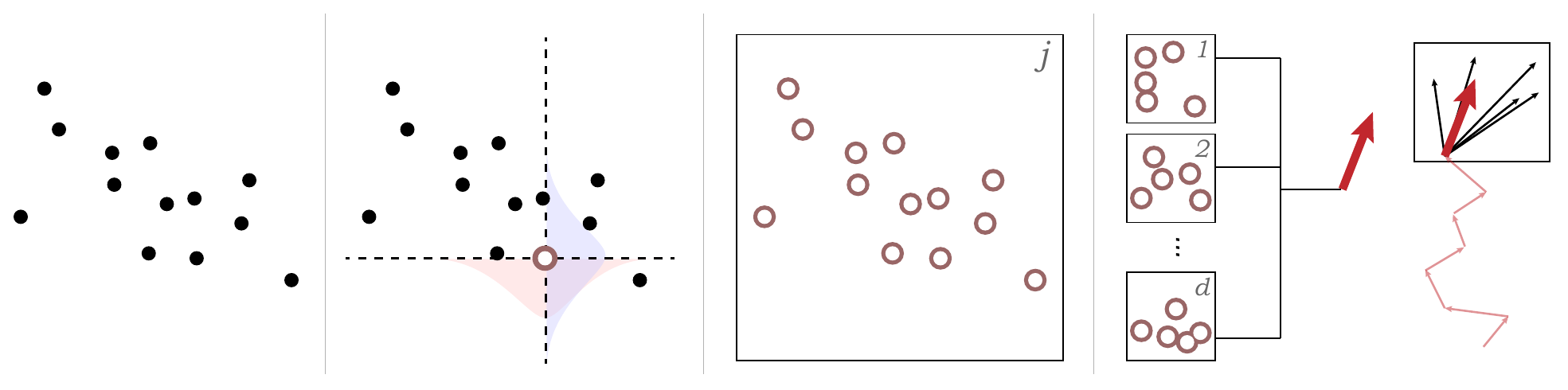}
\caption{Illustration of the key elements of our proposed algorithm. From the far left, points represent loss gradient coordinates evaluated at different observations in our sample. To each point, we consider multiplication by Gaussian noise centered at 1. This noise is smoothed out by integration over the noise distribution, and applied to each gradient coordinate to generate a robust update direction.}
\label{fig:algo_illustration}
\end{figure}

\section{Overview of proposed algorithm}\label{sec:algorithm}

Our proposed procedure can be derived in a few simple steps. Let us begin with a one-dimensional example, in which for random variable $x$ we try to estimate $\exx x$ based on sample $x_{1},\ldots,x_{n}$. Our problem of interest is the setting in which the underlying distribution may be heavy-tailed, but it also may not be, and this information is not available to the learner \textit{a priori}.

\paragraph{Scaling and truncation} The first step involves a very primitive technique for ensuring the bias is small under well-behaved data, all while constraining the impact of outlying points. We re-scale, apply a soft truncation $\psi$, and then put the truncated arithmetic mean back in the original scale, namely
\begin{align*}
\frac{s}{n} \sum_{i=1}^{n} \psi\left(\frac{x_{i}}{s}\right) \approx \exx x.
\end{align*}
Here $\psi$ should have the symmetry of an odd function ($\psi(-u)=-\psi(u)$), be non-decreasing on $\RR$, with a slope of $\psi^{\prime}(u) \to 1$ as $u \to 0$, and be bounded on $\RR$. A simple example is the hyperbolic tangent function, $\tanh(u)$, but we shall consider other examples shortly. If the scale $s>0$ is set such that $|x_{i}|/s$ is near zero for all but errant observations, the impact of the non-deviant terms to the arithmetic mean will be approximately equal, while the deviant points will have a disproportionately small impact.

\paragraph{Noise multiplication} The second step involves applying multiplicative noise, albeit the purpose is rather unique. Let $\epsilon_{1},\ldots,\epsilon_{n}$ be our independent random noise, generated from a common distribution $\epsilon \sim \prior$ with $\exx \epsilon = 0$. We multiply each datum by $1+\epsilon$, and then pass each modified datum $x_{i}(1+\epsilon_{i}) = x_{i} + x_{i}\epsilon_{i}$ through the truncation function as above, yielding
\begin{align*}
\widetilde{x}(\mv{\epsilon}) = \frac{s}{n} \sum_{i=1}^{n} \psi\left( \frac{x_{i}+\epsilon_{i}x_{i}}{s} \right).
\end{align*}
Multiplicative noise has received much attention in recent years in the machine learning literature, in particular with ``dropout'' in deep neural networks via Bernoulli random variables \citep{srivastava2014a}, and more recent investigations using Gaussian multiplicative noise \citep{nalisnick2015a}. In using multiplicative noise with mean $1$, the basic idea is as follows. For typical points, an increase or decrease of a certain small fraction should not change the estimator output much. On the other hand, for wildly deviant points, a push further in the wrong direction is likely to be harmless due to $\psi$, while a push in the right direction could earn an additional valid point for the estimator.

\paragraph{Noise smoothing} In the third and final step, we smooth out the multiplicative noise by taking the expectation of this estimator with respect to the noise distribution. This smoothed version of the estimator, still a random variable dependent on the original sample, is the final estimator of interest, defined
\begin{align}\label{eqn:estimator_defn}
\widehat{x} \defeq \exx \widetilde{x}(\mv{\epsilon}) = \frac{s}{n} \sum_{i=1}^{n} \int \psi\left( \frac{x_{i}+\epsilon_{i}x_{i}}{s} \right) \, d\prior(\epsilon_{i}).
\end{align}
Computationally, in order to obtain $\widehat{x}$ to approximate $\exx x$, we will not actually have to generate the $\epsilon_{i}$ and multiply the $x_{i}$ by $(1+\epsilon_{i})$, but instead will have to evaluate the integral.

\paragraph{Computational matters} Before we move to the high-dimensional setting of interest, how can we actually compute this $\widehat{x}$? Numerical integration is not appealing as the overhead will be too much for a sub-routine to be repeated many times. Naturally, the computational approach will depend on the noise distribution $\prior$, and the truncation function $\psi$. Using recent results in the statistical literature from \citet{catoni2017a}, if we set the truncation function to be
\begin{align}\label{eqn:influence_cat17}
\psi(u) \defeq
\begin{cases}
u - u^{3}/6, & -\sqrt{2} \leq u \leq \sqrt{2}\\
2\sqrt{2}/3, & u > \sqrt{2}\\
-2\sqrt{2}/3, & u < -\sqrt{2}
\end{cases}
\end{align}
and set the noise distribution to be $\prior = N(0,1/\beta)$, then the integral of interest can be given in an explicit form that is simple to compute, requiring no numerical integration or approximation. Written in a general form with shift parameter $a \in \RR$ and scale parameter $b > 0$, we can express the integral as
\begin{align}\label{eqn:compute_integral}
\exx_{\prior} \psi\left( a + b \sqrt{\beta} \epsilon \right) = a\left(a - \frac{b^{2}}{2}\right) - \frac{a^{3}}{6} + \corr(a,b)
\end{align}
where $\corr(a,b)$ is a correction term that is complicated to write, but extremely simple to implement (see Appendix \ref{sec:tech_computation} for exact form).

\paragraph{Proposed learning algorithm} Let us now return to the high-dimensional setting of interest. At any candidate $\ww$, we can evaluate the $\loss(\ww;\zz_{i})$ and $\lgrad(\ww;\zz_{i})$ for all points $i = 1,\ldots,n$. The heart of our proposal: apply the sub-routine specified in (\ref{eqn:estimator_defn}) to each coordinate of the loss gradients, which can be computed directly using (\ref{eqn:compute_integral}), and plug the resulting ``robust gradient estimate'' into the usual first-order update (\ref{eqn:update_GD_approx}). Pseudocode for the proposed procedure is provided in Algorithm \ref{algo:rgdmult}. All operations on vectors in the pseudo-code are element-wise, e.g., $\uu^{2} = (u_{1}^{2},\ldots,u_{d}^{2})$, $|\uu|=(|u_{1}|,\ldots,|u_{d}|)$, $\uu/\vv = (u_{1}/v_{1},\ldots,u_{d}/v_{d})$, and so forth. For readability, we abbreviate $\lgrad_{i}(\ww) \defeq \lgrad(\ww;\zz_{i})$.
\begin{algorithm}
\caption{Outline of robust gradient descent learning algorithm}
\label{algo:rgdmult}
\begin{algorithmic}
\State \textbf{inputs:} $\wwhat_{(0)}$, $T>0$
\For{$t = 0,1,\ldots,T-1$} 
  \medskip
  \State \textbf{Scale set to minimize error bound, via Remark \ref{rmk:scaling_method} and Lemma \ref{lem:uniform_grad_accuracy}.}
  \medskip
  \State $\displaystyle \mv{s}_{(t)} = \sqrt{n \mv{v}_{(t)}/2\log(\delta^{-1})}, \text{ where } \mv{v}_{(t)} \geq \exx_{\ddist} \lgrad_{i}(\wwhat_{(t)})^{2}$
  \medskip
  \State \textbf{Corrected gradient estimate, via (\ref{eqn:estimator_defn}) and (\ref{eqn:compute_integral}):}
  \medskip
  \State $\displaystyle \rgest_{(t)} = \frac{1}{n} \sum_{i=1}^{n} \left(\lgrad_{i}(\wwhat_{(t)})\left(1 - \frac{\lgrad_{i}(\wwhat_{(t)})^{2}}{2\mv{s}_{(t)}^{2}\beta}\right) - \frac{\lgrad_{i}(\wwhat_{(t)})^{3}}{6\mv{s}_{(t)}^{2}}\right) + \frac{1}{n}\sum_{i=1}^{n}C\left(\frac{\lgrad_{i}(\wwhat_{(t)})}{\mv{s}_{(t)}}, \frac{|\lgrad_{i}(\wwhat_{(t)})|}{\mv{s}_{(t)}\sqrt{\beta}} \right)$
  \medskip
  \State \textbf{Plug in to gradient-based update (\ref{eqn:update_GD_approx}).}
  \medskip
  \State $\displaystyle \wwhat_{(t+1)} = \wwhat_{(t)} - \alpha_{(t)} \, \rgest_{(t)}(\wwhat_{(t)})$
  \medskip
\EndFor
\State \textbf{return:} $\wwhat_{(T)}$
\end{algorithmic}
\end{algorithm}

As a simple example of the guarantees that are available for this procedure, assuming just finite variance of the gradients, and setting $\alpha_{(t)} = \alpha$ for simplicity, we have that
\begin{align*}
\risk(\wwhat_{(T)}) - \risk^{\ast} \leq O\left(\frac{d(\log(d\delta^{-1})+d\log(\diameter{}n))}{n}\right) + O\left((1-\alpha\SCfactor)^{T}\right)
\end{align*}
with probability at least $1-\delta$ over the random draw of the sample, where $d$ is the dimension of the space the gradient lives in, $\diameter$ is the diameter of $\WW$, and the constant $\SCfactor$ depends only on $\risk(\cdot)$. Theoretically, these results are competitive with existing state of the art methods cited in the previous section, but with the computational benefits of zero computational error, direct computability, and the fact that per-step computation time is independent of the underlying distribution.

\section{Theoretical analysis}\label{sec:theory}

In this section, we carry out some formal analysis of the generalization performance of Algorithm \ref{algo:rgdmult}. More concretely, we provide guarantees in the form of high-probability upper bounds on the excess risk achieved by the proposed procedure, given a finite sample of $n$ observations, and finite budget of $T$ iterations. We start with a general sketch, followed by a precise description of key conditions, representative results, and subsequent discussion. All detailed proofs are given in Appendix \ref{sec:tech_proofs}.

\paragraph{Sketch of the argument}

Our approach can be broken down into three straightforward steps:

\begin{enumerate}
\item Obtain pointwise error bounds for $\rgest(\ww) \approx \rgrad(\ww)$.
\item Extend Step 1 to obtain error bounds uniform in $\ww \in \WW$.
\item Control distance of $\wwhat_{(t)}$ from minimizer at each step.
\end{enumerate}

For the first step, we can leverage new technical results from \citet{catoni2017a} to obtain strong guarantees for a novel estimator of the risk gradient, evaluated at an arbitrary, albeit pre-fixed, parameter $\ww \in \WW$. With this result established, since Algorithm \ref{algo:rgdmult} updates iteratively, and the sequence of parameters $(\wwhat_{(t)})$ is data dependent, bounds which hold for all possible contingencies are required. Since $\WW$ has an infinite number of elements, naive union bounds are useless. However, a rather typical covering number argument offers an appealing solution. As long as a finite number of balls of radius $\varepsilon$ can cover $\WW$, then we can discretize the space: every $\ww$ is $\varepsilon$-close to at least one ball, and by the previous step we have strong pointwise guarantees that we can apply to each of the ball centers. Finally, while in practice any learning meachine has no choice but to use the approximate update (\ref{eqn:update_GD_approx}), when the risk function is convex, we can show that the deviation from the ideal procedure (\ref{eqn:update_GD_ideal}) can be tightly controlled. Indeed, under strong convexity, the distance between $\wwhat_{(t)}$ and a minimizer of $\risk(\cdot)$ can be controlled by a sharp statistical error term, and optimization error equivalent to running the ideal gradient descent procedure (\ref{eqn:update_GD_ideal}).

\paragraph{Notation}
Here we organize the key notation used in the remainder of our theoretical analysis and associated proofs (some are re-statements of definitions above). The observable loss is $\loss:\WW \times \ZZ \to \RR$, where $\WW$ is the model from which the learning machine can select parameters $\ww \in \WW$, and $\ZZ$ is the space housing the data sample, $\zz_{1},\ldots,\zz_{n}$. The data distribution is denoted $\zz \sim \ddist$, and the noise distribution featured in our algorithm is $\epsilon \sim \prior$. The risk to be minimized is $\risk(\ww) \defeq \exx_{\ddist}l(\ww;\zz)$. The risk and loss gradients are respectively $\rgrad$ and $\lgrad$. Estimates of $\rgrad$ based on observations of $\lgrad$ are denoted $\rgest$. We frequently use $\prr$ to denote a generic probability measure, typically the product measure induced by the sample, which should be clear from the context. Unless specified otherwise, $\|\cdot\|$ shall denote the usual $\ell_{2}$ norm on Euclidean space. For integer $k > 0$, write $[k] \defeq \{1,\ldots,k\}$.

\paragraph{Assumptions with examples}

No algorithm can achieve arbitrarily good performance across all possible distributions \citep{shalev2014a}. In order to obtain meaningful theoretical results, we must place conditions on the underlying distribution, as well as the model and objective functions used. We give concrete examples to illustrate that these assumptions are reasonable, and that they include scenarios that allow for both sub-Gaussian and heavy-tailed data.

\begin{enumerate}
\item[\namedlabel{asmp:model_compact}{A0}.] $\WW$ is a closed, convex subset of $\RR^{d}$, with diameter $\Delta \defeq \sup\{\|\uu-\vv\|: \uu,\vv \in \WW\} < \infty$.

\item[\namedlabel{asmp:loss_smooth}{A1}.] Loss function $\loss(\,\cdot\,;\zz)$ is $\paraSmooth$-smooth on $\WW$.

\item[\namedlabel{asmp:risk_smooth}{A2}.] $\risk(\cdot)$ is $\paraSmooth$-smooth, and continuously differentiable on $\WW$.

\item[\namedlabel{asmp:risk_flat_min}{A3}.] There exists $\wwstar \in \WW$ at which $\rgrad(\wwstar)=0$.

\item[\namedlabel{asmp:risk_strong_convex}{A4}.] $\risk(\cdot)$ is $\paraSC$-strongly convex on $\WW$.

\item[\namedlabel{asmp:variance_bound}{A5}.] There exists $v < \infty$ such that $\exx_{\ddist}(\lgsub_{j}(\ww;\zz))^{2} \leq v$, for all $\ww \in \WW$, $j \in [d]$.
\end{enumerate}

Of these assumptions, assuredly \ref{asmp:model_compact} is simplest: any ball (here in the $\ell_{2}$ norm) with finite radius will suffice, though far more exotic examples are assuredly possible. The remaining assumptions require some checking, but hold under very weak assumptions on the underlying distribution, as the following examples show.

\begin{ex}[Concrete example of assumption \ref{asmp:loss_smooth}]\label{ex:loss_smooth}
Consider the linear regression model $y = \langle \wwstar, \xx \rangle + \eta$, where $\xx$ is almost surely bounded (say $\prr\{\|\xx\| \leq c \} = 1$), but the noise $\eta$ can have any distribution we desire. Consider the squared loss $\loss(\ww;\zz) = (\langle \ww, \xx \rangle - y)^{2}$, and observe that for any $\ww, \ww^{\prime} \in \WW$, we have
\begin{align*}
\lgrad(\ww;\zz) - \lgrad(\ww^{\prime};\zz) = 2(\langle \ww - \ww^{\prime}, \xx \rangle)\xx
\end{align*}
and thus
\begin{align*}
\|\lgrad(\ww;\zz) - \lgrad(\ww^{\prime};\zz)\| \leq 2\|\xx\|^{2}\|\ww-\ww^{\prime}\| \leq 2c^{2}\|\ww-\ww^{\prime}\|.
\end{align*}
Thus we have smoothness with $\paraSmooth = 2c^{2}$, satisfying \ref{asmp:loss_smooth}.
\end{ex}

\begin{ex}[Concrete example of assumptions \ref{asmp:risk_smooth} and \ref{asmp:risk_flat_min}]\label{ex:risk_smooth}
Consider a similar setup as in Example \ref{ex:loss_smooth}, but instead of requiring $\xx$ to be bounded, weaken the assumption to $\exx\|\xx\|^{2} < \infty$. Since taking the derivative under the integral we have $\rgrad(\ww) = \exx \lgrad(\ww;\zz) = 2(\langle \ww-\wwstar, \xx \rangle-\eta)\xx$. Clearly, $\rgrad(\wwstar)=0$, satisfying \ref{asmp:risk_flat_min}. Furthermore, it follows that
\begin{align*}
\rgrad(\ww)-\rgrad(\ww^{\prime}) & = \exx\left(\lgrad(\ww;\zz) -  \lgrad(\ww^{\prime};\zz)\right)\\
& = 2 \exx (\langle \ww-\ww^{\prime}, \xx \rangle) \xx.
\end{align*}
We thus have
\begin{align*}
\|\rgrad(\ww)-\rgrad(\ww^{\prime})\| \leq 2\exx\|\xx\|^{2}\|\ww-\ww^{\prime}\| \leq 2c^{2}\|\ww-\ww^{\prime}\|,
\end{align*}
meaning smoothness of the risk holds with $\paraSmooth = 2\exx\|\xx\|^{2}$, satisfying \ref{asmp:risk_smooth}.
\end{ex}

\begin{ex}[Concrete example of assumption \ref{asmp:variance_bound}]\label{ex:variance_bound}
Again consider a setting similar to Examples \ref{ex:loss_smooth}--\ref{ex:risk_smooth}, but with added assumptions that $\exx \xx = 0$, that the noise $\eta$ and input $\xx$ are independent, and that the components of $\xx = (x_{1},\ldots,x_{d})$ are independent of each other. Some straightforward algebra shows that
\begin{align*}
\exx(\lgsub_{j}(\ww;\zz))^{2} & = 4\left( \exx x_{j}^{2} \langle \ww-\wwstar, \xx \rangle^{2} + \exx\eta^{2}\exx x_{j}^{2} \right)\\
& \leq 4\left( \|\ww-\wwstar\|^{2}\exx{}x_{j}^{2}\|\xx\|^{2} + \exx\eta^{2}\exx x_{j}^{2} \right).
\end{align*}
It follows that as long as the noise $\eta$ has finite variance ($\exx\eta^{2} < \infty$), and all inputs have finite fourth moments $\exx x_{j}^{4} < \infty$, then using assumption \ref{asmp:model_compact}, we get 
\begin{align*}
\exx(\lgsub_{j}(\ww;\zz))^{2} \leq 4\left( \diameter^{2}\exx{}x_{j}^{2}\|\xx\|^{2} + \exx\eta^{2}\exx x_{j}^{2} \right) < \infty.
\end{align*}
This holds for all $\ww \in \WW$, satisfying \ref{asmp:variance_bound}.
\end{ex}

\begin{ex}[Concrete example of assumption \ref{asmp:risk_strong_convex}]\label{ex:risk_strong_convex}
Consider the same setup as Example \ref{ex:variance_bound}. Since $\exx x_{j}\eta = (\exx x_{j})(\exx\eta) = 0$ for each $j \in [d]$, it follows that the risk induced by the squared loss under this model takes a convenient quadratic form,
\begin{align*}
\risk(\ww) = \exx \loss(\ww;\zz) = (\ww-\wwstar)^{T}A(\ww-\wwstar) + b^{2},
\end{align*}
with $A = \exx\xx\xx^{T}$ and $b^{2}=\exx\eta^{2}$. For concreteness, say all the components of $\xx$ have variance $\exx x_{j}^{2} = \sigma^{2}$, recalling that $\exx x_{j}=0$ by assumption. Then the Hessian matrix of $\risk(\cdot)$ is $\risk^{\prime\prime}(\ww)=\exx\xx\xx^{T}=\sigma^{2}I_{d}$, for all $\ww \in \WW$. For any $\ww, \ww^{\prime} \in \WW$, taking an exact Taylor expansion, we have that
\begin{align*}
\risk(\ww) = \risk(\ww^{\prime}) + \langle \rgrad(\ww), \ww-\ww^{\prime} \rangle + \frac{1}{2} \langle \ww-\ww^{\prime}, \risk^{\prime\prime}(\uu)(\ww-\ww^{\prime}) \rangle
\end{align*}
for some appropriate $\uu$ on the line segment between $\ww$ and $\ww^{\prime}$. Since the Hessian is positive definite with factor $\sigma^{2}$, the last term on the right-hand side can be no smaller than $\|\ww-\ww^{\prime}\|^{2}\sigma^{2}/2$. This implies a lower bound,
\begin{align*}
\risk(\ww) \geq \risk(\ww^{\prime}) + \langle \rgrad(\ww), \ww-\ww^{\prime} \rangle + \frac{\sigma^{2}}{2}\|\ww-\ww^{\prime}\|^{2}.
\end{align*}
The exact same inequality holds for any choice of $\ww$ and $\ww^{\prime}$. This is precisely the definition of strong convexity of $\risk(\cdot)$ given in (\ref{eqn:strong_convex_defn}), with convexity parameter $\paraSC = \sigma^{2}$, satisfying \ref{asmp:risk_strong_convex}.
\end{ex}

In the analysis that follows, \ref{asmp:model_compact}--\ref{asmp:variance_bound} are assumed to hold.

\paragraph{Analysis of Algorithm \ref{algo:rgdmult} with discussion}

Here we consider the learning performance of the proposed procedure given by Algorithm \ref{algo:rgdmult}. Almost every step of the procedure is given explicitly, save for the means of setting the moment bound $\mv{v}_{(t)}$ (see section \ref{sec:tests}), and the step size setting of $\alpha_{(t)}$. In the subsequent analysis of section \ref{sec:theory}, we shall specify exact settings of $\alpha_{(t)}$, and show how these settings impact the final guarantees that can be made.

We begin with a general fact that shows the sub-routine used to estimate each element of the risk gradient has sharp guarantees under weak assumptions.
\begin{lem}[Pointwise accuracy]\label{lem:pointwise_accuracy}
Consider data $x_{1},\ldots,x_{n}$, with distribution $x \sim \ddist$. Assume finite second moments, and a known upper bound $\exx_{\ddist}x^{2} \leq v < \infty$. With probability no less than $1-\delta$, the estimator $\widehat{x}$ defined in (\ref{eqn:estimator_defn}), scaled using $s = \sqrt{nv/(2\log(\delta^{-1}))}$, satisfies
\begin{align*}
|\widehat{x} - \exx_{\ddist}x| \leq \sqrt{\frac{2v\log(\delta^{-1})}{n}} + \sqrt{\frac{v}{n}}.
\end{align*}
\end{lem}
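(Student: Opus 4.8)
The plan is to exploit the explicit form of the estimator $\widehat{x}$ together with the soft-truncation $\psi$ in \eqref{eqn:influence_cat17}, and follow the now-standard Catoni-style argument: bound the deviation $|\widehat{x} - \exx x|$ via an exponential Chernoff/Markov bound applied to the smoothed, truncated sum. First I would set $\widehat{x} = (s/n)\sum_{i} \exx_{\prior} \psi((x_i + \epsilon_i x_i)/s)$ and note that, because $\psi$ is bounded and smooth, for each $i$ the quantity $g(x_i) \defeq s\,\exx_{\prior}\psi(x_i(1+\epsilon)/s)$ is a bounded, measurable function of $x_i$ whose expectation $\exx g(x)$ is close to $\exx x$. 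The key algebraic identities are: (i) a \emph{lower bound} $\log\exx \exp(\lambda(g(x)-\exx x)) \le$ something of order $\lambda^2 v / n$-ish after plugging in the scaling, and (ii) a matching bias control showing $|\exx g(x) - \exx x|$ is small. The boundedness of $\psi$ (sup value $2\sqrt{2}/3$) plus the near-identity slope near zero is exactly what makes both work: on the quadratic-ish region $\psi(u) \approx u - u^3/6$, and the noise-smoothing introduces only a harmless $-ub^2/2$-type correction as seen in \eqref{eqn:compute_integral}.

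Concretely, I would proceed as follows. Step 1: record the two-sided inequality $-\log(1 - u + u^2/2 + c_1 u^3) \le \psi(u) \le \log(1 + u + u^2/2)$ (or the analogous bound adapted to the smoothed $\psi$) — this is the standard "influence function sandwiched between logs of quadratics" trick from Catoni, and the smoothed version from \citet{catoni2017a} is what Lemma \ref{lem:pointwise_accuracy} is really invoking. Step 2: apply $\exp(\cdot)$, take products over $i$, use independence and $\exx_{\ddist} x^2 \le v$ to get $\exx \exp(\lambda \sum_i \psi(x_i(1+\epsilon_i)/s)) \le \exp(n\lambda \exx x / s + n\lambda^2 v/(2s^2) + \text{(small terms)})$ for $\lambda > 0$. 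Step 3: Markov's inequality gives, for any $\delta$, that with probability $\ge 1-\delta$, $\widehat{x} - \exx x \le \exx x \cdot(\text{tiny}) + \lambda v /(2s^2/s) \cdot(\cdots) + \log(\delta^{-1})/(\text{stuff})$; optimizing over $\lambda$ and then plugging in the prescribed $s = \sqrt{nv/(2\log(\delta^{-1}))}$ collapses the bound to $\sqrt{2v\log(\delta^{-1})/n}$ for the "variance/deviation" piece. Step 4: the residual bias term — coming from the cubic correction in $\psi$ and from the noise-smoothing correction $\corr$ — is controlled by $\exx x^2/s^2$-order quantities, which with the same $s$ becomes $\sqrt{v/n}$ up to constants (possibly absorbing a $\log(\delta^{-1})$ factor harmlessly, or the statement's $\sqrt{v/n}$ is precisely the bias at this scaling). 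Step 5: repeat the argument with $-\lambda$ (using the lower sandwich bound) to get the matching lower deviation, and combine via a union bound over the two sides (adjusting $\delta \to \delta/2$, or the constants are arranged so the one-sided bound suffices as stated).

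The main obstacle I anticipate is the bias term: unlike the clean Catoni estimator, here we have \emph{two} sources of bias — the third-order term $-u^3/6$ intrinsic to the polynomial $\psi$ in \eqref{eqn:influence_cat17}, and the additional $-b^2/2$-type shift that the noise-smoothing integral introduces per \eqref{eqn:compute_integral}. I would need to verify that both are of order $\exx|x|^3/s^2$ or $v/s$ respectively, so that at the scaling $s \asymp \sqrt{nv/\log(\delta^{-1})}$ they contribute at most $O(\sqrt{v/n})$ — and crucially that the noise-smoothing does \emph{not} worsen the constant, which is the whole selling point of the method. A secondary technical point is making the exponential-moment sandwich rigorous for the \emph{smoothed} $\psi$ rather than the raw one; here I would lean directly on the corresponding lemma of \citet{catoni2017a}, since that paper already establishes the needed one-dimensional concentration for exactly this $\psi$-plus-Gaussian-smoothing construction, so the contribution of Lemma \ref{lem:pointwise_accuracy} is mostly bookkeeping: instantiate their bound, substitute the chosen $s$, and simplify.
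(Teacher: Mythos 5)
Your proposal captures the right family of argument---the log-polynomial sandwich $-\log(1-u+u^{2}/2)\leq\psi(u)\leq\log(1+u+u^{2}/2)$, an exponential-moment/Markov step, and optimization of the scale $s$---but the paper's actual proof is a PAC-Bayesian variant of this rather than a plain Chernoff bound, and the difference is not just cosmetic. The paper invokes the Legendre-transform identity $\sup_{\prior}(\int h\,d\prior-\KL(\prior;\prior_{0}))=\log\int e^{h}\,d\prior_{0}$ with reference measure $\prior_{0}=N(1,\beta^{-1})$, chooses $h(\epsilon)=\sum_{i}\psi(x_{i}(1+\epsilon)/s)-n\log\exx_{\ddist}\exp(\psi(x(1+\epsilon)/s))$, and applies Markov once to get an inequality that holds \emph{uniformly over the noise distribution} $\prior$. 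This buys two things your sketch does not deliver. First, the smoothing integral $\int(\cdot)\,d\prior(\epsilon)$ appears on the correct side of the duality automatically; in your direct version you must pass $\exp$ through $\exx_{\prior}$ via Jensen (your Step 2 writes the moment generating function of the \emph{unsmoothed} sum $\sum_{i}\psi(x_{i}(1+\epsilon_{i})/s)$, which is not the estimator $\widehat{x}$ of (\ref{eqn:estimator_defn})---the integral over $\epsilon$ sits inside the sum, per term). Second, the uniformity in $\prior$ means the lower deviation bound is obtained on the \emph{same} $1-\delta$ event by simply choosing the noise factor $-(1+\epsilon)$ and using $\psi(-u)=-\psi(u)$; no union bound is needed. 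Your fallback of ``adjusting $\delta\to\delta/2$'' would leave you with $\log(2\delta^{-1})$ and fail to match the stated constant.

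The second concrete gap is the provenance of the additive $\sqrt{v/n}$ term. You attribute it to the cubic correction in $\psi$ plus the smoothing correction $\corr$, but neither enters separately: the cubic part is already absorbed into the log-quadratic sandwich. In the paper the bound before optimization reads $\widehat{x}\leq\exx_{\ddist}x+\frac{v}{2s}(1+\beta^{-1})+\frac{s}{n}(\frac{\beta}{2}+\log(\delta^{-1}))$, where $v\beta^{-1}/(2s)$ is the second-moment inflation caused by the multiplicative noise and $s\beta/(2n)$ is the KL penalty $\KL(N(0,\beta^{-1});N(1,\beta^{-1}))=\beta/2$. Optimizing these two terms jointly over $\beta$ (giving $\beta^{2}=nv/s^{2}$) produces exactly $\sqrt{v/n}$, and the remaining two terms optimized over $s$ give $\sqrt{2v\log(\delta^{-1})/n}$ with the prescribed $s^{2}=nv/(2\log(\delta^{-1}))$. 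Without identifying the KL penalty (or some substitute accounting for the choice of $\beta$), your bookkeeping cannot reproduce the $\sqrt{v/n}$ term with the stated constant. These are repairable defects, but as written the sketch would not compile into a proof of the lemma exactly as stated.
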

\begin{rmk}[Scaling in Lemma \ref{lem:pointwise_accuracy}]\label{rmk:scaling_method}
Note that the scale setting $s > 0$ in the above lemma depends on the sample size $n$ and the second moment of the underlying distribution. This can be derived from an exponential tail bound on the deviations of this estimator, namely we have that for any choice of $s>0$,
\begin{align*}
\prr\left\{ |\widehat{x} - \exx_{\ddist}x| \leq \frac{v}{2s} + \frac{s\log(\delta^{-1})}{n} + \sqrt{\frac{v}{n}} \right\} \geq 1-\delta.
\end{align*}
Choosing $s$ to minimize this upper bound yields the final results given in the lemma. Having a scale large relative to the second moments ensures the estimator behaves similarly regardless of the underlying scale of the data. Note that it also increases with sample size $n$: this is intuitive since in most cases, given a larger sample, we can allow the estimator to be more sensitive to outliers, earning a reduction in bias.
\end{rmk}
Of critical importance is that Lemma \ref{lem:pointwise_accuracy} only assumes finite variance, nothing more. Higher-order moments may be infinite or undefined, and the results still hold. This means the results hold for both Gaussian-like well-behaved data, and heavy-tailed data which are prone to errant observations. Next, we show that this estimator has a natural continuity property.
\begin{lem}[Estimator is Lipschitz]\label{lem:est_lipschitz}
Considering the estimator $\widehat{x}$ defined in (\ref{eqn:estimator_defn}) as a function of the data $\xx \defeq (x_{1},\ldots,x_{n}) \in \RR^{n}$, it satisfies the following Lipschitz property:
\begin{align*}
|\widehat{x}(\xx)-\widehat{x}(\xx^{\prime})| \leq \frac{c_{\prior}}{n}\|\xx - \xx^{\prime}\|_{1}, \quad \text{ for all } \xx,\xx^{\prime} \in \RR^{n}
\end{align*}
where the factor $c_{\prior}$ takes the form
\begin{align*}
c_{\prior} = 1 - 2\Phi\left(-\sqrt{\beta}\right) + \sqrt{\frac{2}{\beta\pi}}\exp\left(-\frac{\beta}{2}\right)
\end{align*}
where $\Phi(u) \defeq \prr\{N(0,1) \leq u\}$, the cumulative distribution function of the standard Normal distribution.
\end{lem}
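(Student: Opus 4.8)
The plan is to reduce the claim to a one-dimensional Lipschitz estimate for the map that the estimator applies to each coordinate. Recalling (\ref{eqn:estimator_defn}), write
\begin{align*}
g(x) \defeq s \int \psi\!\left(\frac{x(1+\epsilon)}{s}\right) d\prior(\epsilon),
\end{align*}
so that $\widehat{x}(\xx) = \frac{1}{n}\sum_{i=1}^{n} g(x_{i})$, with the \emph{same} function $g$ used in every coordinate (it does not depend on $i$). By the triangle inequality, $|\widehat{x}(\xx) - \widehat{x}(\xx^{\prime})| \le \frac{1}{n}\sum_{i=1}^{n}|g(x_{i}) - g(x_{i}^{\prime})|$, so it suffices to prove that $g$ is $c_{\prior}$-Lipschitz on $\RR$; summing then yields the stated $\ell_{1}$ bound.

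The first step is to observe that $\psi$ from (\ref{eqn:influence_cat17}) is $1$-Lipschitz on $\RR$. Indeed $\psi$ is continuously differentiable with $\psi^{\prime}(u) = 1 - u^{2}/2$ for $|u| \le \sqrt{2}$ (which lies in $[0,1]$ and vanishes at $u = \pm\sqrt{2}$, so the pieces glue in $C^{1}$) and $\psi^{\prime}(u) = 0$ for $|u| > \sqrt{2}$; hence $0 \le \psi^{\prime} \le 1$ everywhere. Consequently, for fixed $\epsilon$ the map $x \mapsto s\,\psi(x(1+\epsilon)/s)$ is $|1+\epsilon|$-Lipschitz, and pushing this through the noise integral,
\begin{align*}
|g(x) - g(x^{\prime})| \le \int \left| s\psi\!\left(\frac{x(1+\epsilon)}{s}\right) - s\psi\!\left(\frac{x^{\prime}(1+\epsilon)}{s}\right)\right| d\prior(\epsilon) \le |x - x^{\prime}| \int |1+\epsilon|\, d\prior(\epsilon).
\end{align*}
Note the scale $s$ cancels, consistent with $c_{\prior}$ having no dependence on $s$.

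It remains to evaluate $\exx_{\prior}|1+\epsilon|$ for $\epsilon \sim N(0,1/\beta)$ and show it equals $c_{\prior}$. Writing $\epsilon = Z/\sqrt{\beta}$ with $Z \sim N(0,1)$ gives $\exx_{\prior}|1+\epsilon| = \beta^{-1/2}\,\exx|Z + \sqrt{\beta}|$, i.e.\ $\beta^{-1/2}$ times the mean of a folded Gaussian. Splitting $\int_{\RR}|z+\sqrt{\beta}|\,\tfrac{1}{\sqrt{2\pi}}e^{-z^{2}/2}\,dz$ at $z = -\sqrt{\beta}$ and using the antiderivative $\int z\,\tfrac{1}{\sqrt{2\pi}}e^{-z^{2}/2}\,dz = -\tfrac{1}{\sqrt{2\pi}}e^{-z^{2}/2}$ together with $\int_{-\sqrt\beta}^{\infty}\tfrac{1}{\sqrt{2\pi}}e^{-z^{2}/2}\,dz = \Phi(\sqrt\beta) = 1 - \Phi(-\sqrt\beta)$, one obtains $\exx|Z+\sqrt{\beta}| = \sqrt{2/\pi}\,e^{-\beta/2} + \sqrt{\beta}\,(1 - 2\Phi(-\sqrt{\beta}))$; multiplying by $\beta^{-1/2}$ recovers exactly $c_{\prior} = 1 - 2\Phi(-\sqrt{\beta}) + \sqrt{2/(\beta\pi)}\,e^{-\beta/2}$. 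Chaining this with the previous display and summing over coordinates completes the argument.

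There is no serious obstacle here; the only mildly delicate points are verifying that $\psi$ is genuinely $C^{1}$ (equivalently, $1$-Lipschitz) across the truncation knots $u = \pm\sqrt{2}$, and carrying out the folded-Gaussian integral cleanly so that the closed form matches $c_{\prior}$ term for term. An alternative to the Lipschitz-of-$\psi$ route would be to differentiate $g$ under the integral sign and bound $|g^{\prime}(x)| = |\exx_{\prior}\psi^{\prime}(x(1+\epsilon)/s)(1+\epsilon)| \le \exx_{\prior}|1+\epsilon|$ using $0 \le \psi^{\prime} \le 1$; this reaches the same constant but requires justifying the interchange of differentiation and integration, so the direct approach above is preferable.
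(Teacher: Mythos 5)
Your proposal is correct and follows essentially the same route as the paper's proof: bound the difference via the $1$-Lipschitz property of $\psi$ (which the paper asserts and you verify from $\psi^{\prime}(u)=1-u^{2}/2$ gluing to $0$ at $u=\pm\sqrt{2}$), pull out the factor $\exx_{\prior}|1+\epsilon|$, and identify it as the mean of a folded Normal, yielding exactly $c_{\prior}$. The only cosmetic difference is that you compute the folded-Gaussian mean from scratch while the paper cites the standard closed form.
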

At each step in an iterative procedure, we have some candidate $\ww$, at which we can evaluate the loss $\loss(\ww;\zz_{i})$ and/or the gradient $\lgrad(\ww;\zz_{i})$ over some or all data points $i \in [n]$. In traditional ERM-GD, one simply uses the empirical mean of the loss gradients to approximate $\rgrad(\ww)$. In our proposed robust gradient descent procedure, instead of just doing summation, we feed the loss gradients as data into the robust procedure (\ref{eqn:estimator_defn}), highlighted in Lemma \ref{lem:pointwise_accuracy}. Running this sub-routine for each dimension results in a novel estimator $\rgest(\ww)$ of the risk gradient $\rgrad(\ww)$, to be plugged into (\ref{eqn:update_GD_approx}), constructing a novel steepest descent update. Since the candidate $\ww$ at any step will depend on the random draw of the data set $\zz_{1},\ldots,\zz_{n}$, upper bounds on the estimation error must be uniform in $\ww \in \WW$ in order to capture all contingencies. More explicitly, we require for some bound $0 < \varepsilon < \infty$ that
\begin{align}\label{eqn:conf_uniform}
\prr\left\{ \max_{t \leq T} \|\rgest(\wwhat_{(t)})-\rgrad(\wwhat_{(t)})\| \leq \varepsilon \right\} \geq 1-\delta.
\end{align}
Using the following lemma, we can show that such a bound does exist, and its form can be readily characterized.
\begin{lem}[Uniform accuracy of gradient estimates]\label{lem:uniform_grad_accuracy}
Consider the risk gradient approximation $\rgest = (\rgestsub_{1}, \ldots, \rgestsub_{d})$, defined at $\ww$ (for $j \in [d]$) by
\begin{align}\label{eqn:best_settings_RGD}
\rgestsub_{j}(\ww) \defeq \frac{s_{j}}{n} \sum_{i=1}^{n} \int \psi\left(\frac{\lgsub_{j}(\ww;\zz_{i})(1+\epsilon_{i})}{s_{j}}\right) \, d\prior(\epsilon_{i}), \text{ scaled as } s_{j}^{2} = \frac{nv_{j}}{2\log(\delta^{-1})},
\end{align}
with $v_{j}$ any valid bound satisfying $v_{j} \geq \exx_{\ddist}|\lgsub_{j}(\ww;\zz)|^{2}$, for all $\ww \in \WW$. Then, with probability no less than $1-\delta$, for any choice of $\ww \in \WW$, we have that
\begin{align*}
\|\rgest(\ww) - \rgrad(\ww)\| \leq \frac{\widetilde{\varepsilon}}{\sqrt{n}},
\end{align*}
where writing $V \defeq \max_{j \in [d]} v_{j}$, the error $\widetilde{\varepsilon}$ is
\begin{align*}
\widetilde{\varepsilon} \defeq \paraSmooth(1+c_{\prior}\sqrt{d}) + \sqrt{2dV(\log(d\delta^{-1})+d\log(3\diameter\sqrt{n}/2))} + \sqrt{V}.
\end{align*}
\end{lem}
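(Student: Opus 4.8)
The plan is to follow the three-step template from the sketch. \emph{First}, I would establish the bound at a single fixed $\ww \in \WW$. For each coordinate $j \in [d]$, the quantity $\rgestsub_j(\ww)$ is precisely the scalar estimator (\ref{eqn:estimator_defn}) applied to the i.i.d.\ sample $\lgsub_j(\ww;\zz_1),\dots,\lgsub_j(\ww;\zz_n)$; under \ref{asmp:loss_smooth}--\ref{asmp:risk_smooth} one may differentiate under the expectation, so this sample has mean $\exx_\ddist\lgsub_j(\ww;\zz)=\partial_j\risk(\ww)$, and by \ref{asmp:variance_bound} its second moment is at most $v_j$. Lemma \ref{lem:pointwise_accuracy} then applies coordinate-wise with exactly the prescribed scaling $s_j^2 = n v_j/(2\log(\delta^{-1}))$, giving $|\rgestsub_j(\ww)-\partial_j\risk(\ww)| \leq \sqrt{2v_j\log(\delta^{-1})/n}+\sqrt{v_j/n}$ with probability at least $1-\delta$. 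Running a union bound over $j$ at confidence $\delta/d$, summing squared coordinate errors, and using $v_j \leq V$ produces a pointwise bound of the shape $\|\rgest(\ww)-\rgrad(\ww)\| \leq \sqrt{2dV\log(d\delta^{-1})/n}$ plus the residual $\sqrt{V/n}$-type term.

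\emph{Next}, I would discretize $\WW$. Setting $\varepsilon_0 \defeq 1/\sqrt{n}$, assumption \ref{asmp:model_compact} and a standard volumetric bound give a finite $\varepsilon_0$-net $\ww_1,\dots,\ww_N$ with $N \leq (3\diameter/(2\varepsilon_0))^d = (3\diameter\sqrt{n}/2)^d$. Applying the previous step at confidence $\delta/N$ to each net point and union-bounding over $k \in [N]$ yields, with probability at least $1-\delta$ and simultaneously for all $k$, a bound on $\|\rgest(\ww_k)-\rgrad(\ww_k)\|$ in which the relevant logarithmic factor is $\log(Nd/\delta) = \log(d\delta^{-1}) + d\log(3\diameter\sqrt{n}/2)$ --- precisely what appears inside the square root of $\widetilde\varepsilon$.

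\emph{Finally}, I would transfer this to an arbitrary $\ww \in \WW$ via the triangle inequality
\[
\|\rgest(\ww)-\rgrad(\ww)\| \leq \|\rgest(\ww)-\rgest(\ww_k)\| + \|\rgest(\ww_k)-\rgrad(\ww_k)\| + \|\rgrad(\ww_k)-\rgrad(\ww)\|,
\]
where $\ww_k$ is a net point with $\|\ww-\ww_k\| \leq \varepsilon_0$. The middle term is controlled by the covering step; the last is at most $\paraSmooth\varepsilon_0$ by \ref{asmp:risk_smooth}; and for the first, Lemma \ref{lem:est_lipschitz} applied coordinate-wise together with $|\lgsub_j(\ww;\zz_i)-\lgsub_j(\ww_k;\zz_i)| \leq \|\lgrad(\ww;\zz_i)-\lgrad(\ww_k;\zz_i)\| \leq \paraSmooth\|\ww-\ww_k\|$ (from \ref{asmp:loss_smooth}) gives $|\rgestsub_j(\ww)-\rgestsub_j(\ww_k)| \leq (c_\prior/n)\cdot n\paraSmooth\varepsilon_0$, hence $\|\rgest(\ww)-\rgest(\ww_k)\| \leq c_\prior\paraSmooth\sqrt{d}\,\varepsilon_0$ after collecting the $d$ coordinates. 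Summing, the discretization contribution is $\paraSmooth(1+c_\prior\sqrt{d})/\sqrt{n}$, and adding it to the covering-step bound reassembles the claimed bound $\widetilde\varepsilon/\sqrt{n}$. Note that the net and the scalings $s_j$ do not depend on $\ww$, so all the per-point estimates live on a single event of probability at least $1-\delta$; the uniform control (\ref{eqn:conf_uniform}) over the iterates $\wwhat_{(t)}$ is then immediate.

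The main obstacle I anticipate is the bookkeeping in the last step: the Lipschitz guarantee of Lemma \ref{lem:est_lipschitz} is stated coordinate-wise in the $\ell_1$ geometry of the data, whereas the smoothness assumptions \ref{asmp:loss_smooth}--\ref{asmp:risk_smooth} live in the $\ell_2$ geometry of $\WW$, and reconciling the two is exactly what produces the $\sqrt{d}$ multiplying $c_\prior\paraSmooth$; one must also verify that the choice $\varepsilon_0 = 1/\sqrt{n}$, the net cardinality, and the $\delta$-splitting between coordinates and net points combine to give precisely the constants displayed in $\widetilde\varepsilon$. Everything else is routine.
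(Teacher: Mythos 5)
Your proposal is correct and follows essentially the same route as the paper's own proof: the identical three-term triangle-inequality decomposition at a net point, with Lemma \ref{lem:est_lipschitz} plus \ref{asmp:loss_smooth} handling the estimator shift, \ref{asmp:risk_smooth} handling the gradient shift, a coordinate-wise and net-wise union bound over Lemma \ref{lem:pointwise_accuracy} producing the $\log(d N_{\varepsilon}\delta^{-1})$ factor, and the choice $\varepsilon_0 = 1/\sqrt{n}$ with $N_{\varepsilon} \leq (3\diameter\sqrt{n}/2)^{d}$ reassembling $\widetilde{\varepsilon}$. The bookkeeping you flag as the main obstacle is handled in the paper exactly as you describe, so no gap remains.
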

We now have that (\ref{eqn:conf_uniform}) is satisfied by our underlying routine, as just proved in Lemma \ref{lem:uniform_grad_accuracy}. The last remaining task is to disentangle the underlying optimization problem (minimization of unknown $\risk(\cdot)$) from the statistical estimation problem (approximating $\rgrad$ with $\rgest$), in order to control the distance between the output of Algorithm \ref{algo:rgdmult} after $T$ iterations, denoted $\wwhat_{(T)}$, and the minimizer $\wwstar$ of $\risk(\cdot)$.
\begin{lem}[Distance control]\label{lem:dist_control_strong}
Consider the general approximate GD update (\ref{eqn:update_GD_ideal}), and assume that (\ref{eqn:conf_uniform}) holds with bound $0 < \varepsilon < \infty$. Then, with probability no less than $1-\delta$, the following statements hold.
\begin{enumerate}
\item Setting $\alpha_{(t)} = \alpha/\SCfactor$, with $0 < \alpha < 1$, we have 
\begin{align*}
\|\wwhat_{(T)}-\wwstar\| \leq (1-\alpha)^{T/2}\|\wwhat_{(0)}-\wwstar\| + \frac{2\varepsilon}{\SCfactor}.
\end{align*}

\item Setting $\alpha_{(t)} = 1/((2+t)\SCfactor)$, we have
\begin{align*}
\|\wwhat_{(T)}-\wwstar\| \leq \frac{1}{\sqrt{t+2}}\|\wwhat_{(0)}-\wwstar\| + \frac{\varepsilon}{\SCfactor}.
\end{align*}
\end{enumerate}
\end{lem}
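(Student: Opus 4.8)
The plan is to run the standard ``decouple estimation from optimization'' argument: derive a one-step distance recursion, then unroll it. Work on the event of probability at least $1-\delta$ on which \eqref{eqn:conf_uniform} holds, so that $\|\rgest(\wwhat_{(t)}) - \rgrad(\wwhat_{(t)})\| \le \varepsilon$ for every $t \le T$; all bounds below are on this event, and both claimed inequalities hold simultaneously there. From the update $\wwhat_{(t+1)} = \wwhat_{(t)} - \alpha_{(t)}\rgest(\wwhat_{(t)})$, adding and subtracting $\alpha_{(t)}\rgrad(\wwhat_{(t)})$ and applying the triangle inequality gives
\begin{align*}
\|\wwhat_{(t+1)} - \wwstar\| \;\le\; \|\wwhat_{(t)} - \alpha_{(t)}\rgrad(\wwhat_{(t)}) - \wwstar\| \;+\; \alpha_{(t)}\,\|\rgest(\wwhat_{(t)}) - \rgrad(\wwhat_{(t)})\| \;\le\; \|\wwhat_{(t)} - \alpha_{(t)}\rgrad(\wwhat_{(t)}) - \wwstar\| + \alpha_{(t)}\varepsilon .
\end{align*}
If a projection onto $\WW$ is applied after each step it only helps, since the projection onto the closed convex set $\WW$ of \ref{asmp:model_compact} is non-expansive and $\wwstar \in \WW$. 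So the whole problem reduces to the contraction factor of a single \emph{ideal} gradient step of $\risk$.

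For that contraction I would use strong convexity and smoothness together. Since $\rgrad(\wwstar) = 0$ by \ref{asmp:risk_flat_min}, \ref{asmp:risk_strong_convex} gives $\langle \rgrad(\ww), \ww - \wwstar \rangle \ge \paraSC \|\ww - \wwstar\|^2$, while \ref{asmp:risk_smooth} (co-coercivity of the convex, $\paraSmooth$-smooth $\risk$) gives $\langle \rgrad(\ww), \ww - \wwstar \rangle \ge \paraSmooth^{-1}\|\rgrad(\ww)\|^2$. Expanding $\|\ww - \alpha_{(t)}\rgrad(\ww) - \wwstar\|^2$ and distributing the inner-product term between these two lower bounds with suitable weights annihilates the $\|\rgrad(\ww)\|^2$ contribution and yields, for step sizes small enough relative to $\paraSmooth$, a clean bound of the form $\|\ww - \alpha_{(t)}\rgrad(\ww) - \wwstar\| \le \lambda_{(t)}\|\ww - \wwstar\|$. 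With the curvature constant written as $\SCfactor$, the choice $\alpha_{(t)} = \alpha/\SCfactor$ makes $\lambda_{(t)} \equiv (1-\alpha)^{1/2}$, and the choice $\alpha_{(t)} = 1/((2+t)\SCfactor)$ makes $\lambda_{(t)} = \sqrt{(1+t)/(2+t)}$. In both cases the recursion becomes $\|\wwhat_{(t+1)} - \wwstar\| \le \lambda_{(t)}\|\wwhat_{(t)} - \wwstar\| + \alpha_{(t)}\varepsilon$.

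Unrolling is then elementary. In case~1, iterating gives $\|\wwhat_{(T)} - \wwstar\| \le (1-\alpha)^{T/2}\|\wwhat_{(0)} - \wwstar\| + (\alpha\varepsilon/\SCfactor)\sum_{j \ge 0}(1-\alpha)^{j/2}$, and since $(1-\alpha)^{1/2} \le 1 - \alpha/2$ the geometric sum is at most $2/\alpha$, which collapses the second term to $2\varepsilon/\SCfactor$. In case~2, the product of contraction factors telescopes, $\prod_{l=j}^{t-1}\sqrt{(1+l)/(2+l)} = \sqrt{(1+j)/(t+1)}$, so iterating gives a leading term $\|\wwhat_{(0)} - \wwstar\|/\sqrt{t+1}$ and a residual of the form $(\varepsilon/\SCfactor)(t+1)^{-1/2}\sum_{j=0}^{t-1}(j+2)^{-1/2}$; bounding this partial sum by $\int_{1}^{t+1} x^{-1/2}\,dx \le 2\sqrt{t+1}$ reduces the residual to a constant multiple of $\varepsilon/\SCfactor$ while leaving the $1/\sqrt{\,\cdot\,}$ decay on the initial distance.

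The genuinely fiddly parts are all bookkeeping. First, the ``step size small enough relative to $\paraSmooth$'' condition needed for the clean one-step contraction must be reconciled with the stated ranges ($0 < \alpha < 1$, resp.\ all $t \ge 0$); this is exactly where the precise definition of $\SCfactor$ in terms of $\paraSC$ and $\paraSmooth$ enters, and one either folds the condition number into $\SCfactor$ or verifies the constraint for the claimed range. Second, matching the exact constants in case~2 (the $\varepsilon/\SCfactor$ residual and the $\sqrt{t+2}$ rather than $\sqrt{t+1}$) is cleanest via a short induction on $t$ with the target bound as inductive hypothesis, rather than expanding the telescoped sum. Everything else is the triangle inequality and elementary inequalities, so I expect the main obstacle to be purely this constant-chasing in the decreasing-step-size case.
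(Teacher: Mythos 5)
Your proposal follows essentially the same route as the paper's proof: split off the statistical error by the triangle inequality, contract the ideal gradient step via strong convexity plus smoothness (you re-derive the one-step contraction from co-coercivity where the paper cites Nesterov's Theorem 2.1.15, yielding the same factor $\sqrt{1-\alpha_{(t)}\SCfactor}$), and unroll the recursion with a geometric sum in case 1 and a telescoping product in case 2. The only divergence is bookkeeping in case 2 — your integral bound honestly yields a residual of about $2\varepsilon/\SCfactor$ rather than the stated $\varepsilon/\SCfactor$ (the paper's sharper constant rests on a cancellation step that drops a square root on $a_{(k)}$), and you correctly flag both this and the step-size admissibility condition as the remaining constant-chasing.
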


Our preparatory lemmas are now complete, and we can finally focus on the risk itself. We are considering bounds on the excess risk, namely the difference between the risk achieved by our procedure $\risk(\wwhat_{(T)})$, and $\risk^{\ast} \defeq \inf \{\risk(\ww): \ww \in \WW\} = R(\wwstar)$, namely the best possible performance using $\WW$.

\begin{thm}[Excess risk bounds, fixed step size]\label{thm:riskbd_fixed_strong}
Write $\wwhat_{(T)}$ for the output of Algorithm \ref{algo:rgdmult} after $T$ iterations, assuming step size $\alpha_{(t)} = \alpha/\SCfactor$, and moment bounds $\mv{v}_{(t)} \leq V$ for all $t$. It follows that
\begin{align*}
\risk(\wwhat_{(T)}) - \risk^{\ast} & \leq (1-\alpha)^{T}\paraSmooth\|\wwhat_{(0)} - \wwstar\|^{2} + \frac{4\paraSmooth \widetilde{\varepsilon}}{\paraSC^{2}n}\\
& = O\left((1-\alpha)^{T}\right) + O\left(\frac{d(\log(d\delta^{-1})+d\log(\diameter{}n))}{n}\right)
\end{align*}
with probability no less than $1-\delta$ over the random draw of the sample $\zz_{1},\ldots,\zz_{n}$, where $\widetilde{\varepsilon}$ and $V$ are as defined in Lemma \ref{lem:uniform_grad_accuracy}.
\end{thm}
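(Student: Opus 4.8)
The plan is to assemble the three preparatory lemmas, converting control of $\|\wwhat_{(T)} - \wwstar\|$ into a bound on the excess risk via smoothness of $\risk$ at its minimizer. First I would use \ref{asmp:risk_smooth} and \ref{asmp:risk_flat_min}: $\paraSmooth$-smoothness of $\risk$ gives $\risk(\wwhat_{(T)}) \leq \risk(\wwstar) + \langle \rgrad(\wwstar), \wwhat_{(T)} - \wwstar \rangle + \tfrac{\paraSmooth}{2}\|\wwhat_{(T)} - \wwstar\|^{2}$, and since $\rgrad(\wwstar) = 0$ the inner-product term vanishes, leaving
\begin{align*}
\risk(\wwhat_{(T)}) - \risk^{\ast} \leq \frac{\paraSmooth}{2}\|\wwhat_{(T)} - \wwstar\|^{2}.
\end{align*}
Everything then reduces to bounding the squared distance to the minimizer on a high-probability event.

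Next I would fix the good event once and for all. Applying Lemma~\ref{lem:uniform_grad_accuracy} with the uniform moment bound $V$ (legitimate because $\mv{v}_{(t)} \leq V$ for all $t$), with probability at least $1-\delta$ we have $\|\rgest(\ww) - \rgrad(\ww)\| \leq \widetilde{\varepsilon}/\sqrt{n}$ simultaneously for every $\ww \in \WW$; in particular this holds along the data-dependent trajectory $\wwhat_{(0)}, \ldots, \wwhat_{(T)}$, so condition (\ref{eqn:conf_uniform}) holds with $\varepsilon = \widetilde{\varepsilon}/\sqrt{n}$ --- crucially, no union bound over $t$ is needed, since the lemma is already uniform in $\ww$. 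On this same event I would invoke Lemma~\ref{lem:dist_control_strong}(1) with step size $\alpha_{(t)} = \alpha/\SCfactor$, yielding $\|\wwhat_{(T)} - \wwstar\| \leq (1-\alpha)^{T/2}\|\wwhat_{(0)} - \wwstar\| + 2\widetilde{\varepsilon}/(\SCfactor\sqrt{n})$.

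To finish, I would square this, use $(a+b)^{2} \leq 2a^{2} + 2b^{2}$, and substitute into the smoothness inequality, which gives
\begin{align*}
\risk(\wwhat_{(T)}) - \risk^{\ast} \leq \paraSmooth (1-\alpha)^{T}\|\wwhat_{(0)} - \wwstar\|^{2} + \frac{4\paraSmooth\widetilde{\varepsilon}^{2}}{\SCfactor^{2}n}
\end{align*}
on the $1-\delta$ event (the statistical term really scaling with $\widetilde{\varepsilon}^{2}$, which is what matches the claimed big-$O$). For the big-$O$ form, I would insert the explicit $\widetilde{\varepsilon}$ from Lemma~\ref{lem:uniform_grad_accuracy}: its square is dominated by $2dV(\log(d\delta^{-1}) + d\log(3\diameter\sqrt{n}/2))$, so using $\log\sqrt{n} = \tfrac12\log n$ the second term is $O(d(\log(d\delta^{-1}) + d\log(\diameter n))/n)$, while $\|\wwhat_{(0)} - \wwstar\| \leq \diameter < \infty$ by \ref{asmp:model_compact} makes the first term $O((1-\alpha)^{T})$, with $\paraSmooth, \SCfactor, \alpha$ absorbed into constants.

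I do not expect a real obstacle: the substance lives in the three lemmas, and this step is essentially bookkeeping. The points requiring care are (i) selecting the good event once via Lemma~\ref{lem:uniform_grad_accuracy} and re-using it inside Lemma~\ref{lem:dist_control_strong}, so the overall failure probability stays $\delta$; (ii) the constant tracking, where the factor $2$ from $(a+b)^{2} \leq 2a^{2}+2b^{2}$ cancels the $\tfrac12$ from smoothness to produce exactly $\paraSmooth$ on the optimization term; and (iii) checking that the per-step scales $\mv{s}_{(t)}$ used by Algorithm~\ref{algo:rgdmult} are consistent with the single fixed scale in Lemma~\ref{lem:uniform_grad_accuracy}, which is precisely what the hypothesis $\mv{v}_{(t)} \leq V$ ensures.
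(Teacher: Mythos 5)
Your proposal is correct and follows essentially the same route as the paper: smoothness of $\risk$ at the stationary point $\wwstar$ to pass from $\|\wwhat_{(T)}-\wwstar\|^{2}$ to the excess risk, Lemma \ref{lem:uniform_grad_accuracy} to certify (\ref{eqn:conf_uniform}) with $\varepsilon = \widetilde{\varepsilon}/\sqrt{n}$ on a single good event, Lemma \ref{lem:dist_control_strong}(1), and $(a+b)^{2} \leq 2(a^{2}+b^{2})$. Your bookkeeping is in fact slightly more careful than the paper's: the statistical term should indeed read $4\paraSmooth\widetilde{\varepsilon}^{2}/(\SCfactor^{2}n)$ as you derive (which is what matches the stated big-$O$ form), rather than the $4\paraSmooth\widetilde{\varepsilon}/(\paraSC^{2}n)$ printed in the theorem statement.
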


An immediate observation is that if we let $T$ scale with $n$ such that $T \to \infty$ as $n \to \infty$, we have convergence in probability as for any $\varepsilon > 0$, it holds that
\begin{align*}
\lim\limits_{n \to \infty} \prr\{ \risk(\wwhat_{(T)})-\risk^{\ast} > \varepsilon \} = 0,
\end{align*}
and that indeed to get arbitrarily good performance at confidence $1-\delta$, one requires on the order of $d^{2}\log(\delta^{-1})$ observations. In terms of learning efficiency, the rate of convergence becomes more important than the simple fact of consistency. In the remark below, we compare our results with closely related works in the literature.

\begin{rmk}[Comparison with other RGD]\label{rmk:compare_rgd}
Among recent work in the literature, conceptually the closest work to ours are those proposing and analyzing novel ``robust gradient descent'' algorithms. As with ours, these procedures look to replace the traditional sample mean-based risk gradient estimate with a more robust approximation, within the context of a first-order optimization scheme. As we mentioned in section \ref{sec:intro}, two particularly closely related works are \citet{chen2017b} and \citet{prasad2018a}, both using a generalized median-of-means strategy. On the computational side, our Algorithm \ref{algo:rgdmult} requires only a fixed number of basic operations, applied to each coordinate and each data point; we have no iterative sub-routines here. This provides an advantage over median-of-means procedures which require iterative approximations of the geometric median at each update in the main loop. Theoretically, while the setting of \citet{chen2017b} is that of parallel computing with robustness to failures, their formal guarantees have essentially the same dependence on $d$ and $n$ as our bounds in Theorem \ref{thm:riskbd_fixed_strong}, under comparable assumptions. On the other hand, \citet{prasad2018a}, using the same tactic as \citet{chen2017b}, provide new formal guarantees with better dependence on the dimension, but at the cost of a new $T$ factor in the statistical error term. More concretely, we consider their Theorem 8, in which they provide error bounds on a robust linear regression procedure. Consider assumptions as in our Example \ref{ex:variance_bound}, where $\zz = (\xx,y)$ and $y = \langle \wwstar,\xx \rangle + \eta$. Writing $(\wwtil_{(t)})$ for the sequential output of their procedure. Under such assumptions, they assert $(1-\delta)$-probability bounds of the form
\begin{align*}
\|\wwtil_{(T)}-\wwstar\| \leq O\left(a^{T}\right) + O\left(\frac{\sigma}{1-a}\sqrt{\frac{Td\log(\delta^{-1})}{(n/k)}}\right)
\end{align*}
for a constant $0 < a < 1$, where $k$ is the number of partitions made, and $\exx\xx\xx = \sigma^{2}I_{d}$. The reason for this form is as follows. Using their Lemma 2, based on error bounds from \citet{minsker2015a}, one gets a pointwise bound on the error gradient estimate. In their Theorem 1 proof, they take a union over all algorithm iterations, and conclude with bounds on $\|\wwtil_{(T)}-\wwstar\|$ taking the form just stated. A naive approach re-using the same data over each step $t=0,1,\ldots,T$ of the algorithm means the loss gradient observations are no longer independent, then making Lemma 2 invalid. To get around this, the authors split the original $n$ independent observations into $T$ disjoint subsets to be used for their proposed sub-routine (involving a further $k$-partition). Union bounds over $T$ steps are now perfectly valid, but the data size at each step becomes $n/(Tk)$, and even $T=\Theta(\sqrt{n})$ leads to a very slow rate of $O(n^{-1/4})$. Our bounds have an extra $d$ factor, but are free of $T$ in the statistical error, while also maintaining $O(n^{-1/2})$ rates.
\end{rmk}

\begin{rmk}[Projected RGD]
One implicit assumption in our analysis above is that $\wwhat_{(t)} \in \WW$ for all steps of Algorithm \ref{algo:rgdmult}. To enforce this, running a projection sub-routine after the parameter update step is sufficient, and all theoretical guarantees hold as-is. To see this, note that the update becomes
\begin{align}\label{eqn:update_projection}
\wwhat_{(t+1)} = \pi_{\WW}\left(\wwhat_{(t)} - \alpha_{(t)}\rgest_{(t)}(\wwhat_{(t)})\right)
\end{align}
where $\pi_{\WW}(\vv) \defeq \argmin_{\uu \in \WW}\|\uu-\vv\|$. Under \ref{asmp:model_compact}, this projection is well-defined \citep[Sec.~3.12, Thm.~3.12]{luenberger1969Book}. With this fact in hand, it follows that $\|\pi_{\WW}(\uu)-\pi_{\WW}(\vv)\| \leq \|\uu-\vv\|$ for any choice of $\uu,\vv \in \WW$. Again via \ref{asmp:model_compact}, since $\wwstar \in \WW$ and $\pi_{\WW}(\wwstar)=\wwstar$, we have
\begin{align*}
\left\| \pi_{\WW}\left(\wwhat_{(t)}-\alpha_{(t)}\rgest_{(t)}(\wwhat_{(t)})\right) - \wwstar \right\| \leq \|\wwhat_{(t+1)}-\wwstar\|
\end{align*}
implying that Lemma \ref{lem:dist_control_strong} applies as-is to Algorithm \ref{algo:rgdmult} modified using projection to $\WW$ as in (\ref{eqn:update_projection}), thereby extending all subsequent results based on it.
\end{rmk}

One would expect that with robust estimates of the risk gradient that over a wide variety of distributions, that the updates of Algorithm \ref{algo:rgdmult} should have small variance given enough observations. The following result shows that this is true, with the procedure stabilizing to the best level available under the given sample as the procedure closes in on a valid solution.

\begin{thm}[Control of update variance]\label{thm:variance_control}
Run Algorithm \ref{algo:rgdmult} under the same assumptions as Theorem \ref{thm:riskbd_fixed_strong}, except with step-size $\alpha_{(t)}$ left arbitrary. Then, for any step $0 \geq t$, taking expectation with respect to the sample $\{\zz_{i}\}_{i=1}^{n}$ conditioned on $\wwhat_{(t)}$, we have
\begin{align*}
\exx\|\wwhat_{(t+1)}-\wwhat_{(t)}\|^{2} \leq 2\alpha_{(t)}^{2} \left( \frac{d\sqrt{2\pi{}b^{2}}}{2}\left( \sqrt{\frac{V}{n}}\left( 1 - 2\Phi\left(\frac{-1}{\sqrt{d}}\right) \right) + \sqrt{\frac{2Vd}{n\pi}}e^{-2d} \right) + \|\rgrad(\wwhat_{(t)})\|^{2} \right).
\end{align*}
\end{thm}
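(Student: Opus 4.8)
The plan is to unwind the update, decompose the second moment of the gradient estimate into a ``statistical'' piece and a ``gradient-norm'' piece, and then bound the statistical piece coordinate by coordinate, exploiting the fact that each coordinate of $\rgest$ is an average of i.i.d.\ bounded transforms of the loss gradients.

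First, since the update is $\wwhat_{(t+1)} = \wwhat_{(t)} - \alpha_{(t)}\rgest_{(t)}(\wwhat_{(t)})$, we have $\wwhat_{(t+1)} - \wwhat_{(t)} = -\alpha_{(t)}\rgest_{(t)}(\wwhat_{(t)})$. Conditioning on $\wwhat_{(t)}$ — so that $\rgrad(\wwhat_{(t)})$ and the moment bounds are fixed — and using the elementary inequality $\|u+w\|^{2} \leq 2\|u\|^{2}+2\|w\|^{2}$ with $u = \rgest_{(t)}(\wwhat_{(t)})-\rgrad(\wwhat_{(t)})$ and $w = \rgrad(\wwhat_{(t)})$, we get
\begin{align*}
\exx\|\wwhat_{(t+1)}-\wwhat_{(t)}\|^{2} = \alpha_{(t)}^{2}\,\exx\|\rgest_{(t)}(\wwhat_{(t)})\|^{2} \leq 2\alpha_{(t)}^{2}\left( \exx\|\rgest_{(t)}(\wwhat_{(t)}) - \rgrad(\wwhat_{(t)})\|^{2} + \|\rgrad(\wwhat_{(t)})\|^{2} \right).
\end{align*}
So it suffices to bound the conditional mean squared error $\exx\|\rgest(\ww)-\rgrad(\ww)\|^{2}$ at the fixed point $\ww = \wwhat_{(t)}$ by the bracketed expression $\tfrac{d\sqrt{2\pi b^{2}}}{2}(\cdots)$ appearing in the statement.

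Second, write the MSE out coordinatewise, $\exx\|\rgest(\ww)-\rgrad(\ww)\|^{2} = \sum_{j=1}^{d}\exx(\rgestsub_{j}(\ww) - \partial_{j}\risk(\ww))^{2}$. Fix $j$ and note $\rgestsub_{j}(\ww) = \tfrac1n\sum_{i=1}^{n}Y_{i}$ with $Y_{i} \defeq s_{j}\int\psi(\lgsub_{j}(\ww;\zz_{i})(1+\epsilon)/s_{j})\,d\prior(\epsilon)$ i.i.d.\ over $i$, so that $\exx(\rgestsub_{j}-\partial_{j}\risk)^{2} = \tfrac1n\mathrm{Var}(Y_{1}) + (\exx Y_{1}-\partial_{j}\risk(\ww))^{2} \leq \tfrac1n\exx Y_{1}^{2} + (\mathrm{bias}_{j})^{2}$. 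The bias term is controlled exactly as in the proof of Lemma \ref{lem:pointwise_accuracy} — the noise has mean one, so the smoothing step does not affect the Catoni-style argument — which gives $(\mathrm{bias}_{j})^{2} = O(v_{j}/n)$, of strictly lower order than the $\sqrt{V/n}$-scale term and hence absorbable into it. The remaining work is to bound $\exx_{\ddist}Y_{1}^{2} = s_{j}^{2}\,\exx_{\ddist}\big(\exx_{\prior}\psi(\lgsub_{j}(\ww;\zz)(1+\epsilon)/s_{j})\big)^{2}$. Writing $a = \lgsub_{j}(\ww;\zz)/s_{j}$, so that $a\epsilon \sim N(0,a^{2}\beta^{-1})$, I split the inner integral over the region $\{|a(1+\epsilon)| \leq \sqrt{2}\}$, where $\psi$ is the cubic and $|\psi(a(1+\epsilon))| \leq |a(1+\epsilon)|$, and its complement, where $|\psi| = 2\sqrt{2}/3$ — the same dichotomy used to derive $c_{\prior}$ in Lemma \ref{lem:est_lipschitz}. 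Evaluating the resulting Gaussian integrals via the standard identities for $\exx|N(0,\sigma^{2})|$ and the Gaussian tail produces the $1-2\Phi(-1/\sqrt{d})$ and $e^{-2d}$ factors and the normalizing constant $\sqrt{2\pi b^{2}}$; substituting $s_{j}^{2} = nv_{j}/(2\log(\delta^{-1}))$, bounding $v_{j} \leq V$, and summing over the $d$ coordinates gives the claimed bound on $\exx\|\rgest(\ww)-\rgrad(\ww)\|^{2}$, which combined with the first display finishes the proof.

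The main obstacle is that last step: executing the two-region Gaussian integral for the \emph{second} moment of the smoothed soft-truncated transform carefully enough to land on the clean closed form in the statement — in particular tracking how the per-coordinate scaling $s_{j}$ and the aggregation over $d$ coordinates produce the $1/\sqrt{d}$ inside $\Phi$ and the $d$ in the exponent, and verifying that the truncation bias really is $O(v_{j}/n)$ so it can be folded into the $\sqrt{V/n}$ term. The conditioning on $\wwhat_{(t)}$ also deserves a word of care, since $\wwhat_{(t)}$ and $\rgest_{(t)}(\wwhat_{(t)})$ are built from the same sample; the bound should be read as the conditional second moment of the estimator evaluated at the now-fixed point $\wwhat_{(t)}$, which is exactly the quantity the i.i.d.\ decomposition above controls.
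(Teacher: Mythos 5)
Your first step is exactly the paper's: unwind the update, add and subtract $\rgrad(\wwhat_{(t)})$, and apply $\|u+w\|^{2}\leq 2\|u\|^{2}+2\|w\|^{2}$ to reduce everything to bounding the conditional mean squared error $\exx\|\rgest(\wwhat_{(t)})-\rgrad(\wwhat_{(t)})\|^{2}$. The divergence, and the gap, is in how that quantity is handled. The paper does not touch the estimator's internal structure at this stage: it takes the pointwise high-probability bound $\|\rgest-\rgrad\|\leq\sqrt{2Vd\log(d\delta^{-1})/n}+\sqrt{V/n}$ (the fixed-$\ww$ ingredient of Lemma \ref{lem:uniform_grad_accuracy}, valid here because conditioning on $\wwhat_{(t)}$ makes the loss gradients i.i.d.), inverts it into an exponential tail $\prr\{\|\rgest-\rgrad\|>\varepsilon\}\leq d\exp(-(\varepsilon-a)^{2}/2b^{2})$ with $a=\sqrt{V/n}$ and $b=\sqrt{Vd/n}$, and then integrates the tail via $\exx|X|^{2}=\int_{0}^{\infty}\prr\{|X|^{2}>t\}\,dt$. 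The resulting integral is, up to normalization, the mean of a Folded Normal induced by $N(a,b^{2})$, and that is the sole source of the factors $1-2\Phi(-a/b)=1-2\Phi(-1/\sqrt{d})$, the exponential term, and the prefactor $\sqrt{2\pi b^{2}}$ appearing in the statement.

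This is why your plan for the second step would not land on the claimed bound. You propose a coordinatewise bias--variance decomposition followed by an explicit two-region Gaussian integral of the smoothed $\psi$-transform, and you assert that this computation will produce the $1-2\Phi(-1/\sqrt{d})$ and $e^{-2d}$ factors and the constant $\sqrt{2\pi b^{2}}$. It cannot: those quantities are functions of the ratio $a/b$ of the location and scale of the tail bound on the \emph{norm of the full $d$-dimensional error vector}, where $d$ enters through the union bound over coordinates and the $\ell_{2}$ aggregation; a per-coordinate integral of $\exx_{\prior}\psi(\cdot)$ has no mechanism to generate a $1/\sqrt{d}$ inside $\Phi$ or a $d$-dependent exponent. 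The obstacle you flag at the end is, on your route, not a technical difficulty but an impossibility. To be fair, your route is not wrong as mathematics: since $|\psi(u)|\leq|u|$ everywhere, one gets $\exx Y_{1}^{2}\leq c_{\prior}^{2}v_{j}$ directly, and summing over coordinates gives a clean MSE bound of order $c_{\prior}^{2}dV/n$ plus squared bias --- arguably simpler and tighter. But that proves a different inequality from the one stated, so as a proof of Theorem \ref{thm:variance_control} as written the proposal has a genuine gap at the decisive step.
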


\section{Empirical analysis}\label{sec:tests}

In the numerical experiments that follow, our primary goal is to elucidate the relationship that exists between factors of the learning task (e.g., sample size, model dimension, initial value, underlying data distribution) and the performance of the robust gradient descent procedure proposed in Algorithm \ref{algo:rgdmult}. We are interested in how these factors impact algorithm behavior in an absolute sense, as well as performance relative to well-known competitors.

We are considering three basic types of experiments. First, we develop a risk minimization task based on noisy function (and thus noisy gradient) observations. These controlled simulations let us carefully examine how different factors influence performance over time. Next, we consider a regression task under a wide variety of noise distributions, which lets us examine the true utility of competing algorithms under a scenario where the data may or may not be heavy-tailed. Finally, we use real-world benchmark data sets to evaluate performance on classification tasks.

\subsection{Controlled tests}\label{sec:tests_noisyopt}

\paragraph{Experimental setup}

We begin with a ``noisy convex risk minimization'' task, designed as follows. The risk function itself takes a quadratic form, as $R(\ww) = \langle \Sigma\ww, \ww \rangle/2 + \langle \ww, \uu \rangle + c$, where $\Sigma \in \RR^{d \times d}$, $\uu \in \RR^{d}$, and $c \in \RR$ are constants set in advance. The learning task is to find a minimizer of $R(\cdot)$, without direct access to $R$, rather only access to $n$ random function data $r_{1},\ldots,r_{n}$, with $r:\RR^{d} \to \RR$ mapping from parameter space to a numerical penalty. This data is generated independently from a common distribution, and are centered at the true risk, namely $\exx r(\ww) = R(\ww)$ for all $\ww \in \RR^{d}$. More concretely, we generate $r_{i}(\ww)=(\langle \wwstar-\ww, \xx_{i}\rangle + \epsilon_{i})^{2}/2$, $i \in [n]$, with $\xx$ and $\epsilon$ independent. The true minimum is denoted $\wwstar$, and $\Sigma = \exx \xx\xx^{T}$. The inputs $\xx$ are set to have a $d$-dimensional Gaussian distribution with all components uncorrelated. This means that $\Sigma$ is positive definite, and $R$ is strongly convex.

We make use of three metrics for evaluating performance here: average excess empirical risk (averaging of $r_{1},\ldots,r_{n}$), average excess risk (computed using true $R$), and variance of the risk. The latter two are computed by averaging over trials; each trial means a new independent random sample. In all tests, we conduct 250 trials.

Regarding methods tested, we run three representative procedures. First is the idealized gradient descent procedure (\ref{eqn:update_GD_ideal}), denoted \texttt{oracle}, which is possible here since $R$ is designed by us. Second, as a \textit{de facto} standard for most machine learning algorithms, we use ERM-GD, written \texttt{erm}. Here the update direction is simply the sample mean of the loss gradient. Finally, we compare our Algorithm \ref{algo:rgdmult}, written \texttt{rgdmult}, against these two procedures. Variance bounds $\mv{v}_{(t)}$ are computed using the simplest possible procedure, namely the empirical mean of the second moments of $\lgrad(\wwhat_{(t)};\zz)$, divided by two.

\paragraph{Impact of heavy-tailed noise}

Our first inquiry is a basic proof of concept: are there natural problem settings under which using \texttt{rgdmult} over ERM-GD is advantageous? How does this procedure perform when ERM-GD is known to be effectively optimal? Under Gaussian noise, ERM-GD is effectively optimal \citep[Appendix C]{lin2016a}. As a baseline, we start with Gaussian noise (mean $0$, standard deviation $20$), and then consider centered log-Normal noise (log-location $0$, log-scale $1.75$) as a representative example of asymmetric, heavy-tailed data. Performance results are given in Figure \ref{fig:POC}.

\begin{figure}[t]
\centering
\includegraphics[width=1.0\textwidth]{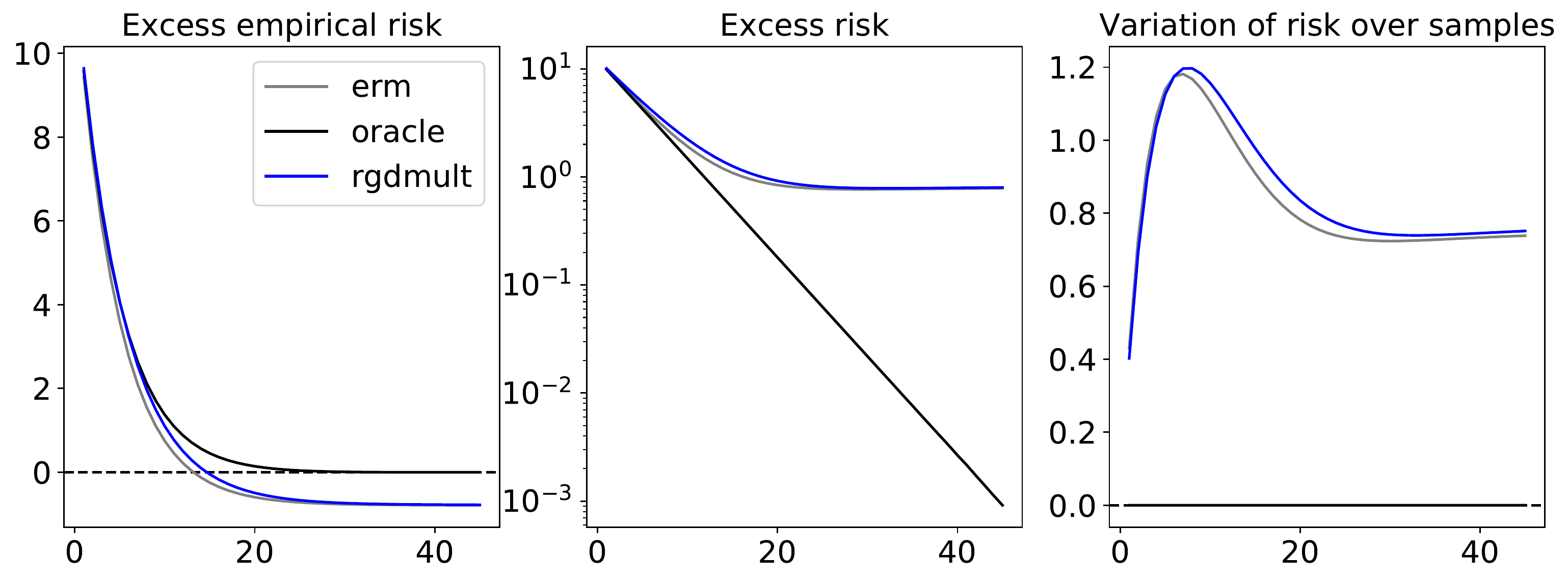}\\
\includegraphics[width=1.0\textwidth]{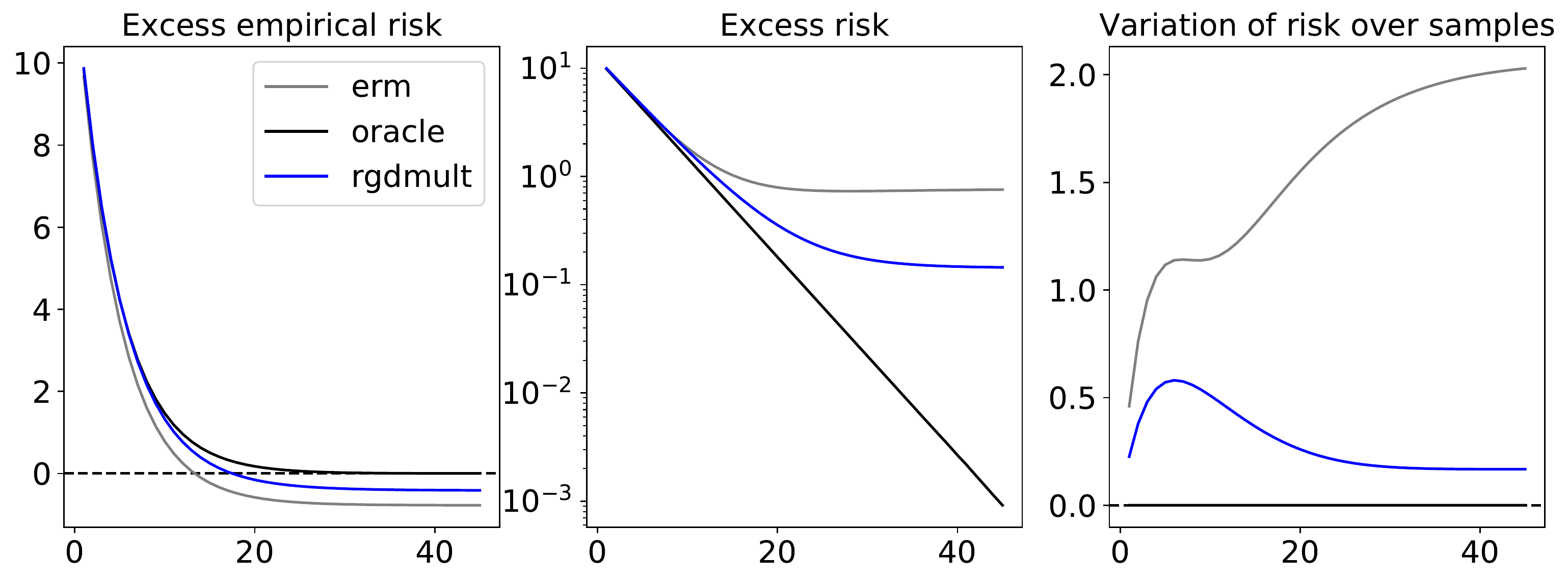}
\caption{Performance metrics as a function of iterative updates. Top row: Normal noise. Bottom row: log-Normal noise. Settings: $n = 500, d=2, \alpha_{(t)}=0.1$ for all $t$.}
\label{fig:POC}
\end{figure}

We see that when ERM-GD is known to be strong, both algorithms perform almost the same. In contrast, when we have heavy-tailed data, we see that \texttt{rgdmult} is far superior in terms of both the solution found, and the stability over the random draw of the sample. While compared with the oracle procedure, there is clearly some overfitting, we see that \texttt{rgdmult} departs from the oracle procedure at a much slower rate than ERM-GD, a desirable property.

Moving forward, we look more systematically at how different experimental settings lead to different performance, all else kept constant.

\paragraph{Impact of initialization}

Having fixed the underlying distribution and number of observations $n$, here we look at the impact of the initial guess $\wwhat_{(0)}$. We look at three initializations, taking the form $\wwstar + \text{Unif}[-\mv{\Delta},\mv{\Delta}]$, with $\mv{\Delta} = (\Delta_{1},\ldots,\Delta_{d})$, and values ranging over $\Delta_{j} \in \{2.5, 5.0, 10.0\}$, $j \in [d]$. Here larger values of $\Delta_{j}$ correspond to potentially worse initialization. Results are displayed in Figure \ref{fig:INIT}.

\begin{figure}[t]
\centering
\includegraphics[width=1.0\textwidth]{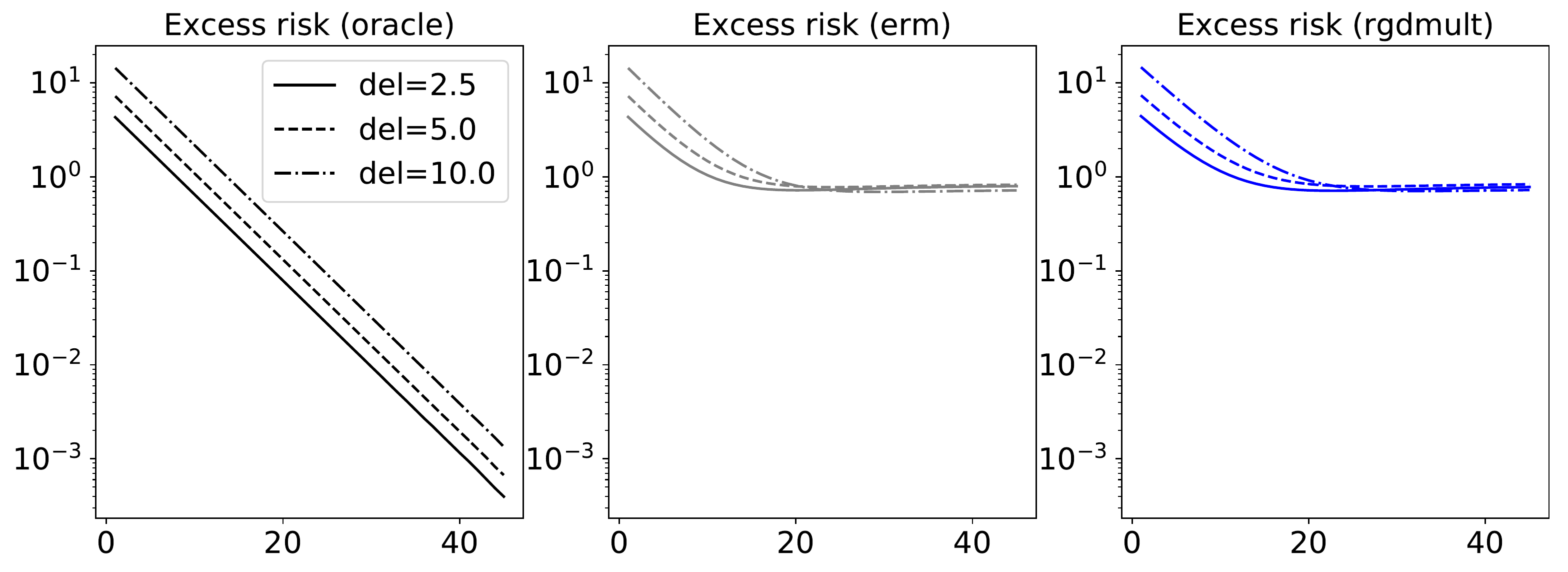}\\
\includegraphics[width=1.0\textwidth]{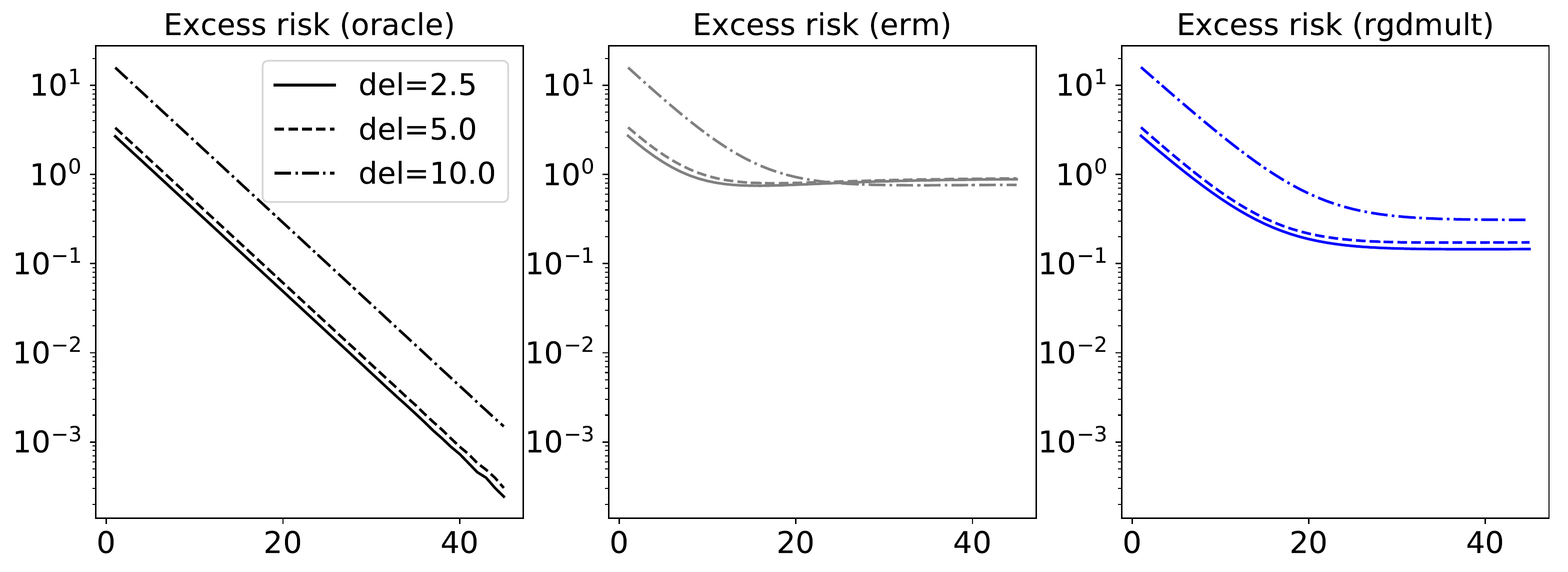}
\caption{Performance over iterations, under strong/poor initialization. Here \texttt{del} refers to $\Delta_{j}$. Top row: Normal noise. Bottom row: log-Normal noise. Settings: $n = 500, d=2, \alpha_{(t)}=0.1$ for all $t$.}
\label{fig:INIT}
\end{figure}

Several trends are clear. First, in the case where ERM-GD is essentially optimal, we see that \texttt{rgdmult} matches it. Furthermore, when the data is heavy-tailed, the proposed procedure is seen to be much more robust to a sub-par initial guess. While a bad start can lead to serious performance issues long-run in ERM-GD, we see that \texttt{rgdmult} can effectively recover.

\paragraph{Impact of distribution}

In our risk minimization task construction, we take advantage of the fact that very distinct loss distributions can still lead to precisely the same risk function. Here we examine performance as the underlying distribution changes; since all changes are of a purely statistical nature the oracle procedure is not affected, only ERM-GD and \texttt{rgdmult}. We consider six settings, three for Gaussian noise, and three for log-Normal noise. Location and scale parameters for the former are $(0,0,0), (1,20,34)$. Log-location and log-scale parameters for the latter are $(0,0,0), (1.25,1.75,1.9)$. Results are given in Figure \ref{fig:DIST}.

\begin{figure}[t]
\centering
\includegraphics[width=1.0\textwidth]{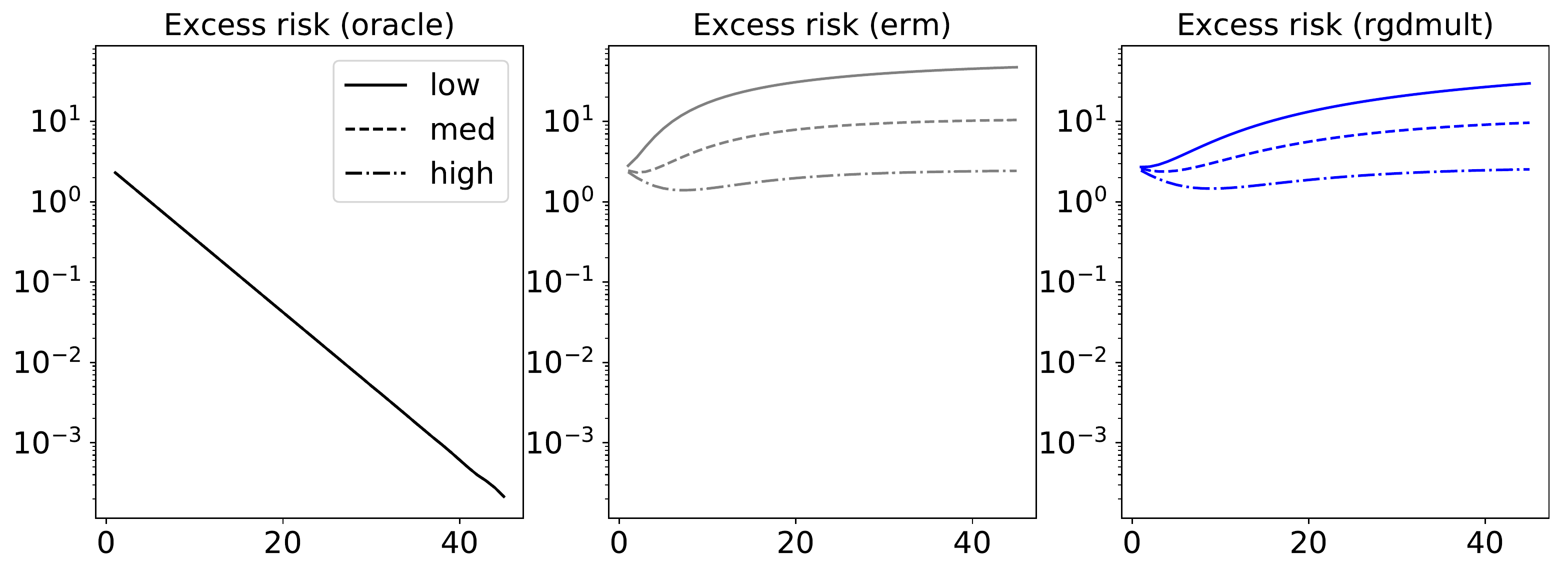}\\
\includegraphics[width=1.0\textwidth]{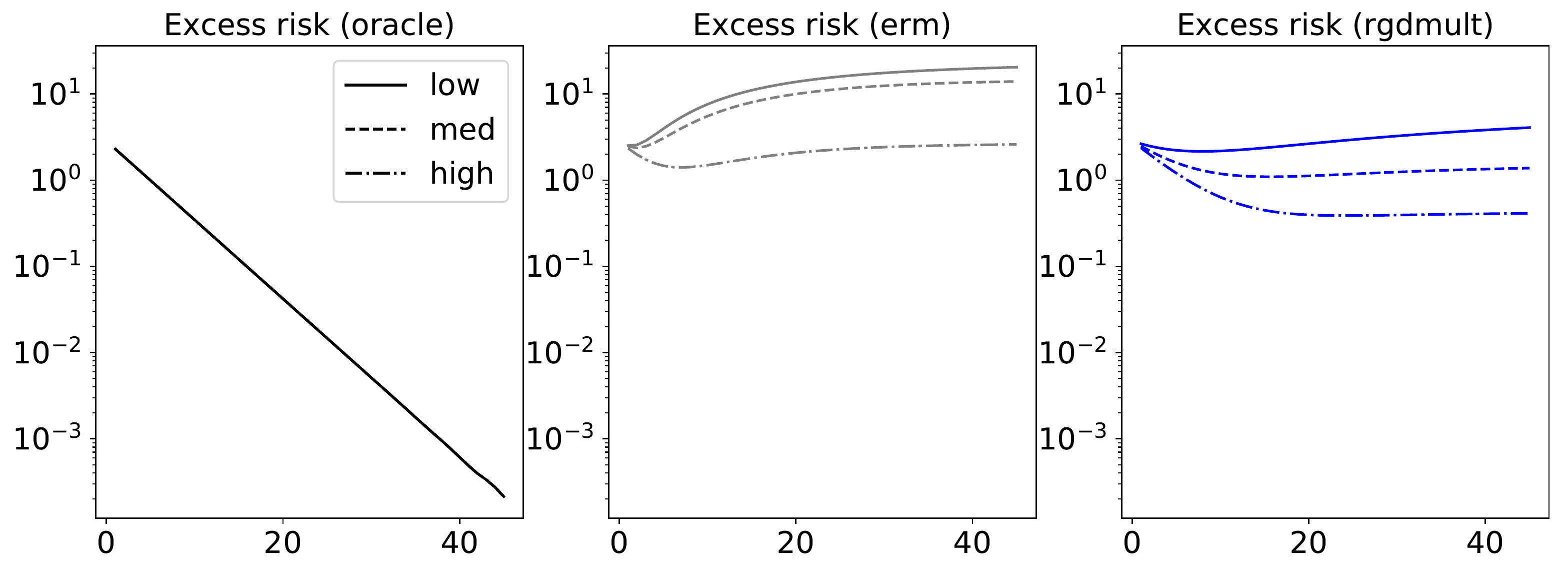}
\caption{Performance over iterations, under varying noise intensities. Here \texttt{low}, \texttt{med}, and \texttt{high} refer to the three noise distribution settings described in the main text. Settings: $n = 500, d = 2, \alpha_{(t)}=0.1$ for all $t$.}
\label{fig:DIST}
\end{figure}

In the case of Gaussian data, where we expect ERM-based methods to perform well, we see that \texttt{rgdmult} is able to match ERM-GD in all settings. Under log-Normal noise, the performance of ERM falls rather sharply, and we see a gap in performance that widens as the variance grows. Guarantees of good performance over a wide class of distributions for Algorithm \ref{algo:rgdmult}, without any prior knowledge of the underlying distribution are the key take-aways of the results culminating in Theorem \ref{thm:riskbd_fixed_strong}, and are reinforced clearly by these empirical test results, as well as those in subsequent sub-sections.

\paragraph{Impact of sample size}

As a direct measure of learning efficiency, we investigate how algorithm performance metrics change with the sample size $n$, with dimension fixed. Figure \ref{fig:NVAL} shows the accuracy of \texttt{erm} and ERM-GD in tests just like those given above. Initial values are common for all methods, and $n$ ranges over $\{10, 40, 160, 640\}$.

\begin{figure}[t]
\centering
\includegraphics[width=1.0\textwidth]{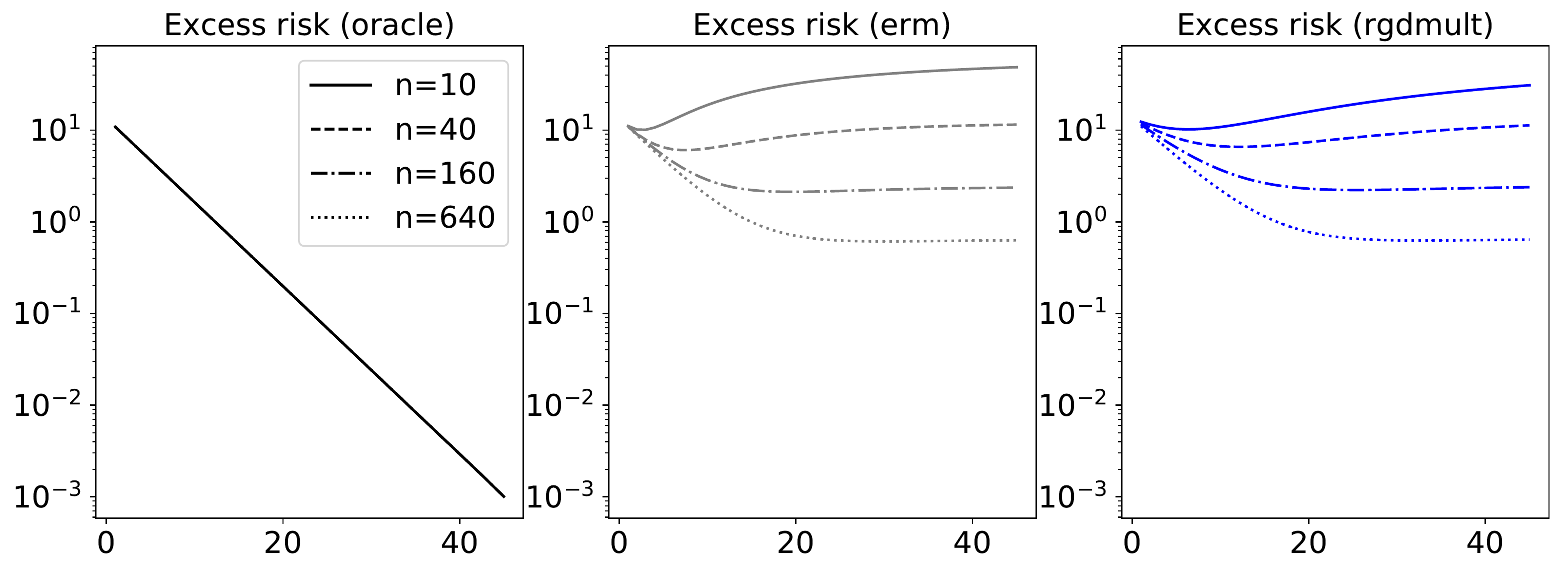}\\
\includegraphics[width=1.0\textwidth]{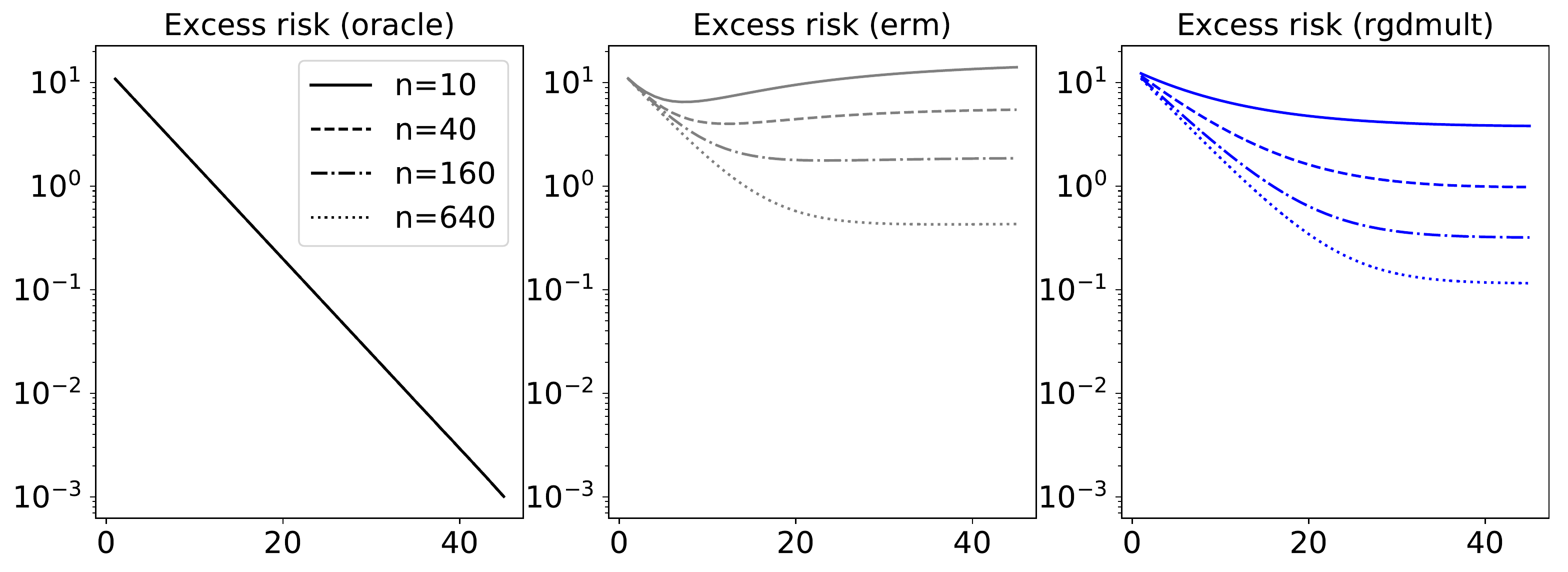}
\caption{Performance over iterations, under different sample sizes. Settings: $d=2, \alpha_{(t)}=0.1$ for all $t$.}
\label{fig:NVAL}
\end{figure}

As is natural, both procedures see monotonic performance improvement over increasing $n$. More salient is the strength of \texttt{rgdmult} under heavy-tailed observations, particularly when data is limited, giving clear evidence of better learning efficiency, in the sense of realizing better generalization in less time, with less data.

\paragraph{Impact of dimension}

The number of parameters the learning algorithm has to determine, here denoted as dimension $d$, makes a major impact on the overall difficulty of the learning task, and what sample sizes should be considered ``small.'' Fixing $n$, we let $d$ range over $\{2, 8, 32, 128\}$, and investigate how each algorithm performs with access to progressively less sufficient information. For our Algorithm \ref{algo:rgdmult}, we set the variance bound $\mv{v}_{(t)}$ to the empirical second moments of $\lgrad(\wwhat_{(t)};\zz)$, multiplied by $1/\sqrt{d}$. Figure \ref{fig:DVAL} gives results for these tests.

\begin{figure}[t]
\centering
\includegraphics[width=1.0\textwidth]{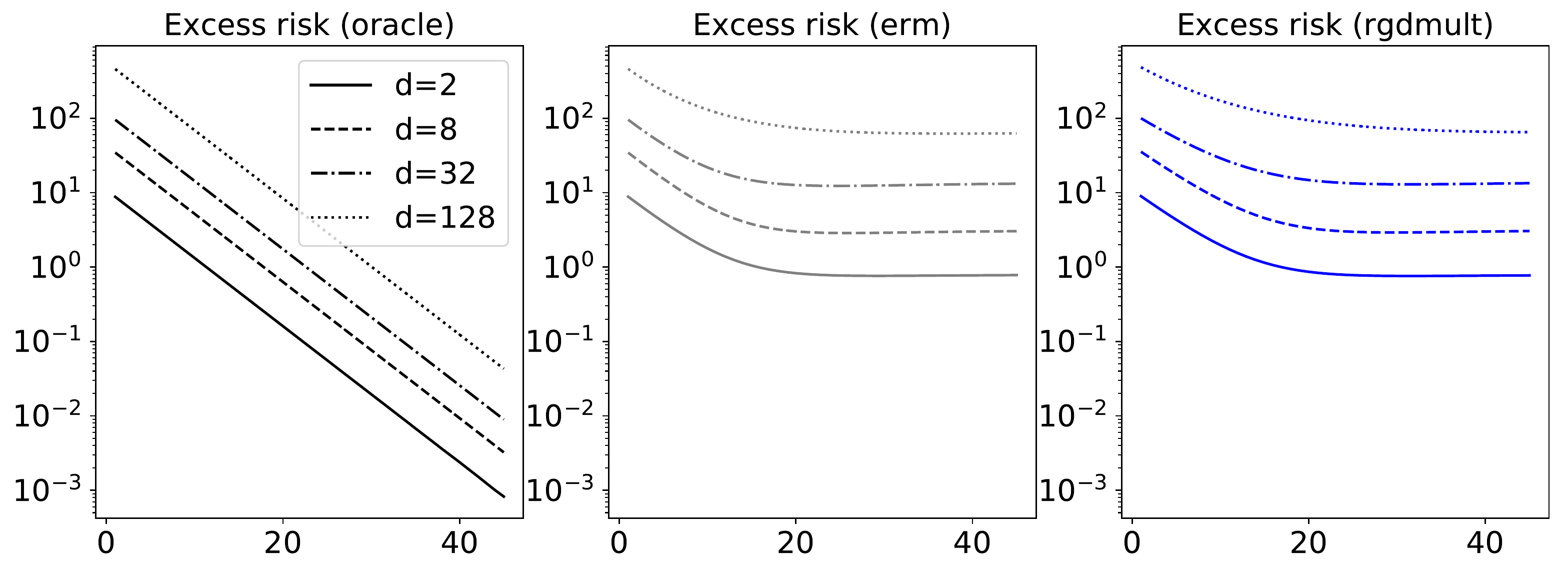}\\
\includegraphics[width=1.0\textwidth]{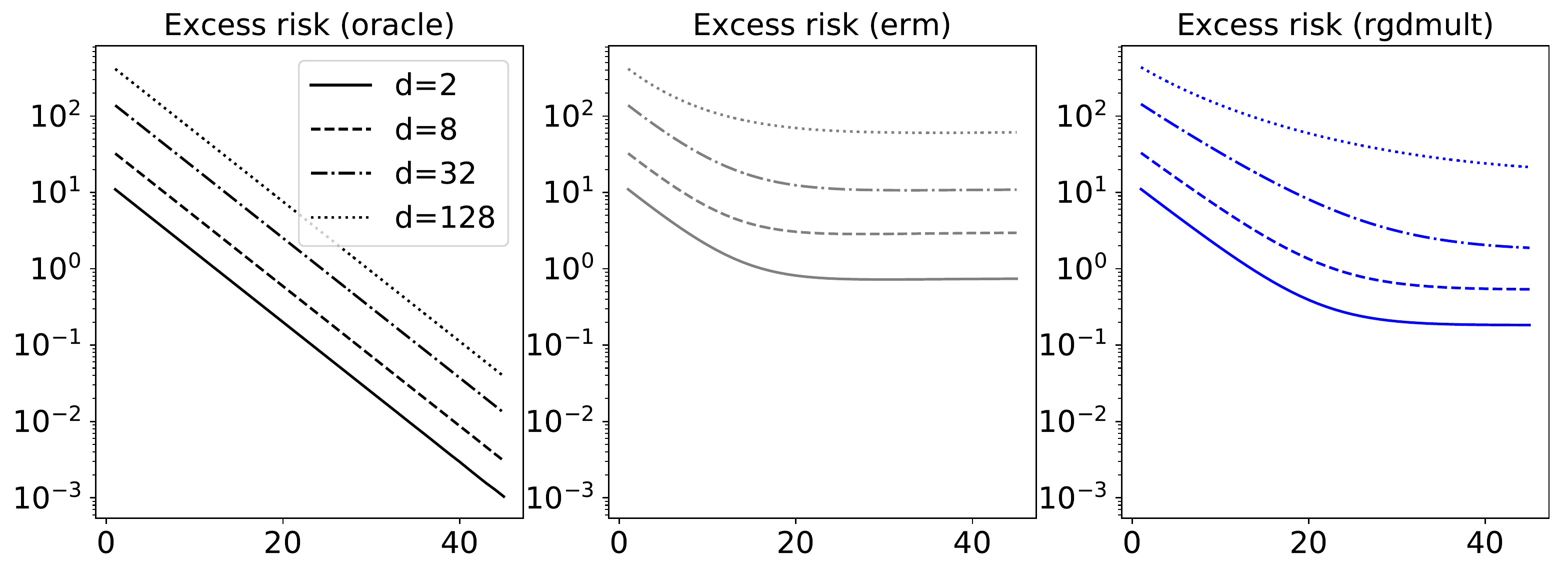}
\caption{Performance over iterations, under increasing dimension. Settings: $n = 500, \alpha_{(t)}=0.1$ for all $t$.}
\label{fig:DVAL}
\end{figure}

We see that with larger $d$, since $n$ is fixed, both non-oracle routines become less efficient, and need more time to converge. As with previous tests, the key difference appears under heavy tails, where we see Algorithm \ref{algo:rgdmult} is superior ove ERM-GD for all $d$ settings. While the ERM procedure saturates rather quickly, our procedure keeps improving for more iterations.

\paragraph{Comparison with robust loss minimizer}

In section \ref{sec:intro}, we cited the important work of \citet{brownlees2015a}, which chiefly considered theoretical analysis of a robust learning procedure that minimizes a robust \textit{objective}, in contrast to our use of a robust update direction. Our proposed procedure enjoys essentially the same theoretical guarantees, and we have claimed that it is more practical. Here we attempt to verify this claim empirically. Denote the method of \citet{brownlees2015a} by \texttt{bjl}. To implement their approach, which does not specify any particular algorithmic technique, we implement \texttt{bjl} using the non-linear conjugate gradient method of Polak and Ribi\`{e}re \citep{nocedal1999a}. This can be found as part of the the \texttt{optimize} module of the SciPy scientific computation library, called \texttt{fmin\_cg}, with default parameter settings. We believe that using this standard first-order solver makes for a fair comparison between \texttt{bjl} and our Algorithm \ref{algo:rgdmult}, again denoted \texttt{rgdmult}, and again with variance bound $\mv{v}_{(t)}$ set to the empirical second moments of $\lgrad(\wwhat_{(t)};\zz)$, multiplied by $1/\sqrt{d}$. For our routine, we have fixed the number of iterations to be $T=30$ for all settings. We compute the time required for computation using the Python \texttt{time} module. Multiple independent trials of each learning task (analogous to those previous) are carried out, with the median time taken over trials (for each $d$ setting) used as the final time record. We consider settings of $d=2,4,8,16,32,64$. These times along with performance results are given in Figure \ref{fig:versusBJL}.

\begin{figure}[t]
\centering
\includegraphics[width=1.0\textwidth]{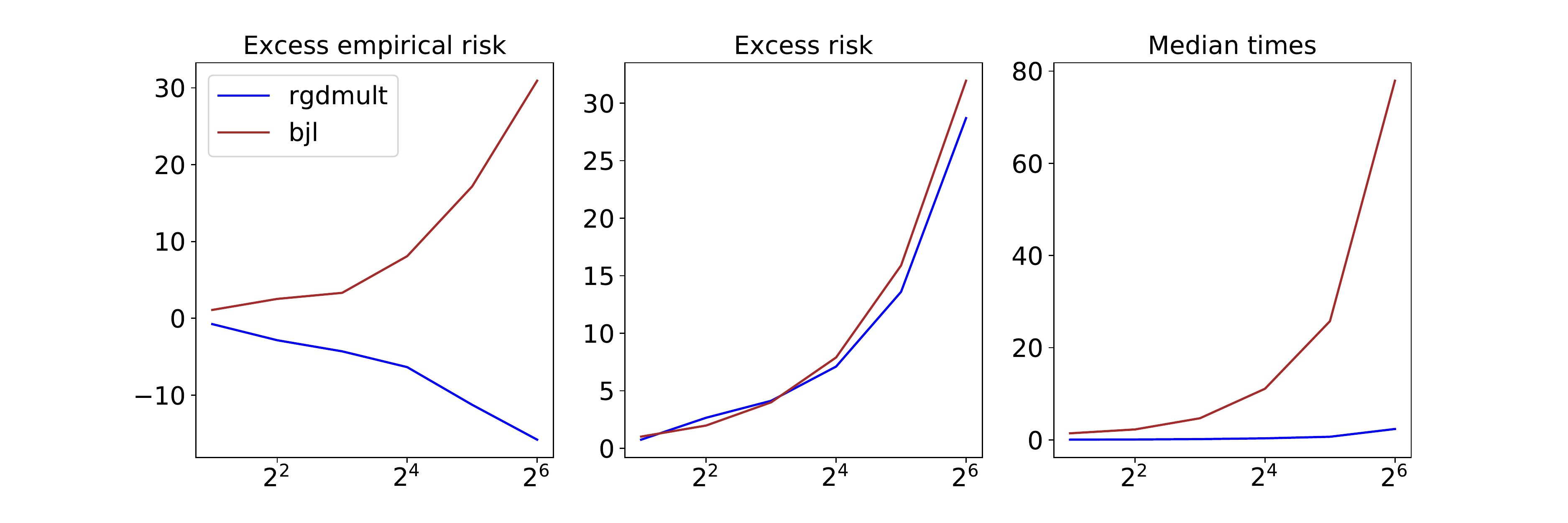}\\
\includegraphics[width=1.0\textwidth]{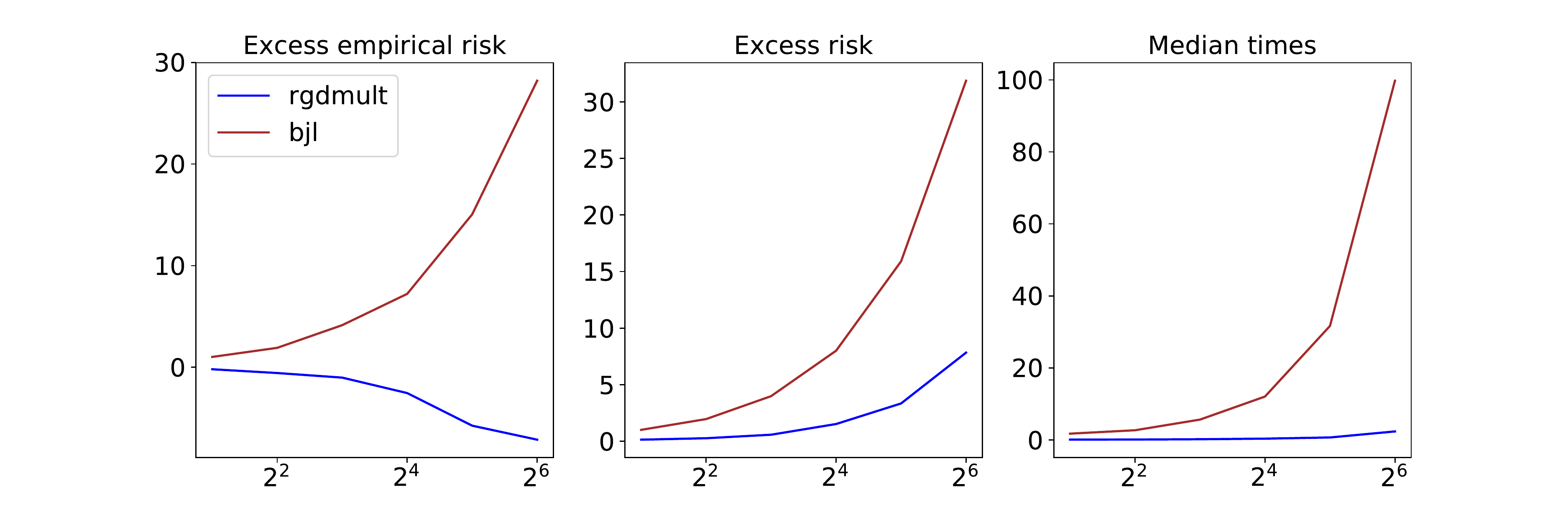}
\caption{Comparison of our robust gradient-based approach with the robust objective-based approach. Top: Normal noise. Bottom: log-Normal noise. Performance is given as a function of the number of $d$, the number of parameters to optimize, given in $\log_{2}$ scale. Settings: $n = 500, \alpha_{(t)}=0.1$ for all $t$.}
\label{fig:versusBJL}
\end{figure}

We can first observe that in low dimensions, and with data subject to Gaussian noise, the performance of both methods is simiular, a reassuring fact considering their conceptual closeness. Moving to higher dimensions, however, and especially under heavy-tailed noise, we see that our \texttt{rgdmult} achieves better performance in much less time. Note that \texttt{bjl} is optimizing a completely different objective function, explaining the deviation in excess empirical risk. There are certainly other ways of implementing \texttt{bjl}, but there is no way of circumventing the optimization of an explicitly-defined objective, which may not be convex. Our proposed \texttt{rgdmult} looks to offer a more practical alternative, which still enjoys the same statistical guarantees.

\paragraph{Regression application}

For our next class of experiments, we look at a more general regression task, under a diverse collection of data distributions. We then compare Algorithm \ref{algo:rgdmult} with well-known procedures specialized to regression, both classical and recent. In each experimental condition, and for each trial, we generate $n$ observations of the form $y_{i} = \xx_{i}^{T}\wwstar + \epsilon_{i}, i \in [n]$ for training. Each condition is defined by the setting of $(n,d)$ and $\mu$. Throughout, we have inputs $\xx$ which are generated from a $d$-dimensional Gaussian distribution, with each coordinate independent of the others. As such, to set $\mu$ requires setting the distribution of the noise, $\epsilon$. We consider several families of distributions, each with 15 distinct parameter settings, or ``noise levels.'' These settings are carried out such that the standard deviation of $\epsilon$ increases over the range $0.3--20.0$, in a roughly linear fashion as we increase from level 1 (lowest) to 15 (highest).

A range of signal/noise ratios can be captured by controlling the norm of the vector $\wwstar \in \RR^{d}$ determining the model. For each trial, we generate $\wwstar$ randomly as follows. Considering the sequence $w_{k} \defeq \pi/4 + (-1)^{k-1}(k-1)\pi/8, k=1,2,\ldots$, sample $i_{1},\ldots,i_{d} \in [d_{0}]$ uniformly, with $d_{0}=500$. The underlying vector is then set as $\wwstar = (w_{i_{1}},\ldots,w_{i_{d}})$. The signal to noise ratio $\text{SN}_{\mu} = \|\wwstar\|_{2}^{2}/\vaa_{\mu}(\epsilon)$ then varies over the range $0.2 \leq SN_{\mu} \leq 1460.6$. Here we consider four noise families: log-logistic (denoted \texttt{llog} in figures), log-Normal (\texttt{lnorm}), Normal (\texttt{norm}), and symmetric triangular (\texttt{tri\_s}). Many more are considered in Appendix \ref{sec:more_test_results}, and even with just these four, we have representative distributions with both bounded and unbounded sub-Gaussian noise, and heavy-tailed data both with and without finite higher-order moments.

Here we do not compute the risk $R$ exactly, but rather use off-sample prediction error as the key metric for evaluating performance. This is computed as excess root mean squared error (RMSE) computed on an independent testing set. Performance is averaged over independent trials. For each condition and trial, a test set of $m$ independent observations is generated identically to the $n$-sized training set that precedes testing. All competing methods use common samples for training and testing, for each condition and trial. In the $k$th trial, each algorithm outputs an estimate $\wwhat(h)$. Using RMSE to approximate the $\ell_{2}$-risk, compute $e_{k}(\wwhat) \defeq (m^{-1}\sum_{i=1}^{m}(\wwhat^{T}\xx_{k,i}-y_{k,i})^{2})^{1/2}$, outputting prediction error as the excess error $e_{k}(\wwhat(k)) - e_{k}(\wwstar(k))$, averaged over $K$ trials. In all experiments, we have $K=250$, $m=1000$.

We consider several methods against which we compare the proposed Algorithm \ref{algo:rgdmult}. As classical choices, we have ordinary least squares (ERM under the squared error, \texttt{ols}) and least absolute deviations (ERM under absolute error, \texttt{lad}). For more recent methods, as described in section \ref{sec:intro}, we consider robust regression routines as given by \citet{minsker2015a} (\texttt{geomed}) and \citet{hsu2016a} (\texttt{hs}). In the former, we partition the data, obtaining the \texttt{ols} solution on each subset, and these candidates are aggregated using the geometric median in the $\ell_{2}$ norm \citep{vardi2000a}. The number of partitions is set to $\max\{2,\lfloor n/(2d) \rfloor\}$. In the latter, we used source code published online by the authors. To compare our Algorithm \ref{algo:rgdmult} with these routines, we initialize \texttt{rgdmult} to the analytical \texttt{ols} solution, with step size $\alpha_{(t)} = 0.01$ for all iterations, and $\delta = 0.005$. Variance bounds $\mv{v}_{(t)}$ are set to the empirical second moments of $\lgrad(\wwhat_{(t)},\zz)$, divided by 2. In total, the number of iterations is constrained by a fixed budget: we allow for $40n$ gradient evaluations in total. Representative results are provided in Figure \ref{fig:multinoise_linreg}.

\begin{figure}[t]
\centering
\includegraphics[width=0.25\textwidth]{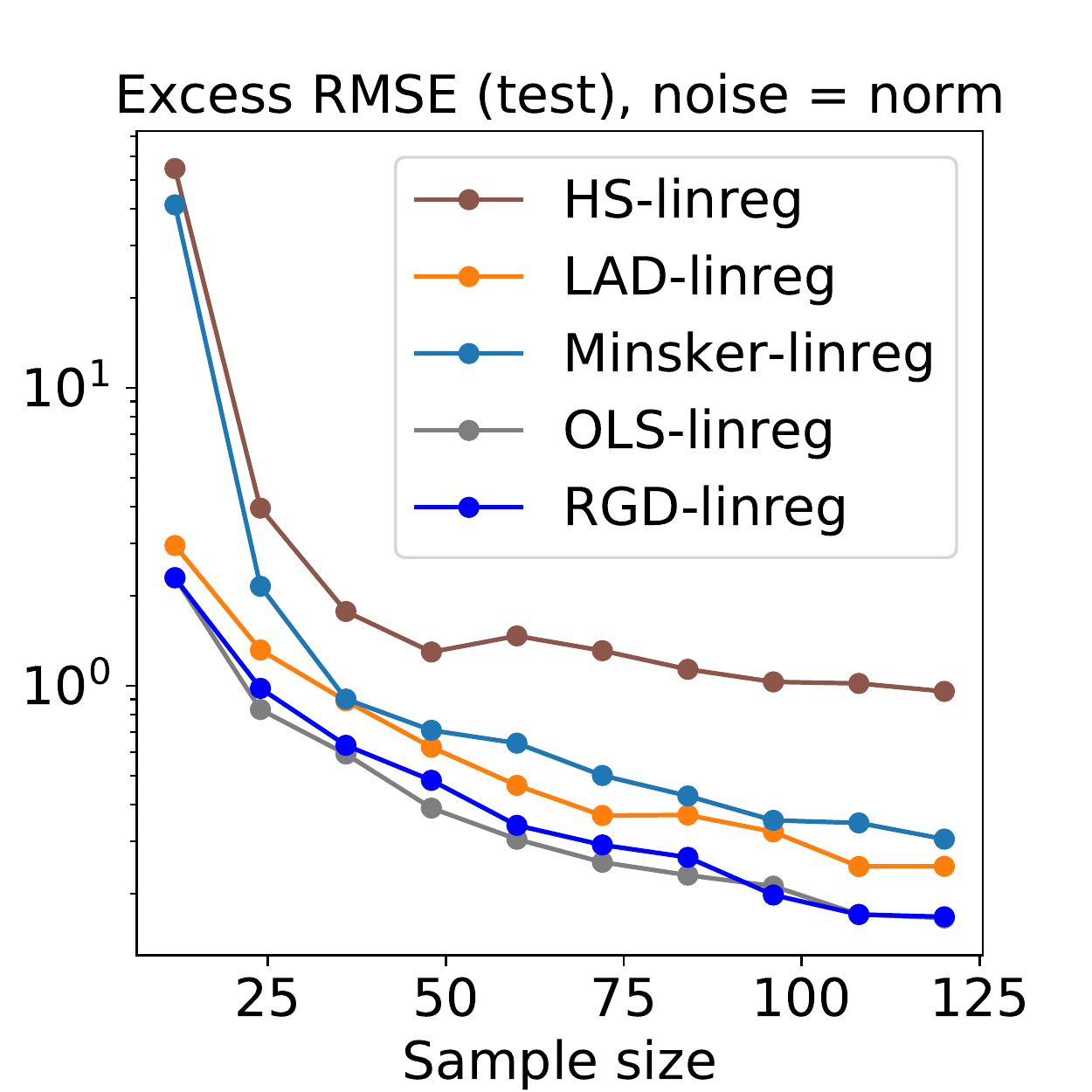}\,\includegraphics[width=0.25\textwidth]{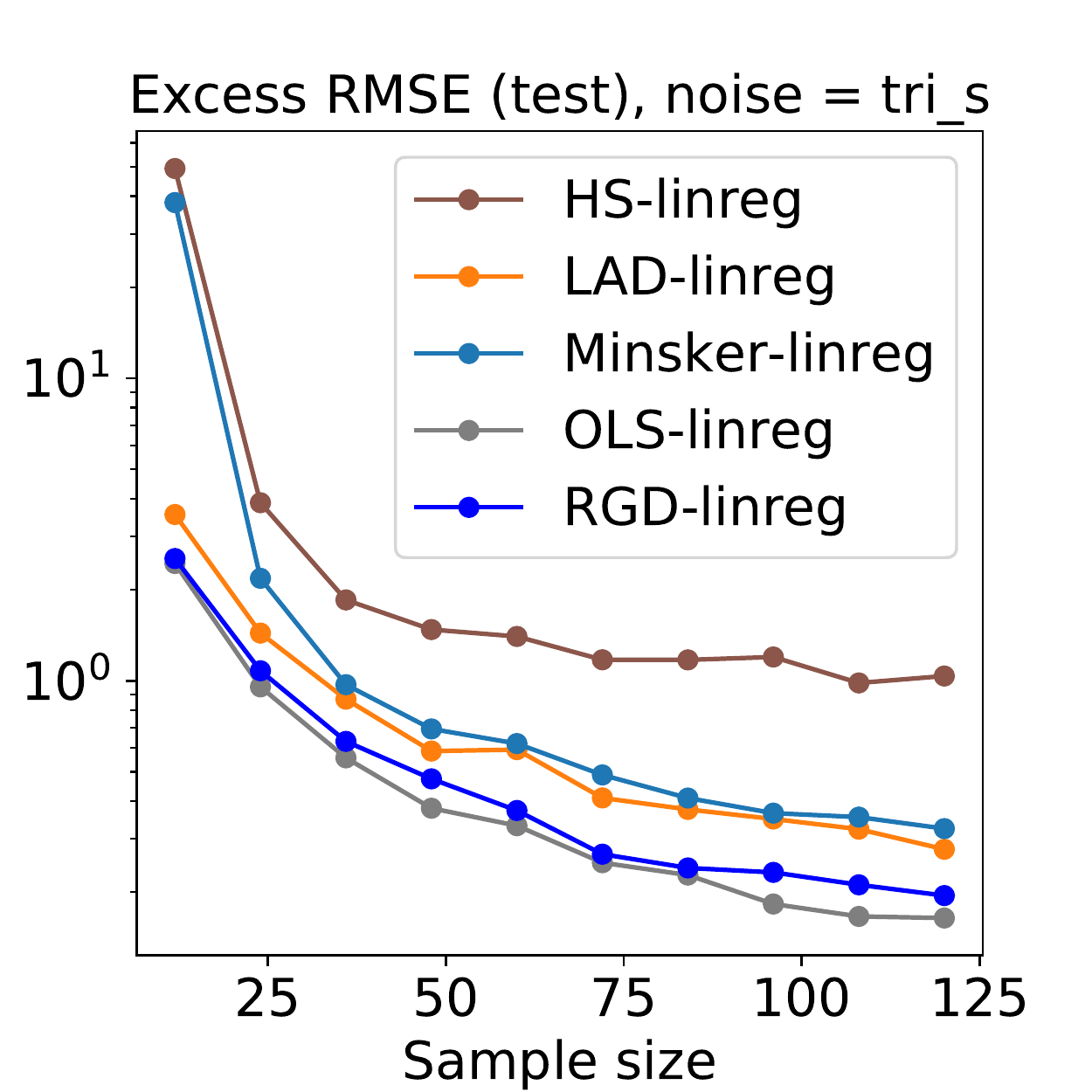}\,\includegraphics[width=0.25\textwidth]{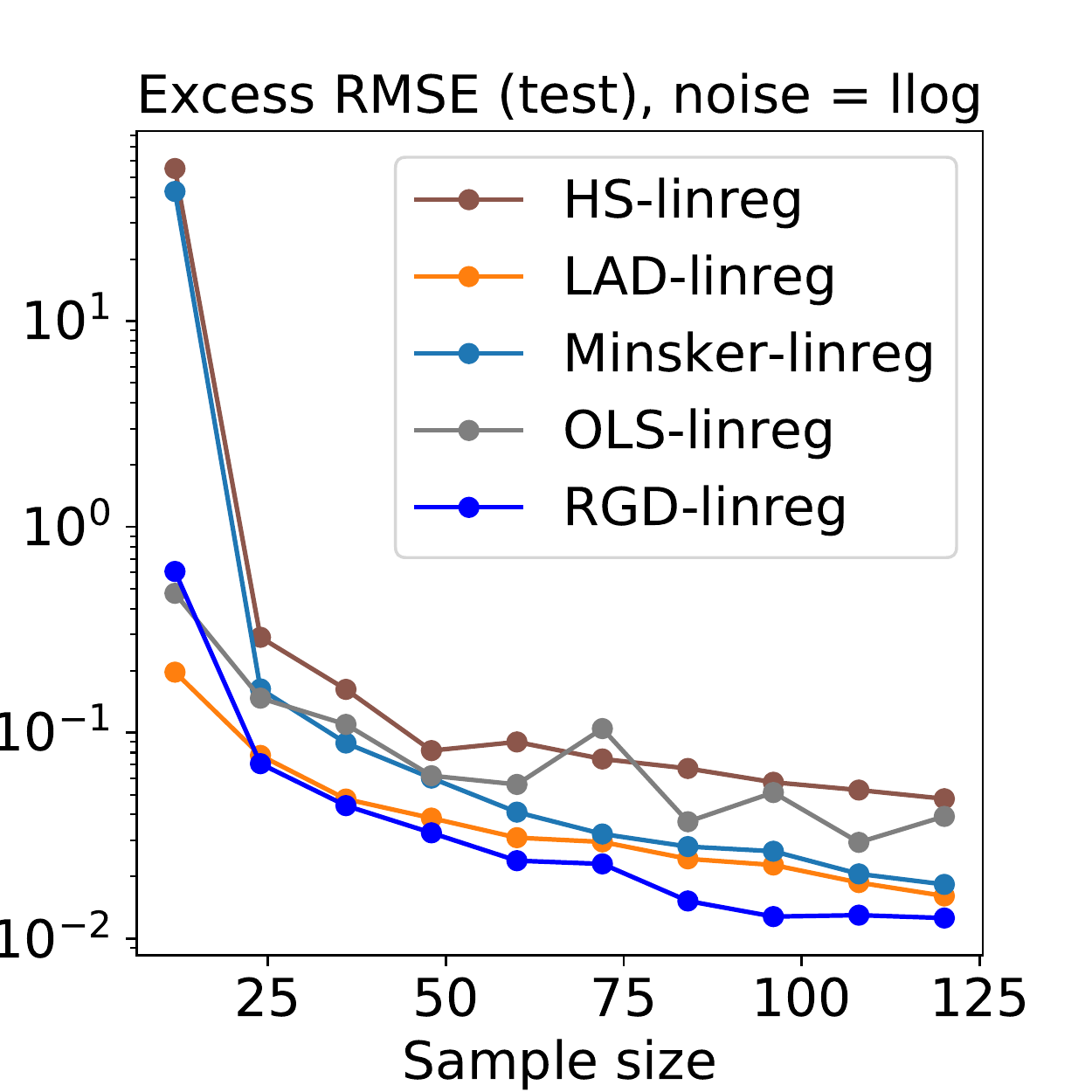}\,\includegraphics[width=0.25\textwidth]{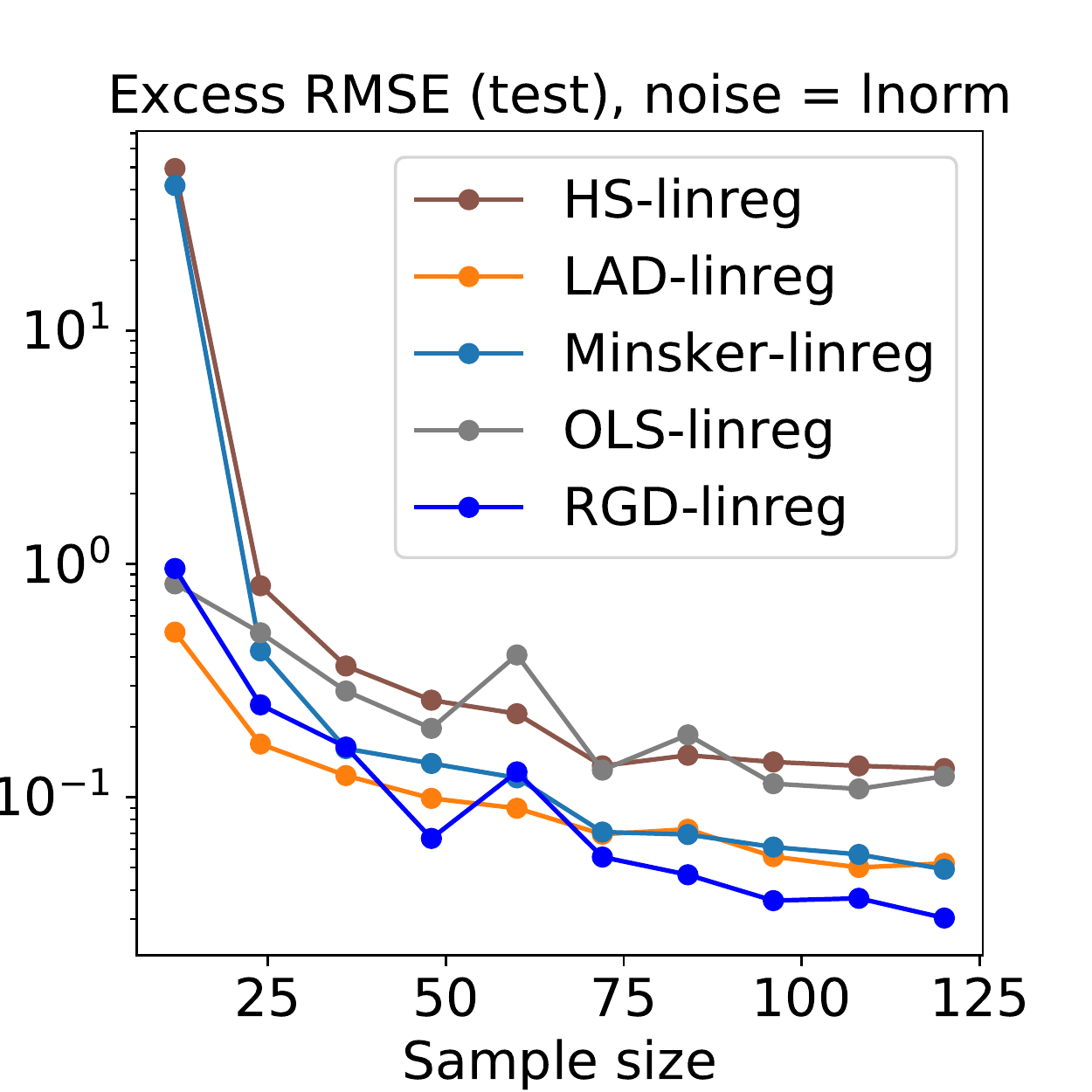}\\
\includegraphics[width=0.25\textwidth]{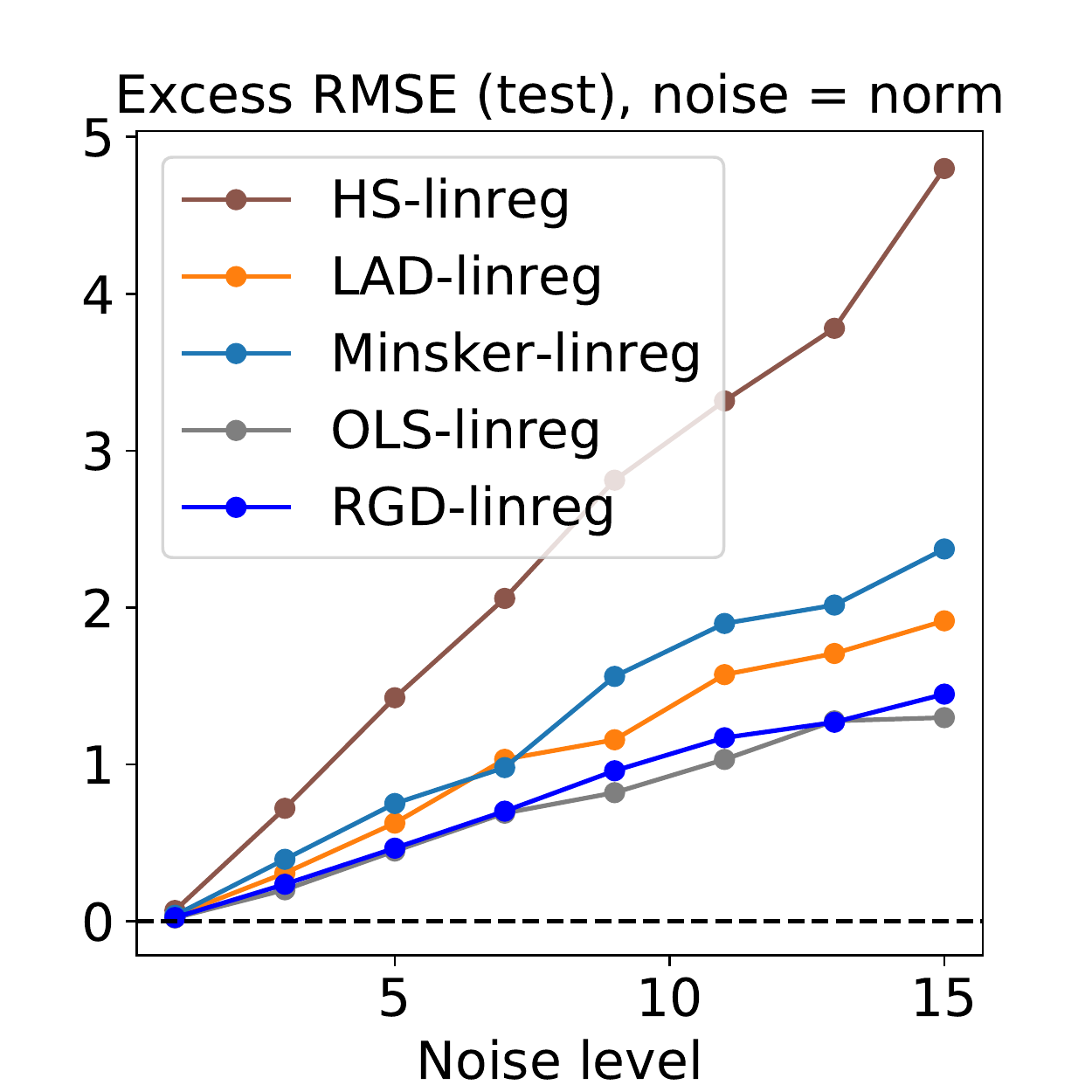}\,\includegraphics[width=0.25\textwidth]{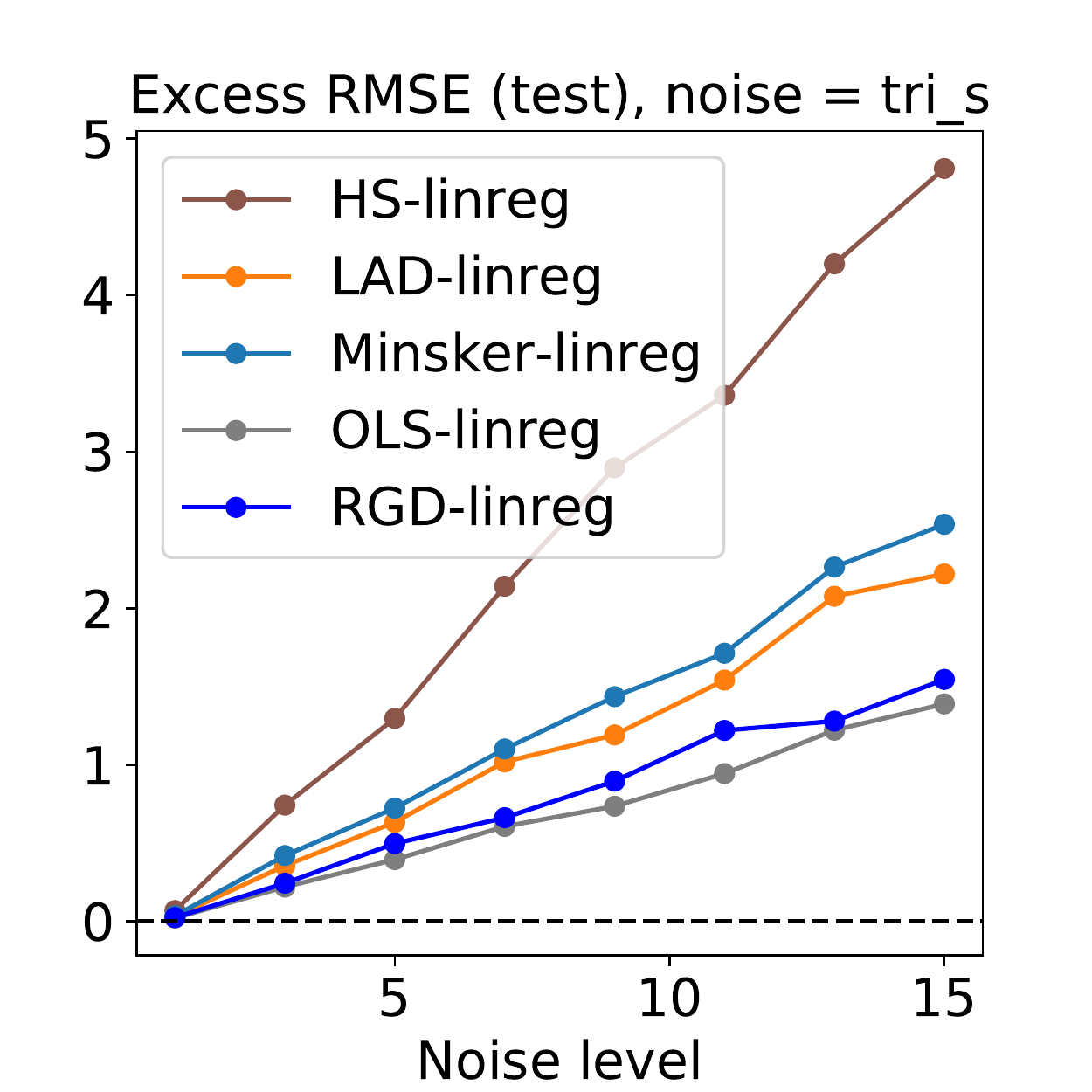}\,\includegraphics[width=0.25\textwidth]{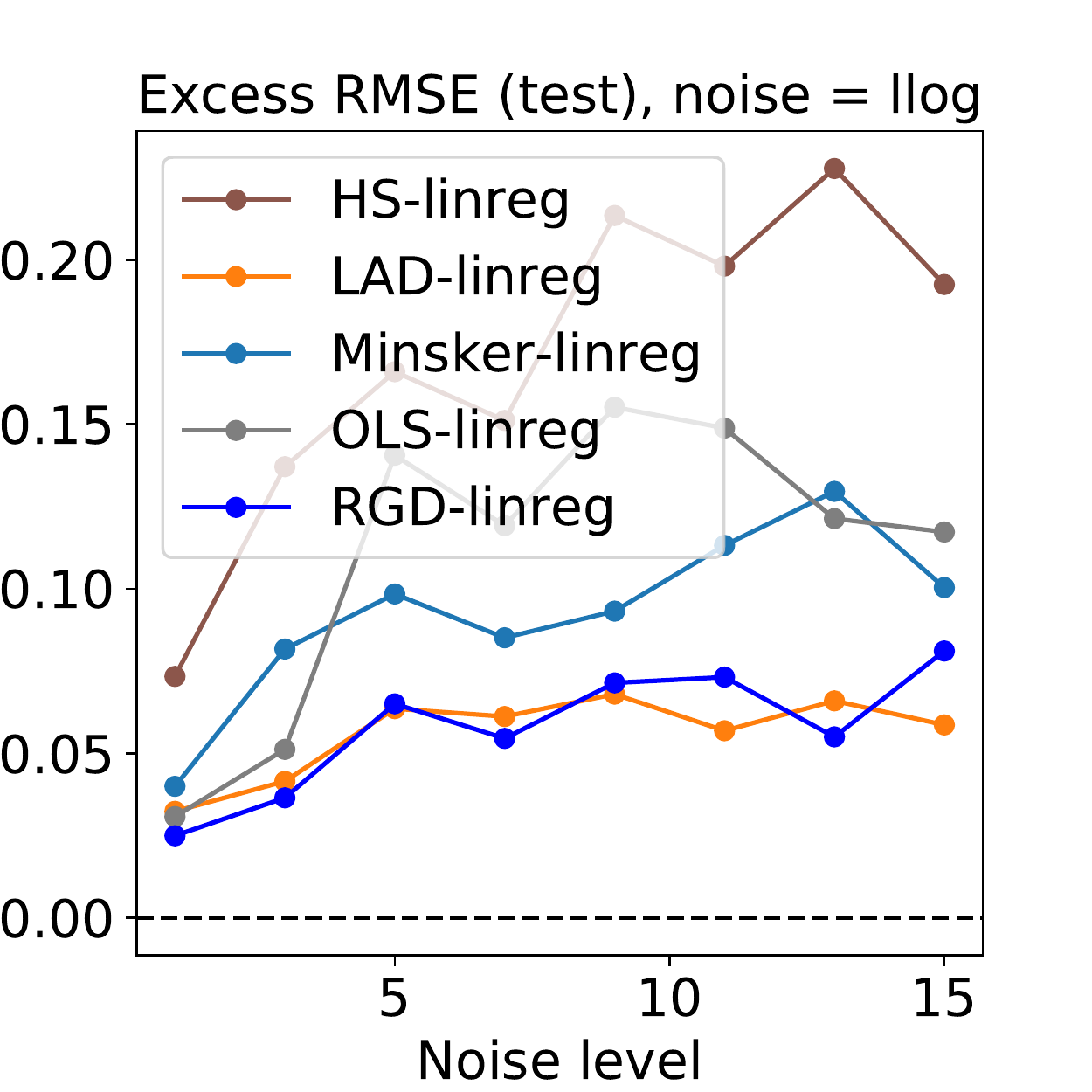}\,\includegraphics[width=0.25\textwidth]{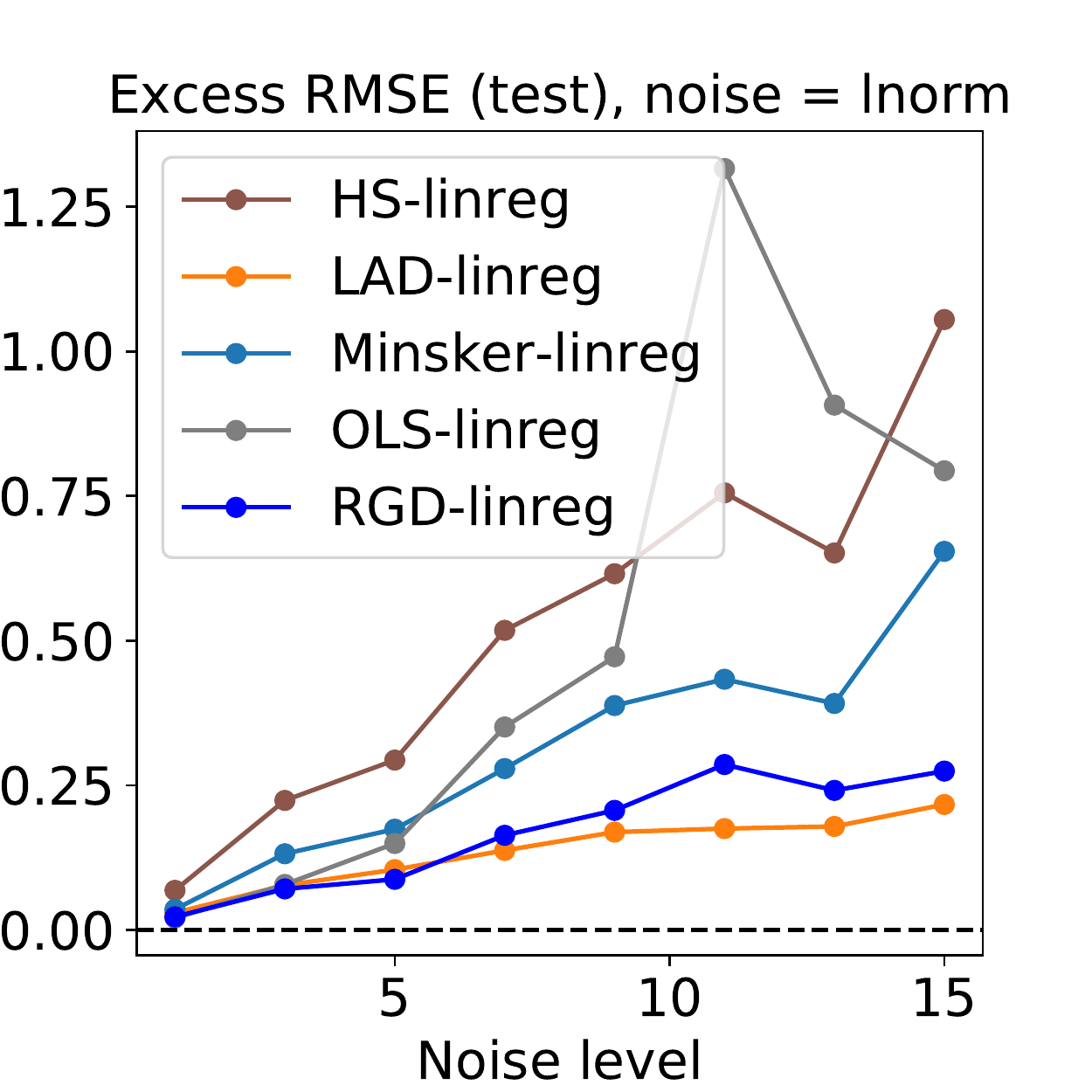}\\
\includegraphics[width=0.25\textwidth]{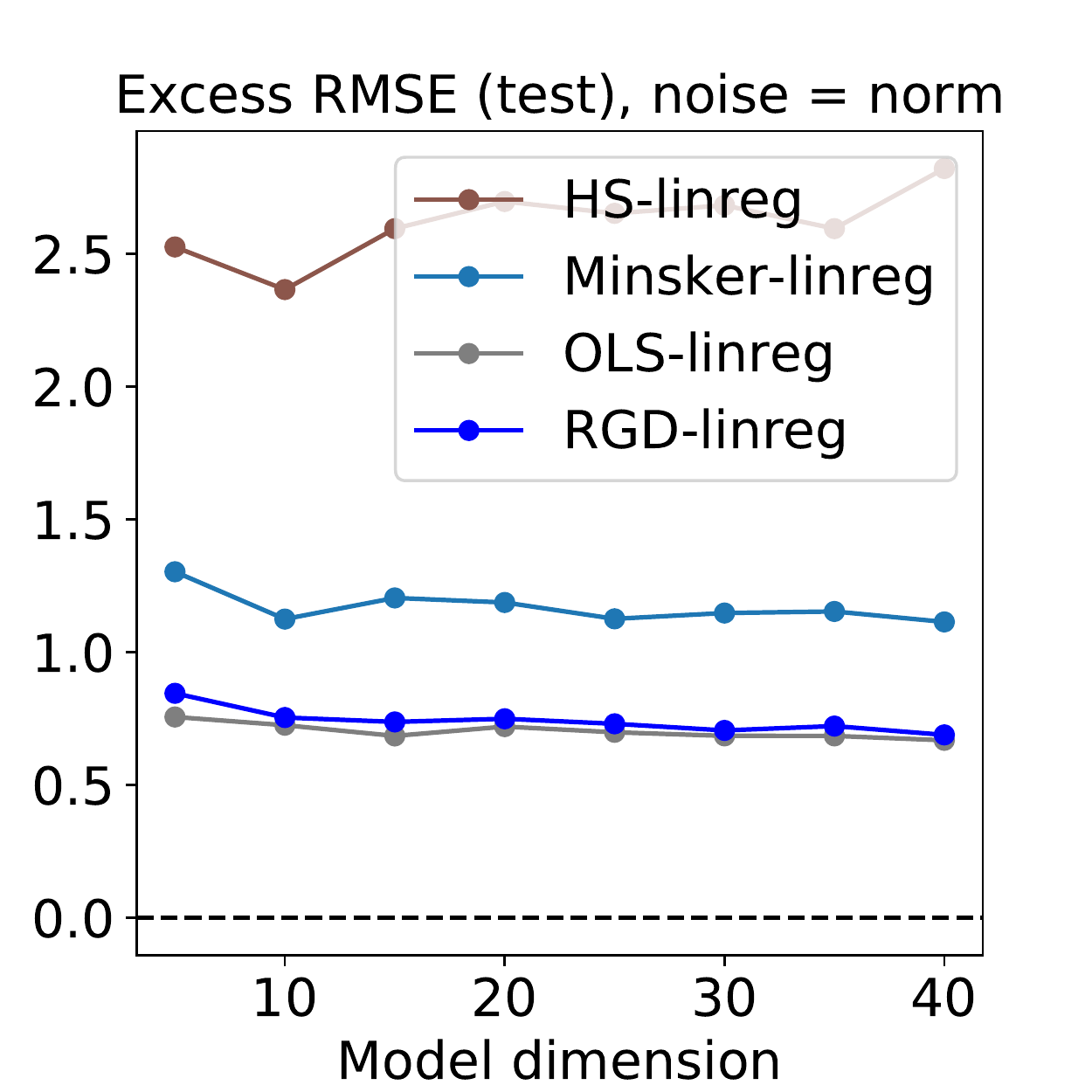}\,\includegraphics[width=0.25\textwidth]{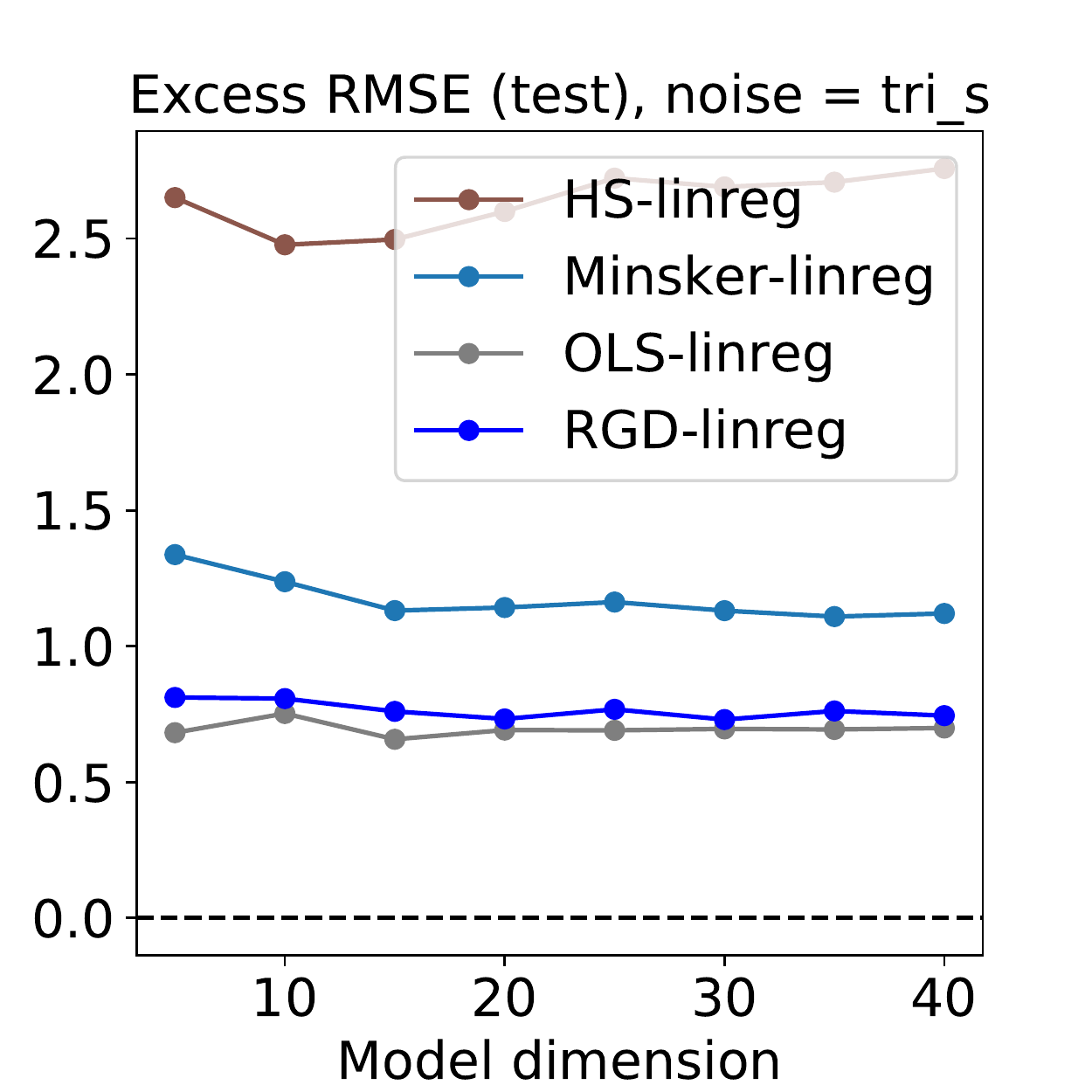}\,\includegraphics[width=0.25\textwidth]{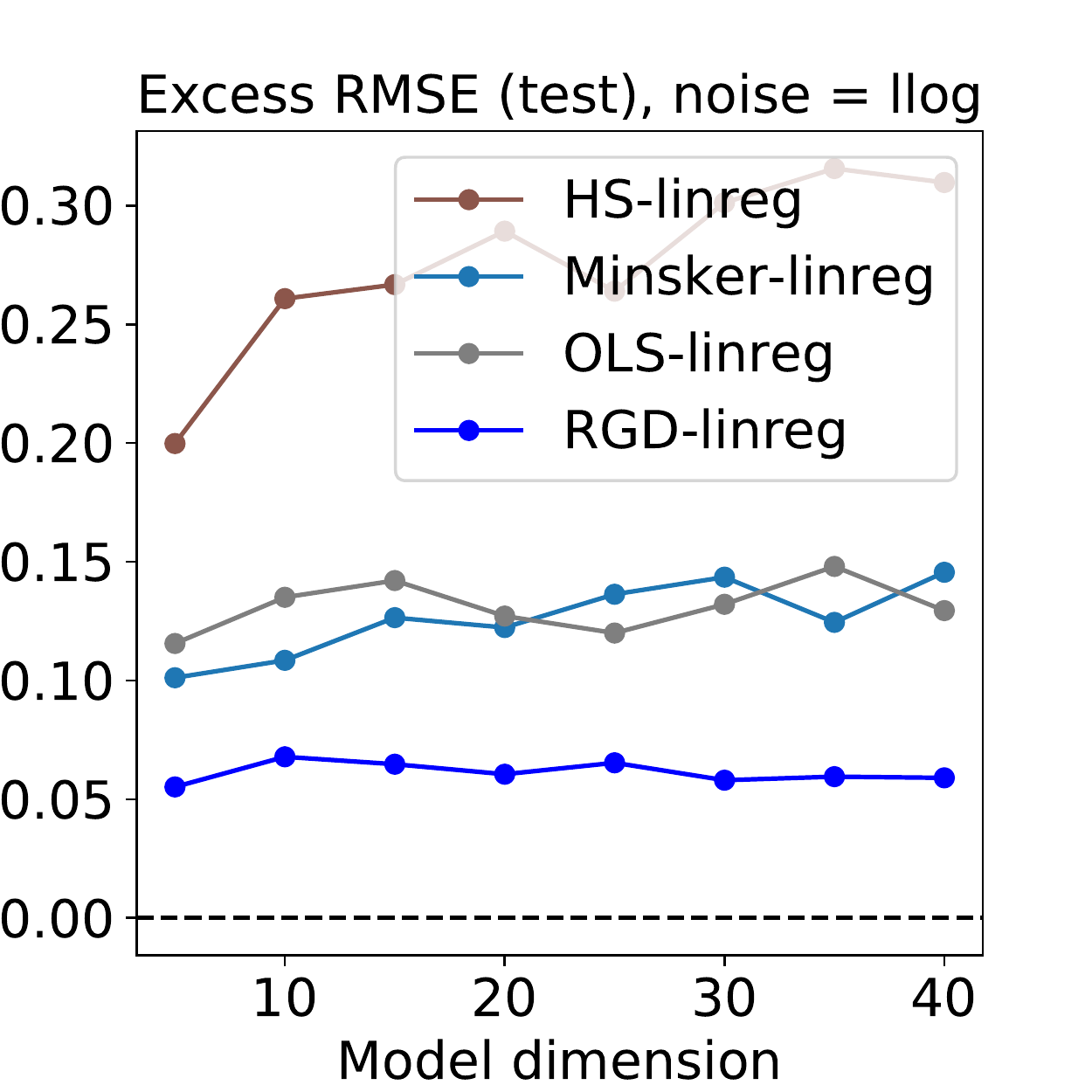}\,\includegraphics[width=0.25\textwidth]{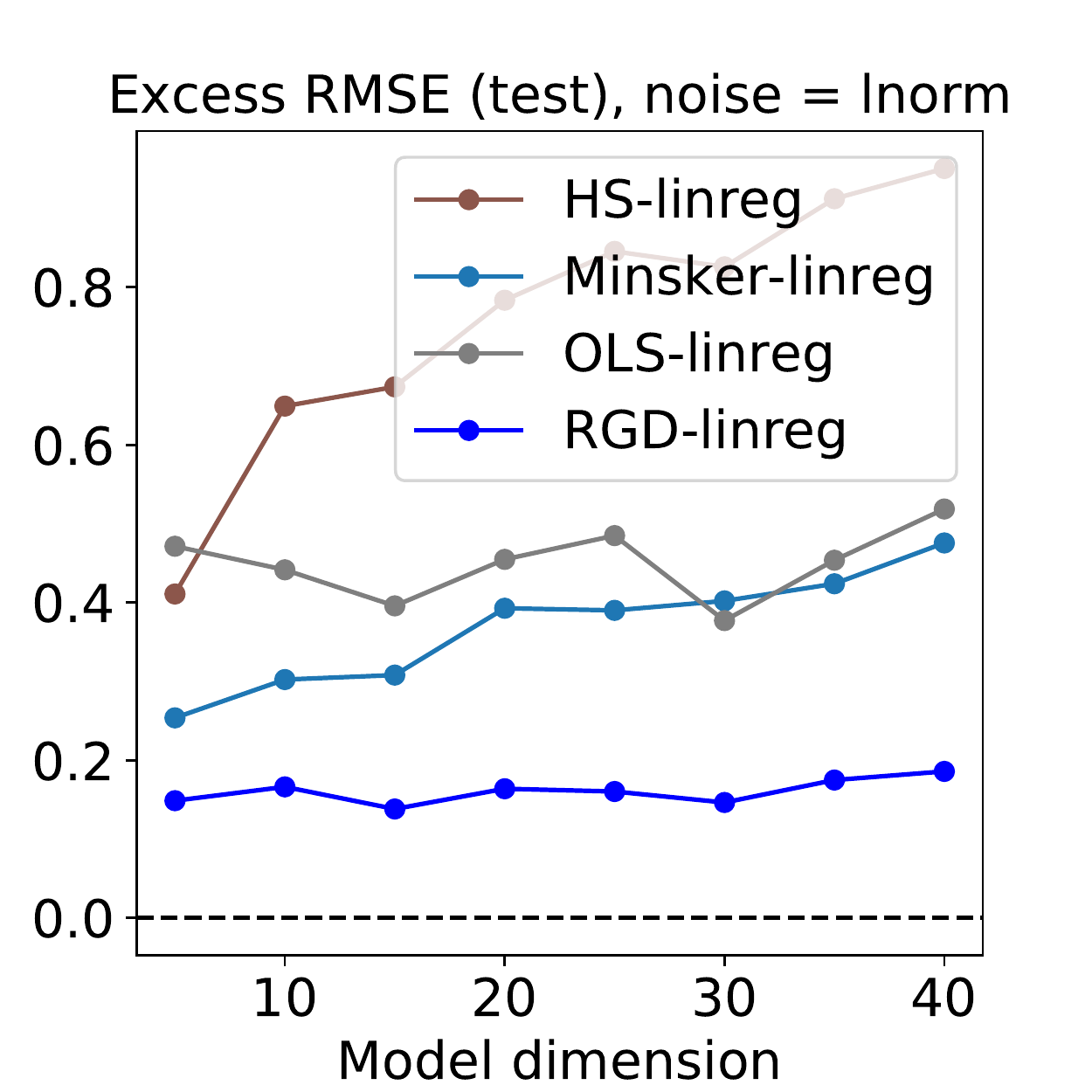}
\caption{Top: Prediction error over sample size $12 \leq n \leq 122$, fixed $d=5$, noise level = $8$. Middle: Prediction error over noise levels, for $n=30, d=5$. Bottom: Prediction error over dimensions $5 \leq d \leq 40$, with ratio $n/d = 6$ fixed, and noise level = $8$. Each column corresponds to a distinct noise family.}
\label{fig:multinoise_linreg}
\end{figure}

To begin, we fix $d$, and look at performance over $n$ settings (first row of Figure \ref{fig:multinoise_linreg}). Regardless of distribution, \texttt{rgdmult} is seen to provide highly competitive performance; whereas other methods perform well on some distributions and very poorly on others, a high level of generalization ability is uniformly maintained by the proposed procedure. We see that \texttt{ols} is strong under sub-Gaussian data (\texttt{norm} and  \texttt{tri\_s}), while it deteriorates under the heavy-tailed data. The more robust methods tend to perform better than \texttt{ols} on heavy-tailed data, but clearly suffer from biased estimates under sub-Gaussian data. These results illustrate how \texttt{rgdmult} realizes the best of both worlds, paying a tolerable price in bias for large payouts in terms of robustness to outliers.

In the second row of Figure \ref{fig:multinoise_linreg}, we examine performance over noise levels. It is encouraging that even with pre-fixed step size and budgets (since $n$ is fixed over all noise levels), the strong performance of \texttt{rgdmult} holds over very diverse settings.

Finally, in the third row of Figure \ref{fig:multinoise_linreg}, the ratio of $n$ to $d$ is fixed, and we see if and how performance changes when $d$ is increased. For all distributions, the performance of \texttt{rgdmult} is essentially constant over $d$ when $n$ scales with $d$, which is what we would hope considering the risk bounds of Theorem \ref{thm:riskbd_fixed_strong}. Certain competitive methods show more sensitivity to the absolute number of free parameters, particularly in the case of heavy-tailed data with asymmetric distributions.

\subsection{Application to real-world benchmarks}\label{sec:tests_real}

As our final class of numerical experiments, we shift our focus to classification tasks, and this time make use of real-world data sets, to be described in detail below.

All methods use a common model, here multi-class logistic regression. If the number of classes is $C$, and we have $F$ input features, then the dimension of the model will be $d=(C-1)F$. A basic property of this model is that the loss function is convex in the parameters, with gradients that exist, thus placing the model firmly within our realm of interest. Furthermore, for all of these tests we shall add a squared $\ell_{2}$-norm regularization term $a\|\ww\|^{2}$ to the loss, where $a$ varies depending on the dataset. Once again, each algorithm is given a fixed budget, this time of $20n$, where $n$ is the size of the training set available, which again depends on the dataset (details below).

Here we give results for two well-known data sets used for benchmarking: the forest cover type dataset from the UCI repository,\footnote{\url{http://archive.ics.uci.edu/ml/datasets/Covertype}} and the protein homology dataset used in a previous KDD Cup.\footnote{\url{http://www.kdd.org/kdd-cup/view/kdd-cup-2004/Tasks}} For each dataset, we execute 10 independent trials, with training/testing subsets randomly sampled without replacement as is described shortly. For all datasets, we normalize input features to the unit interval $[0,1]$ in a per-feature fashion. For the cover type dataset, we consider binary classification of the second type against all other types. With $C=2$ and $F=54$, we have $d=54$ and $a=0.001$, with a training subset of size $n=4d$. The protein homology dataset has highly unbalanced labels, with only 1296 positive labels our of over 145,000 examples. We balance out training and testing data, randomly selecting 296 positive examples and the same number of negative examples, yielding a test set of 592 points. As for the training set size, we use all positive examples not used for testing (1000 points each time), plus a random selection of 1000 negatively labeled examples, so $n=2000$. With $C=2$ and $F=74$, the dimension is $d=74$, and $a=0.001$. In all settings, initialization is done uniformly over the interval $[-0.05,0.05]$.

We investigate the utility of a random mini-batch version of Algorithm \ref{algo:rgdmult} here. We try mini-batch sizes of 10 and 20. Variance bounds $\mv{v}_{(t)}$ are set to $k$ times the empirical mean of the second moments of $\lgrad(\wwhat_{(t)},\zz)$, with $k$ ranging over $\{1/10, 1/5, 1/2, 1, 5, 25, 125, 625\}$. Furthermore, for the high-dimensional datasets, we consider a mini-batch in terms of random selection of which parameters to robustly update. At each iteration, we randomly choose $\min\{100,d\}$ indices, running Algorithm \ref{algo:rgdmult} for the resulting sub-vector, and the sample mean for the remaining coordinates. We compare our proposed algorithm with stochastic gradient descent (SGD), and stochastic variance-reduced gradient descent (SVRG) proposed by \citet{johnson2013a}. For each method, pre-fixed step sizes ranging over $\{0.0001, 0.001, 0.01, 0.05, 0.10, 0.15, 0.20\}$ are tested. SGD has mini-batches of size 1, just as the SVRG inner loop. The inner loop of SVRG has $n/2$ iterations, and all methods continue running until the fixed budget of gradient evaluations is spent.

\begin{figure}[t]
\centering
\includegraphics[width=0.5\textwidth]{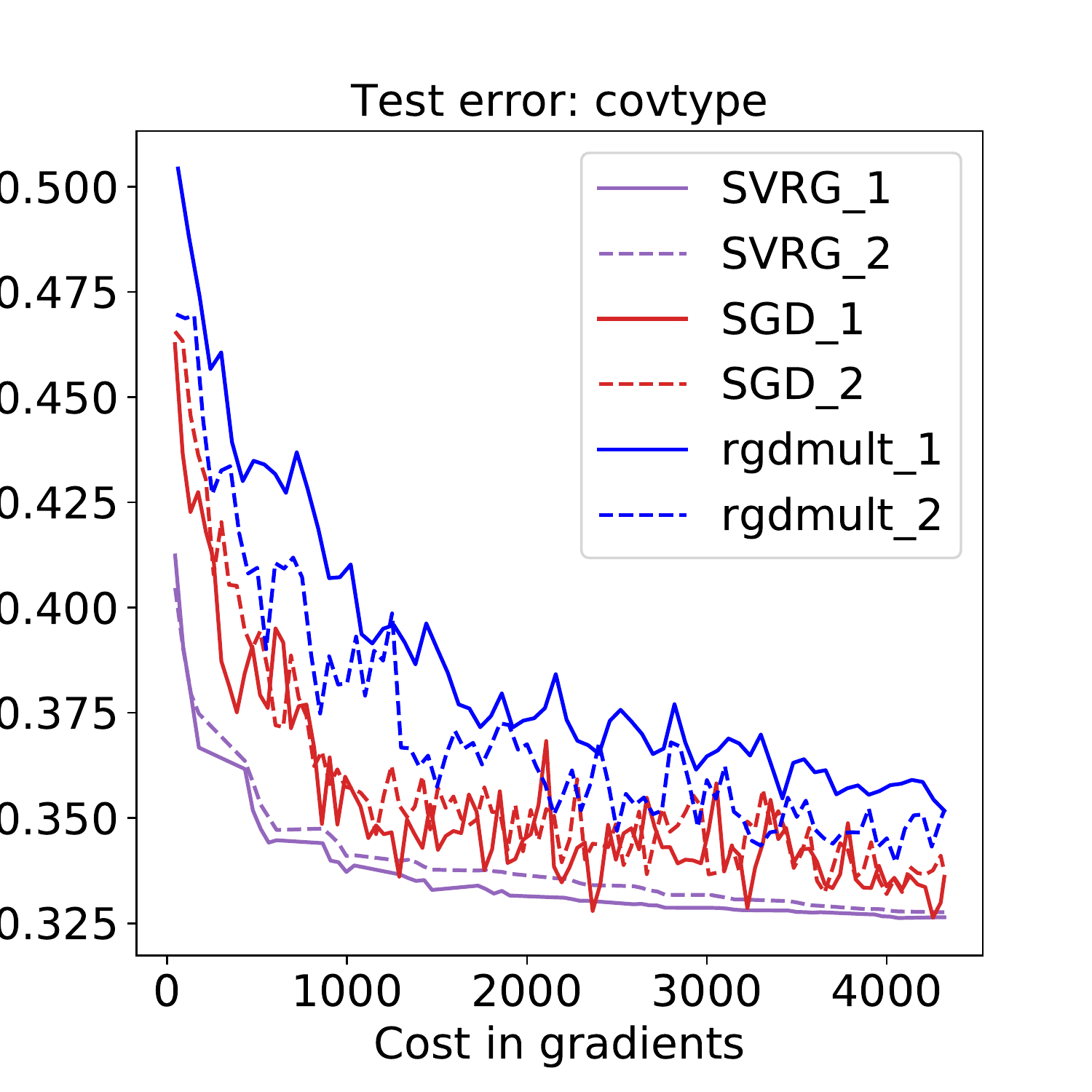}\,\includegraphics[width=0.5\textwidth]{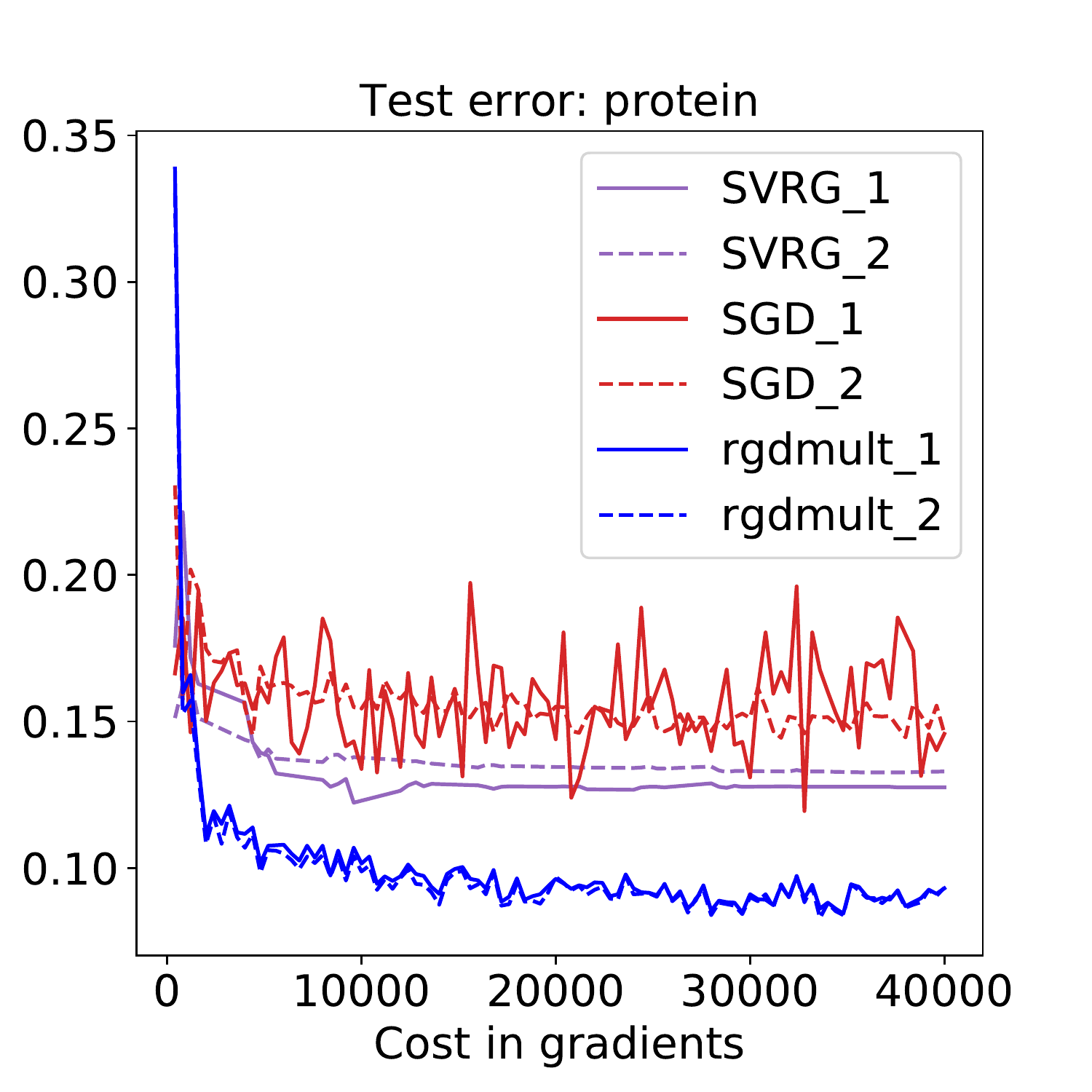}
\caption{Test error (misclassification rate) over budget spent, as measured by gradient computations, for the top two performers within each method class. Each plot corresponds to a distinct dataset.}
\label{fig:tests_real}
\end{figure}

We share representative results in Figure \ref{fig:tests_real}. For each dataset and each method, we chose the top two parameters settings, written \texttt{*\_1} and \texttt{*\_2} here. Here the ``top two'' refers to performance as measured by the median test error for the last five iterations. In general, the proposed procedure is clearly competitive with the best settings of these popular routines, and in the case of the smaller data set (protein homology, right-most plot), we see a significant improvement over competitors. Assuredly, our mini-batch implementation of Algorithm \ref{algo:rgdmult} is merely a nascent application, but strong performance under even a very naive setup is promising in terms of developing even stronger procedures for real-world data.

\section{Conclusion}\label{sec:conclusion}

We introduced and analyzed a novel machine learning algorithm, with a solid theoretical grounding based on firm statistical principles, and with the added benefit of a simple implementation, very few parameters to set, and a fast, non-iterative robustification procedure that does not throw away any data, but which is also not overly sensitive to errant observations. Based on the strong theoretical guarantees and appealing empirical performance, it appears that our approach of paying the price of a small bias for the reward of more distributionally robust gradient estimates is sound as a methodology, realizing better performance using less computational resources (data, time).

In looking ahead, we are particularly interested in moving beyond per-coordinate robustification, and considering operations that operate on the loss gradient vectors themselves as atomic units. The per-coordinate technique is easy to implement and theoretical analysis is also more straightforward, but the risk bounds have an extra $d$ factor that should be removable given more sophisticated procedures. Indeed, the high-dimensional mean estimation discussed by \citet{catoni2017a} has such a vector estimator, but unfortunately there is no way to actually compute the estimator they analyze. Bridging this gap is an important next step, from the perspective of both learning theory and machine learning practice.

\appendix

\section{Technical appendix}

\subsection{Preliminaries}\label{sec:tech_prelims}

Consider two probability measures $P$ and $Q$ on measurable space $(\XX,\mathcal{A})$. We say that $Q$ is absolutely continuous with respect to $P$, written $Q \ll P$, whenever $P(A) = 0$ implies $Q(A) = 0$ for all $A \in \mathcal{A}$. The Radon-Nikodym theorem guarantees that there exists a function $g \geq 0$, $P$-measurable, such that
\begin{align*}
Q(A) = \int_{A} g \, dP, \quad \text{ for all } A \in \mathcal{A}.
\end{align*}
Furthermore, this $g$ is unique in the sense that if another $f$ exists satisfying the above equality, we have $f=g$ almost everywhere $[P]$. It is common to call this function $g$ the Radon-Nikodym derivative of $Q$ with respect to $P$, written $dQ/dP$. The relative entropy, or Kullback-Leibler divergence, between two probability measures $P$ and $Q$ on measurable space $(\XX,\mathcal{A})$ is defined
\begin{align}
\KL(P;Q) \defeq 
\begin{cases}
\displaystyle -\int \log\left(\frac{dQ}{dP}\right) \, dP, & \text{ if } Q \ll P\\
\displaystyle +\infty, & \text{ else.}
\end{cases}
\end{align}

They key property of the $\psi$ truncation function utilized by \citet{catoni2017a}, defined in (\ref{eqn:influence_cat17}), is that for all $u \in \RR$, we have
\begin{align}\label{eqn:Catoni_property}
-\log\left(1 - u + \frac{u^{2}}{2}\right) \leq \psi(u) \leq \log\left(1 + u + \frac{u^{2}}{2}\right).
\end{align}

Let $f:\RR^{d} \to \RR$ be a continuously differentiable, convex, $\paraSmooth$-smooth function. 
\begin{align}
\label{eqn:facts_CO_1}
f(\uu)-f(\vv) & \leq \frac{\paraSmooth}{2}\|\uu-\vv\|^{2} + \langle f^{\prime}(\vv), \uu-\vv \rangle\\
\label{eqn:facts_CO_2}
\frac{1}{2\paraSmooth}\|f^{\prime}(\uu)-f^{\prime}(\vv)\|^{2} & \leq f(\uu)-f(\vv) - \langle f^{\prime}(\vv), \uu-\vv \rangle
\end{align}
for all $\uu,\vv \in \RR^{d}$.

\paragraph{Terminology}
For a function $F: \WW \to \RR$, we say that $F$ is $\paraSmooth$\textit{-Lipschitz} if, for all $\ww_{1},\ww_{2} \in \WW$ we have $|F(\ww_{1})-F(\ww_{2})| \leq \paraSmooth \|\ww_{1}-\ww_{2}\|$. If $F$ is differentiable, and the derivative $\ww \mapsto F^{\prime}(\ww)$ is $\paraSmooth$-Lipschitz, then we say that $F$ is $\paraSmooth$\textit{-smooth}.

If $F$ is a convex function on convex set $\WW$, then we say $F$ is $\paraSC$-\textit{strongly convex} if for all $\ww_{1},\ww_{2} \in \WW$,
\begin{align}\label{eqn:strong_convex_defn}
F(\ww_{1})-F(\ww_{2}) \geq \langle F^{\prime}(\ww_{2}), \ww_{1}-\ww_{2} \rangle + \frac{\paraSC}{2}\|\ww_{1}-\ww_{2}\|^{2}.
\end{align}
This definition can be made for any valid norm space, but we shall be assuming $\WW \subseteq \RR^{d}$ throughout, and use the Euclidean norm. If there exists $\wwstar \in \WW$ such that $F^{\prime}(\wwstar)=0$, then it follows that $\wwstar$ is the unique minimum of $F$ on $\WW$.

\subsection{Proofs of results in the main text}\label{sec:tech_proofs}

\begin{proof}[Proof of Lemma \ref{lem:pointwise_accuracy}]
Let $\PP(\RR)$ denote all probability measures on $\RR$, with an appropriate $\sigma$-field tacitly assumed. Consider any two measures $\prior, \prior_{0} \in \PP(\RR)$, and $h: \RR \to \RR$ a $\prior_{0}$-measurable function. By  \citet[p.~159--160]{catoni2004SLT}, it is proved that a Legendre transform of the mapping $\prior \mapsto \KL(\prior;\prior_{0})$ takes the form of a cumulant generating function, namely
\begin{align}\label{eqn:KL_Legendre_identity}
\sup_{\prior} \left( \int h(u) \, d\prior(u) - \KL(\prior;\prior_{0}) \right) = \log \int \exp(h(u)) \, d\prior_{0}(u),
\end{align}
where the supremum is taken over $\prior \in \PP(\RR)$. This identity is a technical tool, and the choice of $h$ and $\prior_{0}$ are parameters that can be adjusted to fit the application.

In actually setting these parameters, we adapt the general argument of \citet{catoni2017a} to our setting. Recalling the estimator (\ref{eqn:estimator_defn}), we start with a quasi average of the points $x_{1},\ldots,x_{n}$, modified by some data-sensitive additive noise, and passed through a truncation function. The expectation of this sum is then taken over the noise distribution. The $\prior$ in the definition of (\ref{eqn:estimator_defn}) will correspond to $\prior$ here, and thus to reflect the whole estimator within (\ref{eqn:KL_Legendre_identity}), it makes sense to include the data-dependent sum in our choice of $h$. Note that the summands in the estimator definition
\begin{align*}
\psi\left(\frac{x_{i}+\epsilon_{i}x_{i}}{s}\right), \quad i \in [n]
\end{align*}
depend on two random quantities, namely the data $x_{i}$, and the artificial noise $\epsilon_{i}$ (since $s>0$ is assumed pre-fixed). Reflecting dependence on these quantities directly, we write
\begin{align*}
f(\epsilon,x) \defeq \psi\left(\frac{x + \epsilon x}{s}\right), \quad \epsilon, x \in \RR.
\end{align*}
Note that by definition of $\psi$ in (\ref{eqn:influence_cat17}), the function $f: \RR^{2} \to \RR$ is measurable and bounded. With this cleaner notation, let us now set
\begin{align*}
h(\epsilon) = \sum_{i=1}^{n} f(\epsilon,x_{i}) - c(\epsilon)
\end{align*}
where $c(\epsilon)$ is a term to be determined shortly. Plugging this in to (\ref{eqn:KL_Legendre_identity}) yields the following quantity:
\begin{align*}
B & \defeq \sup_{\prior} \left( \int h(\epsilon) \, d\prior(\epsilon) - \KL(\prior;\prior_{0}) \right)\\
& = \log \int \exp\left( \sum_{i=1}^{n} f(\epsilon,x_{i}) - c(\epsilon) \right) \, d\prior(\epsilon).
\end{align*}
Taking the exponential of this $B$ and then taking expectation with respect to the sample, we have
\begin{align*}
\exx_{\ddist}\exp(B) & = \exx_{\ddist} \int \left( \frac{\exp\left(\sum_{i=1}^{n}f(\epsilon,x_{i})\right)}{\exp(c(\epsilon))} \right) \, \prior(\epsilon)\\
& = \int \left( \frac{\prod_{i=1}^{n}\exx_{\ddist}f(\epsilon,x_{i})}{\exp(c(\epsilon))} \right) \, \prior(\epsilon).
\end{align*}
The first equality comes from simple log/exp manipulations, and the second equality from taking the integration over the sample inside the integration with respect to $\prior$, valid via Fubini's theorem. It will be useful to have $\exx_{\ddist}\exp(B) \leq 1$. This can be achieved easily by setting
\begin{align*}
c(\epsilon) = n \log \exx_{\ddist} \exp(f(\epsilon,x)),
\end{align*}
which yields
\begin{align}\label{eqn:exp_B_unity}
\exx_{\ddist}\exp(B) = \int \left( \frac{\prod_{i=1}^{n}\exx_{\ddist}\exp(f(\epsilon,x_{i}))}{\left(\exx_{\ddist}\exp(f(\epsilon,x_{i}))\right)^{n}} \right) \, \prior(\epsilon) = 1.
\end{align}
With this preparation done, we can start on the high-probability upper bound of interest:
\begin{align*}
\prr\{ B \geq \log(\delta^{-1}) \} & = \prr\{ \exp(B) \geq 1/\delta \}\\
& = \exx_{\ddist} I\{ \delta \exp(B) \geq 1 \}\\
& \leq \exx_{\ddist} \delta \exp(B)\\
& = \delta.
\end{align*}
The inequality follows immediately since $\delta\exp(B) \geq 0$, and the final equality holds due to (\ref{eqn:exp_B_unity}). Note that since our setting of $c(\epsilon)$ is such that $c(\cdot)$ is $\prior$-measurable (via measurability of $f$), the resulting $h(\cdot)$ is indeed $\prior$-measurable, as required. Of importance here is the fact that
\begin{align}\label{eqn:uniform_upbd}
\sup_{\prior} \left( \int h(\epsilon) \, d\prior(\epsilon) - \KL(\prior;\prior_{0}) \right) \leq \log(\delta^{-1})
\end{align}
with probability no less than $1-\delta$, noting that the event is uniform in $\prior$. Using (\ref{eqn:KL_Legendre_identity}) once again, and dividing both sides by $n$, we have that with high probability, for any choice of $\prior$, we can bound this generic empirical mean as follows:
\begin{align}\label{eqn:empirical_inequality}
\frac{1}{n}\sum_{i=1}^{n} \int f(\epsilon,x_{i}) \, d\prior(\epsilon) \leq \int \log \exx_{\ddist} \exp\left(f(\epsilon,x)\right) \, d\prior(\epsilon) + \frac{\KL(\prior,\prior_{0}) + \log(\delta^{-1})}{n}.
\end{align}

Bridging the gap between these preparatory facts and the estimator of interest is now easy; since the noise terms $\epsilon_{1},\ldots,\epsilon_{n}$ are assumed to be independent copies of $\epsilon \sim \prior$, it follows immediately that
\begin{align*}
\widehat{x} & = \frac{s}{n} \sum_{i=1}^{n} \int \left( \psi\left( \frac{x_{i}+\varepsilon_{i}x_{i}}{s} \right) \right) \, d\prior(\epsilon_{i})\\
& = \frac{s}{n}\sum_{i=1}^{n} \int f(\epsilon,x_{i}) \, d\prior(\epsilon).
\end{align*}
That is to say, we have
\begin{align}\label{eqn:target_upbd}
\widehat{x} \leq s \int \log \exx_{\ddist} \exp\left( \psi\left(\frac{x(1+\epsilon)}{s}\right) \right) \, d\prior(\epsilon) + \frac{s}{n}\left( \KL(\prior;\prior_{0}) + \log(\delta^{-1}) \right)
\end{align}
on the high-probability event, uniformly in choice of $\prior$. Let us work step by step through each of the terms in the upper bound.

Starting with the first term, recall the definition of the truncation function $\psi$ given in (\ref{eqn:influence_cat17}), and in particular the logarithmic upper/lower bounds given in (\ref{eqn:Catoni_property}). These bounds will be convenient because it offers us polynomial bounds when passing $\psi$ through $\exp(\cdot)$, which is precisely what occurs in (\ref{eqn:target_upbd}) above. To get the first term in (\ref{eqn:target_upbd}) in a more useful form, we can bound it as
\begin{align*}
\int \log \exx_{\ddist} \exp\left( \psi\left(\frac{x(1+\epsilon)}{s}\right) \right) \, d\prior(\epsilon) & \leq \int \log \left( 1 + \frac{(1+\epsilon)\exx_{\ddist}x}{s} + \frac{(1+\epsilon)^{2}\exx_{\ddist}x^{2}}{2s^{2}} \right) \, d\prior(\epsilon)\\
& \leq \int \left( \frac{(1+\epsilon)\exx_{\ddist}x}{s} + \frac{(1+\epsilon)^{2}\exx_{\ddist}x^{2}}{2s^{2}} \right) \, d\prior(\epsilon)\\
& = \frac{\exx_{\prior}(1+\epsilon)\exx_{\ddist}x}{s} + \frac{\exx_{\prior}(1+\epsilon)^{2} \exx_{\ddist}x^{2}}{2s^{2}}\\
& = \frac{\exx_{\ddist}x}{s} + \frac{\exx_{\ddist}x^{2}}{2s^{2}} \left(\frac{1}{\beta} + 1\right).
\end{align*}
The first inequality follows from (\ref{eqn:Catoni_property}), and the second from the fact that $\log(1+u) \leq u$ for all $u > -1$. As for the final equality, note that with $\epsilon \sim \prior = N(0,\beta^{-1})$, it follows immediately that
\begin{align*}
\exx_{\prior}(1+\epsilon)^{2} = \frac{1}{\beta} + (\exx_{\prior}(1+\epsilon))^{2} = \frac{1}{\beta} + 1.
\end{align*}

Moving on to the second term, evaluating $\KL(\prior;\prior_{0})$ depends completely on how we define the pre-fixed $\prior_{0}$. One approach is to set $\prior_{0}$ such that the KL divergence is easily computed; for example, $\prior_{0} = N(1,\beta^{-1})$. In this case,\footnote{Another approach of interest is as follows: consider setting $\prior_{0}=\prior$, causing the KL term to vanish, and $\beta \to \infty$ to be the optimal strategy from the perspective of the upper bound. Then it remains to see how the correction term $\corr(a,b)$ is computed in the limit. Not sure if it's worth the effort, but conceptually it is interesting.} simple computations show that
\begin{align*}
\KL(\prior;\prior_{0}) & = \int_{-\infty}^{\infty} \log \left( \exp\left(\frac{\beta(u-1)^{2}}{2} - \frac{\beta u^{2}}{2}\right) \right) \sqrt{\frac{\beta}{2\pi}}\exp\left(-\frac{\beta u^{2}}{2}\right) \, du\\
& = \int_{-\infty}^{\infty} \frac{(1-2u)\beta}{2} \sqrt{\frac{\beta}{2\pi}}\exp\left(-\frac{\beta u^{2}}{2}\right) \, du\\
& = \frac{\beta}{2}.
\end{align*}
With this computation done, an upper bound is complete, taking the form
\begin{align}\label{eqn:complete_upbd}
\widehat{x} \leq \exx_{\ddist}x + \frac{\exx_{\ddist}x^{2}}{2s} \left(\frac{1}{\beta} + 1\right) + \frac{s}{n}\left( \frac{\beta}{2} + \log(\delta^{-1}) \right).
\end{align}
Optimizing this upper bound with respect to $s > 0$, we have
\begin{align*}
s^{2} = \left(1 + \frac{1}{\beta}\right)\frac{n\exx_{\ddist}x^{2}}{2}\left(\frac{\beta}{2} + \log(\delta^{-1})\right)^{-1}
\end{align*}
and with respect to $\beta > 0$, we have
\begin{align}
\beta^{2} = \frac{n\exx_{\ddist}x^{2}}{s^{2}}.
\end{align}
Plugging this setting of $\beta$ in to the setting of $s$ yields
\begin{align}
s^{2} = \frac{n\exx_{\ddist}x^{2}}{2\log(\delta^{-1})}.
\end{align}
With this setting of $s$, the upper bound (\ref{eqn:complete_upbd}) can be cleaned up to the form
\begin{align*}
\widehat{x} \leq \exx_{\ddist}x + \sqrt{\frac{2\exx_{\ddist}x^{2}\log(\delta^{-1})}{n}} + \sqrt{\frac{\exx_{\ddist}x^{2}}{n}}.
\end{align*}
To get lower bounds on $\widehat{x}-\exx_{\ddist}x$, we can equivalently seek out upper bounds on $(-1)\widehat{x} + \exx_{\ddist}x$. This can be easily done via
\begin{align}\label{eqn:target_lowbd}
-\widehat{x} \leq s \int \log \exx_{\ddist} \exp\left(-\psi\left(\frac{x(1+\epsilon)}{s}\right) \right) \, d\prior(\epsilon) + \frac{s}{n}\left( \KL(\prior;\prior_{0}) + \log(\delta^{-1}) \right).
\end{align}
Only the first term on the right-hand side is different from before. Note that by the lower bound of (\ref{eqn:Catoni_property}), we have
\begin{align*}
\log \exx_{\ddist} \exp\left(-\psi\left(\frac{x(1+\epsilon)}{s}\right)\right) \leq (-1)\frac{(1+\epsilon)\exx_{\ddist}x}{s} + \frac{(1+\epsilon)^{2}\exx_{\ddist}x^{2}}{2s^{2}}.
\end{align*}
The rest plays out analogously to the upper bound, yielding
\begin{align*}
(-1)\widehat{x} \leq (-1)\exx_{\ddist}x + \sqrt{\frac{2\exx_{\ddist}x^{2}\log(\delta^{-1})}{n}} + \sqrt{\frac{\exx_{\ddist}x^{2}}{n}}
\end{align*}
which implies, as desired,
\begin{align}\label{eqn:complete_lowbd}
\widehat{x} - \exx_{\ddist}x \geq \sqrt{\frac{2\exx_{\ddist}x^{2}\log(\delta^{-1})}{n}} + \sqrt{\frac{\exx_{\ddist}x^{2}}{n}}.
\end{align}
Since $-\psi(u) = \psi(-u)$, both of these settings can be interpreted as different settings of the distribution of the noise factor: $(1+\epsilon)$ in the upper bound case, and $-(1+\epsilon)$ in the lower bound case, both with $\epsilon \sim \prior$. Since the inequality (\ref{eqn:uniform_upbd}) is uniform in the distribution of this noise, both bounds hold on the same event, which has probability no less than $1-\delta$. We may thus conclude that with probability at least $1-\delta$ over the random draw of the sample $x_{1},\ldots,x_{n}$, the estimator $\widehat{x}$ satisfies
\begin{align*}
|\widehat{x} - \exx_{\ddist}x| \leq \sqrt{\frac{2\exx_{\ddist}x^{2}\log(\delta^{-1})}{n}} + \sqrt{\frac{\exx_{\ddist}x^{2}}{n}}.
\end{align*}
In practice, since $\exx_{\ddist}x^{2}$ will typically be unknown, this factor can be replaced by any valid upper bound $v \geq \exx_{\ddist}x^{2}$. The only impact to the final upper bound is that the unknown $\exx_{\ddist}x^{2}$ factors are replaced by the known $v$, concluding the proof.
\end{proof}

\begin{proof}[Proof of Lemma \ref{lem:est_lipschitz}]
Consider two data sets, the original $x_{1},\ldots,x_{n}$ and a perturbed version $x_{1}^{\prime},\ldots,x_{n}^{\prime}$. For clean notation, organize these into vectors $\xx = (x_{1},\ldots,x_{n})$ and $\xx^{\prime} = (x_{1}^{\prime},\ldots,x_{n}^{\prime})$. Taking the difference between the estimator evaluated on these distinct data sets, we have
\begin{align*}
\widehat{x}(\xx)-\widehat{x}(\xx^{\prime}) & = \int \frac{s}{n} \sum_{i=1}^{n}\left(\psi\left(\frac{(1+\epsilon)x_{i}}{s}\right) - \psi\left(\frac{(1+\epsilon)x_{i}^{\prime}}{s}\right)\right) \, d\prior(\epsilon)\\
& \leq \int \frac{s}{n} \sum_{i=1}^{n} \left|\frac{1+\epsilon}{s}\right|\left|x_{i}-x_{i}^{\prime}\right| \, d\prior(\epsilon)\\
& = \exx_{\prior}|1+\epsilon| \frac{1}{n} \sum_{i=1}^{n} \left|x_{i}-x_{i}^{\prime}\right|\\
& = \frac{\exx_{\prior}|1+\epsilon|}{n} \|\xx - \xx^{\prime}\|_{1}.
\end{align*}
The first equality follows by linearity and the definition of the estimators. The subsequent inequality follows from the $1$-Lipschitz property of $\psi$ defined in (\ref{eqn:influence_cat17}), which is that for all $u,v \in \RR$, we have that $|\psi(u) - \psi(v)| \leq |u - v|$.

Evaluating $\exx_{\prior}|1+\epsilon|$ is straightforward under the assumption that $\epsilon \sim \prior = N(0,1/\beta)$, since the random variable $|1+\epsilon|$ follows a Folded Normal distribution. More generally, if $X \sim N(a,b^{2})$, then $Y = |X|$ follows a folded normal distribution, with expected value
\begin{align*}
\exx Y = a\left(1-2\Phi\left(\frac{-a}{b}\right)\right) + b \sqrt{\frac{2}{\pi}}\exp\left(\frac{-a^{2}}{2b^{2}}\right).
\end{align*}
Since in our case, we have $a=1$ and $b^{2} = 1/\beta$, it follows that
\begin{align*}
\exx_{\prior}|1+\epsilon| = 1-2\Phi\left(-\sqrt{\beta}\right) + \sqrt{\frac{2}{\beta\pi}}\exp\left(\frac{-\beta}{2}\right).
\end{align*}
Reflecting this factor in the above inequalities concludes the proof.
\end{proof}

\begin{proof}[Proof of Lemma \ref{lem:uniform_grad_accuracy}]
In order to obtain bounds that hold uniformly over the choice of $\ww$, we adopt a rather standard strategy utilizing covering numbers of $\WW$. Using assumption \ref{asmp:model_compact}, since $\WW$ is closed and bounded, the Heine-Borel theorem implies that $\WW$ is compact. This means the number of balls of radius $\varepsilon$ required to cover $\WW$ (denoted $N_{\varepsilon}$) is bounded above\footnote{This is a basic property of covering numbers for compact subsets of Euclidean space \citep{kolmogorov1993SelectWorks3}.} as
\begin{align}\label{eqn:cover_size_compact}
N_{\varepsilon} \leq (3\diameter/2\varepsilon)^{d}.
\end{align}
Denote the centers of this $\varepsilon$-net by $\{\wwtil_{1},\ldots,\wwtil_{N_{\varepsilon}}\}$. Given an abitrary $\ww \in \WW$ and center $\wwtil \in \{\wwtil_{1},\ldots,\wwtil_{N_{\varepsilon}}\}$, we break the quantity to be controlled into three error terms, each to be tackled separately, as
\begin{align}\label{eqn:ineq_3errors}
\|\rgest(\ww) - \rgrad(\ww)\| \leq \|\rgest(\ww)-\rgest(\wwtil)\| + \|\rgrad(\ww) - \rgrad(\wwtil)\| + \|\rgest(\wwtil) - \rgrad(\wwtil)\|.
\end{align}

Let us start with the first term, $\|\rgest(\ww)-\rgest(\wwtil)\|$. Using Lemma \ref{lem:est_lipschitz}, we have that
\begin{align*}
\|\rgest(\ww)-\rgest(\wwtil)\|^{2} & \leq \sum_{j=1}^{d} \left(\frac{c_{\prior}}{n} \sum_{i=1}^{n}|\lgsub_{j}(\ww;\zz_{i})-\lgsub_{j}(\wwtil;\zz_{i})| \right)^{2}\\
& \leq \sum_{j=1}^{d} \left(c_{\prior} \paraSmooth \|\ww - \wwtil\|\right)^{2}\\
& = d c_{\prior}^{2} \paraSmooth^{2} \|\ww - \wwtil\|^{2}.
\end{align*}
The first inequality is via Lemma \ref{lem:est_lipschitz}, and the second via smoothness of the loss (via \ref{asmp:loss_smooth}). We may thus control the first error term as
\begin{align}\label{eqn:bound_error1}
\|\rgest(\ww)-\rgest(\wwtil)\| \leq c_{\prior} \paraSmooth \sqrt{d} \|\ww - \wwtil\|.
\end{align}

Moving on to the second error term in the upper bound, this follows easily by smoothness of the risk (via \ref{asmp:risk_smooth}), namely a Lipschitz property of the risk gradient. It immediately follows that 
\begin{align}\label{eqn:bound_error2}
\|\rgrad(\ww) - \rgrad(\wwtil)\| \leq \paraSmooth \|\ww-\wwtil\|
\end{align}
for any choice of $\ww \in \WW$ and $\varepsilon$-ball center $\wwtil$.

Finally for the third error term, given any center $\wwtil$, as long as $\exx_{\ddist} \lgsub_{j}(\wwtil;\zz)^{2} < \infty$, then we can apply Lemma \ref{lem:pointwise_accuracy}, implying
\begin{align*}
|\rgestsub_{j}(\ww) - \rgsub_{j}(\ww)| \leq \varepsilon_{j} \defeq \sqrt{\frac{2v_{j}\log(\delta^{-1})}{n}} + \sqrt{\frac{v_{j}}{n}}
\end{align*}
where $v_{j} > 0$ is an upper bound on $\exx_{\ddist}|\lgsub_{j}(\ww;\zz)|^{2}$ used in the setting of $s_{j}$, in accordance with Lemma \ref{lem:pointwise_accuracy}. For any pre-fixed $\ww$, then for any $\varepsilon > 0$ we have
\begin{align*}
\prr\left\{ \|\rgest(\ww)-\rgrad(\ww)\| > \varepsilon \right\} & = \prr\left\{ \|\rgest(\ww)-\rgrad(\ww)\|^{2} > \varepsilon^{2} \right\}\\
& \leq \sum_{j=1}^{d} \prr\left\{ |\rgestsub_{j}(\ww) - \rgsub_{j}(\ww)| > \frac{\varepsilon}{\sqrt{d}} \right\}.
\end{align*}
Using $\varepsilon_{j}$ just defined, taking the maximum over $j \in [d]$, it follows that
\begin{align*}
\prr\left\{ \|\rgest(\ww)-\rgrad(\ww)\| > \left(\max_{k}\varepsilon_{k}\right)\sqrt{d} \right\} & \leq \sum_{j=1}^{d}\prr\left\{ |\rgestsub_{j}(\ww) - \rgsub_{j}(\ww)| > \max_{k}\varepsilon_{k} \right\}\\
& \leq \sum_{j=1}^{d}\prr\left\{ |\rgestsub_{j}(\ww) - \rgsub_{j}(\ww)| > \varepsilon_{j} \right\}\\
& \leq d\delta.
\end{align*}
Note that the second inequality follows immediately from $\varepsilon_{j} \leq \max_{k} \varepsilon_{k}$ and monotonicity of probability measures. Writing $V \defeq \max_{j} v_{j}$, it follows immediately that fixing any $\ww \in \WW$, the nearest center $\wwtil \defeq \wwtil(\ww)$ can be determined, and the event
\begin{align*}
\EE(\wwtil) \defeq \left\{ \|\rgest(\wwtil) - \rgrad(\wwtil)\| > \sqrt{\frac{2Vd\log(d\delta^{-1})}{n}} + \sqrt{\frac{V}{n}} \right\}
\end{align*}
has probability no greater than $\delta$. The whole reason for utilizing a $\varepsilon$-cover of $\WW$ in the first place is to avoid having to take a supremum over $\ww \in \WW$, which spoils union bounds, and instead to simply take a maximum over a finite number of $\varepsilon$-covers. The critical fact for our purposes is that
\begin{align*}
\sup_{\ww \in \WW} \| \rgest(\wwtil(\ww)) - \rgrad(\wwtil(\ww)) \| = \max_{k \in [N_{\varepsilon}]} \| \rgest(\wwtil_{k}) - \rgrad(\wwtil_{k}) \|
\end{align*}
holds. The ``good event'' of interest is the one in which the bad event $\EE(\cdot)$ holds for none of the centers on our $\varepsilon$-cover. In other words, the event
\begin{align*}
\EE_{+} = \left(\bigcap_{k \in [N_{\varepsilon}]} \EE(\wwtil_{k})\right)^{c},
\end{align*}
which taking a union bound, occurs with probability no less than $1-\delta N_{\varepsilon}$. To get a $1-\delta$ guarantee, simply pay the price of an extra logarithmic factor in the upper bound; that is to say, we equivalently have
\begin{align}\label{eqn:bound_error3}
\|\rgest(\wwtil(\ww)) - \rgrad(\wwtil(\ww))\| \leq \sqrt{\frac{2Vd\log(dN_{\varepsilon}\delta^{-1})}{n}} + \sqrt{\frac{V}{n}}
\end{align}
with probability no less than $1-\delta$, uniformly in the choice of $\ww \in \WW$.

Taking these intermediate results together, we can form a useful uniform upper bound on (\ref{eqn:ineq_3errors}), taking the form
\begin{align*}
\sup_{\ww \in \WW} \|\rgest(\ww) - \rgrad(\ww)\| & \leq \sup_{\ww \in \WW} \left( c_{\prior} \paraSmooth \sqrt{d} \|\ww - \wwtil\| + \paraSmooth \|\ww-\wwtil\| +  \|\rgest(\wwtil) - \rgrad(\wwtil)\| \right)\\
& \leq c_{\prior}\paraSmooth\sqrt{d}\varepsilon + \paraSmooth\varepsilon + \max_{k \in [N_{\varepsilon}]} \| \rgest(\wwtil_{k}) - \rgrad(\wwtil_{k}) \|\\
& \leq \paraSmooth\varepsilon(1+c_{\prior}\sqrt{d}) + \sqrt{\frac{2Vd\log(dN_{\varepsilon}\delta^{-1})}{n}} + \sqrt{\frac{V}{n}}
\end{align*}
with probability no less than $1-\delta$ over the random draw of the sample. The bounds on the first, second, and third terms in the original upper bound come from (\ref{eqn:bound_error1}), (\ref{eqn:bound_error2}), and (\ref{eqn:bound_error3}) respectively, with the $\varepsilon$ factors following immediately from the definition of an $\varepsilon$-cover. To obtain the desired result, simply bound $N_{\varepsilon}$ as in (\ref{eqn:cover_size_compact}), and set $\varepsilon = 1/\sqrt{n}$, yielding updates to two of the terms, as
\begin{align*}
\paraSmooth\varepsilon(1+c_{\prior}\sqrt{d}) & = \frac{\paraSmooth(1+c_{\prior}\sqrt{d})}{\sqrt{n}}\\
\sqrt{\frac{2Vd\log(dN_{\varepsilon}\delta^{-1})}{n}} & \leq \sqrt{\frac{2Vd(\log(d\delta^{-1})+d\log(3\diameter\sqrt{n}/2))}{n}}
\end{align*}
which, when plugged into the bound just obtained, concludes the proof.
\end{proof}

\begin{proof}[Proof of Lemma \ref{lem:dist_control_strong}]
Given $\wwhat_{(t)}$, running the approximate update (\ref{eqn:update_GD_approx}), we have
\begin{align*}
\|\wwhat_{(t+1)}-\wwstar\| & = \|\wwhat_{(t)}-\alpha_{(t)}\rgest(\wwhat_{(t)})-\wwstar\|\\
& \leq \|\wwhat_{(t)}-\alpha_{(t)}\rgrad(\wwhat_{(t)})-\wwstar\| + \alpha_{(t)}\|\rgest(\wwhat_{(t)})-\rgrad(\wwhat_{(t)})\|.
\end{align*}
The first term looks at the distance from the target given an optimal update, using $\rgrad$. Using the $\paraSC$-strong convexity of $\risk$, via \citet[Thm.~2.1.15]{nesterov2004ConvOpt} it follows that
\begin{align*}
\|\wwhat_{(t)}-\alpha_{(t)}\rgrad(\wwhat_{(t)})-\wwstar\|^{2} \leq \left(1-\frac{2\alpha_{(t)}\paraSC\paraSmooth}{\paraSC+\paraSmooth}\right)\|\wwhat_{(t)}-\wwstar\|^{2}.
\end{align*}
Writing $\SCfactor \defeq 2\paraSC\paraSmooth/(\paraSC+\paraSmooth)$, the coefficient becomes $(1-\alpha_{(t)}\SCfactor)$. 

To control the second term simply requires unfolding the recursion. By hypothesis, we can leverage (\ref{eqn:conf_uniform}) to bound the statistical estimation error by $\varepsilon$ for every step, all on the same $1-\delta$ ``good event.'' For notational ease, write $a_{(t)} \defeq \sqrt{1-\alpha_{(t)}\SCfactor}$. Unfolding the recursion, on the good event, we have
\begin{align}\label{eqn:recursion_unfolded}
\|\wwhat_{(t+1)}-\wwstar\| \leq \|\wwhat_{(0)}-\wwstar\| \prod_{k=0}^{t}a_{(k)} + \varepsilon \left( \alpha_{(t)} + \sum_{k=0}^{t-1}\alpha_{(k)}\prod_{l=k+1}^{t} a_{(l)} \right).
\end{align}
In the case of $\alpha_{(t)} = \alpha/\SCfactor$, things are very simple. We have $a_{(t)} = a \defeq \sqrt{1-\alpha}$ for all $t$. The above inequality simplifies to
\begin{align*}
\|\wwhat_{(t+1)}-\wwstar\| & \leq \|\wwhat_{(0)}-\wwstar\| a^{t+1} + \frac{\varepsilon\alpha}{\SCfactor} \left(1 + a + \cdots + a^{t}\right)\\
& = \|\wwhat_{(0)}-\wwstar\| a^{t+1} + \frac{\varepsilon\alpha}{\SCfactor} \frac{(1-a^{t+1})}{(1-a)}.
\end{align*}
To clean up the second summand in (\ref{eqn:recursion_unfolded}),
\begin{align*}
\frac{\alpha\varepsilon}{\SCfactor}\frac{(1-a^{t+1})}{1-a} & \leq \frac{\alpha\varepsilon}{\SCfactor}\frac{(1+a)}{(1-a)(1+a)}\\
& = \frac{\alpha\varepsilon}{\SCfactor}\frac{(1+\sqrt{1-\alpha})}{\alpha}\\
& \leq \frac{2\varepsilon}{\SCfactor}.
\end{align*}
This gives us the first statement as desired. For the case of $\alpha_{(t)} = 1/((2+t)\SCfactor)$, things are only slightly more complicated. First observe that
\begin{align}\label{eqn:infinite_prod_1}
\prod_{m=2}^{M} \left(1 - \frac{1}{m}\right) = \prod_{m=2}^{M} \frac{m-1}{m} = \frac{1}{M},
\end{align}
where the last equality follows by simply cancelling terms. We can now handle the first summand in (\ref{eqn:recursion_unfolded}) as
\begin{align*}
\left(\prod_{k=0}^{t}a_{(k)}\right)^{2} & = \prod_{k=0}^{t}\left(1 - \alpha_{(k)}\SCfactor\right) = \prod_{k=0}^{t}\left(1-\frac{1}{2+k}\right) = \prod_{k=2}^{t+2}\left(1-\frac{1}{k}\right) = \frac{1}{t+2},
\end{align*}
where the final equality uses (\ref{eqn:infinite_prod_1}). As for the second summand in (\ref{eqn:recursion_unfolded}), first note that for any $k \geq 1$, we have
\begin{align*}
\frac{\alpha_{(k)}}{a_{(k)}\alpha_{(k-1)}} = \frac{(2+k-1)}{(2+k)\left(1-\frac{1}{(2+k)}\right)} = 1.
\end{align*}
Then recalling the second term on the right-hand side of (\ref{eqn:recursion_unfolded}), consider any two consecutive summands within the parentheses, say
\begin{align}\label{eqn:all_terms_equal}
\alpha_{(k)}a_{(k+1)}\cdots{}a_{(t)} \text{ and } \alpha_{(k-1)}a_{(k)}\cdots{}a_{(t)}
\end{align}
for any $1 \leq k < t$. Dividing the first term by the second term, note that almost all the factors cancel, yielding
\begin{align*}
\frac{\alpha_{(k)}a_{(k+1)}\cdots{}a_{(t)}}{\alpha_{(k-1)}a_{(k)}\cdots{}a_{(t)}} = \frac{\alpha_{(k)}}{a_{(k)}\alpha_{(k-1)}} = 1,
\end{align*}
by what we just proved in (\ref{eqn:all_terms_equal}). It follows that all terms inside the parentheses next to $\varepsilon$ are identical, and indeed equal to $\alpha_{(t)}$, which is to say
\begin{align*}
\varepsilon \left( \alpha_{(t)} + \sum_{k=0}^{t-1}\alpha_{(k)}\prod_{l=k+1}^{t} a_{(l)} \right) = (t+1)\alpha_{(t)}\varepsilon = \frac{(t+1)\varepsilon}{(t+2)\SCfactor} \leq \frac{\varepsilon}{\SCfactor}.
\end{align*}
Plugging these two new forms into the original inequality (\ref{eqn:recursion_unfolded}) yields our second desired result, and concludes the proof.
\end{proof}

\begin{proof}[Proof of Theorem \ref{thm:riskbd_fixed_strong}]
Using the strong convexity of $\risk$ (via \ref{asmp:risk_strong_convex}) and  (\ref{eqn:facts_CO_1}), it follows that
\begin{align*}
\risk(\wwhat_{(T)}) - \risk^{\ast} & \leq \frac{\paraSmooth}{2}\|\wwhat_{(T)} - \wwstar\|^{2}\\
& \leq \paraSmooth(1-\alpha)^{T}\|\wwhat_{(0)}-\wwstar\|^{2} + \frac{4\paraSmooth\varepsilon^{2}}{\SCfactor^{2}}.
\end{align*}
The latter inequality holds by direct application of Lemma \ref{lem:dist_control_strong} under fixed step size, followed by the elementary fact $(a+b)^{2} \leq 2(a^{2}+b^{2})$. The particular value of $\varepsilon$ under which Lemma \ref{lem:dist_control_strong} is valid (i.e., under which (\ref{eqn:conf_uniform}) holds) is given by Lemma \ref{lem:uniform_grad_accuracy} as $\widetilde{\varepsilon}$. Setting $\varepsilon = \widetilde{\varepsilon}$ yields the desired result.
\end{proof}

\begin{proof}[Proof of Theorem \ref{thm:variance_control}]
We construct an upper bound using
\begin{align*}
\exx\|\wwhat_{(t+1)}-\wwhat_{(t)}\|^{2} & = \alpha_{(t)}^{2} \exx\|\rgest(\wwhat_{(t)})\|^{2}\\
& \leq \alpha_{(t)}^{2} \exx\left(\|\rgest(\wwhat_{(t)})-\rgrad(\wwhat_{(t)})\|+\|\rgrad(\wwhat_{(t)})\|\right)^{2}\\
& \leq 2\alpha_{(t)}^{2} \left(\exx\|\rgest(\wwhat_{(t)})-\rgrad(\wwhat_{(t)})\|^{2} + \|\rgrad(\wwhat_{(t)})\|^{2} \right).
\end{align*}
Now, if we condition on $\wwhat_{(t)}$, by assumption the loss gradients $\lgrad(\wwhat_{(t)};\zz_{1}),\ldots,\lgrad(\wwhat_{(t)};\zz_{n})$ are iid. With independence, just as in the proof of Lemma \ref{lem:uniform_grad_accuracy}, we have
\begin{align*}
\|\rgest(\wwhat_{(t)}) - \rgrad(\wwhat_{(t)})\| \leq \sqrt{\frac{2Vd\log(d\delta^{-1})}{n}} + \sqrt{\frac{V}{n}},
\end{align*}
with probability no less than $1-\delta$. Setting the right-hand side of this equation to $\varepsilon$ and solving for $\delta$, we have exponential tails of the form
\begin{align*}
\prr\left\{\|\rgest(\wwhat_{(t)}) - \rgrad(\wwhat_{(t)})\| > \varepsilon \right\} \leq d \exp\left(-\frac{(\varepsilon-a)^{2}}{2b^{2}}\right)
\end{align*}
with constants defined
\begin{align*}
a \defeq \sqrt{\frac{V}{n}}, \quad b \defeq \sqrt{\frac{Vd}{n}}.
\end{align*}
Controlling moments using exponential tails can be done as follows. For random variable $X \in \LL_{p}$ for $p \geq 1$, recall the classic inequality
\begin{align*}
\exx|X|^{p} = \int_{0}^{\infty} \prr\{|X|^{p}>t\}\,dt.
\end{align*}
Our setting of interest is $X = \|\rgest(\wwhat_{(t)}) - \rgrad(\wwhat_{(t)})\|$, with $p=2$. It follows that
\begin{align*}
\exx|X|^{2} & = \int_{0}^{\infty} \prr\{|X|^{2}>u\} \, du\\
& = \int_{0}^{\infty} \prr\{X>\sqrt{u}\} \, du\\
& = \int_{0}^{\infty} \prr\{X>u\} \frac{u}{2} \, du\\
& \leq \frac{d}{2} \int_{0}^{\infty} \exp\left(-\frac{(u-a)^{2}}{2b^{2}}\right)u \, du.
\end{align*}
The third equality uses substitution of variables, and the inequality at the end uses the exponential tail inequality given above. This integral is the expectation of the Normal distribution $N(a,b^{2})$ taken over just the positive half-line. A simple upper bound can be constructed by
\begin{align*}
\int_{0}^{\infty} \exp\left(-\frac{(u-a)^{2}}{2b^{2}}\right)u \, du & = \int_{-\infty}^{\infty} I\{u \geq 0\} \exp\left(-\frac{(u-a)^{2}}{2b^{2}}\right)u \, du\\
& \leq \int_{-\infty}^{\infty} \exp\left(-\frac{(u-a)^{2}}{2b^{2}}\right)|u| \, du,
\end{align*}
easily recognized (after rescaling by $1/\sqrt{2\pi{}b^{2}}$) as the expectation of a Folded Normal random variable, induced by $N(a,b^{2})$. Recalling the proof of Lemma \ref{lem:est_lipschitz}, the expected value of this Folded Normal random variable is
\begin{align*}
\int_{-\infty}^{\infty} \frac{1}{\sqrt{2\pi}b} \exp\left(-\frac{(u-a)^{2}}{2b^{2}}\right)|u| \, du & = a\left( 1 - 2\Phi\left(\frac{-a}{b}\right) \right) + b\sqrt{\frac{2}{\pi}}\exp\left(\frac{-a^{2}}{2b^{2}}\right)\\
& = \sqrt{\frac{V}{n}}\left( 1 - 2\Phi\left(\frac{-1}{\sqrt{d}}\right) \right) + \sqrt{\frac{2Vd}{n\pi}}e^{-2d}.
\end{align*}
Taking into account the normalization factor, our upper bound takes the form
\begin{align*}
\exx|X|^{2} \leq \frac{d\sqrt{2\pi{}b^{2}}}{2}\left( \sqrt{\frac{V}{n}}\left( 1 - 2\Phi\left(\frac{-1}{\sqrt{d}}\right) \right) + \sqrt{\frac{2Vd}{n\pi}}e^{-2d} \right).
\end{align*}
Plugging this in for $X = \|\rgest(\wwhat_{(t)})-\rgrad(\wwhat_{(t)})\|$ in the upper bound constructed at the start of this proof yields the desired result.
\end{proof}

\subsection{Computation}\label{sec:tech_computation}

From \citet{catoni2017a}, Lemma 3.2, it follows that the correction term $\corr(a,b)$ used in (\ref{eqn:compute_integral}) can be computed as follows. First, some preparatory definitions to keep notation clean.
\begin{align*}
V_{-} \defeq \frac{\sqrt{2}-a}{b}, \quad V_{+} \defeq \frac{\sqrt{2}+a}{b}
\end{align*}
\begin{align*}
F_{-} \defeq \Phi(-V_{-}), \quad F_{+} \defeq \Phi(-V_{+})
\end{align*}
\begin{align*}
E_{-} \defeq \exp\left(-\frac{V_{-}^{2}}{2}\right), \quad E_{+} \defeq \exp\left(-\frac{V_{+}^{2}}{2}\right).
\end{align*}
As seen in other parts of the text, $\Phi$ denotes the standard Normal CDF. With these atomic elements defined to keep things a bit cleaner, we break the final quantity into five terms to be summed:
\begin{align*}
T_{1} & \defeq \frac{2\sqrt{2}}{3}\left(F_{-}-F_{+}\right)\\
T_{2} & \defeq -\left(a - \frac{a^{3}}{6}\right)\left(F_{-}+F_{+}\right)\\
T_{3} & \defeq \frac{b}{\sqrt{2\pi}} \left(1-\frac{a^{2}}{2}\right)\left(E_{+}-E_{-}\right)\\
T_{4} & \defeq \frac{ab^{2}}{2} \left( F_{+}+F_{-}+\frac{1}{\sqrt{2\pi}}\left(V_{+}E_{+} + V_{-}E_{-}\right) \right)\\
T_{5} & \defeq \frac{b^{3}}{6\sqrt{2\pi}} \left( (2+V_{-}^{2})E_{-} - (2+V_{+}^{2})E_{+} \right).
\end{align*}
With these terms in hand, the final computation is just summation, as
\begin{align*}
C(a,b) = T_{1}+T_{2}+T_{3}+T_{4}+T_{5}.
\end{align*}

\section{Additional test results}\label{sec:more_test_results}

Due to space restrictions and overall readability, we did not include all empirical test results in the tests of section \ref{sec:tests}. Here we provide results for all of the noise distribution families considered in the second class of numerical experiments, namely the regression application considered at the end of section \ref{sec:tests_noisyopt}. The following distribution families are considered: Arcsine (\texttt{asin}), Beta Prime (\texttt{bpri}), Chi-squared (\texttt{chisq}), Exponential (\texttt{exp}), Exponential-Logarithmic (\texttt{explog}), Fisher's F (\texttt{f}), Fr\'{e}chet (\texttt{frec}), Gamma (\texttt{gamma}), Gompertz (\texttt{gomp}), Gumbel (\texttt{gum}), Hyperbolic Secant (\texttt{hsec}), Laplace (\texttt{lap}), Log-Logistic (\texttt{llog}), Log-Normal (\texttt{lnorm}), Logistic (\texttt{lgst}), Maxwell (\texttt{maxw}), Pareto (\texttt{pareto}), Rayleigh (\texttt{rayl}), Semi-circle (\texttt{scir}), Student's t (\texttt{t}), Triangle (asymmetric \texttt{tri\_a}, symmetric \texttt{tri\_s}), U-Power (\texttt{upwr}), Wald (\texttt{wald}), Weibull (\texttt{weibull}).

The content of this section is as follows:
\begin{itemize}
\item Figures \ref{fig:overN_all_distros_1}--\ref{fig:overN_all_distros_2}: performance as a function of sample size $n$. 

\item Figures \ref{fig:overLvl_all_distros_1}--\ref{fig:overLvl_all_distros_2}: performance over noise levels, with fixed $n$ and $d$.

\item Figures \ref{fig:overDim_all_distros_1}--\ref{fig:overDim_all_distros_2}: performance as a function of $d$, with fixed $n/d$ ratio and noise level.
\end{itemize}

\clearpage

\begin{figure}[t]
\centering
\includegraphics[width=0.25\textwidth]{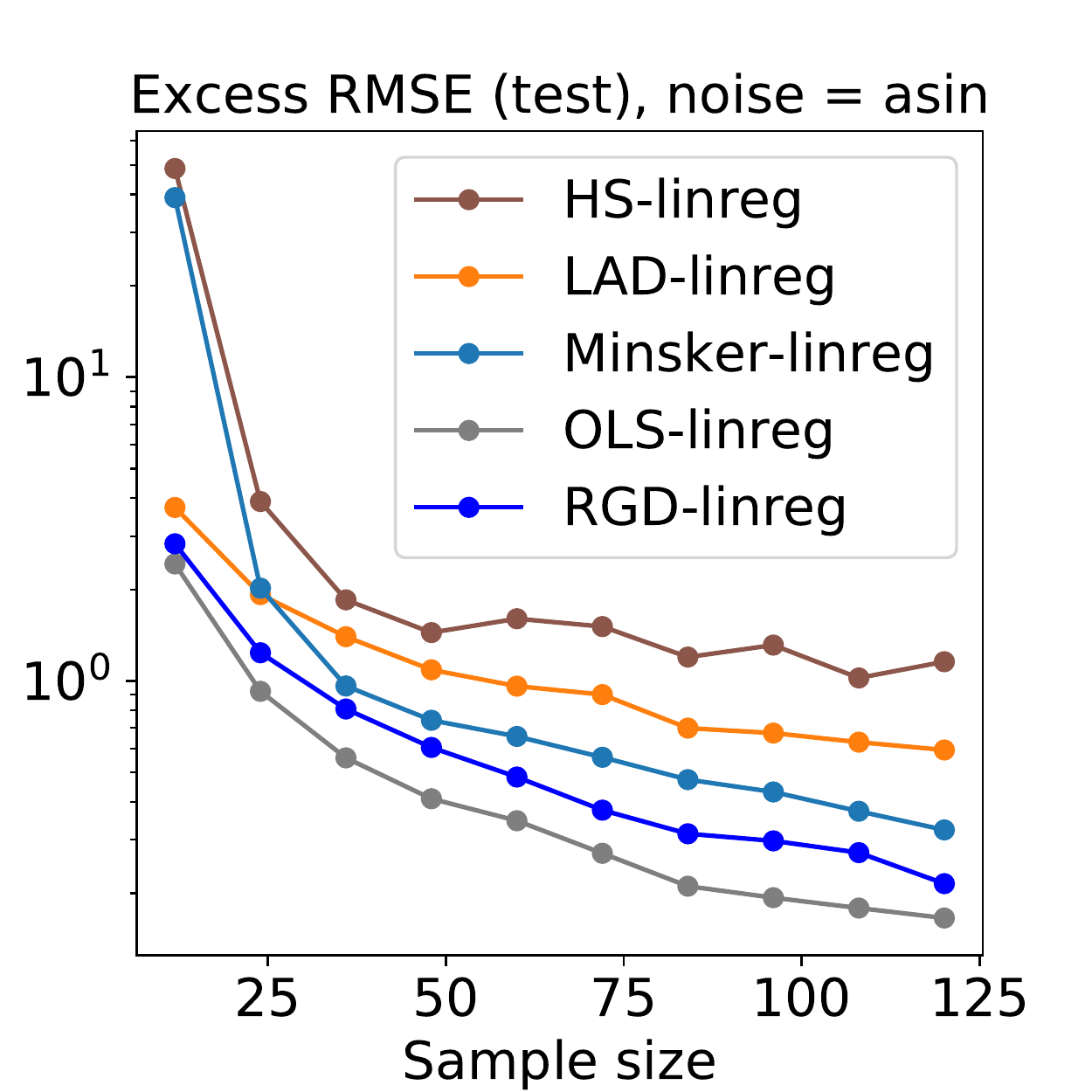}\,\includegraphics[width=0.25\textwidth]{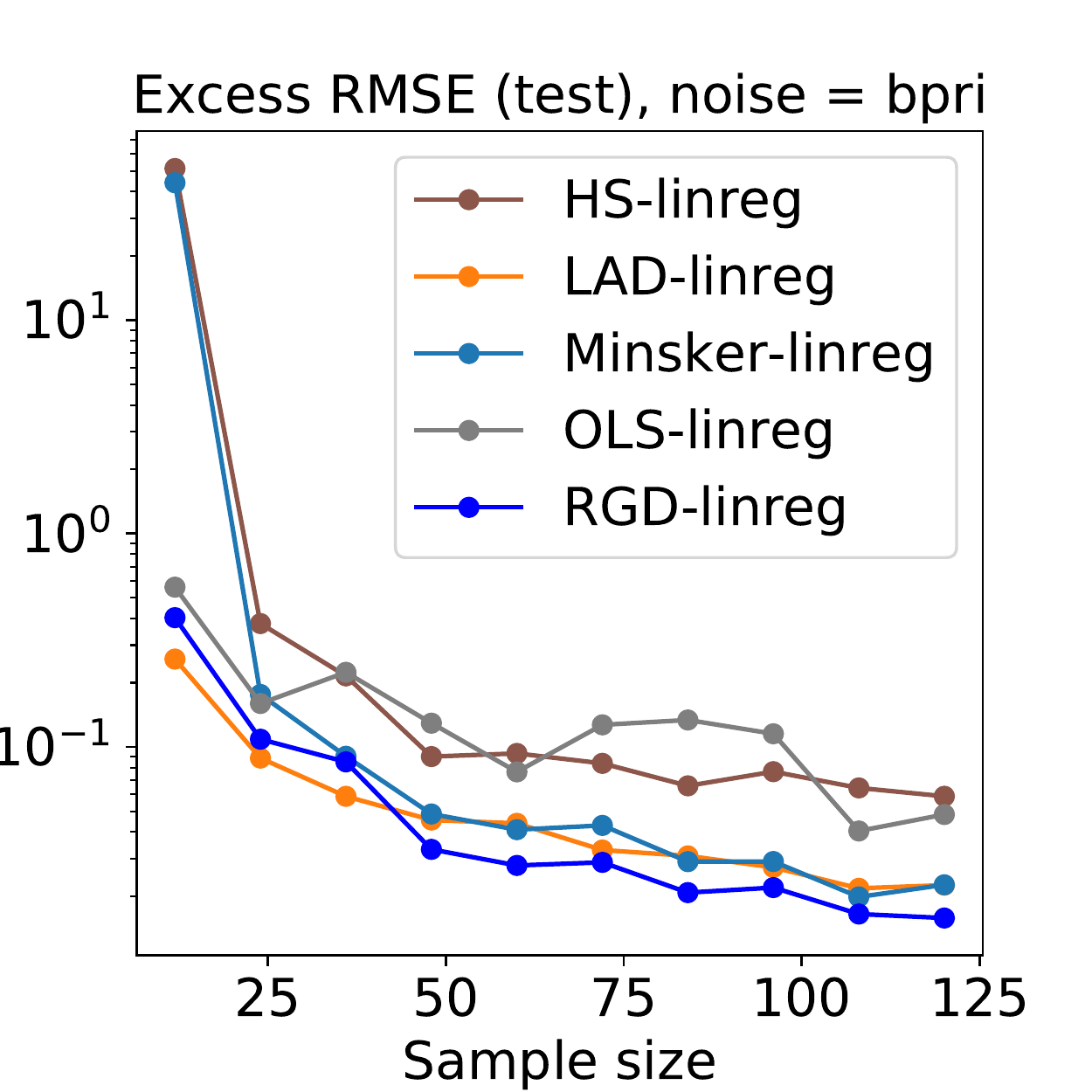}\,\includegraphics[width=0.25\textwidth]{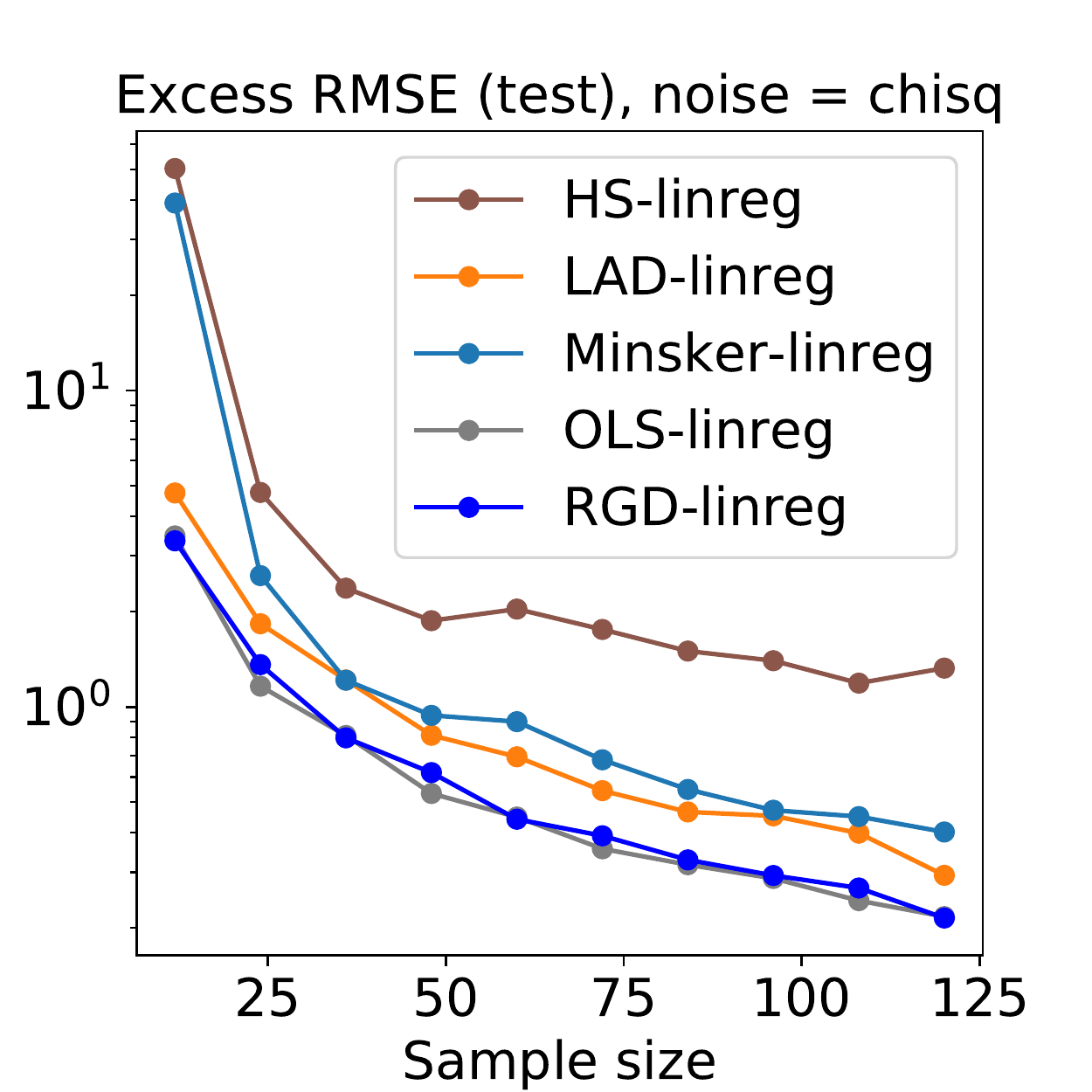}\,\includegraphics[width=0.25\textwidth]{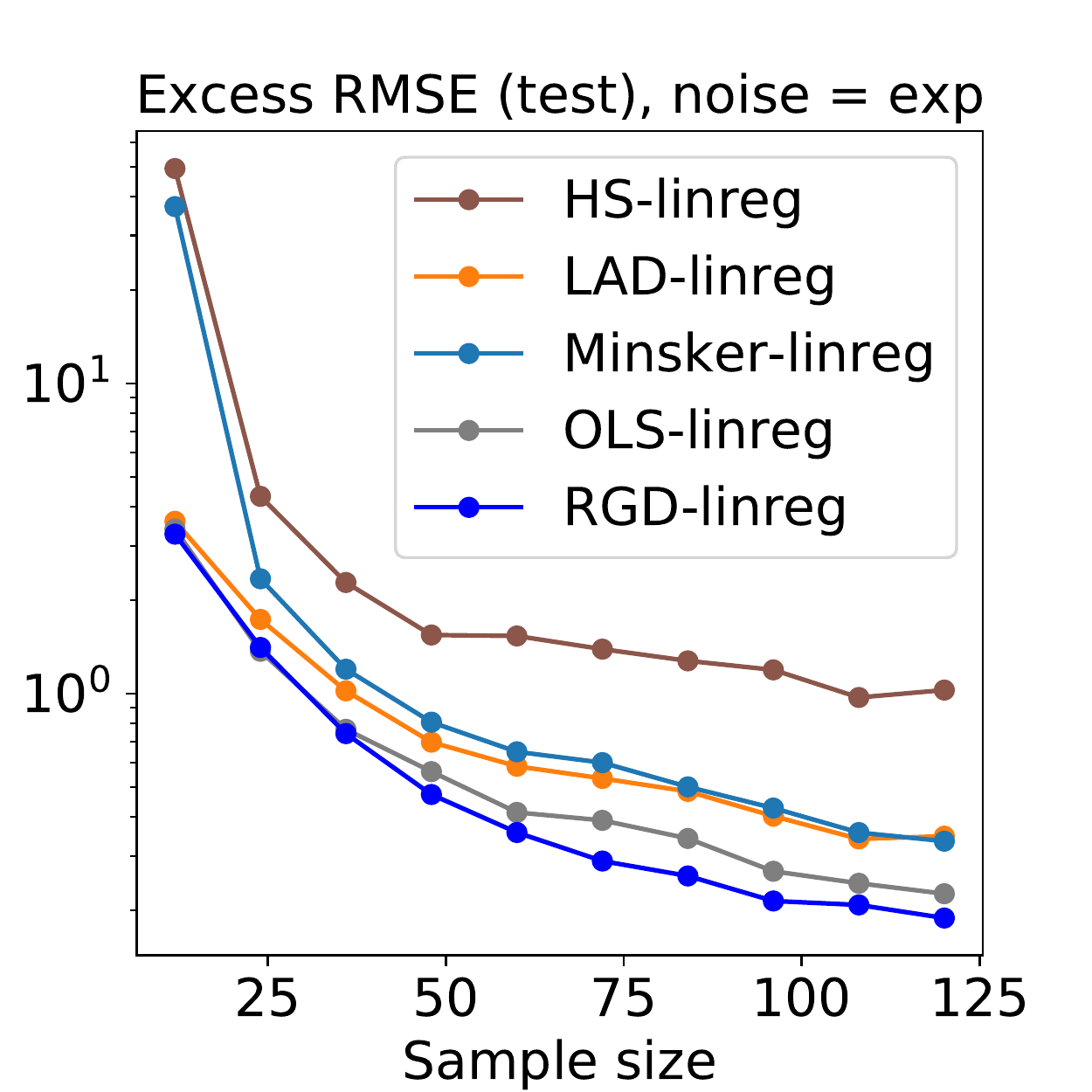}\\
\includegraphics[width=0.25\textwidth]{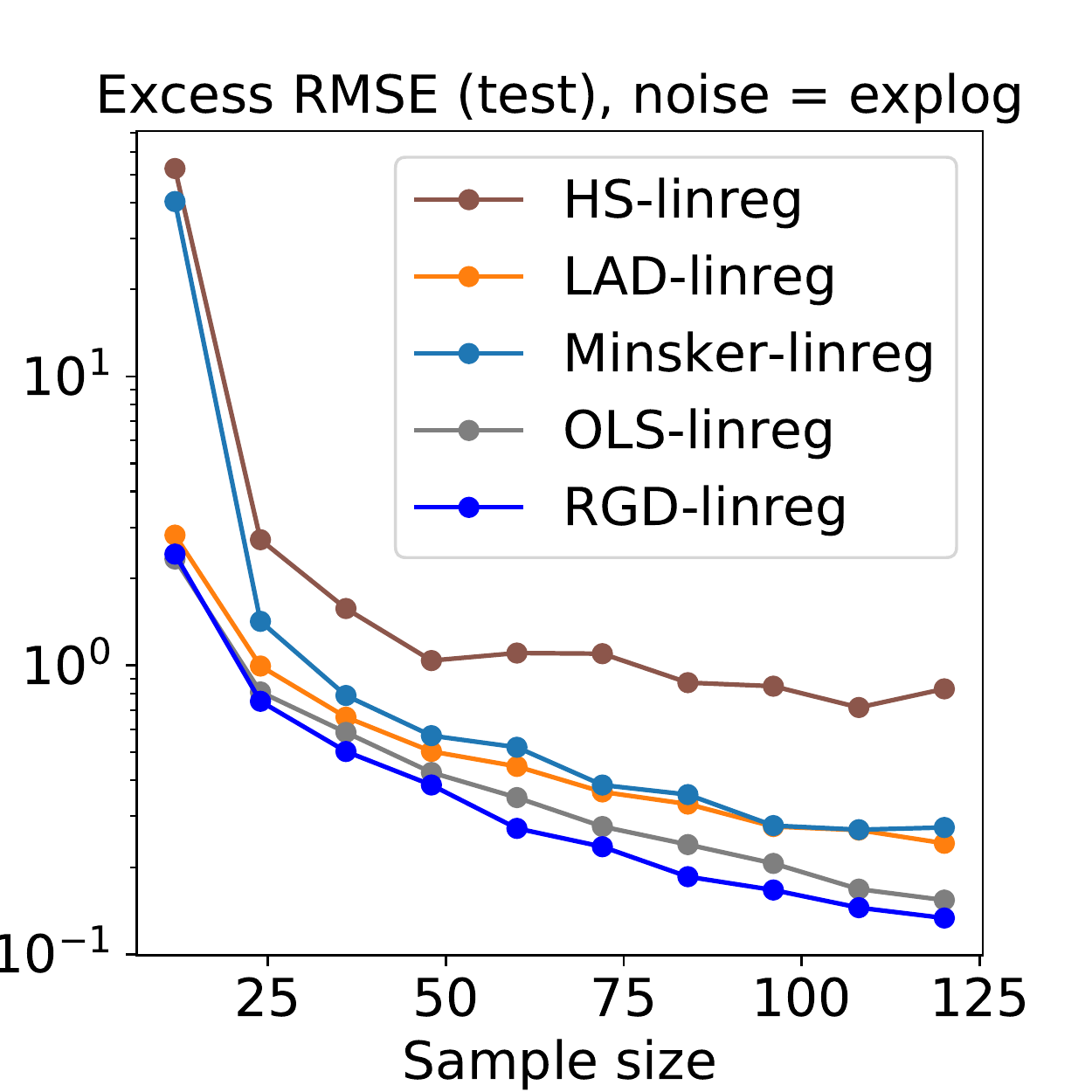}\,\includegraphics[width=0.25\textwidth]{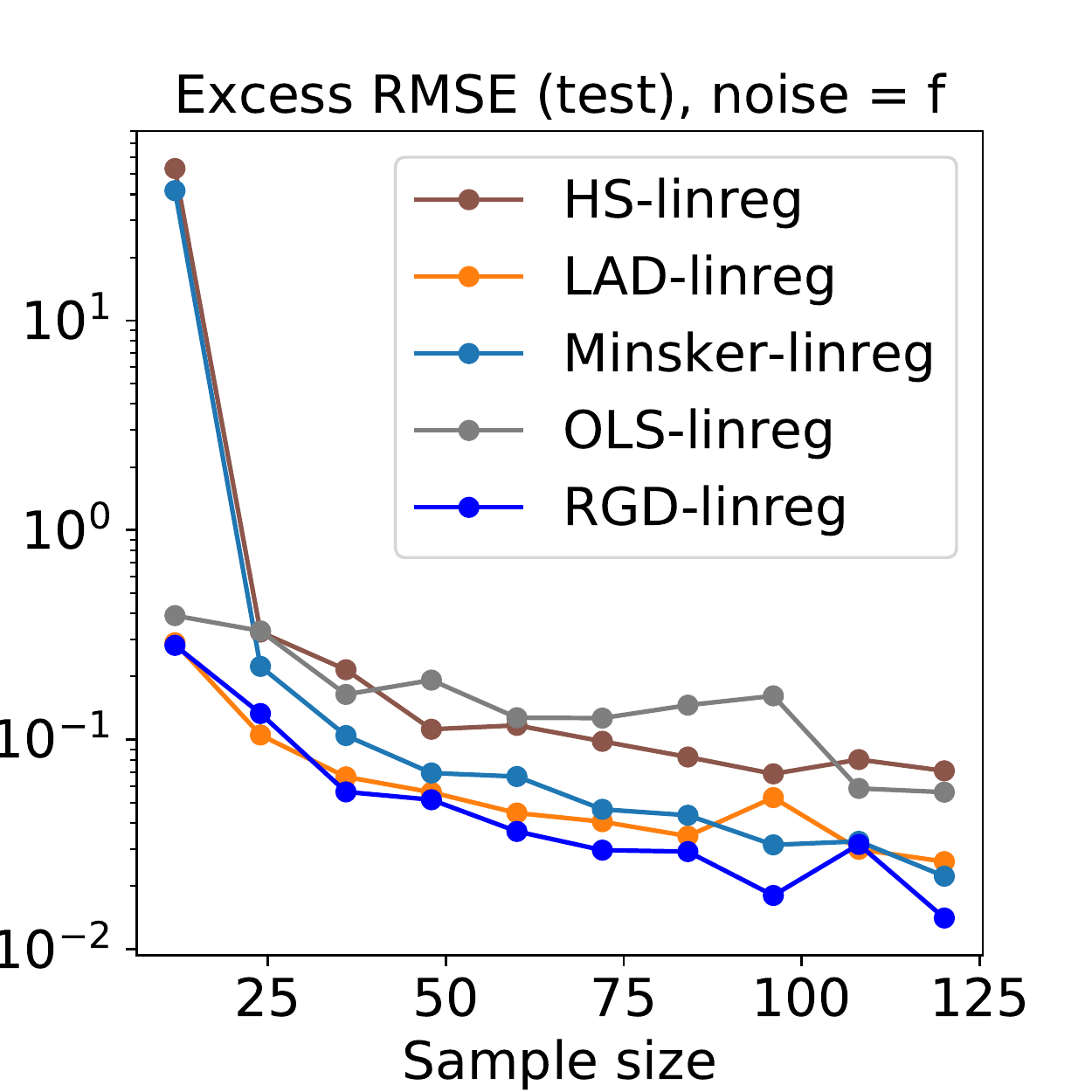}\,\includegraphics[width=0.25\textwidth]{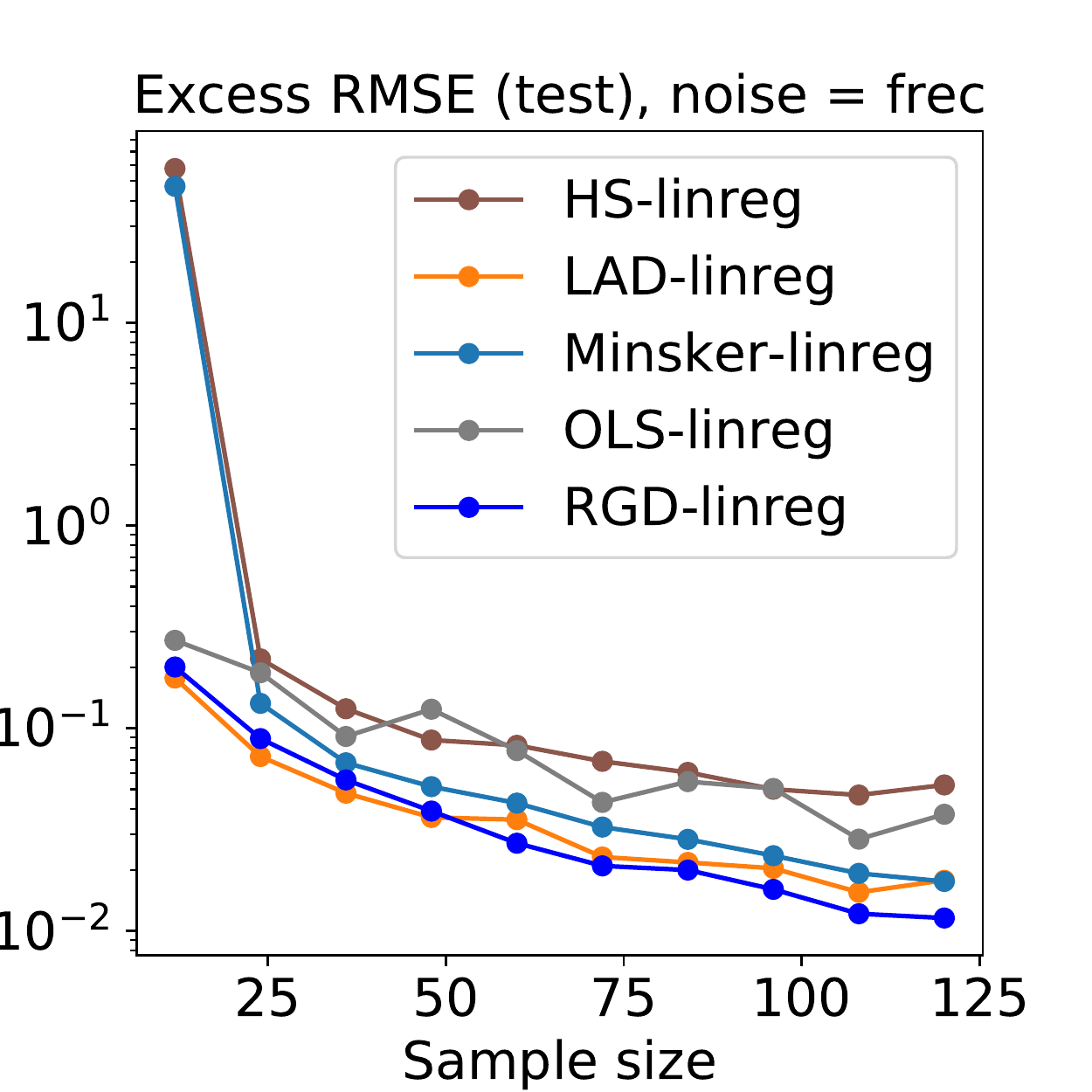}\,\includegraphics[width=0.25\textwidth]{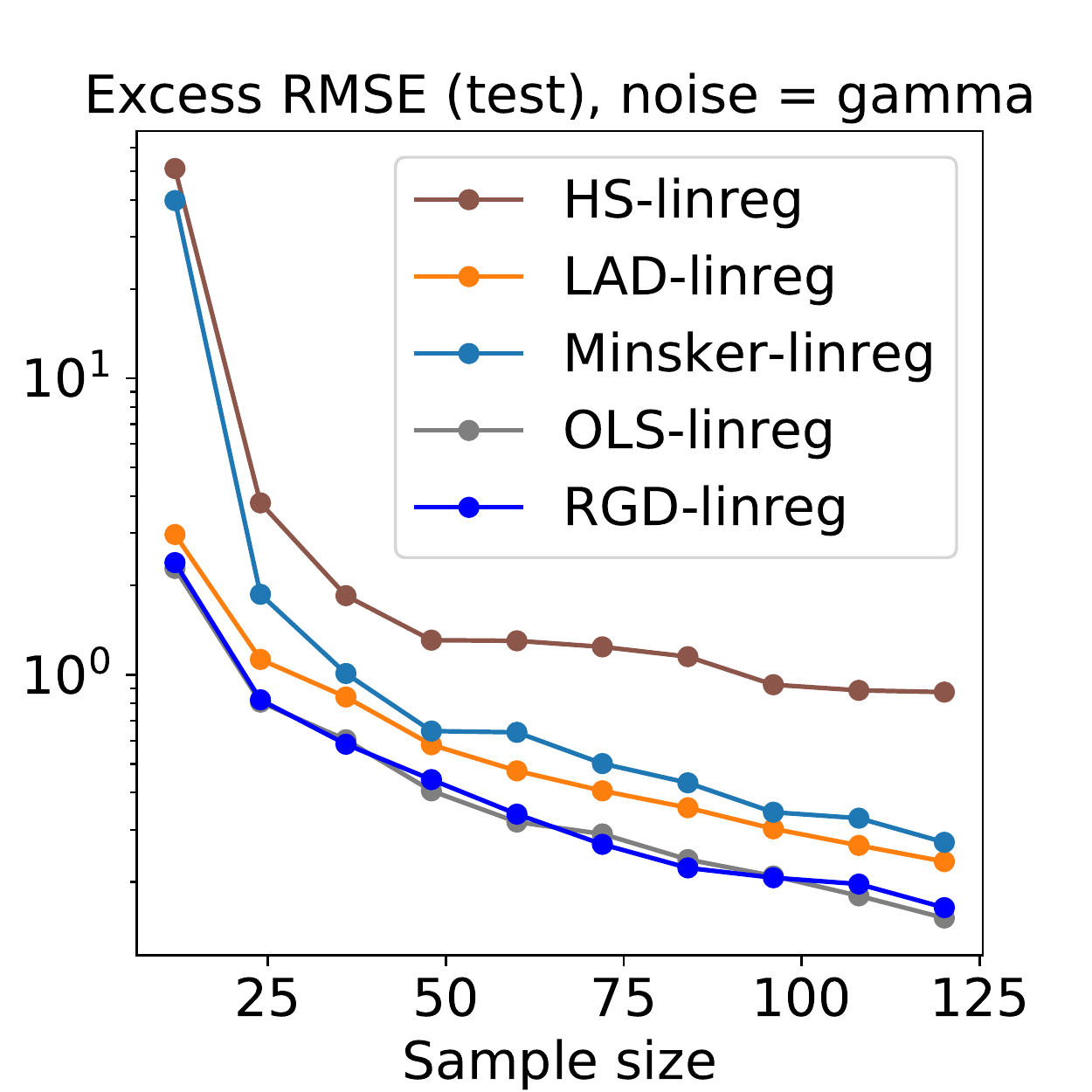}\\
\includegraphics[width=0.25\textwidth]{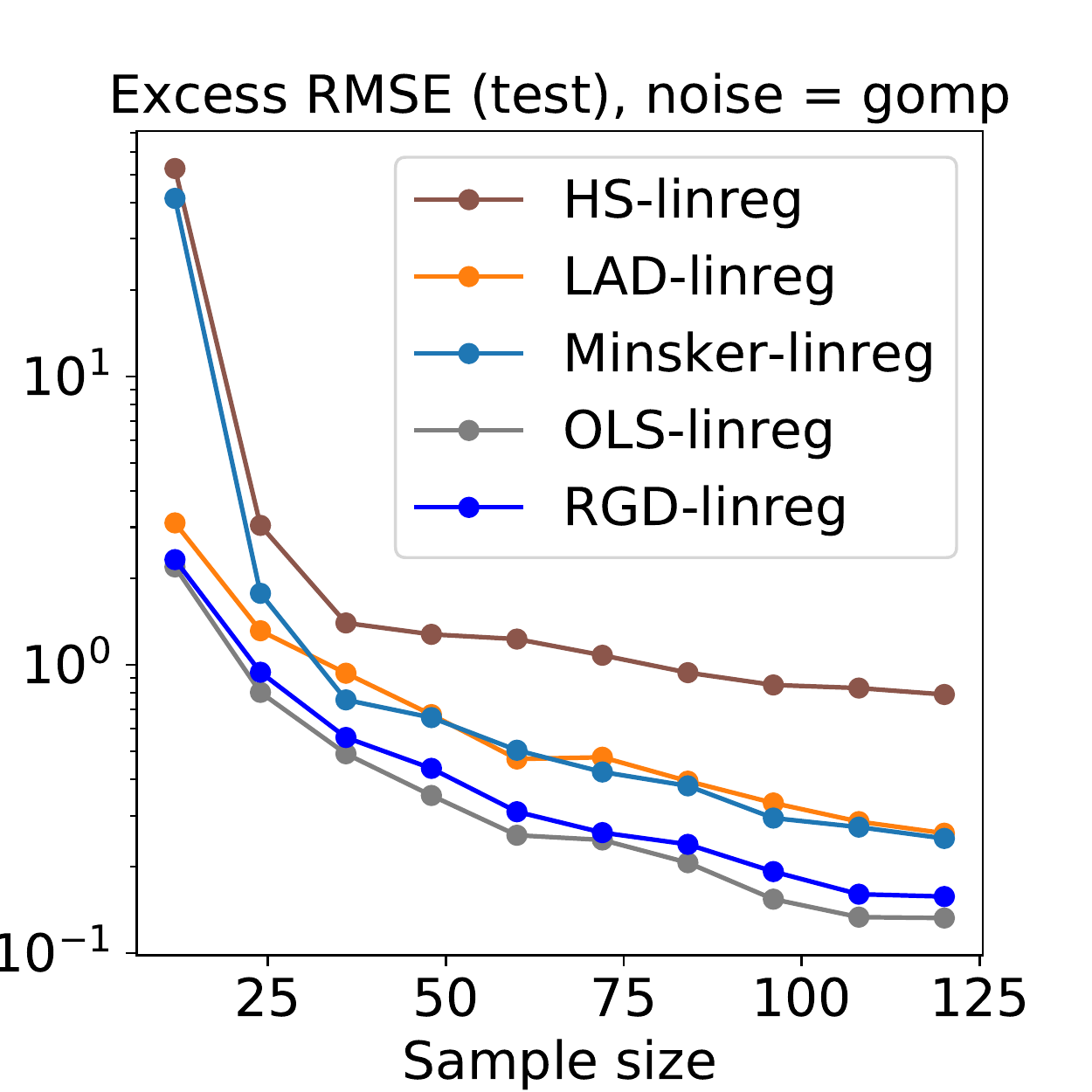}\,\includegraphics[width=0.25\textwidth]{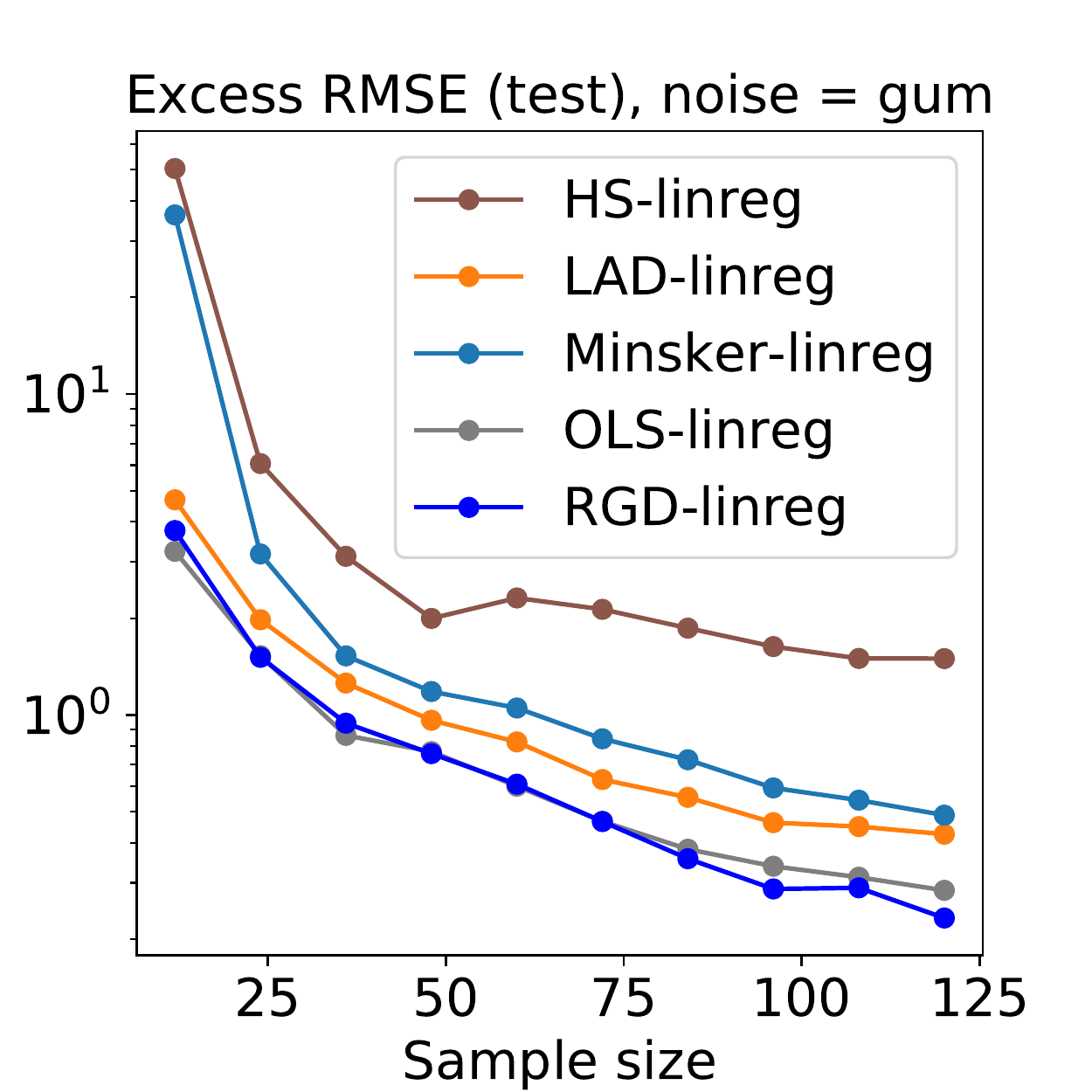}\,\includegraphics[width=0.25\textwidth]{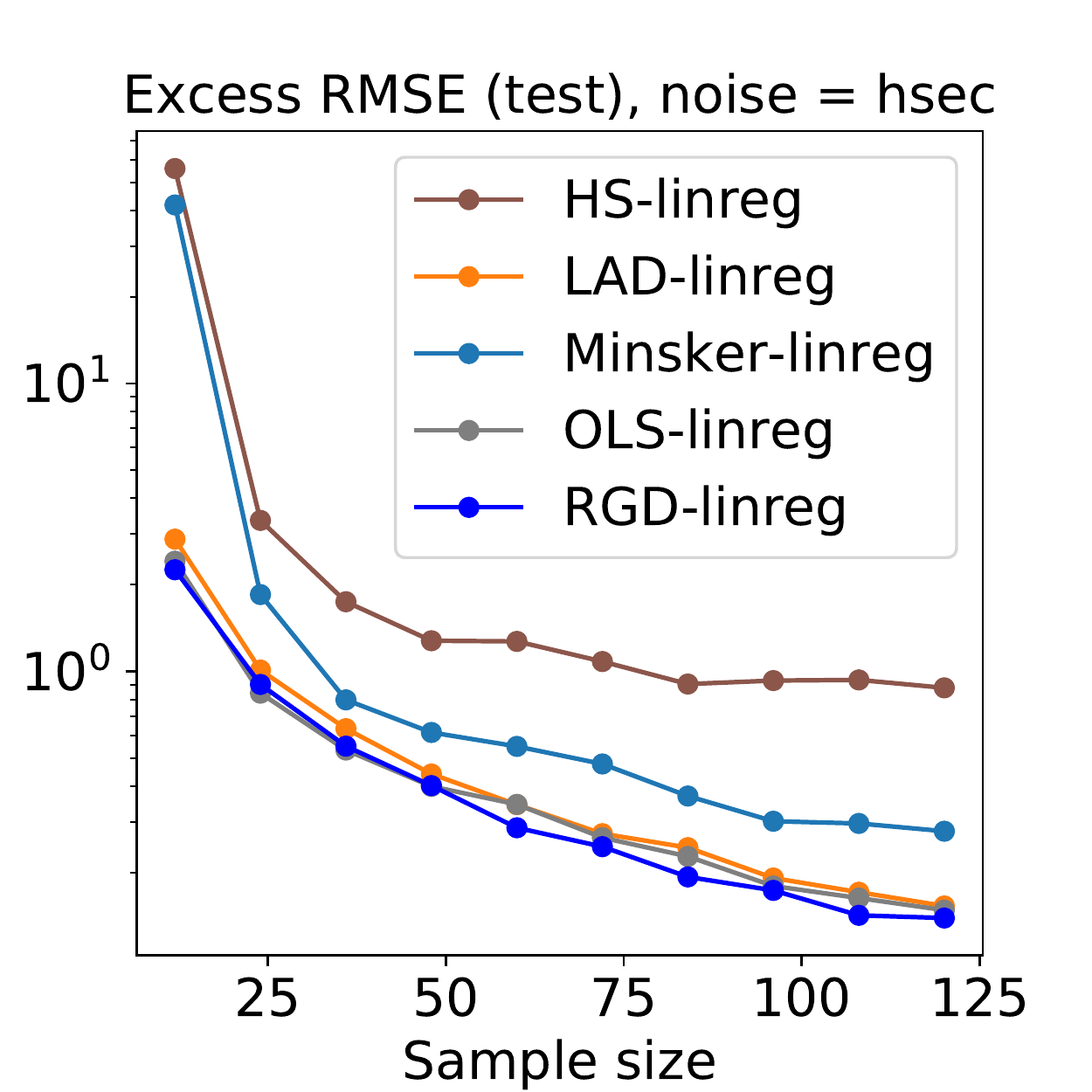}\,\includegraphics[width=0.25\textwidth]{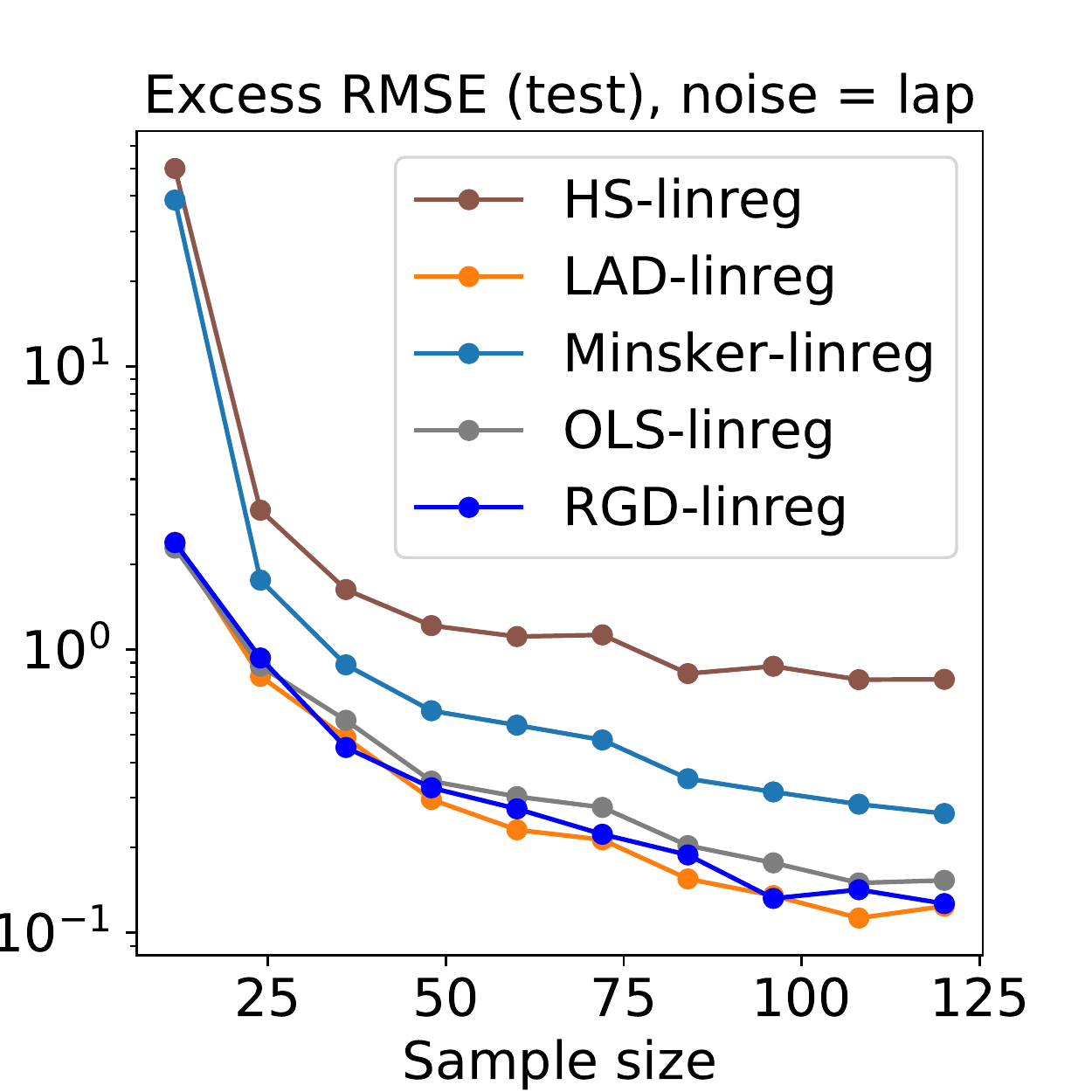}\\
\includegraphics[width=0.25\textwidth]{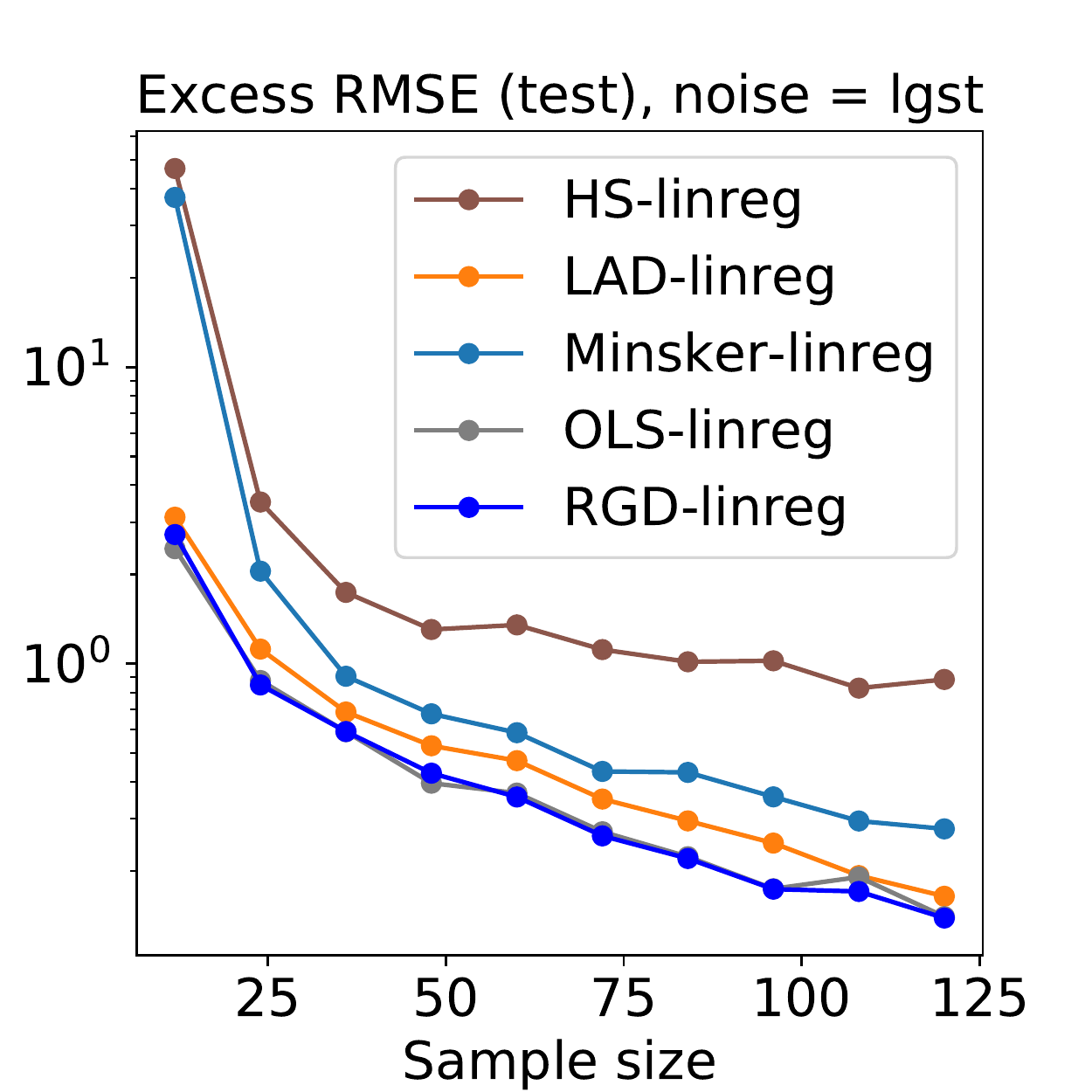}\,\includegraphics[width=0.25\textwidth]{linreg_overN_risk_llog}\,\includegraphics[width=0.25\textwidth]{linreg_overN_risk_lnorm}\,\includegraphics[width=0.25\textwidth]{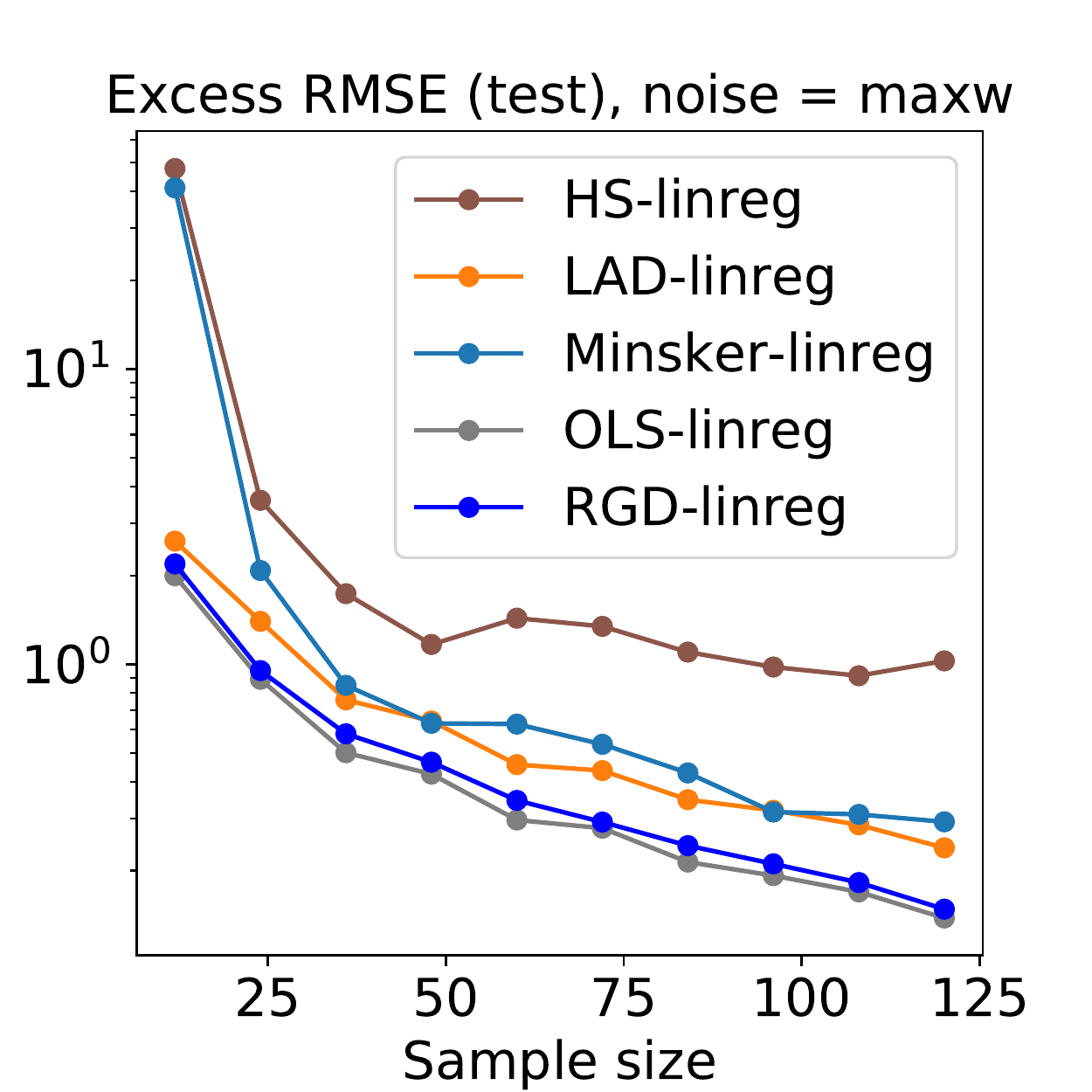}\\
\includegraphics[width=0.25\textwidth]{linreg_overN_risk_norm}\,\includegraphics[width=0.25\textwidth]{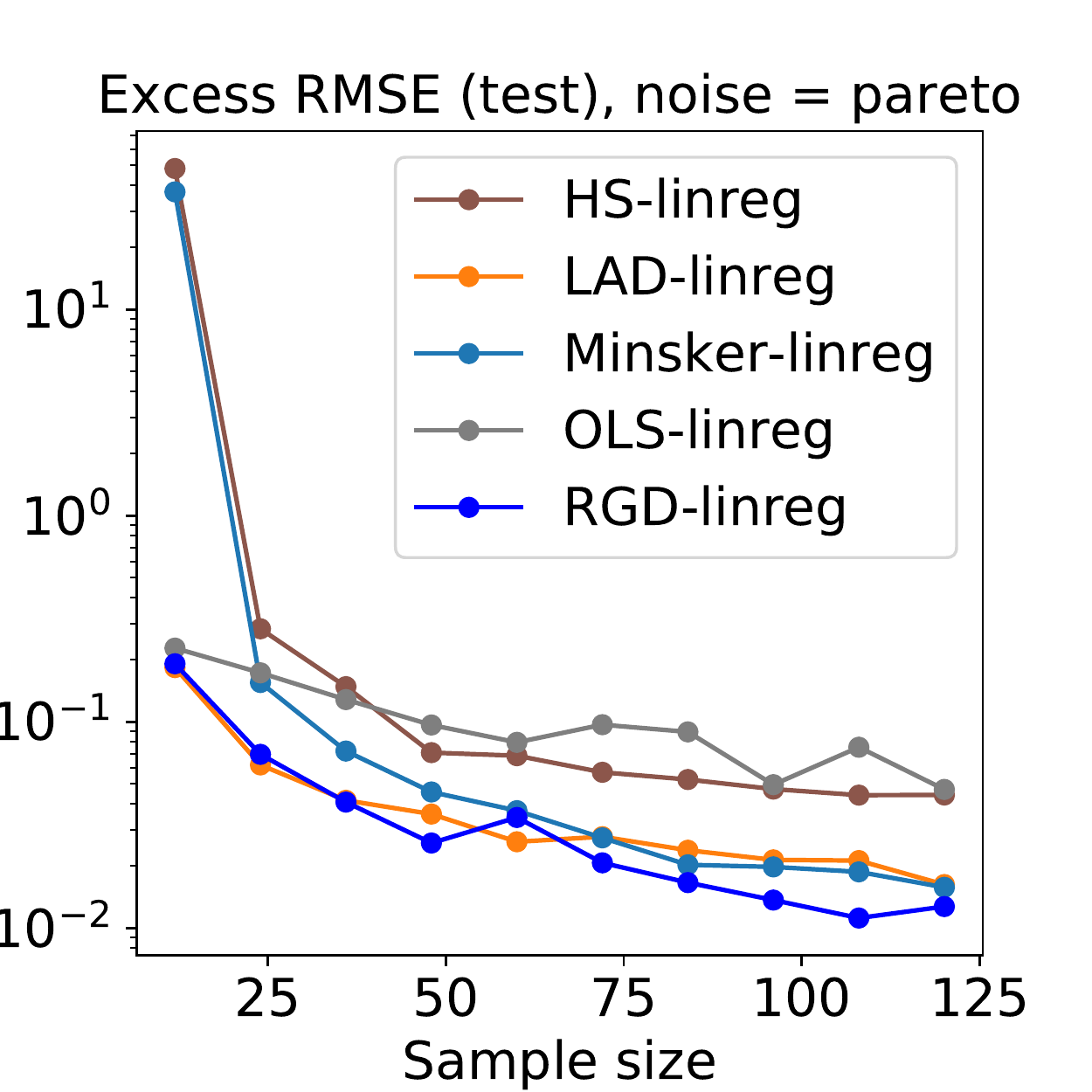}\,\includegraphics[width=0.25\textwidth]{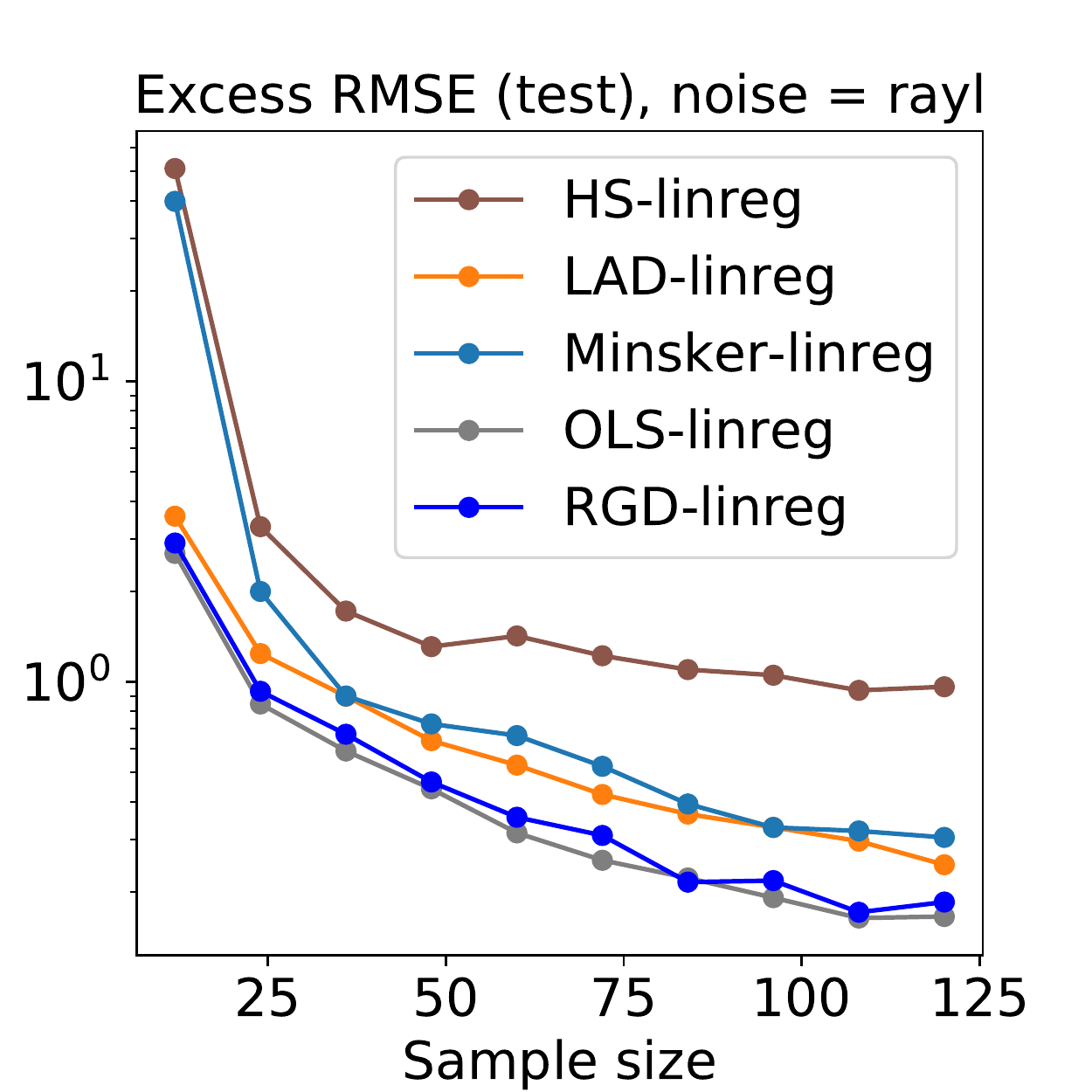}\,\includegraphics[width=0.25\textwidth]{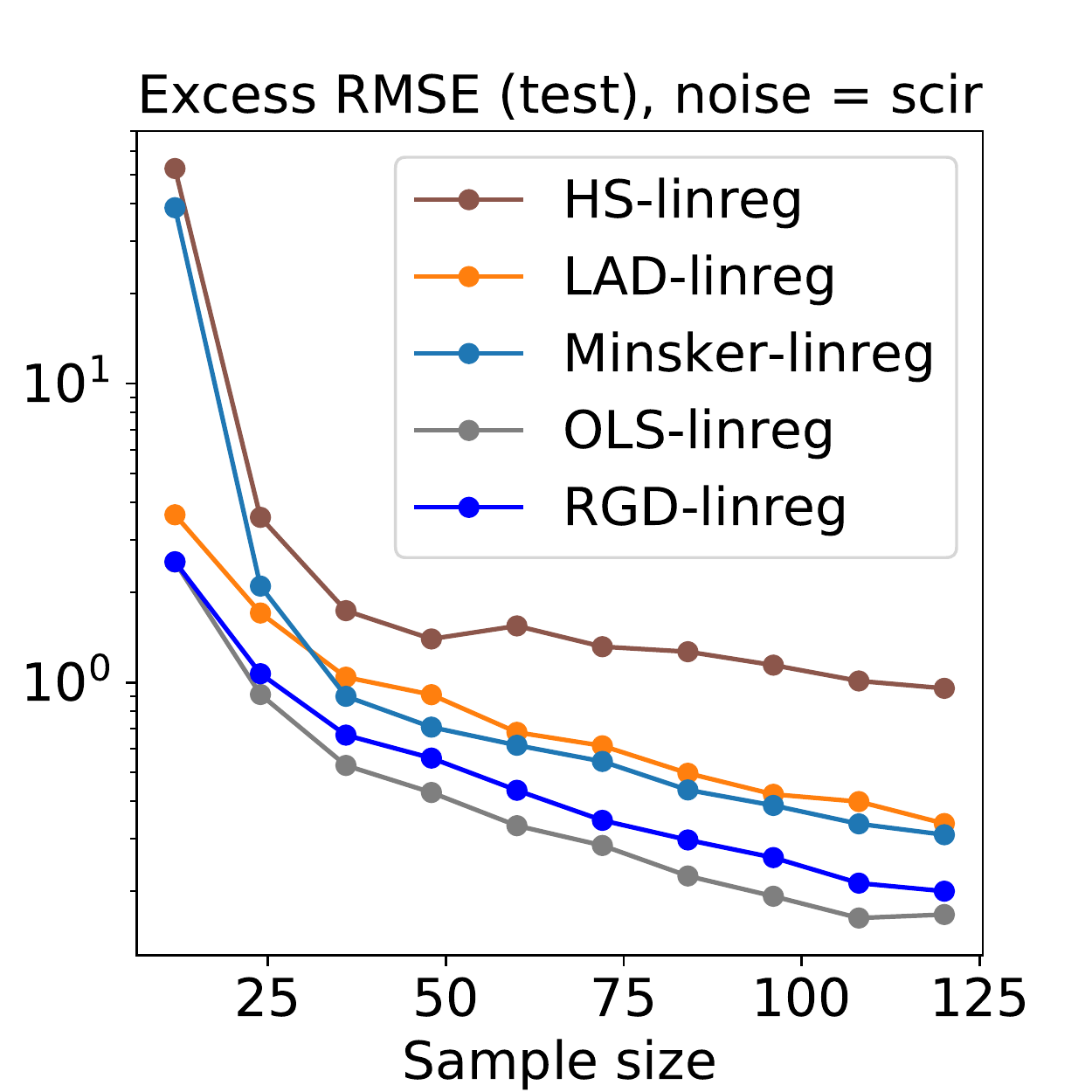}
\caption{Prediction error over sample size $12 \leq n \leq 122$, fixed $d=5$, noise level = $8$. Each plot corresponds to a distinct noise distribution.}
\label{fig:overN_all_distros_1}
\end{figure}

\clearpage

\begin{figure}[t]
\centering
\includegraphics[width=0.25\textwidth]{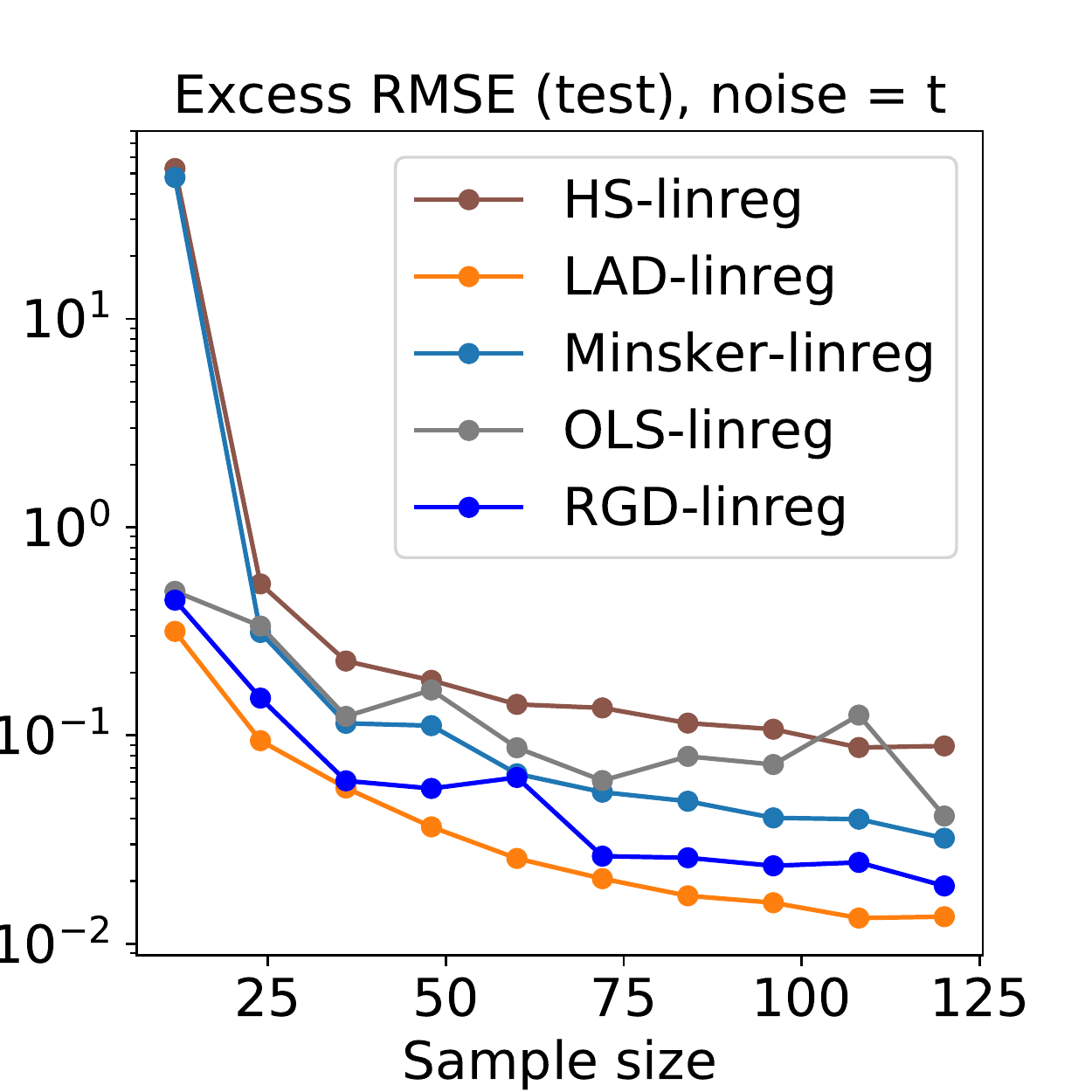}\,\includegraphics[width=0.25\textwidth]{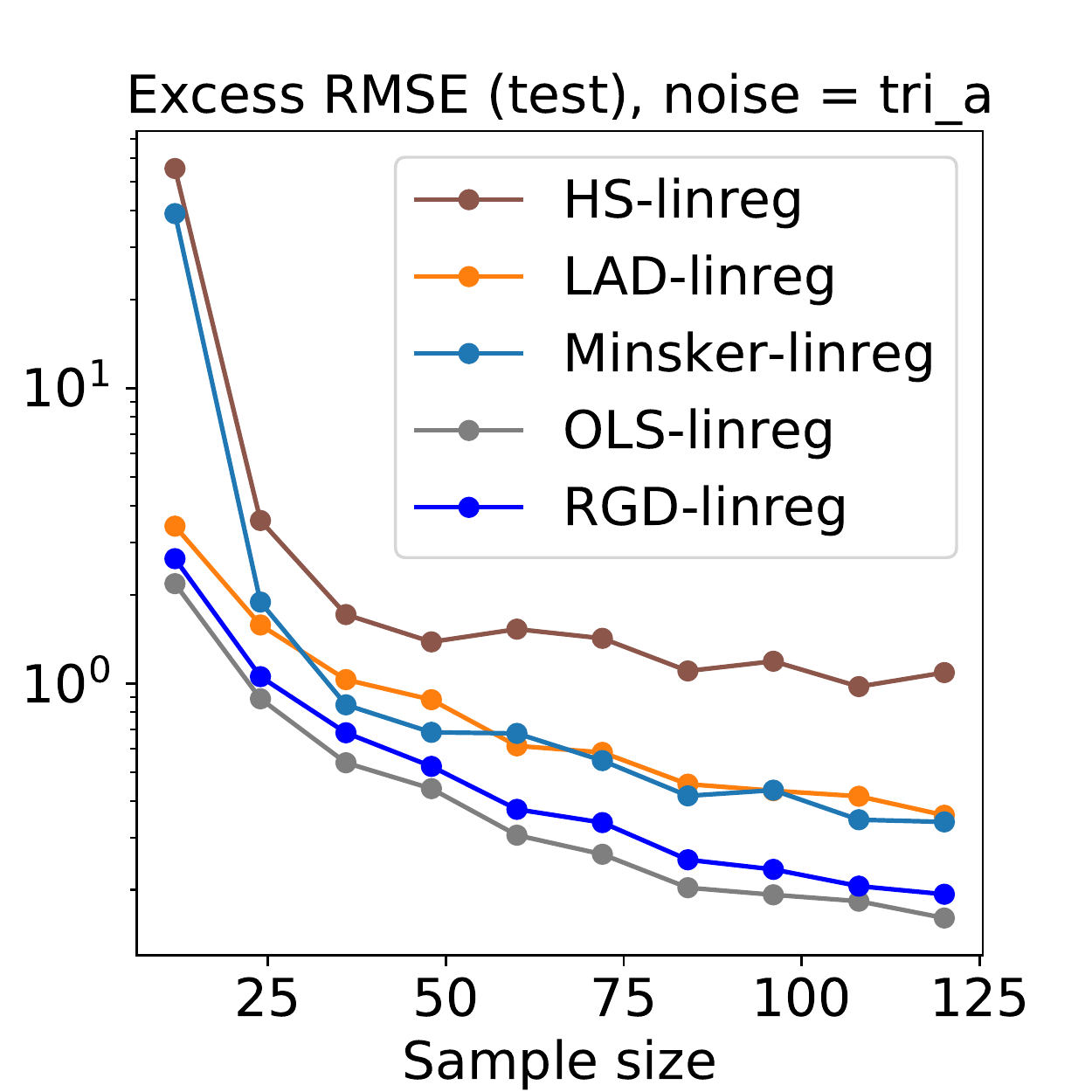}\,\includegraphics[width=0.25\textwidth]{linreg_overN_risk_tri_s}\,\includegraphics[width=0.25\textwidth]{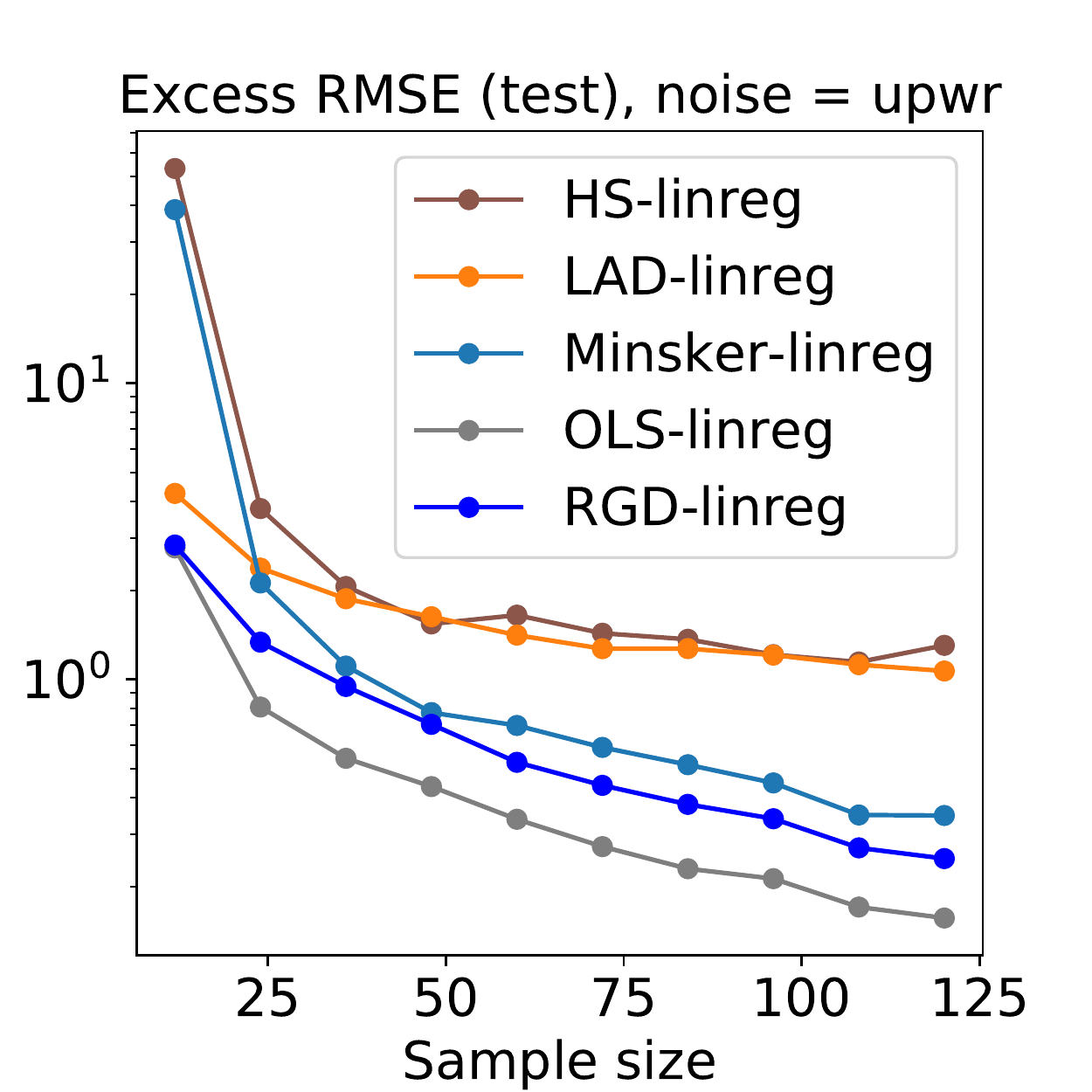}\\
\includegraphics[width=0.25\textwidth]{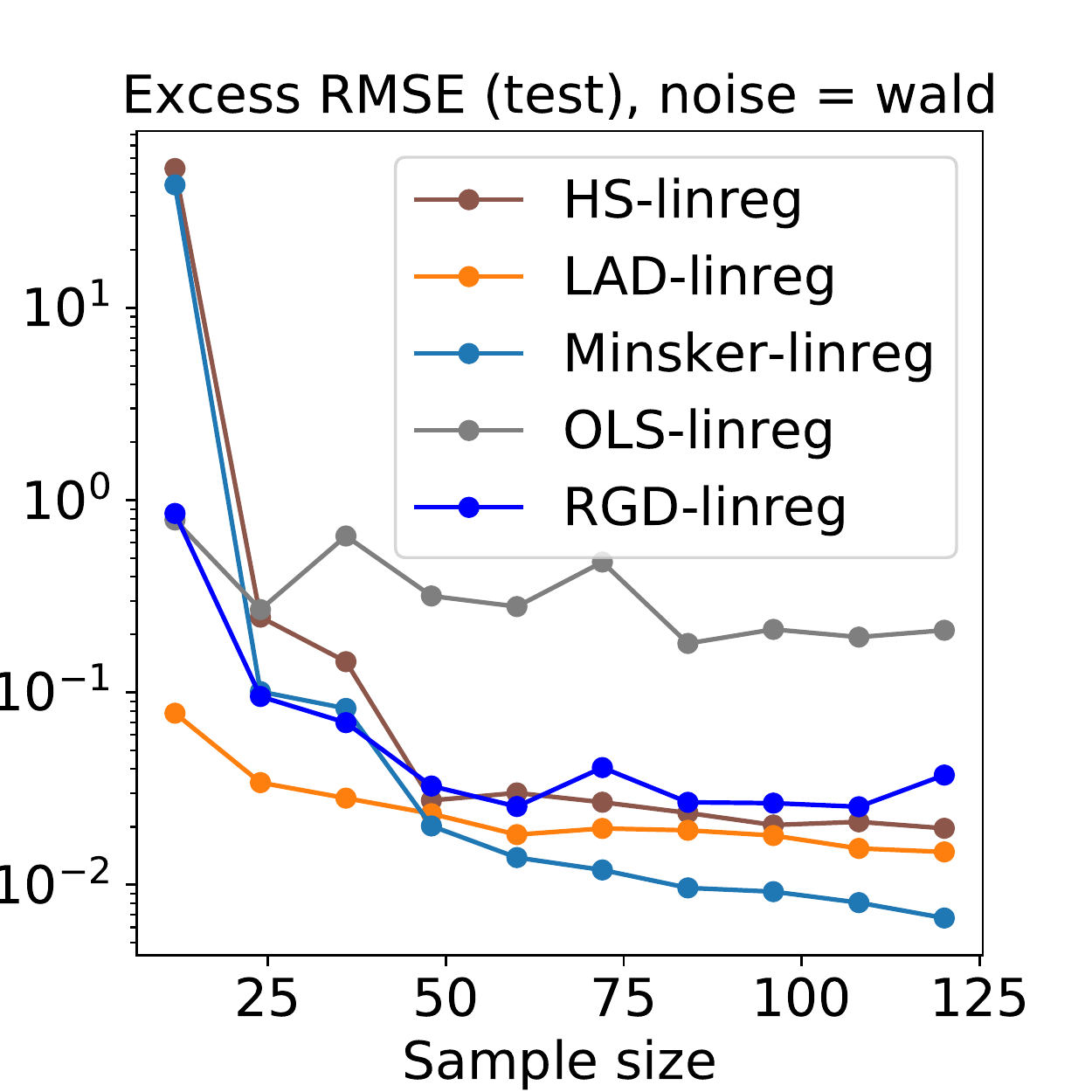}\,\includegraphics[width=0.25\textwidth]{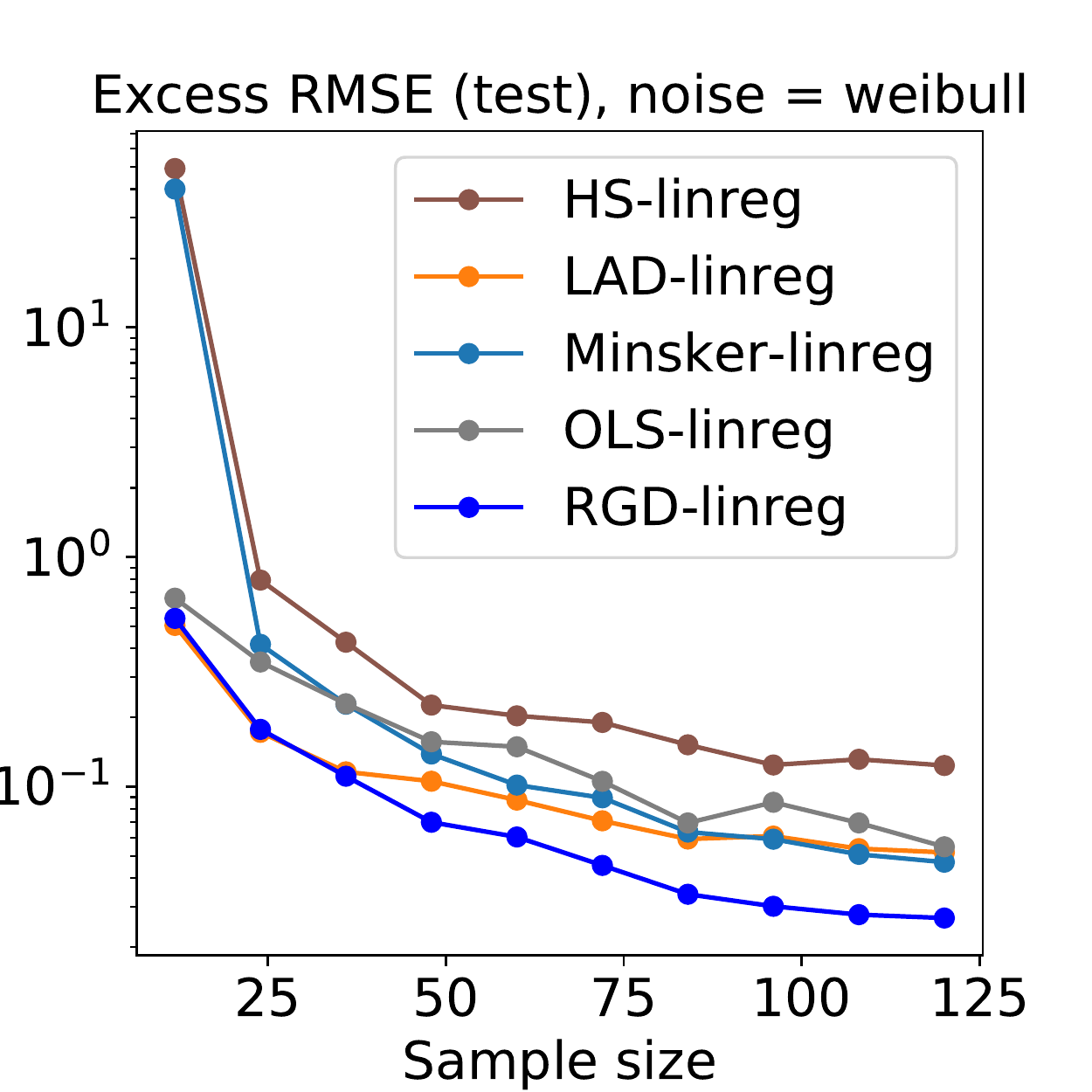}
\caption{Prediction error over sample size $12 \leq n \leq 122$, fixed $d=5$, noise level = $8$. Each plot corresponds to a distinct noise distribution.}
\label{fig:overN_all_distros_2}
\end{figure}

\clearpage

\begin{figure}[t]
\centering
\includegraphics[width=0.25\textwidth]{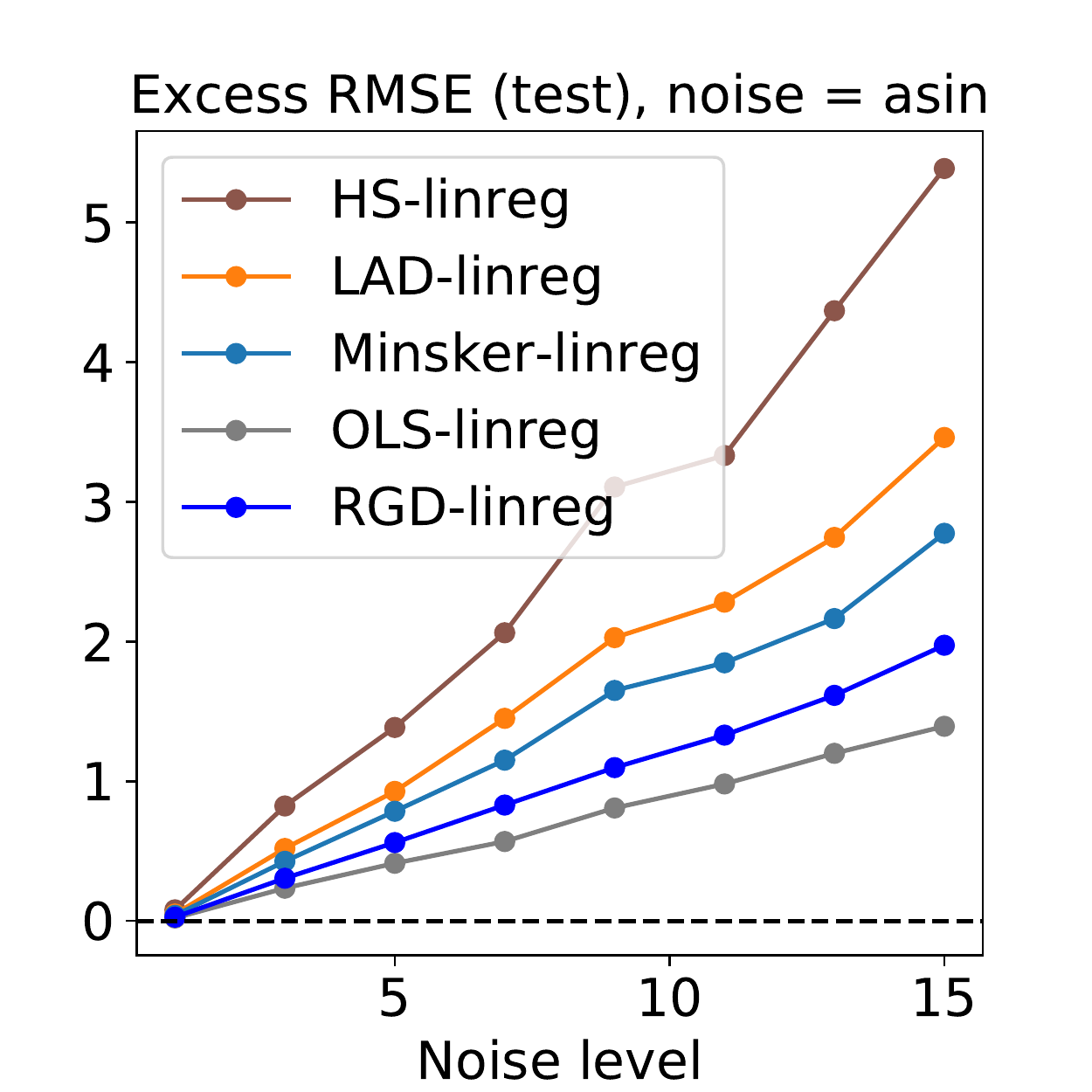}\,\includegraphics[width=0.25\textwidth]{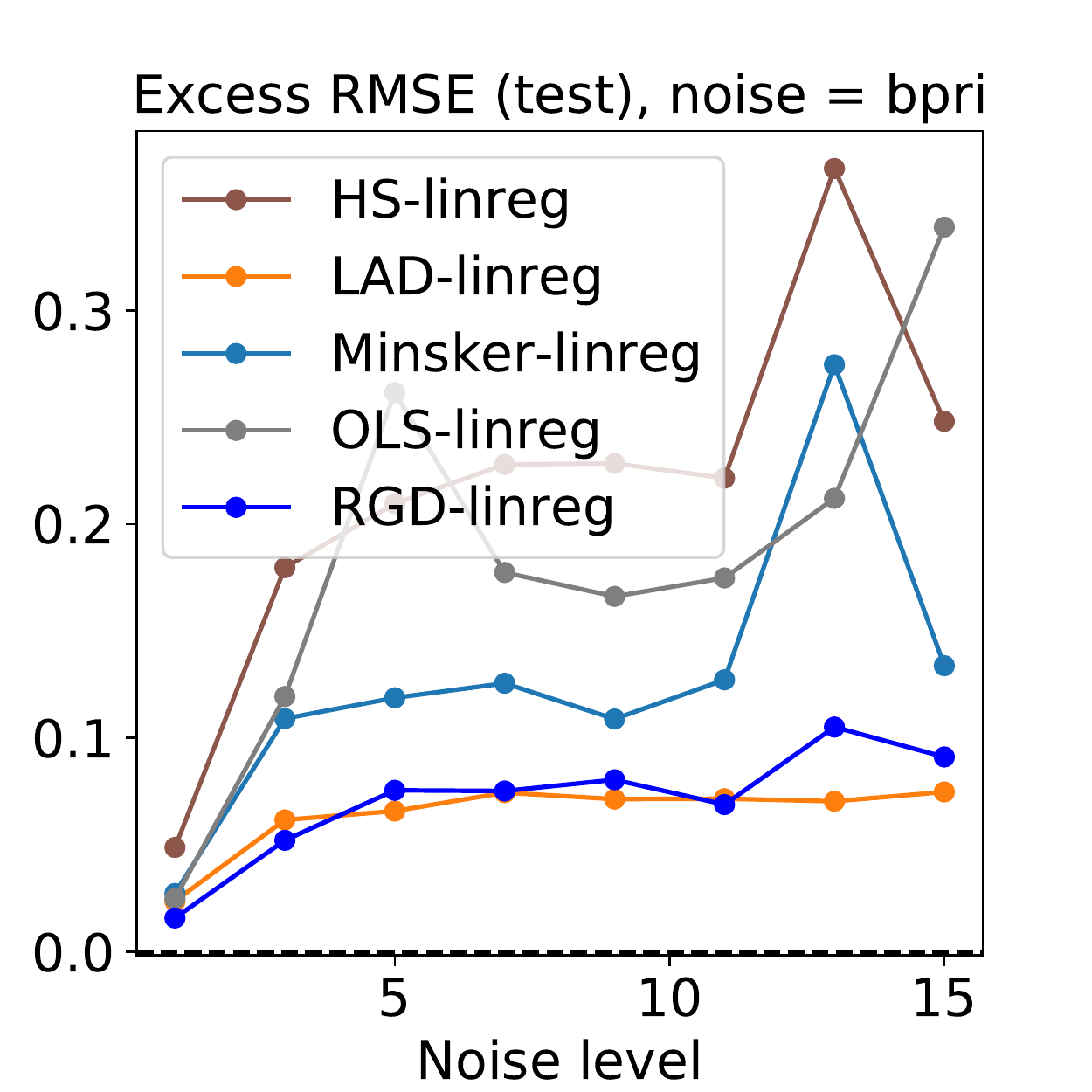}\,\includegraphics[width=0.25\textwidth]{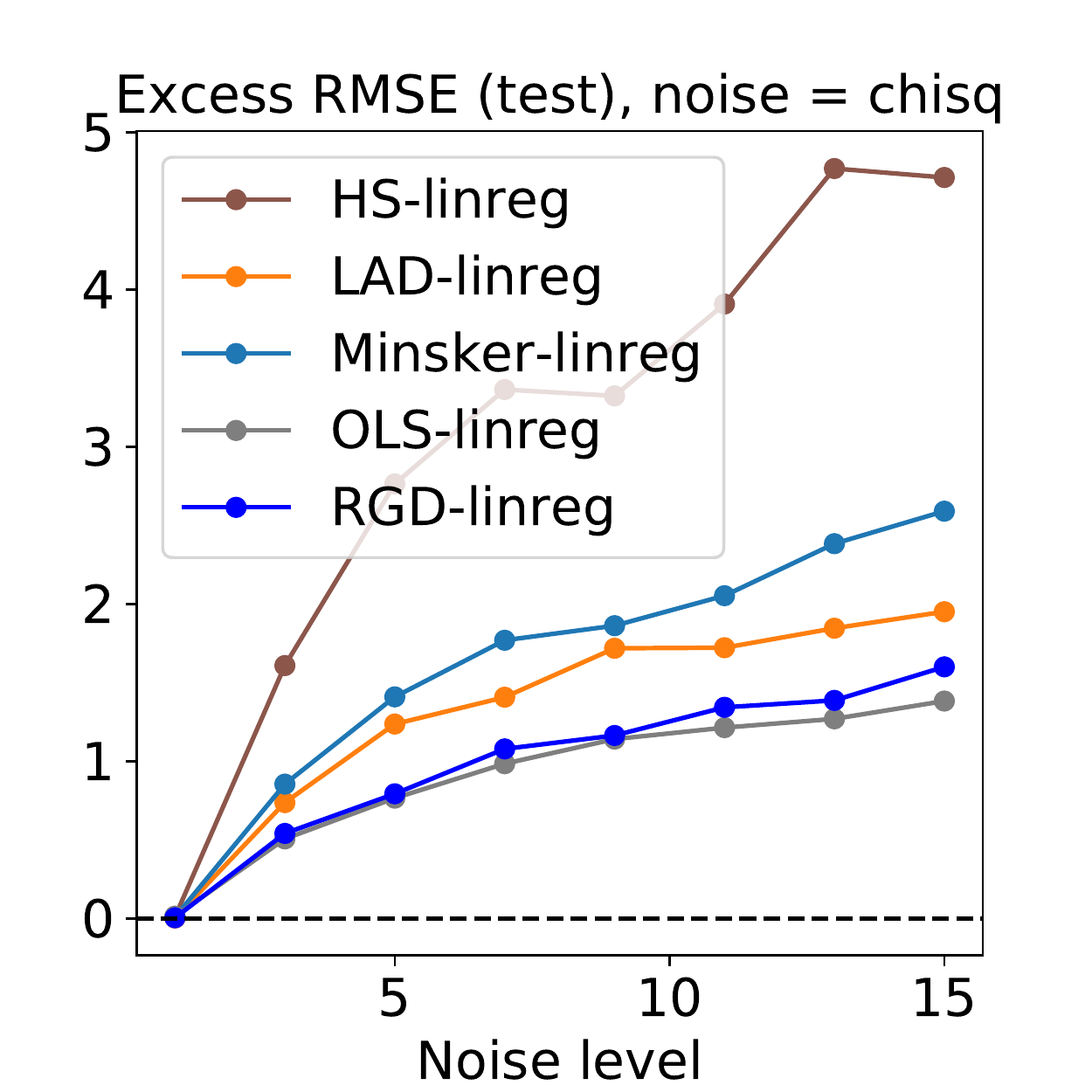}\,\includegraphics[width=0.25\textwidth]{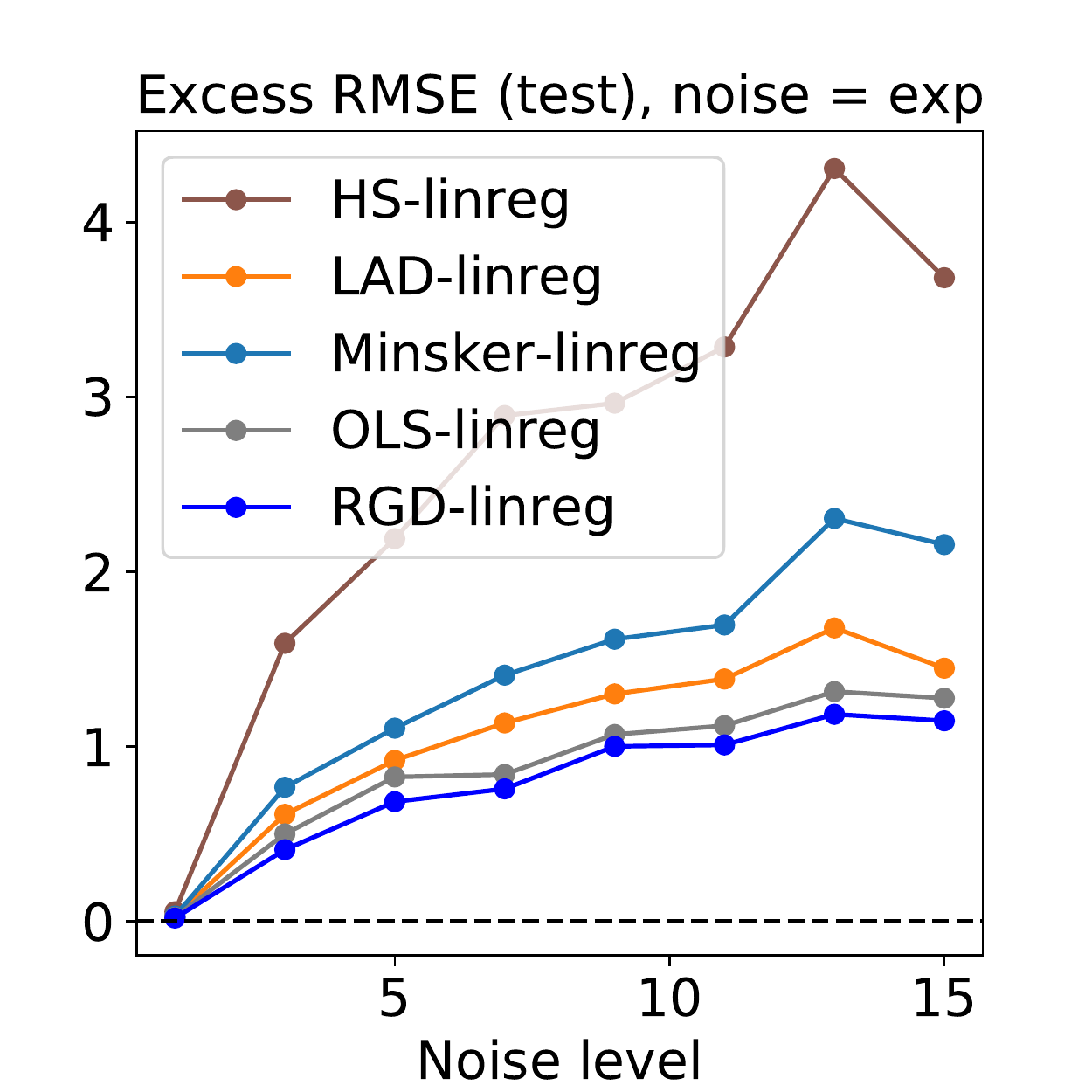}\\
\includegraphics[width=0.25\textwidth]{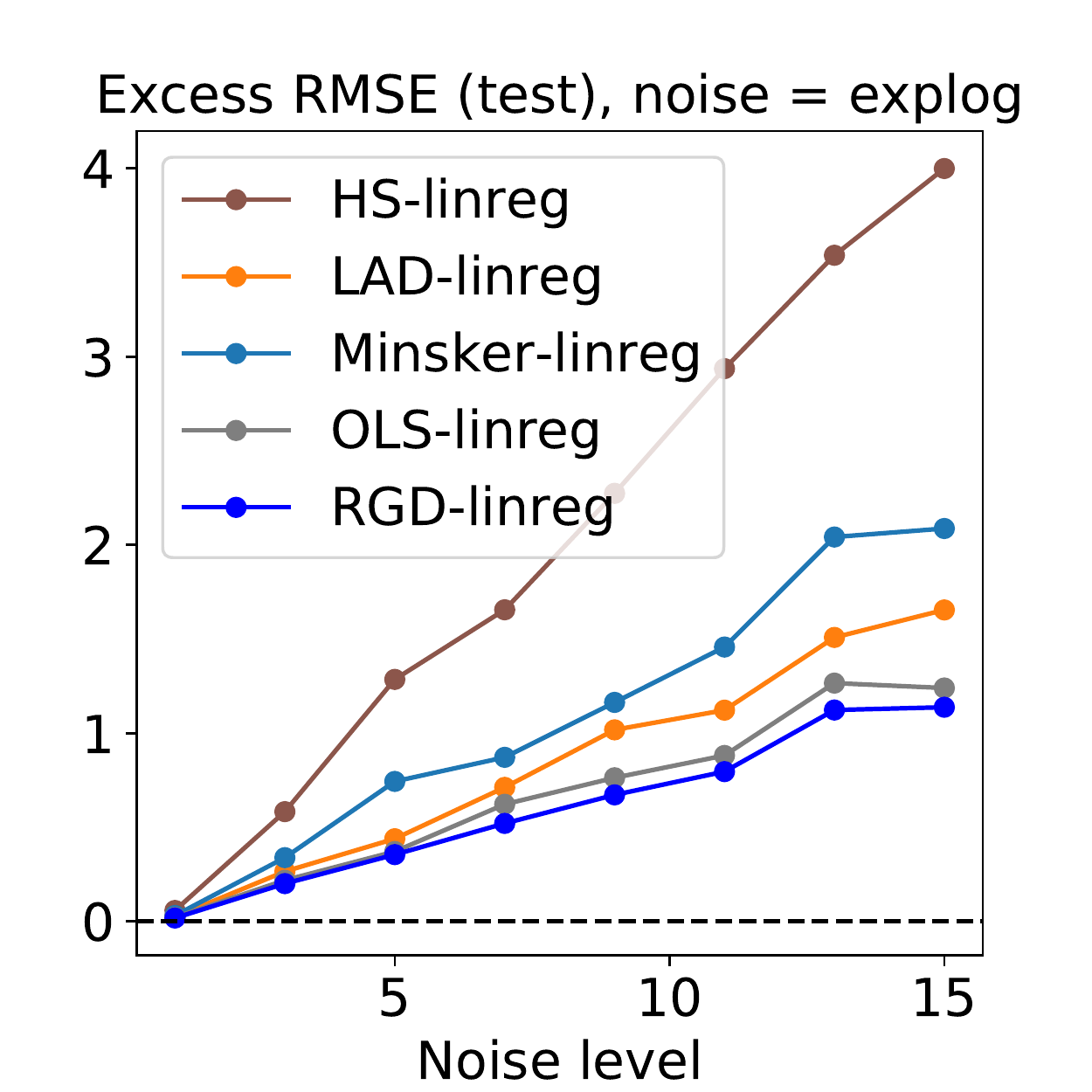}\,\includegraphics[width=0.25\textwidth]{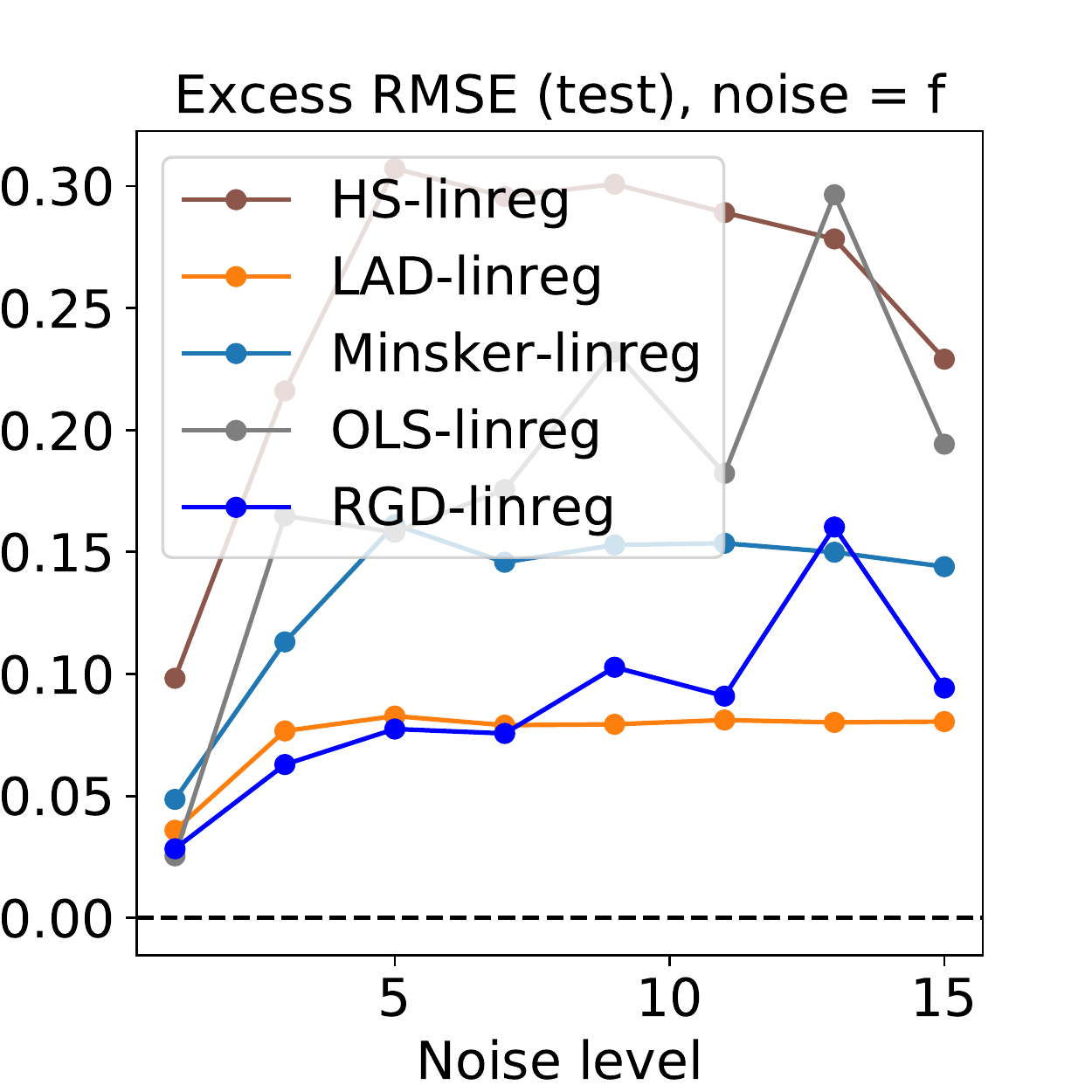}\,\includegraphics[width=0.25\textwidth]{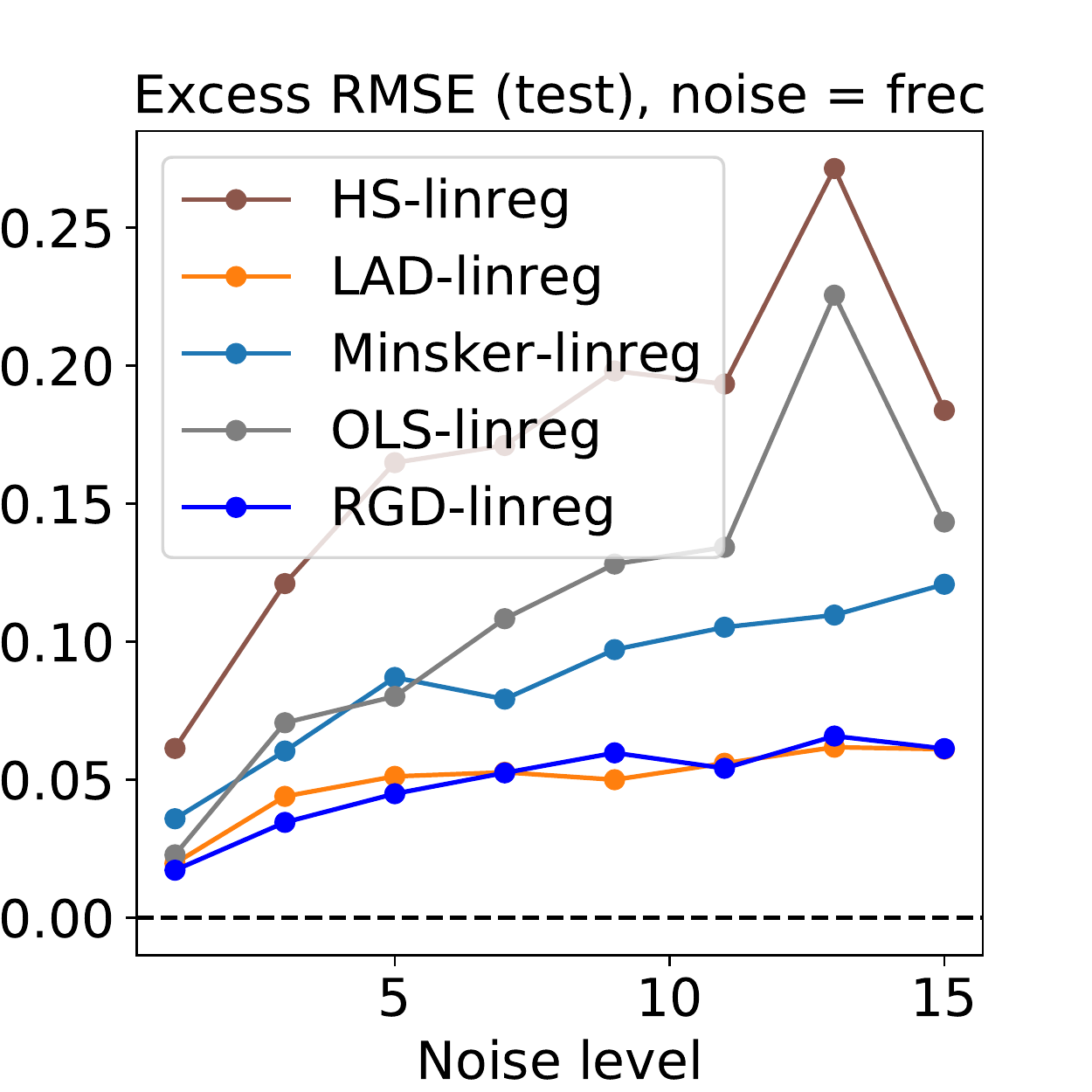}\,\includegraphics[width=0.25\textwidth]{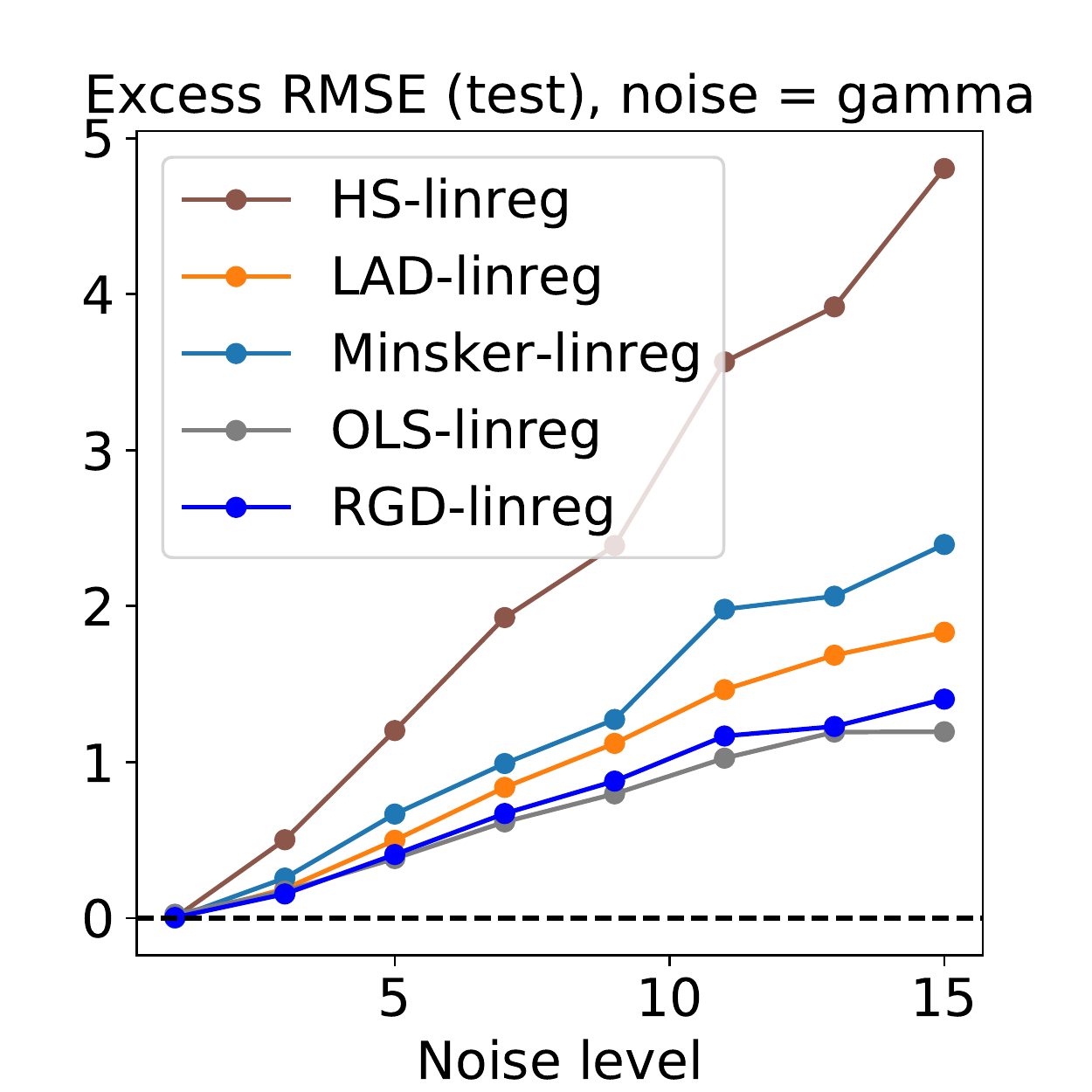}\\
\includegraphics[width=0.25\textwidth]{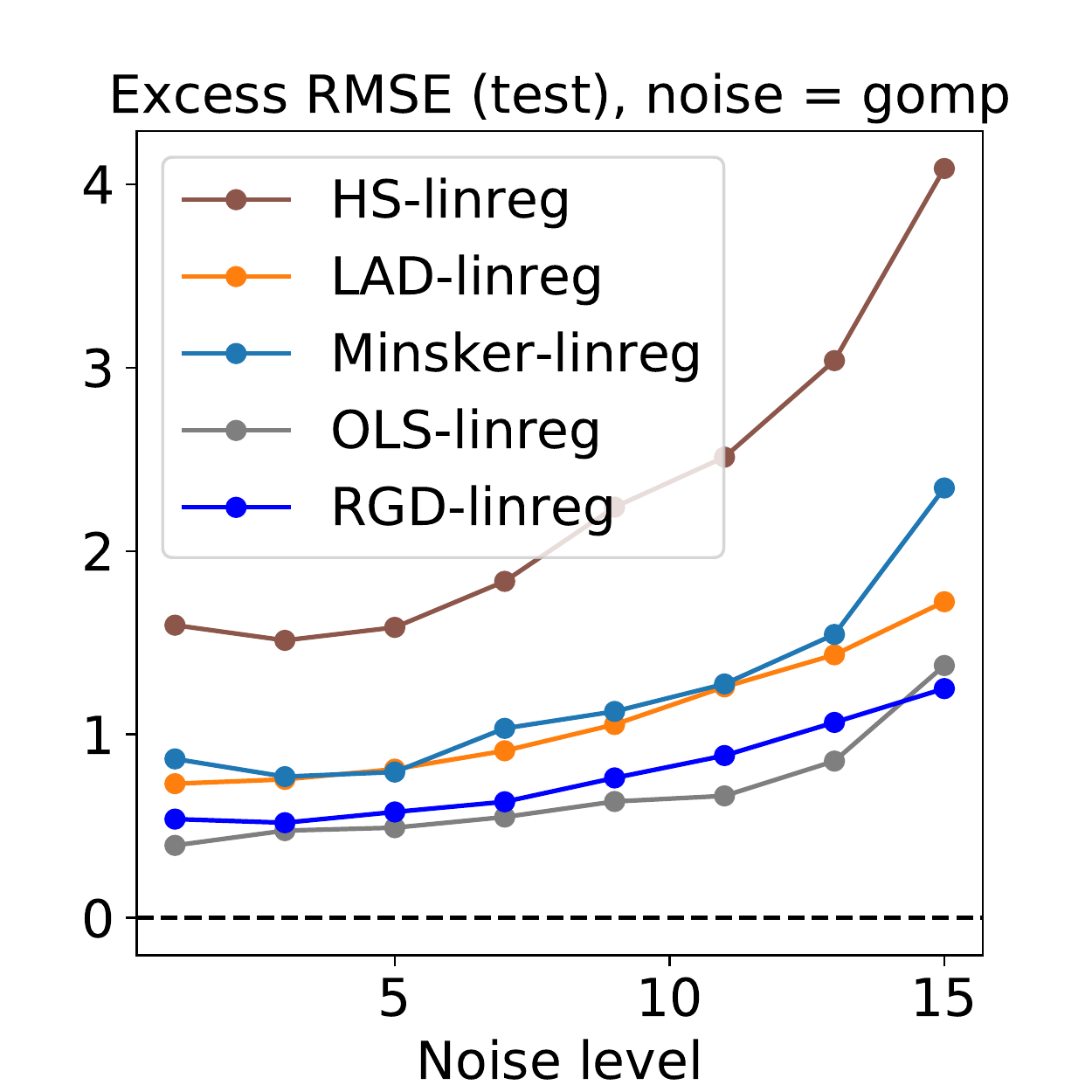}\,\includegraphics[width=0.25\textwidth]{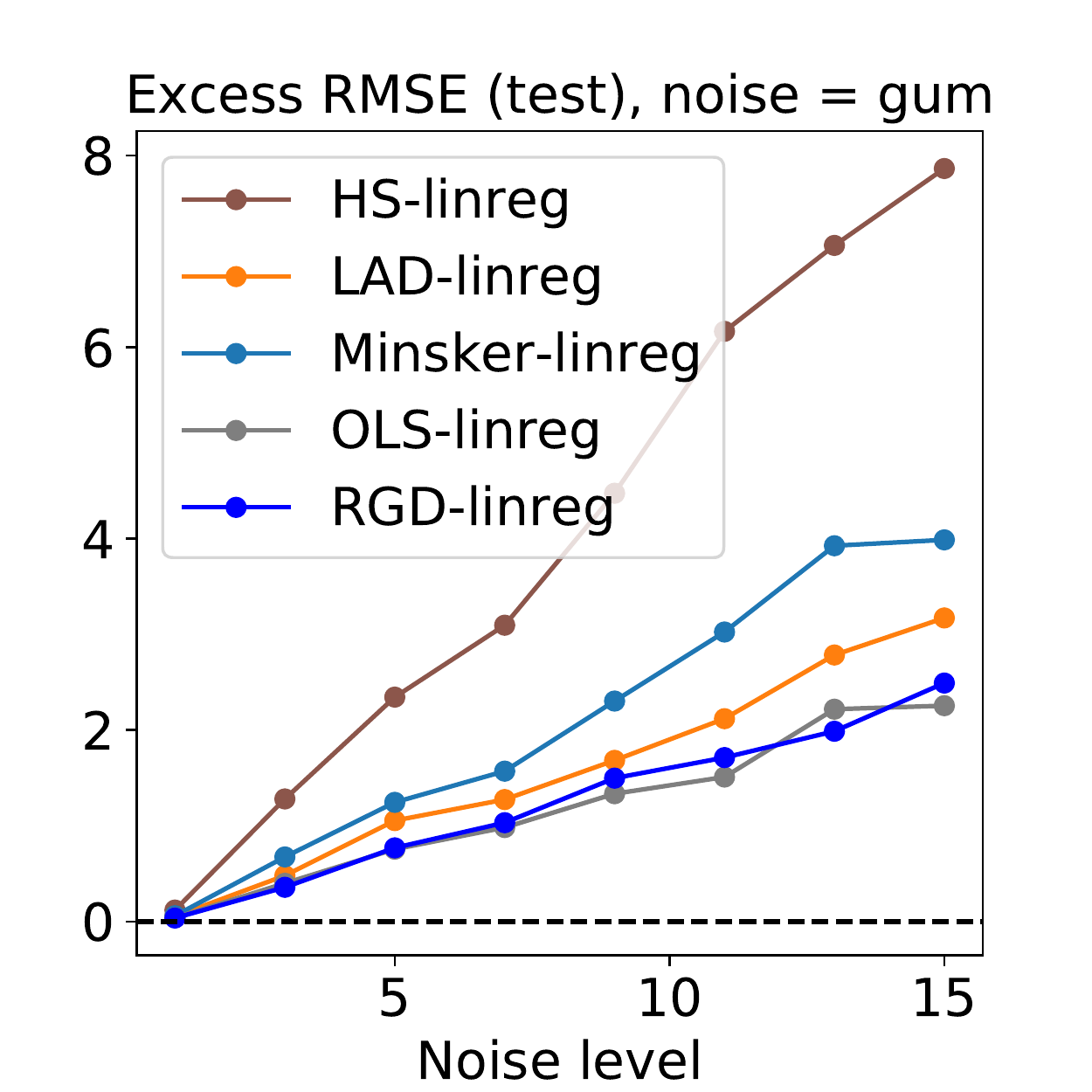}\,\includegraphics[width=0.25\textwidth]{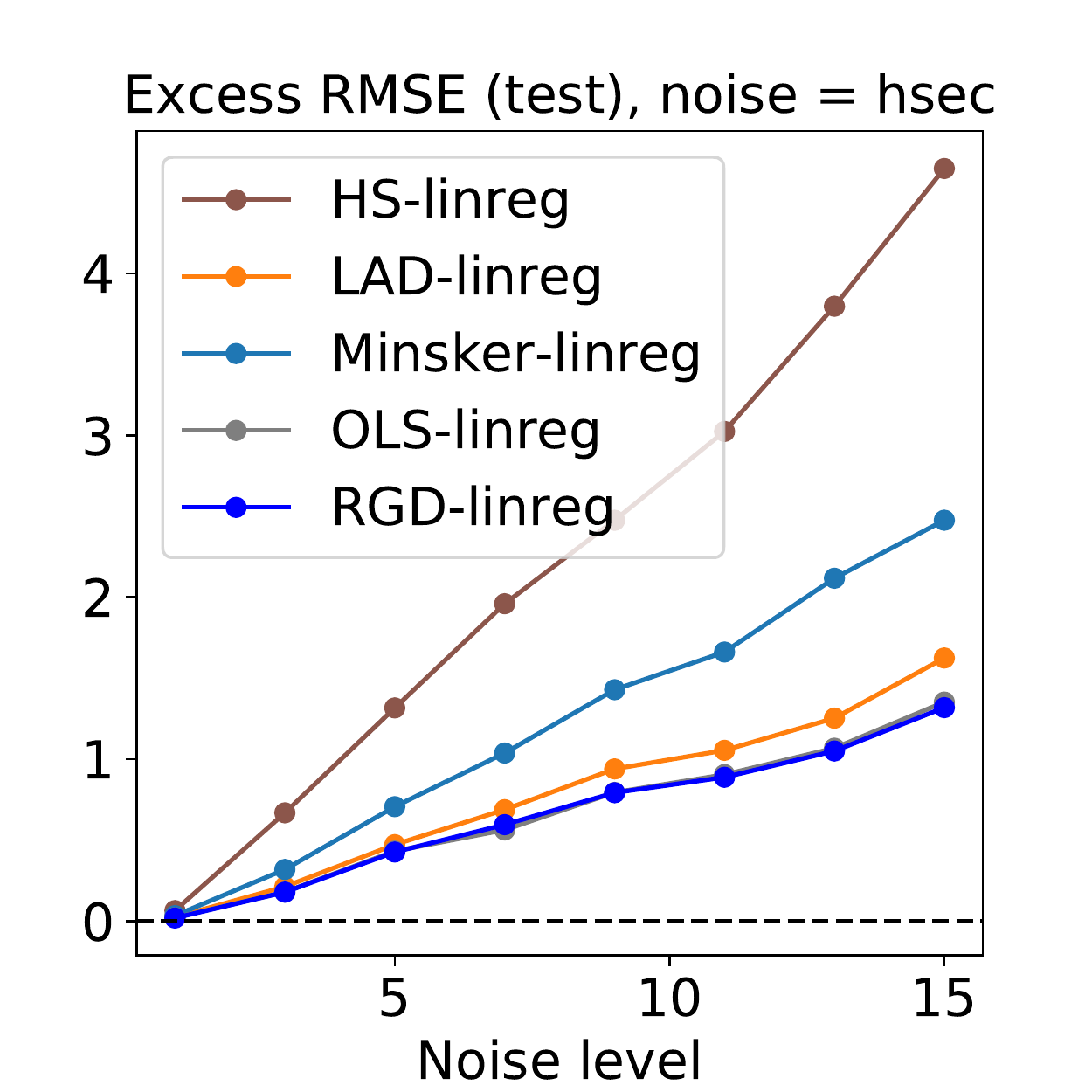}\,\includegraphics[width=0.25\textwidth]{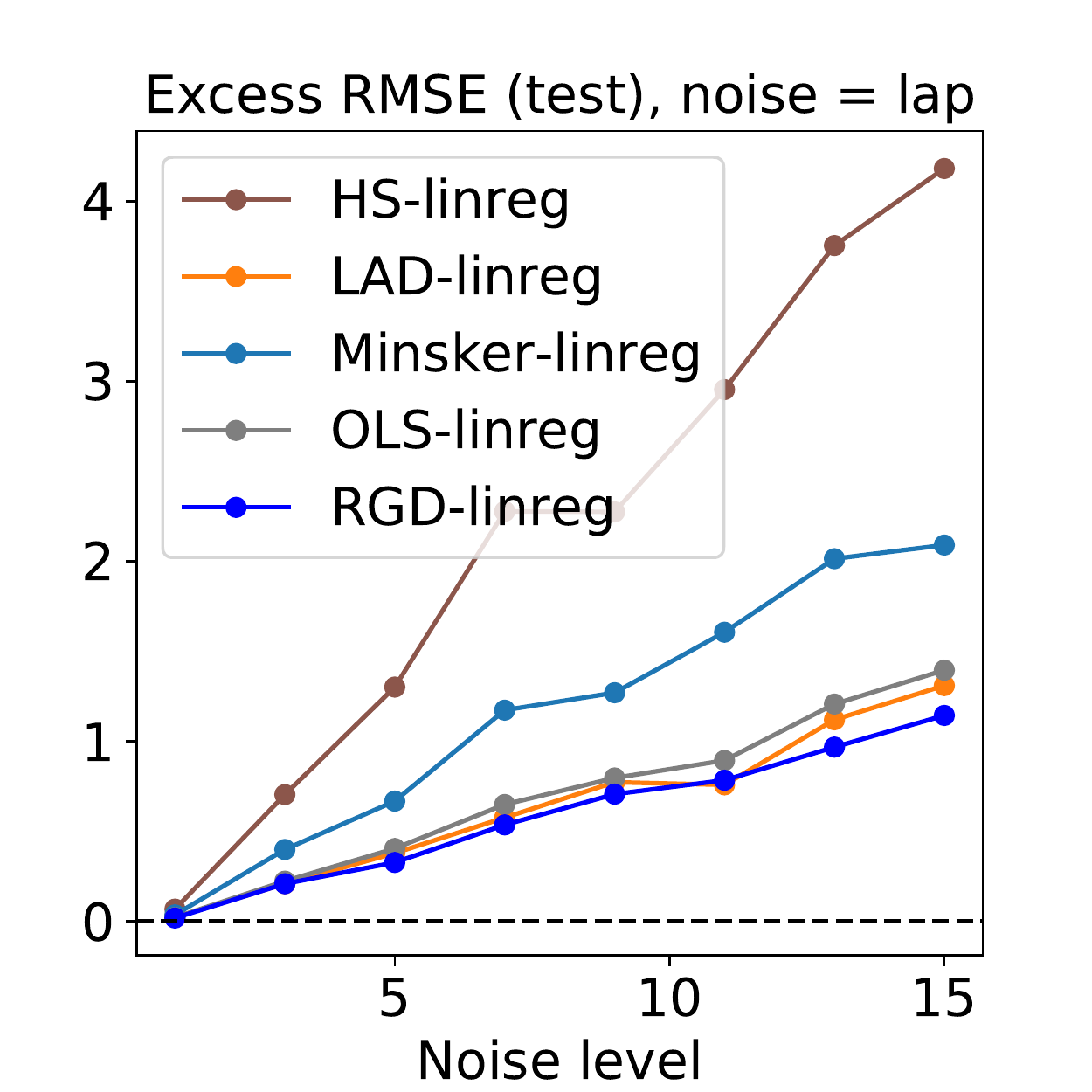}\\
\includegraphics[width=0.25\textwidth]{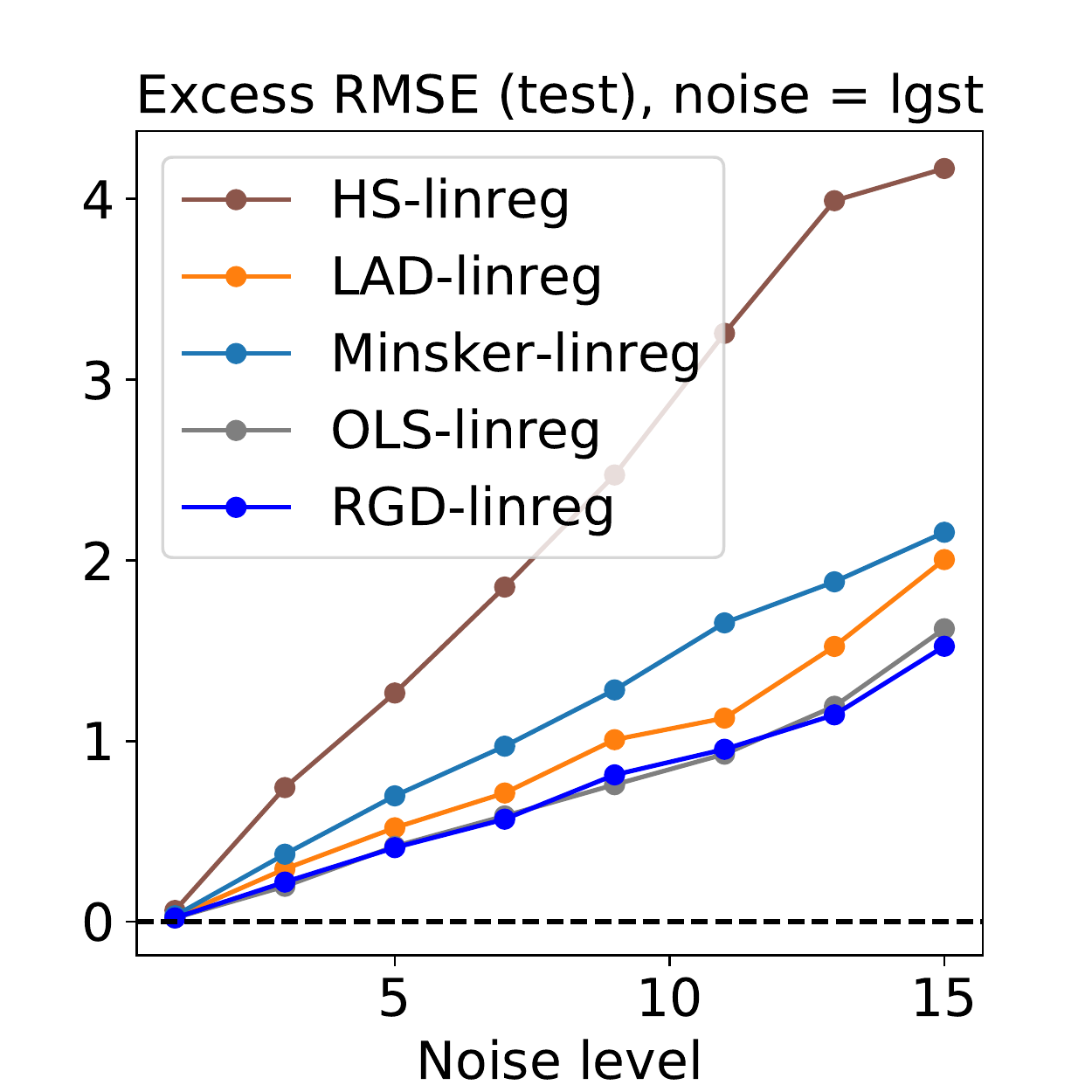}\,\includegraphics[width=0.25\textwidth]{linreg_overLvl_risk_llog}\,\includegraphics[width=0.25\textwidth]{linreg_overLvl_risk_lnorm}\,\includegraphics[width=0.25\textwidth]{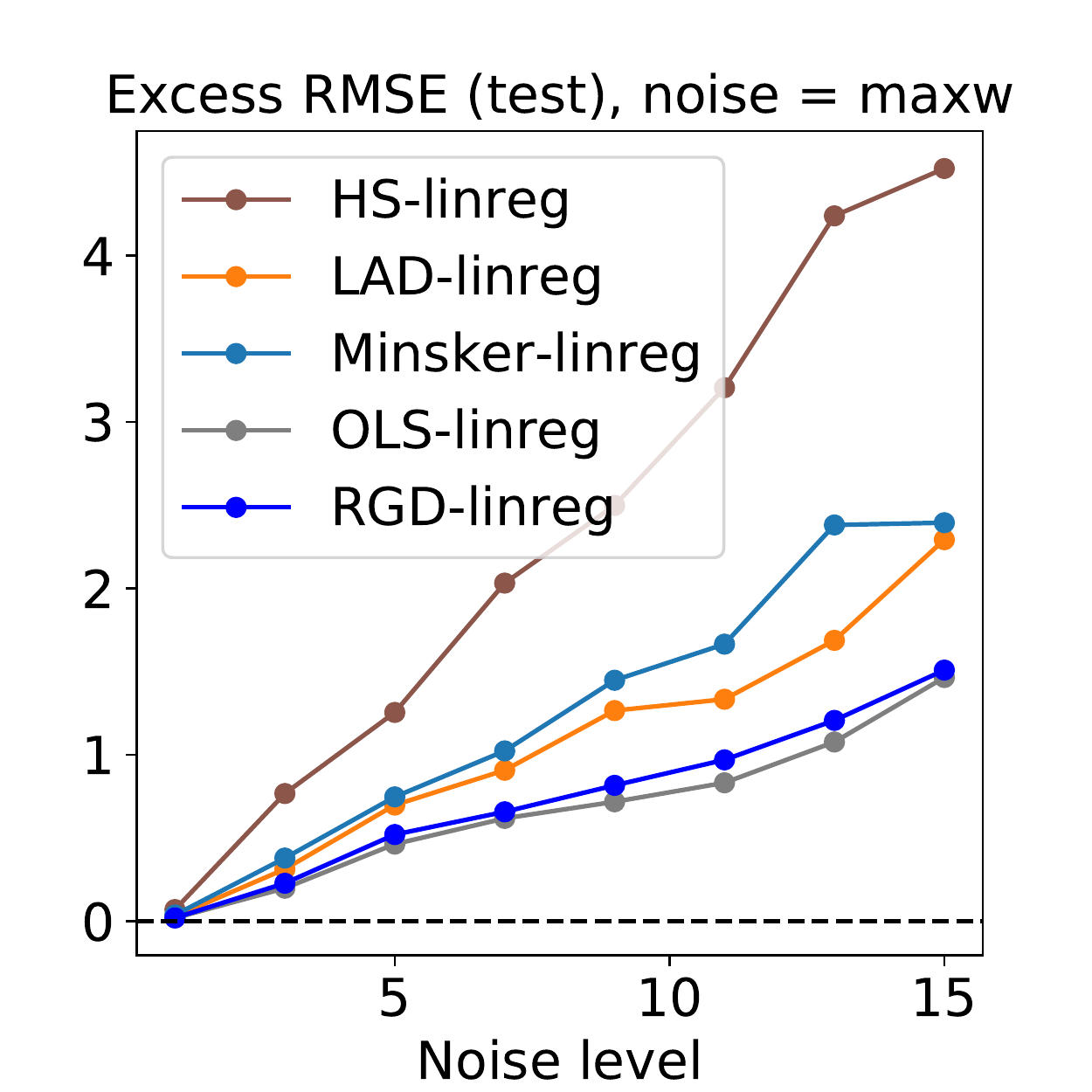}\\
\includegraphics[width=0.25\textwidth]{linreg_overLvl_risk_norm}\,\includegraphics[width=0.25\textwidth]{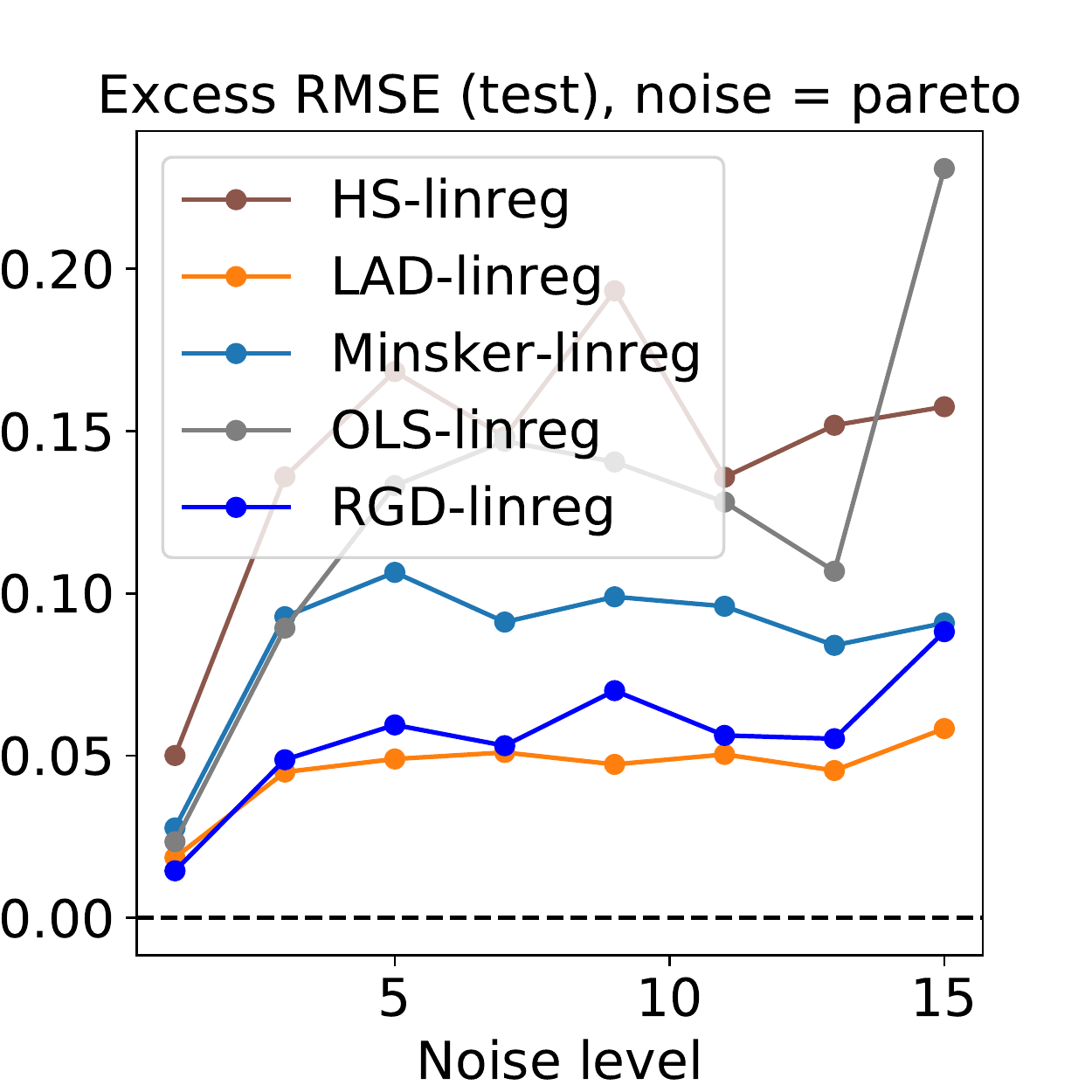}\,\includegraphics[width=0.25\textwidth]{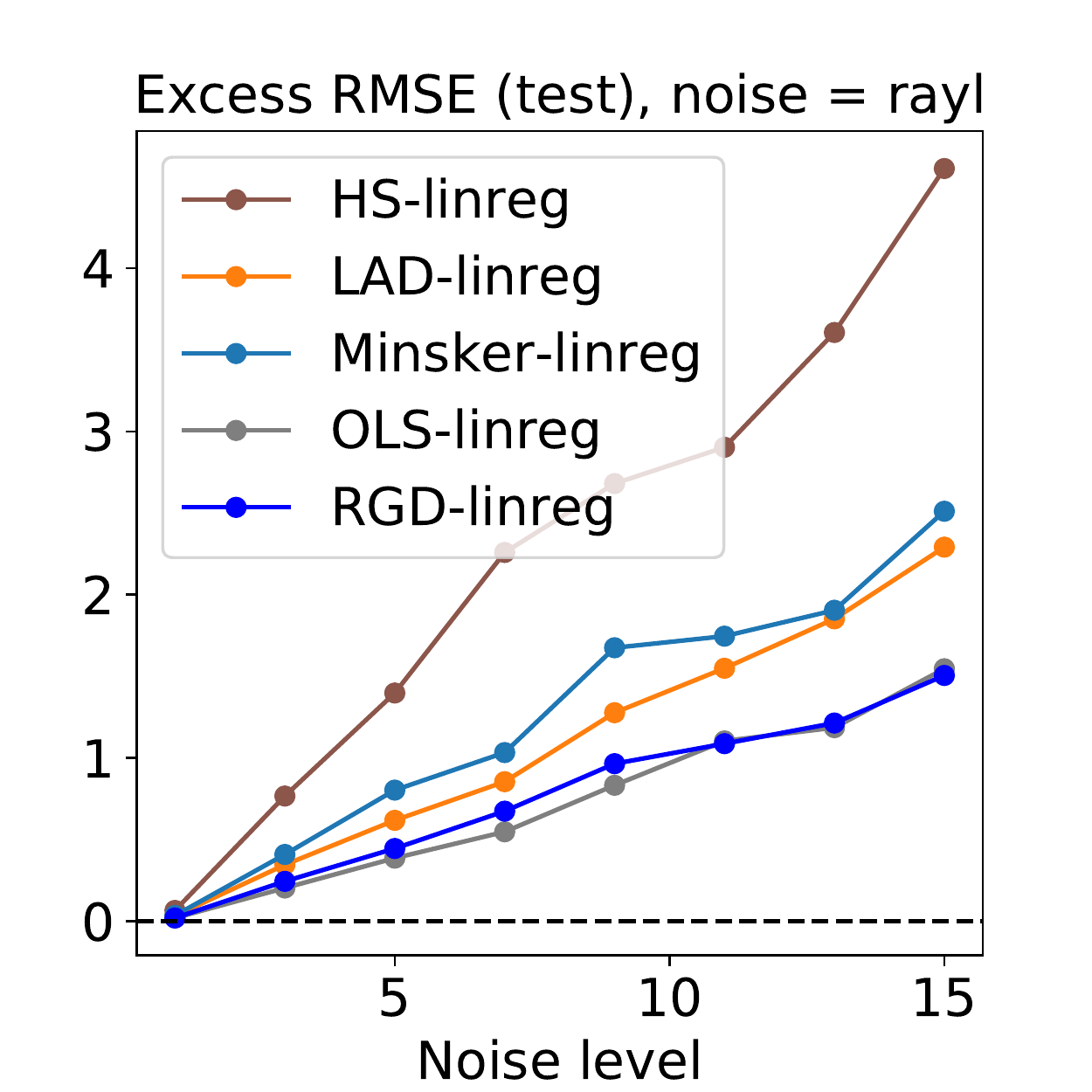}\,\includegraphics[width=0.25\textwidth]{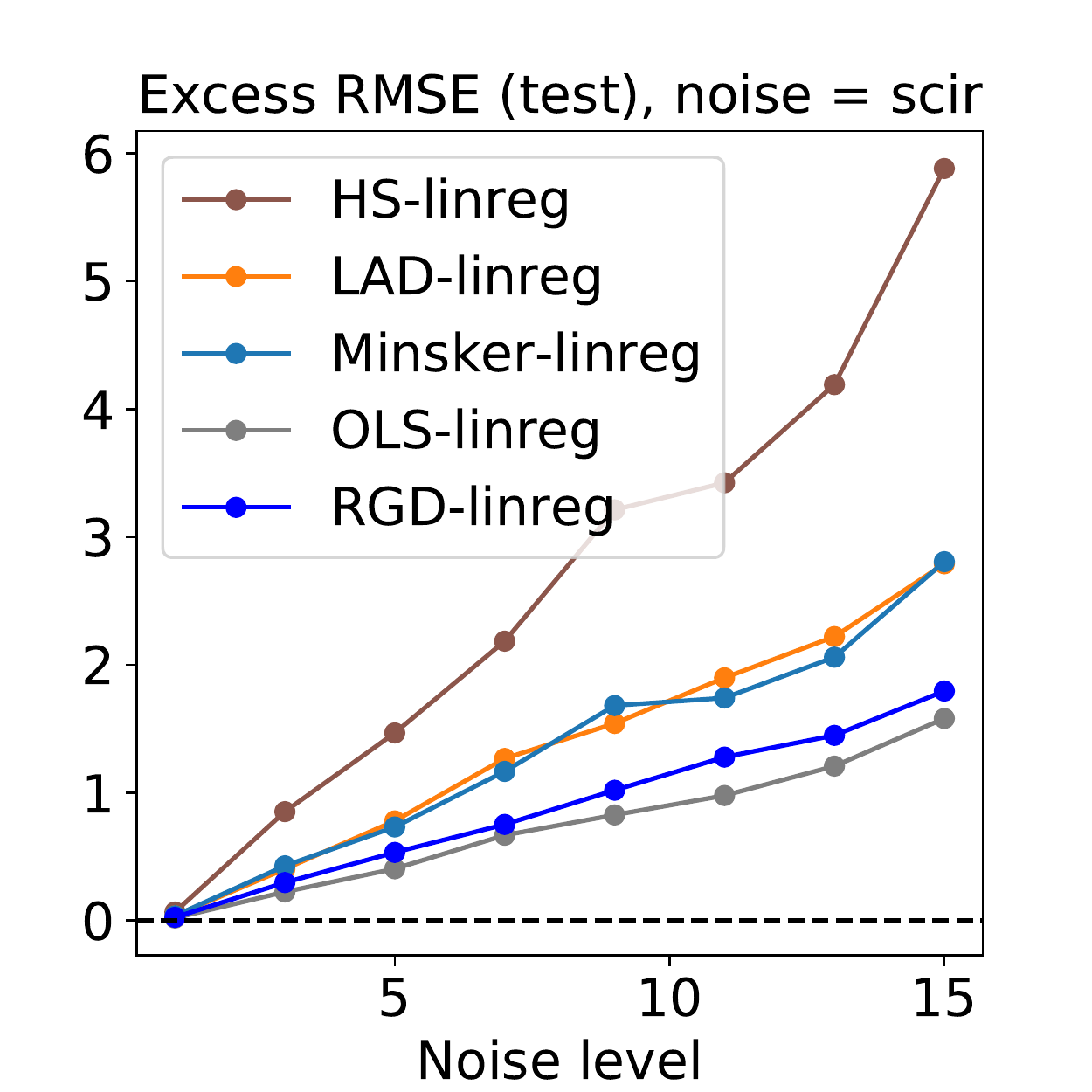}
\caption{Prediction error over noise levels, for $n=30, d=5$. Each plot corresponds to a distinct noise distribution.}
\label{fig:overLvl_all_distros_1}
\end{figure}

\clearpage

\begin{figure}[t]
\centering
\includegraphics[width=0.25\textwidth]{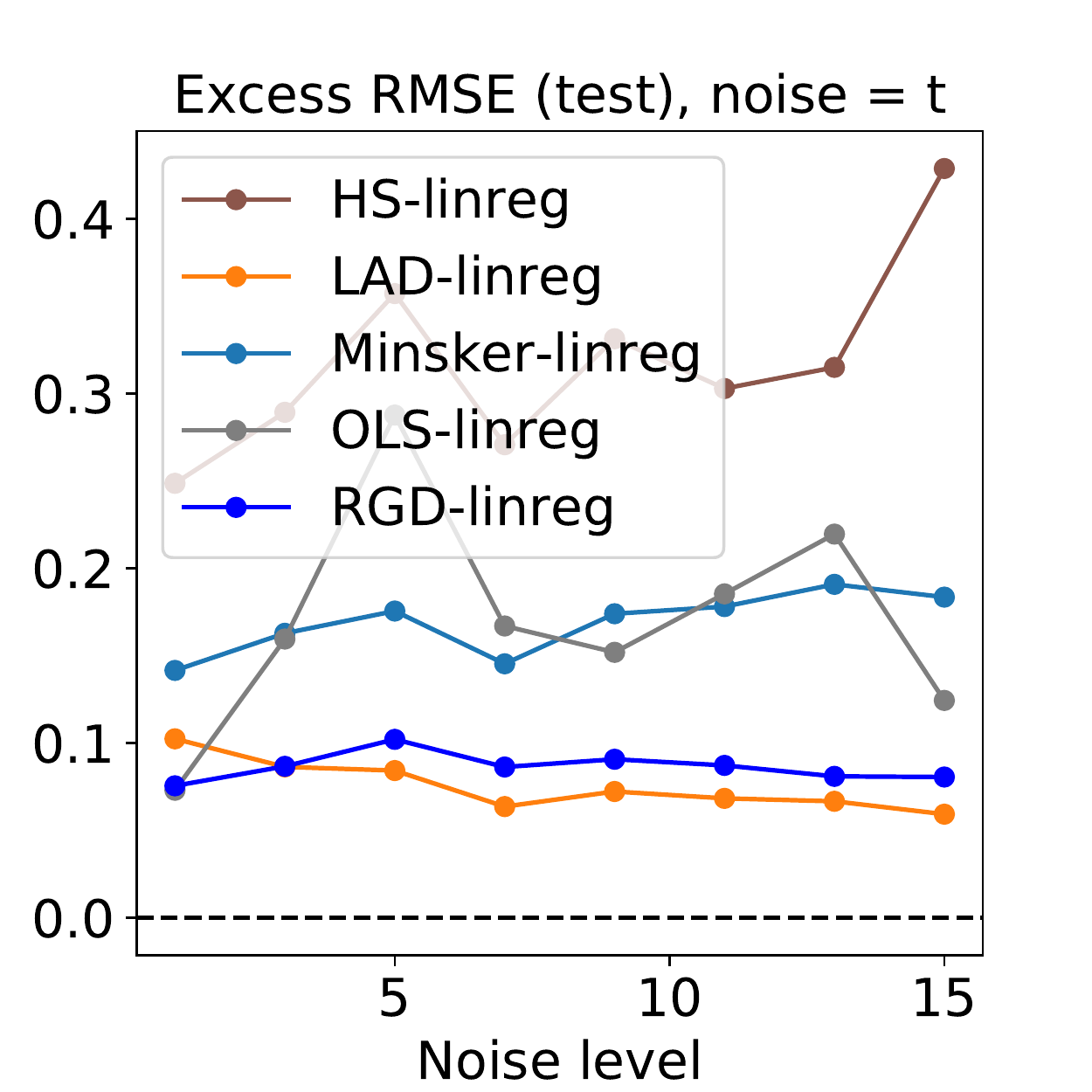}\,\includegraphics[width=0.25\textwidth]{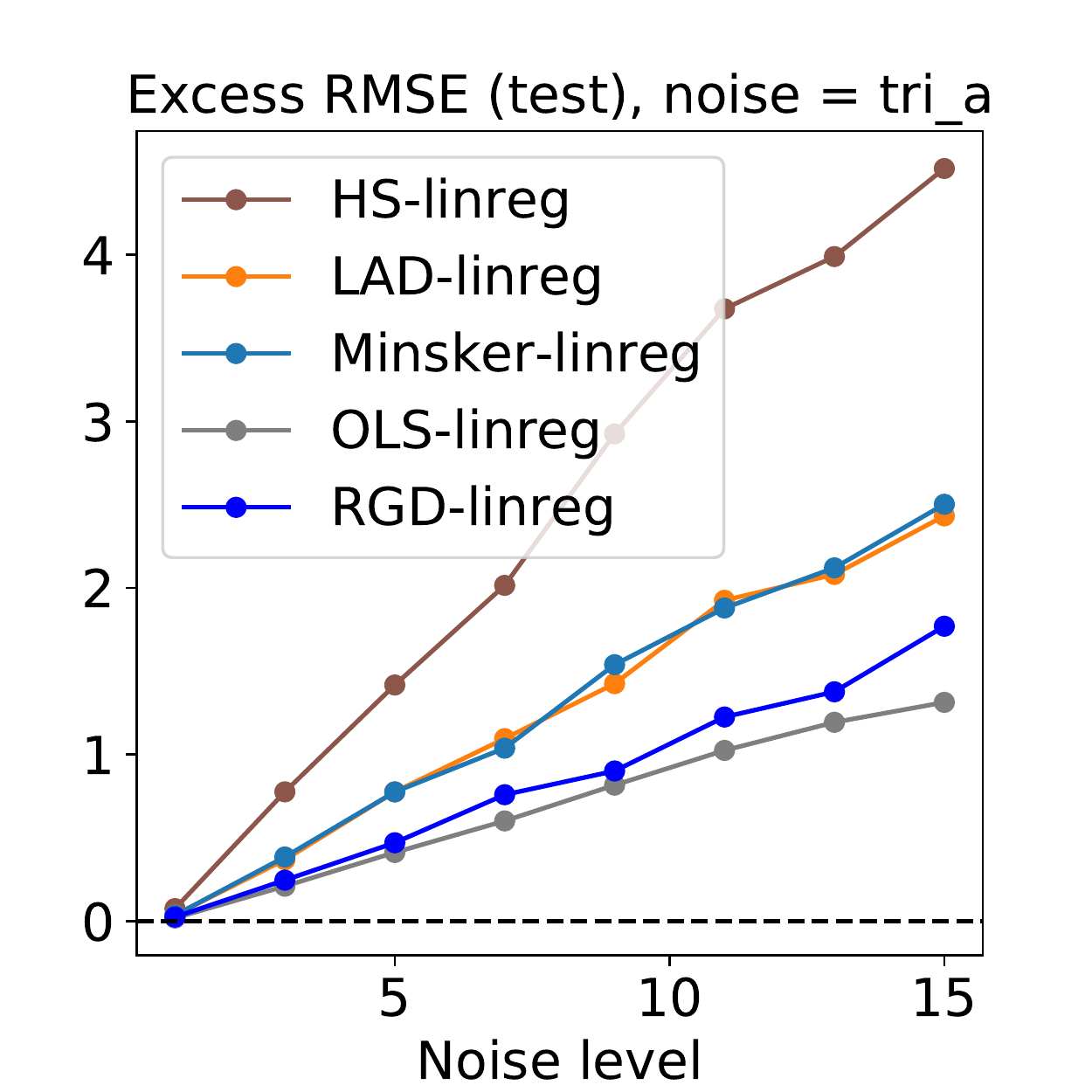}\,\includegraphics[width=0.25\textwidth]{linreg_overLvl_risk_tri_s}\,\includegraphics[width=0.25\textwidth]{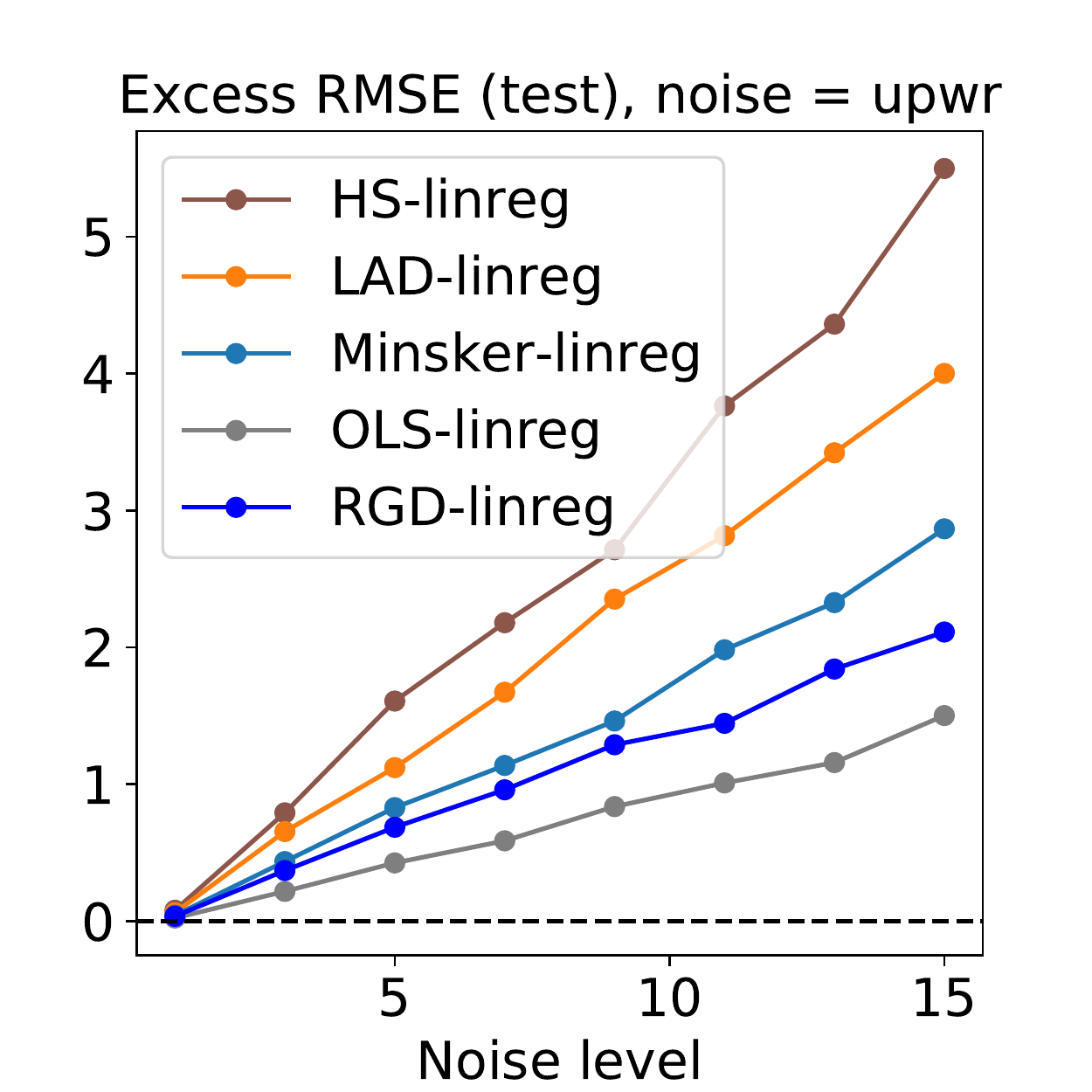}\\
\includegraphics[width=0.25\textwidth]{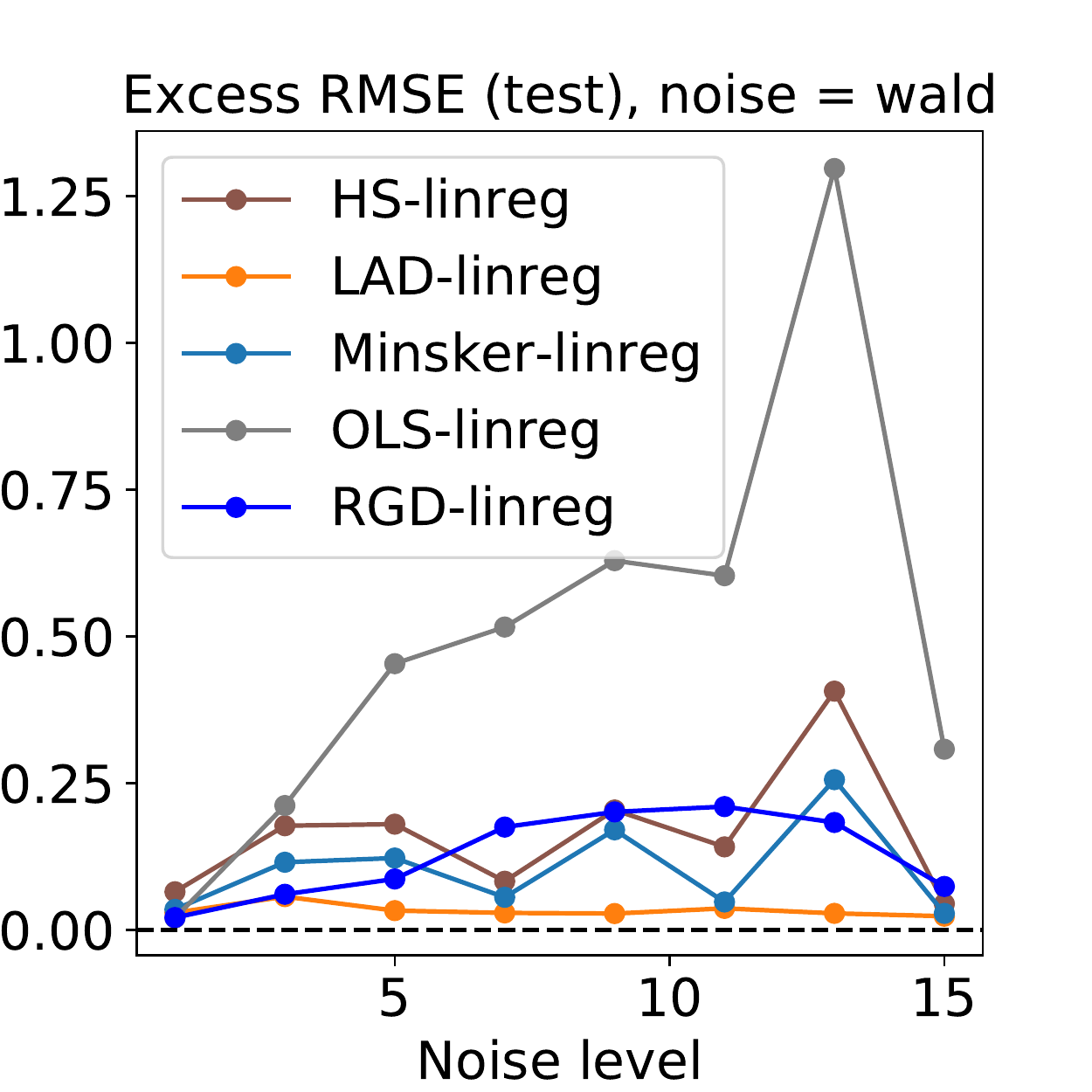}\,\includegraphics[width=0.25\textwidth]{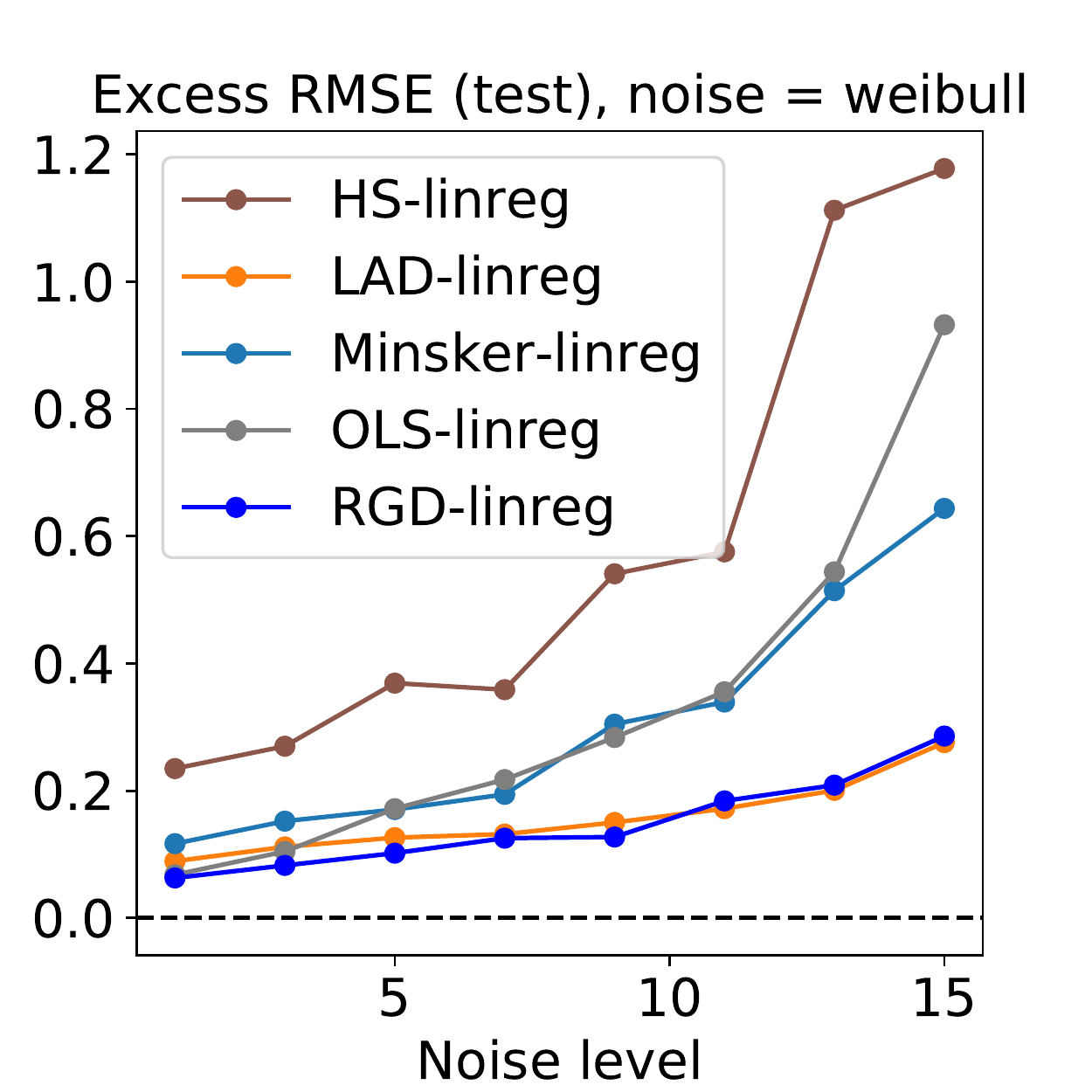}
\caption{Prediction error over noise levels, for $n=30, d=5$. Each plot corresponds to a distinct noise distribution.}
\label{fig:overLvl_all_distros_2}
\end{figure}

\clearpage

\begin{figure}[t]
\centering
\includegraphics[width=0.25\textwidth]{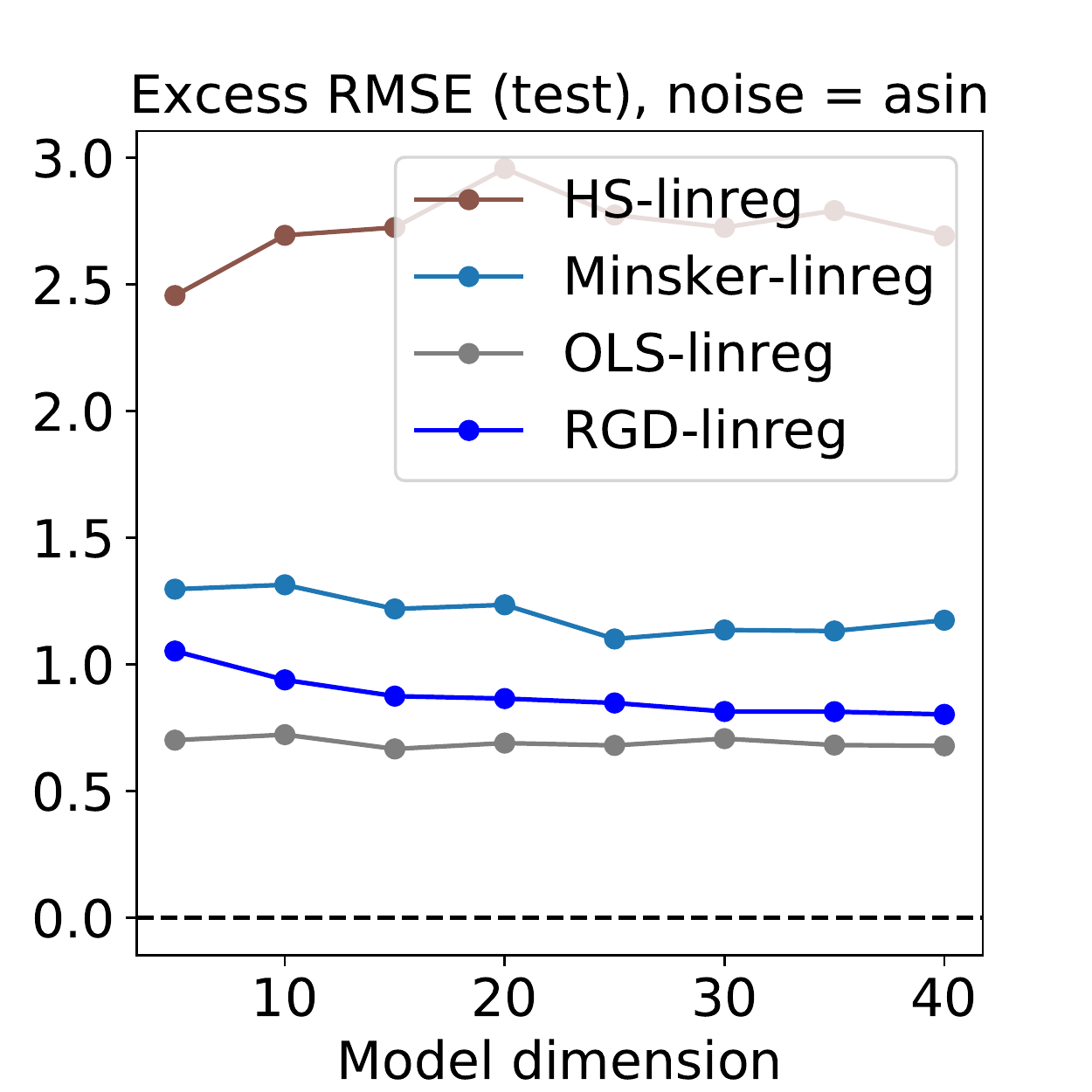}\,\includegraphics[width=0.25\textwidth]{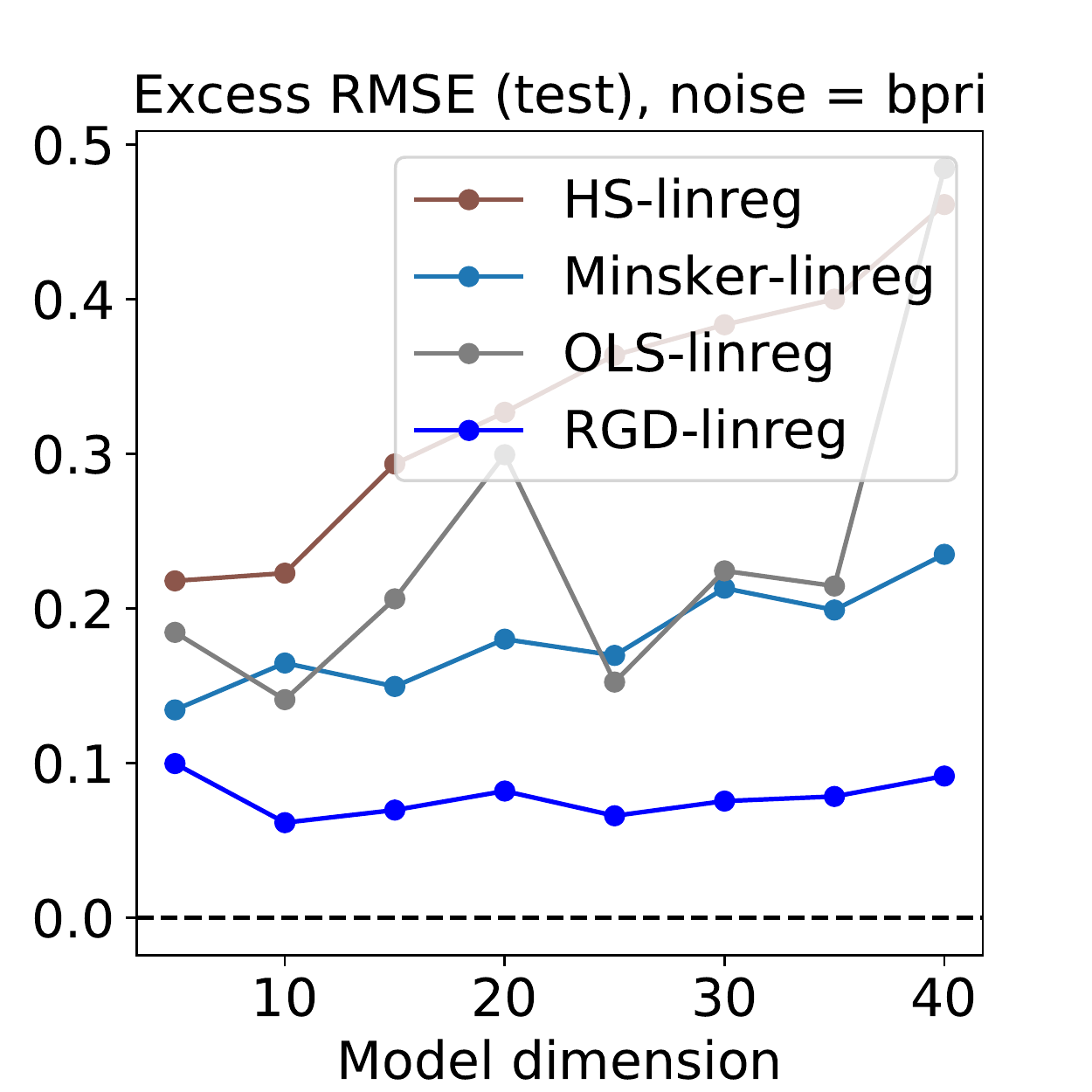}\,\includegraphics[width=0.25\textwidth]{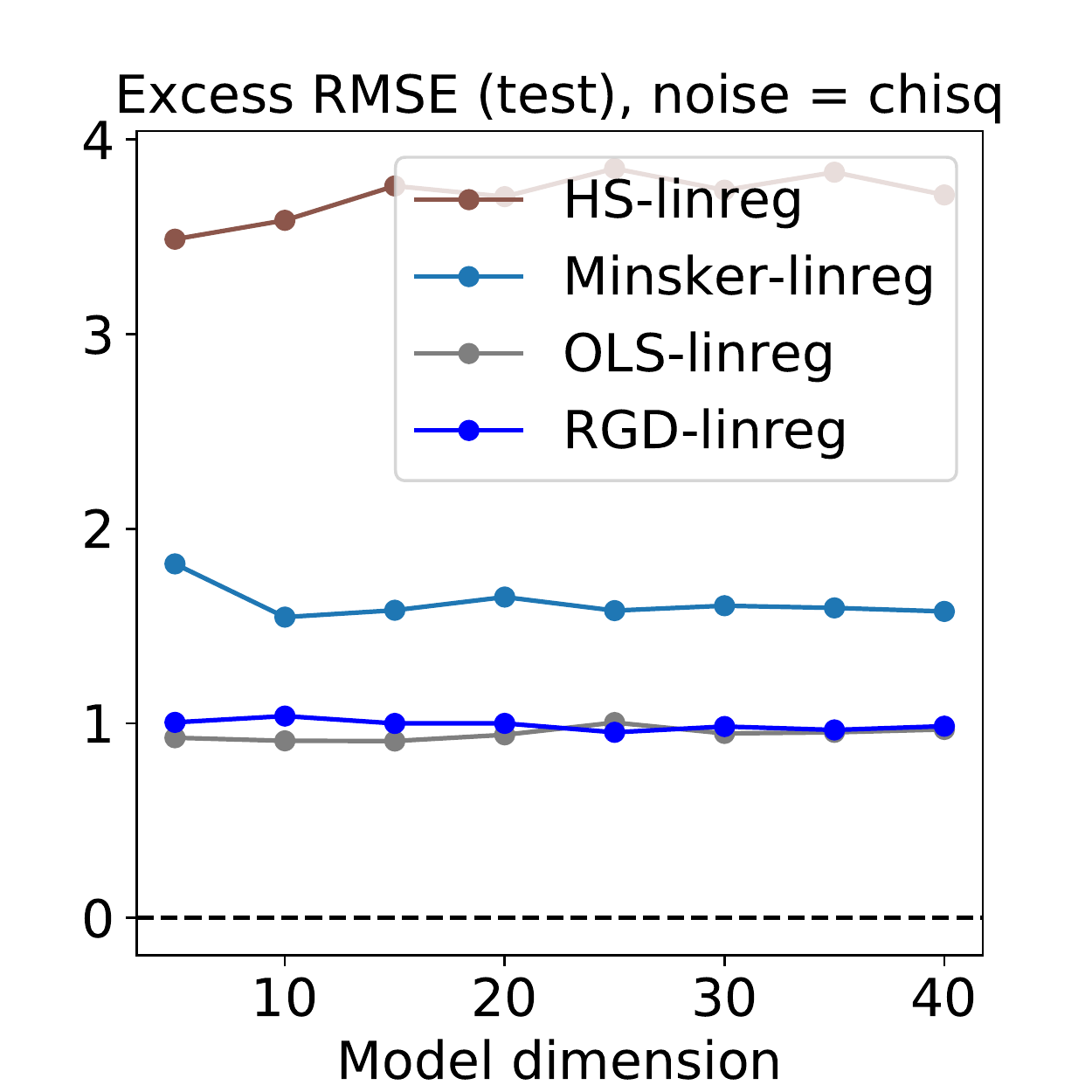}\,\includegraphics[width=0.25\textwidth]{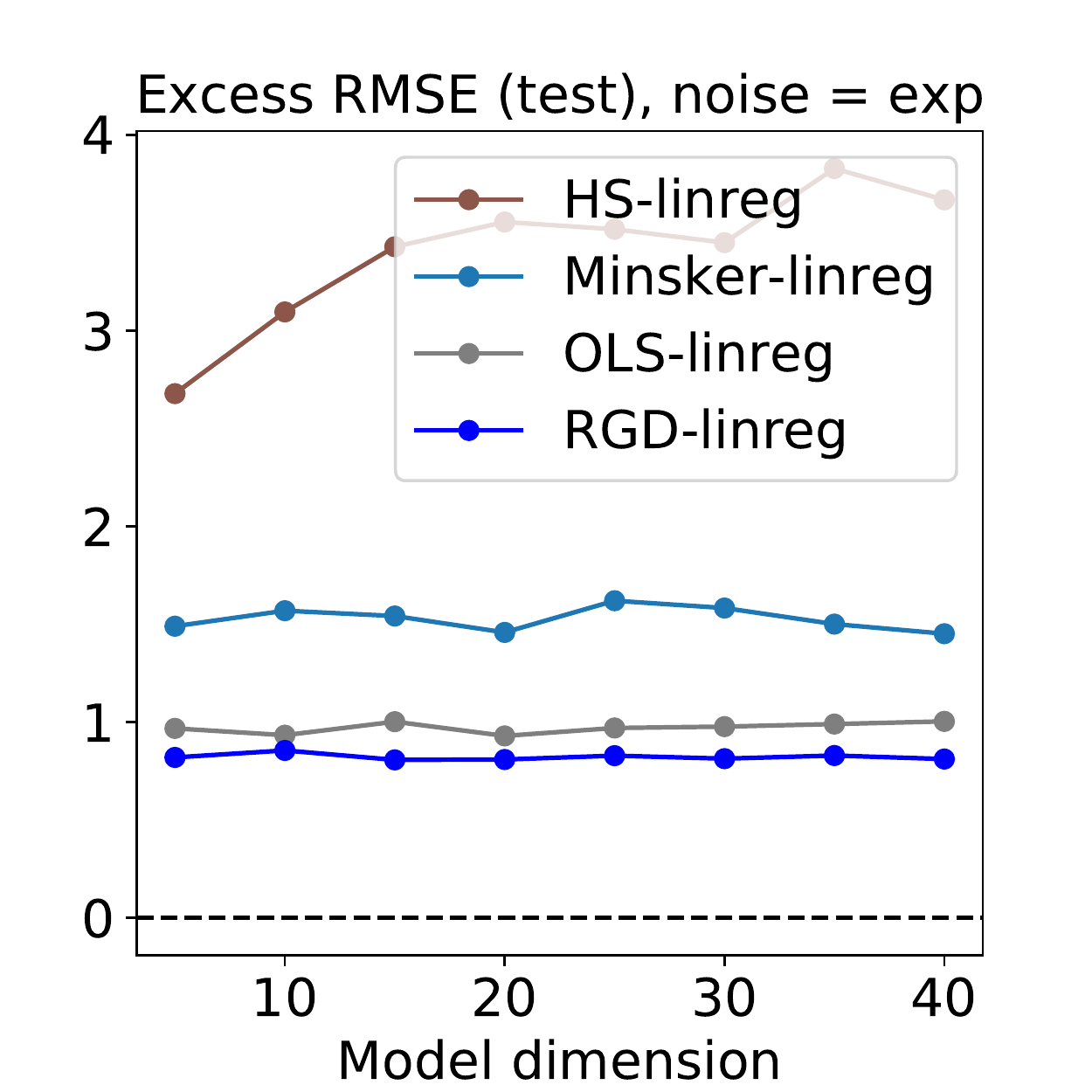}\\
\includegraphics[width=0.25\textwidth]{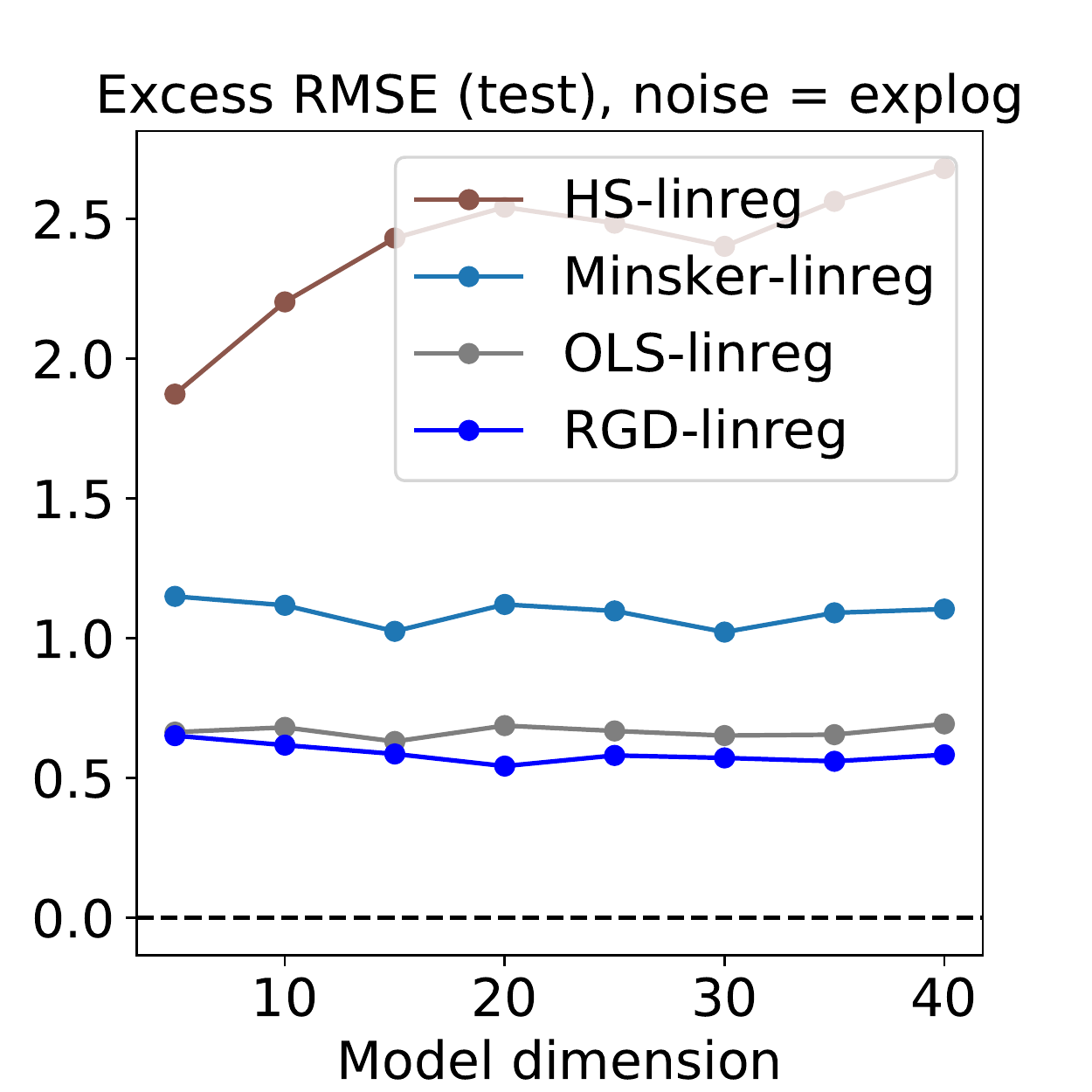}\,\includegraphics[width=0.25\textwidth]{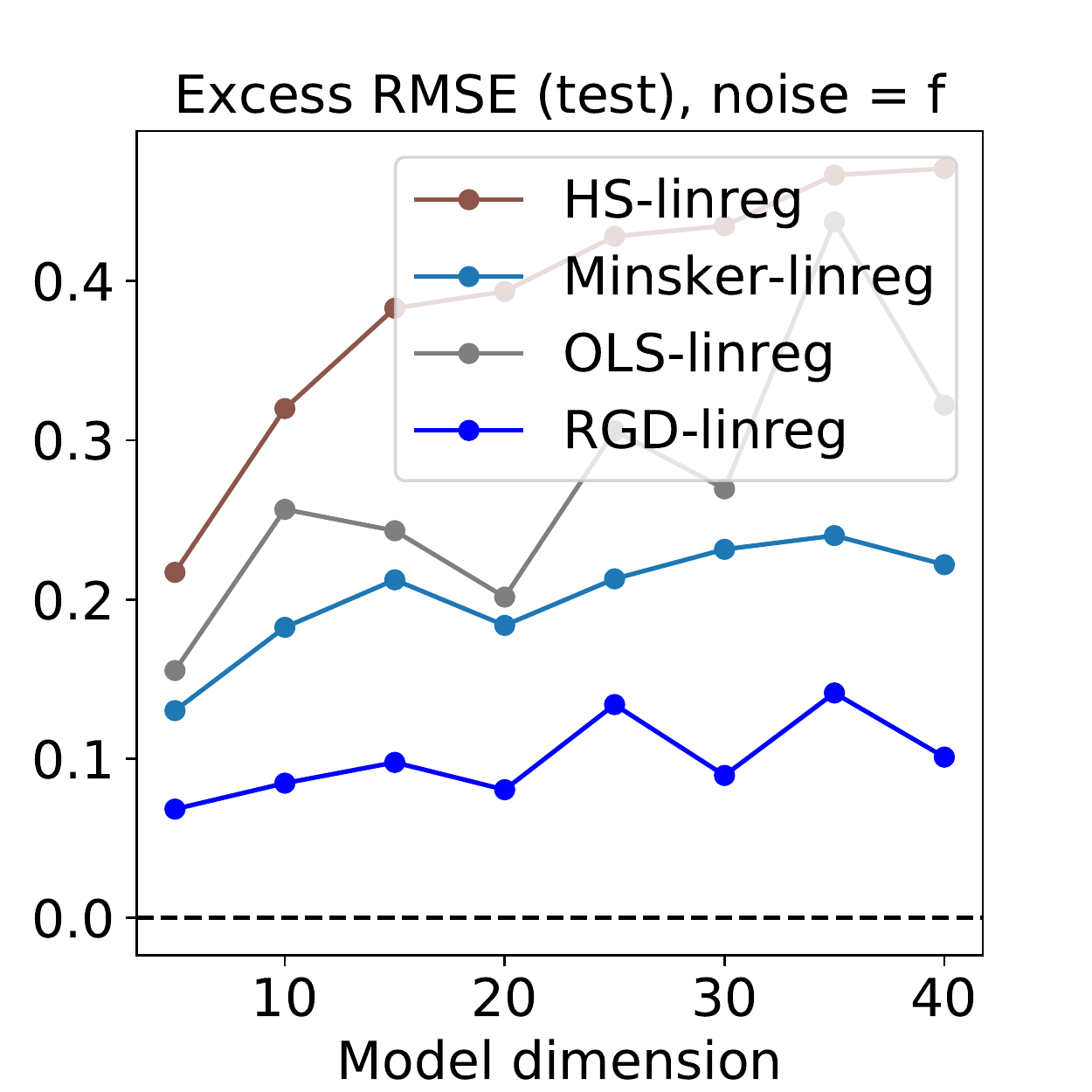}\,\includegraphics[width=0.25\textwidth]{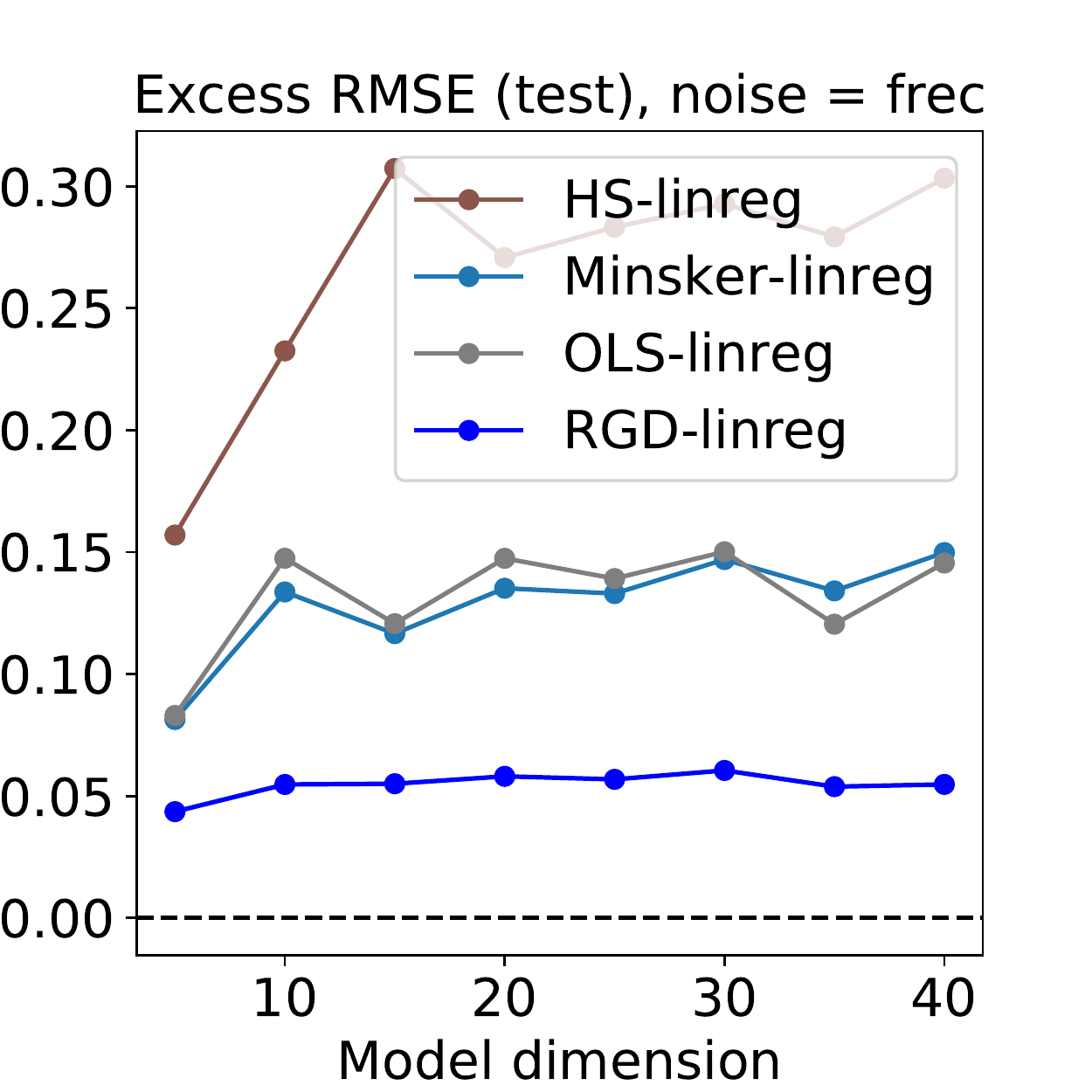}\,\includegraphics[width=0.25\textwidth]{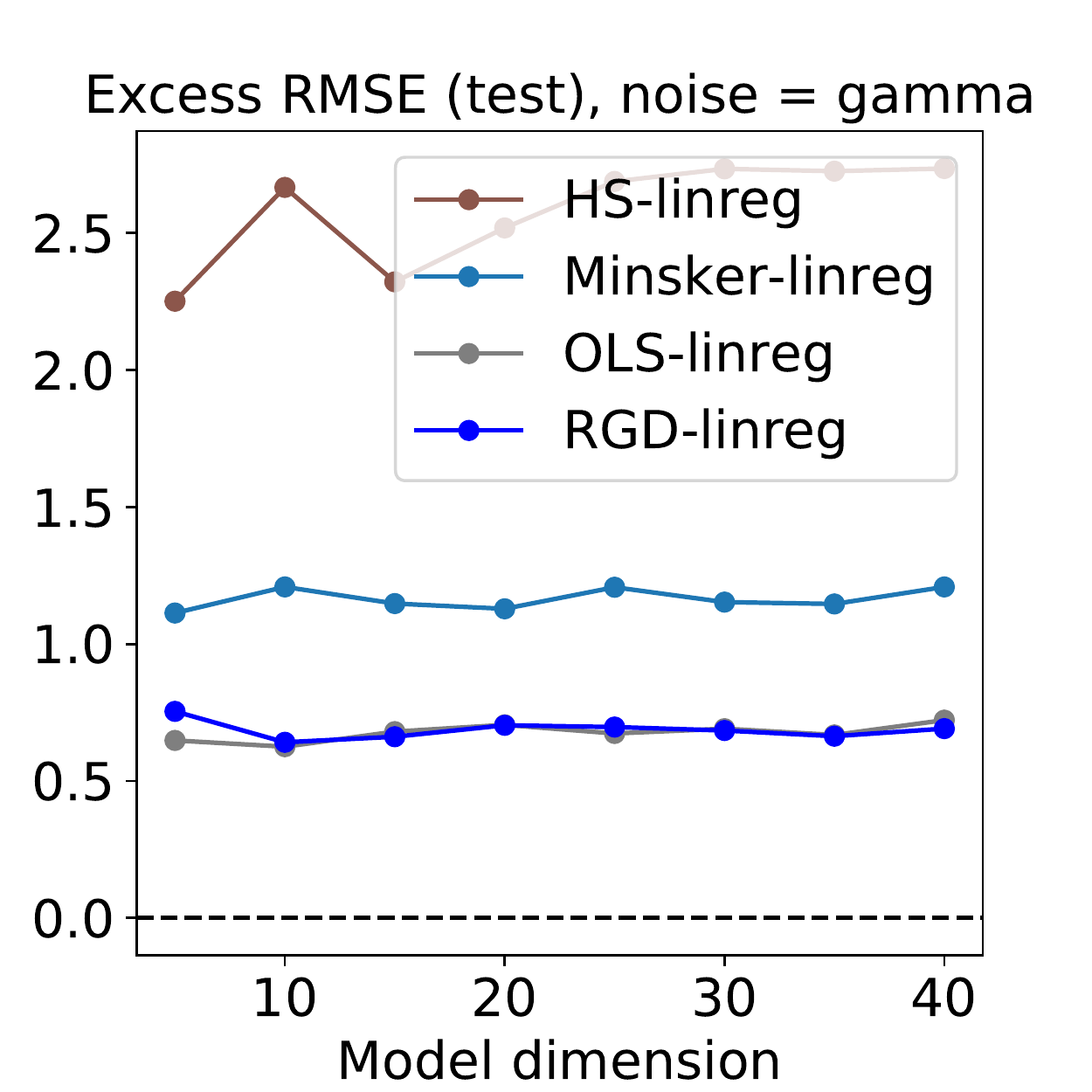}\\
\includegraphics[width=0.25\textwidth]{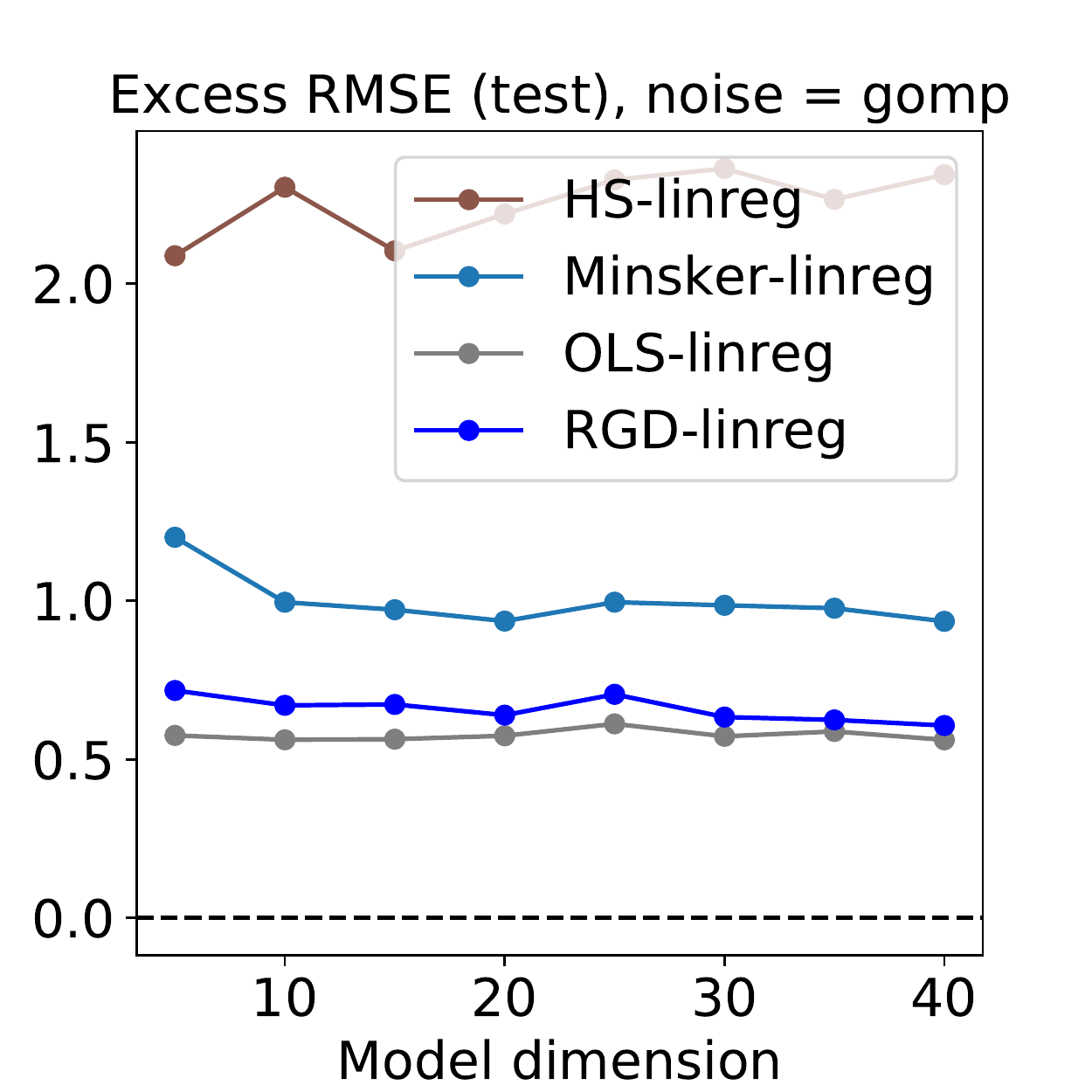}\,\includegraphics[width=0.25\textwidth]{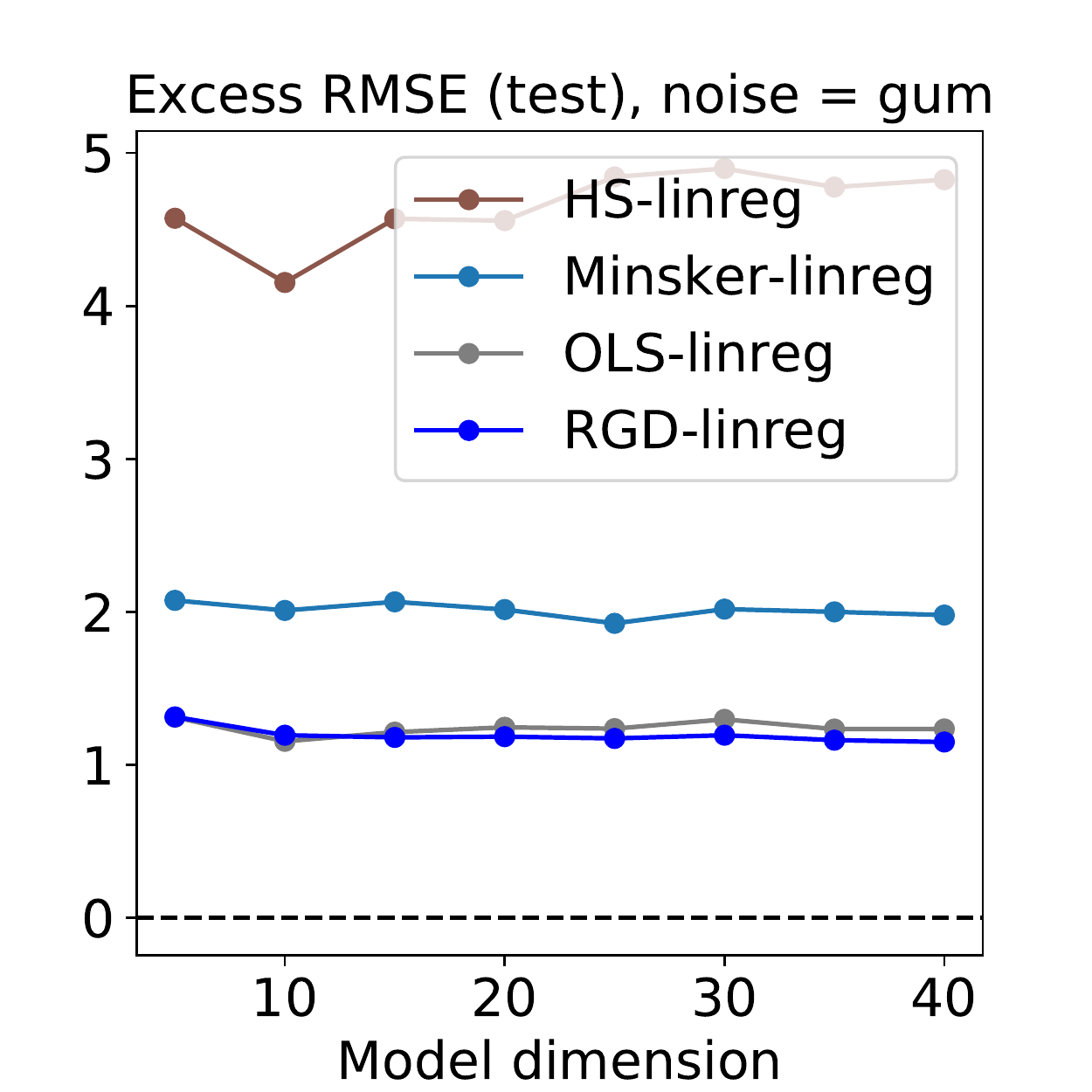}\,\includegraphics[width=0.25\textwidth]{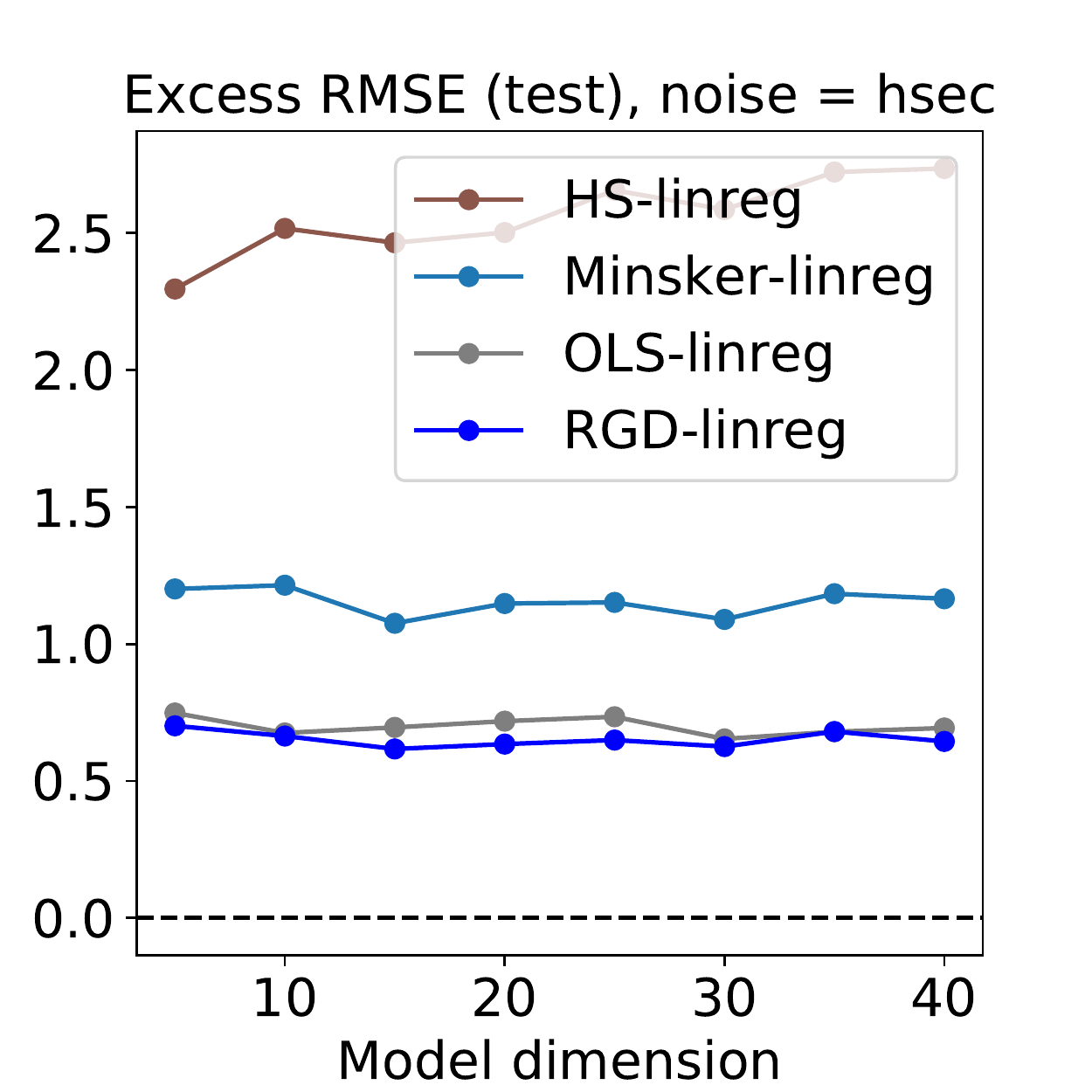}\,\includegraphics[width=0.25\textwidth]{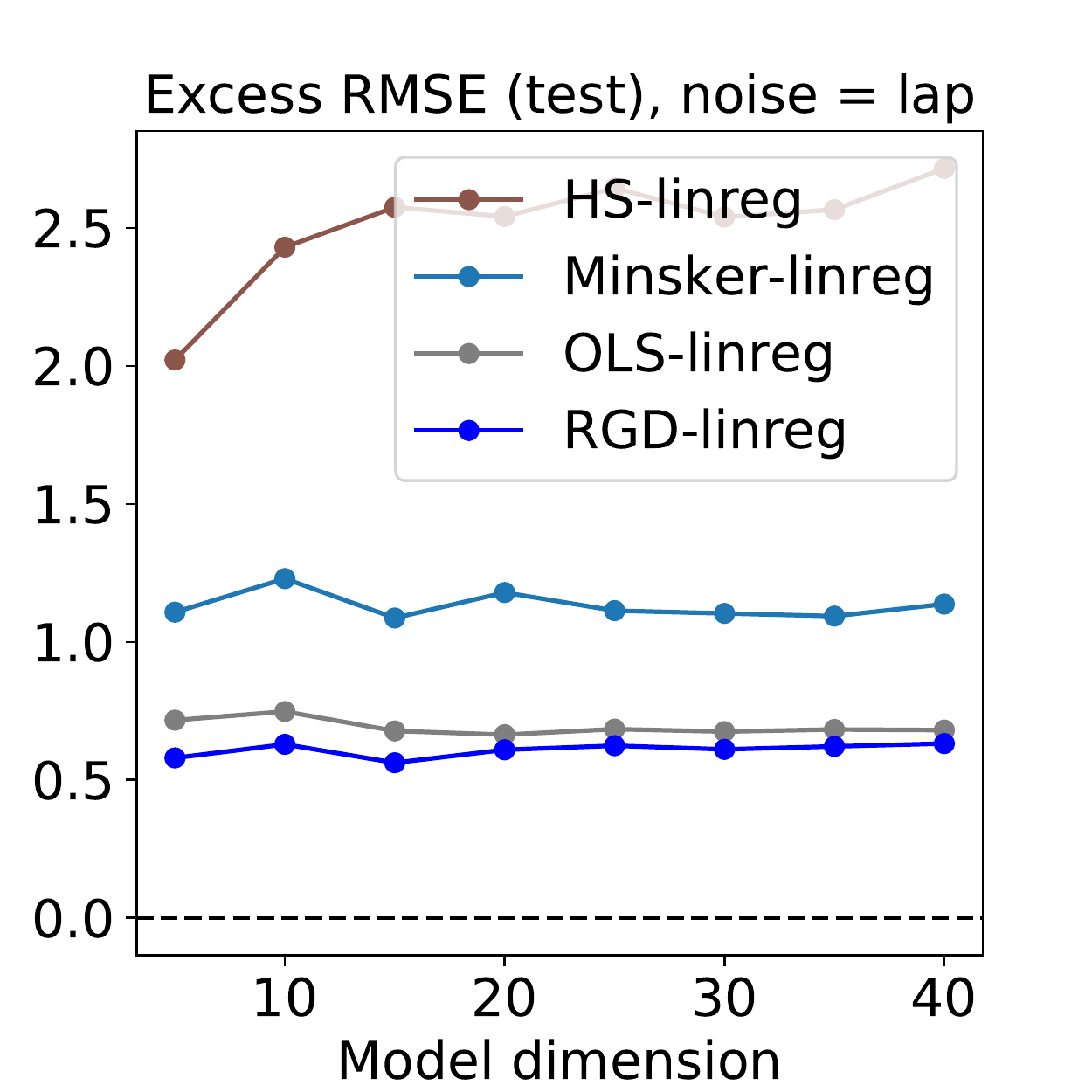}\\
\includegraphics[width=0.25\textwidth]{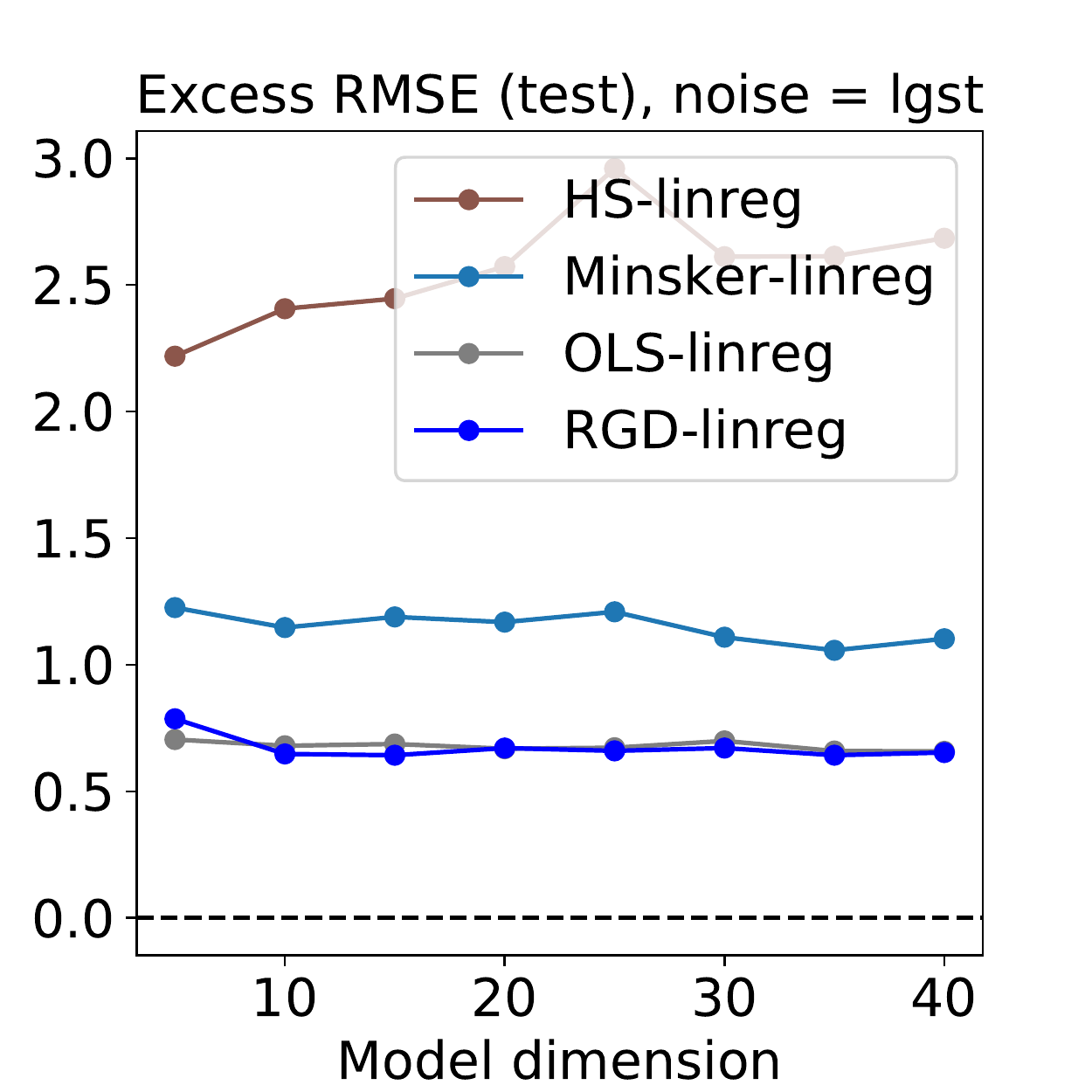}\,\includegraphics[width=0.25\textwidth]{linreg_overDim_risk_llog}\,\includegraphics[width=0.25\textwidth]{linreg_overDim_risk_lnorm}\,\includegraphics[width=0.25\textwidth]{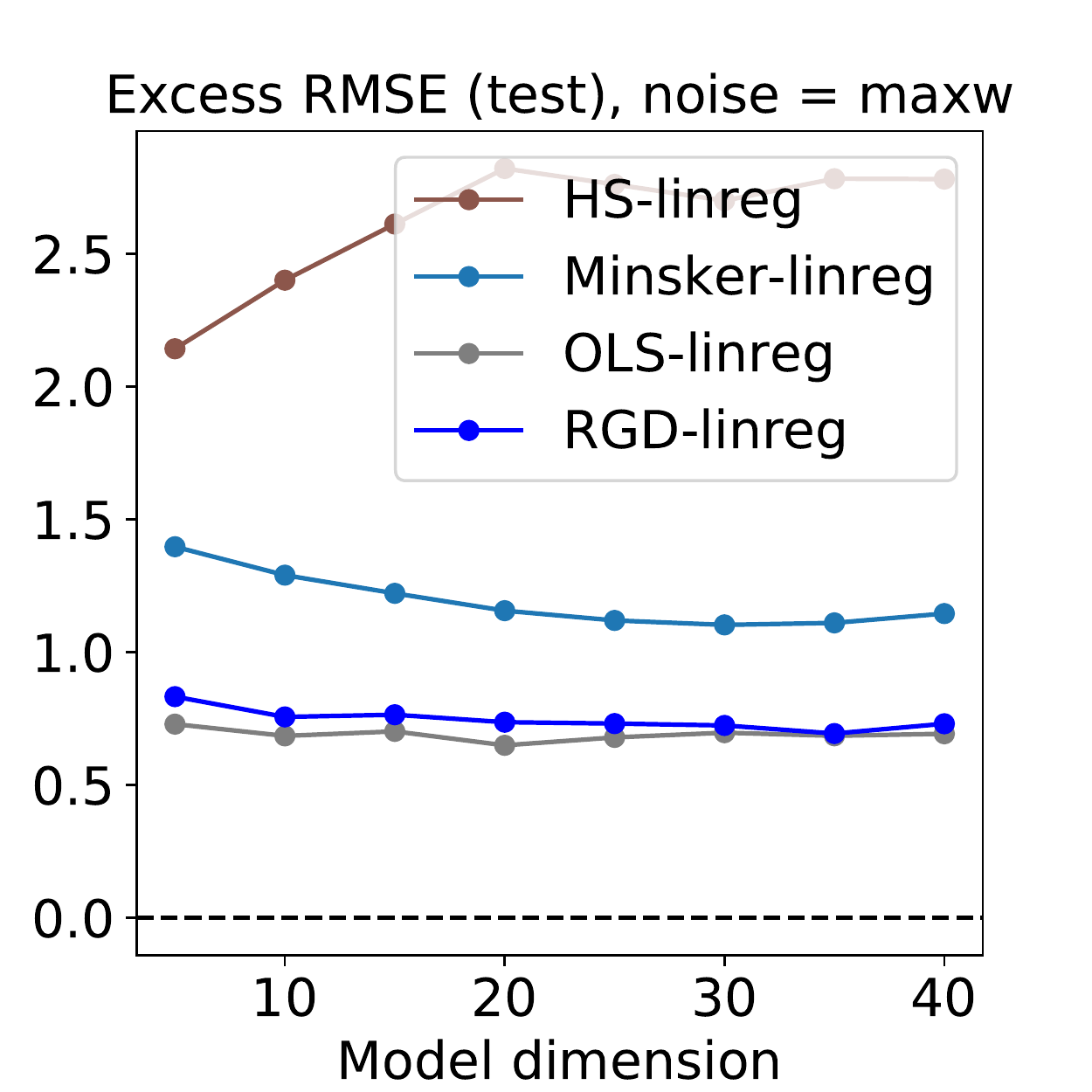}\\
\includegraphics[width=0.25\textwidth]{linreg_overDim_risk_norm}\,\includegraphics[width=0.25\textwidth]{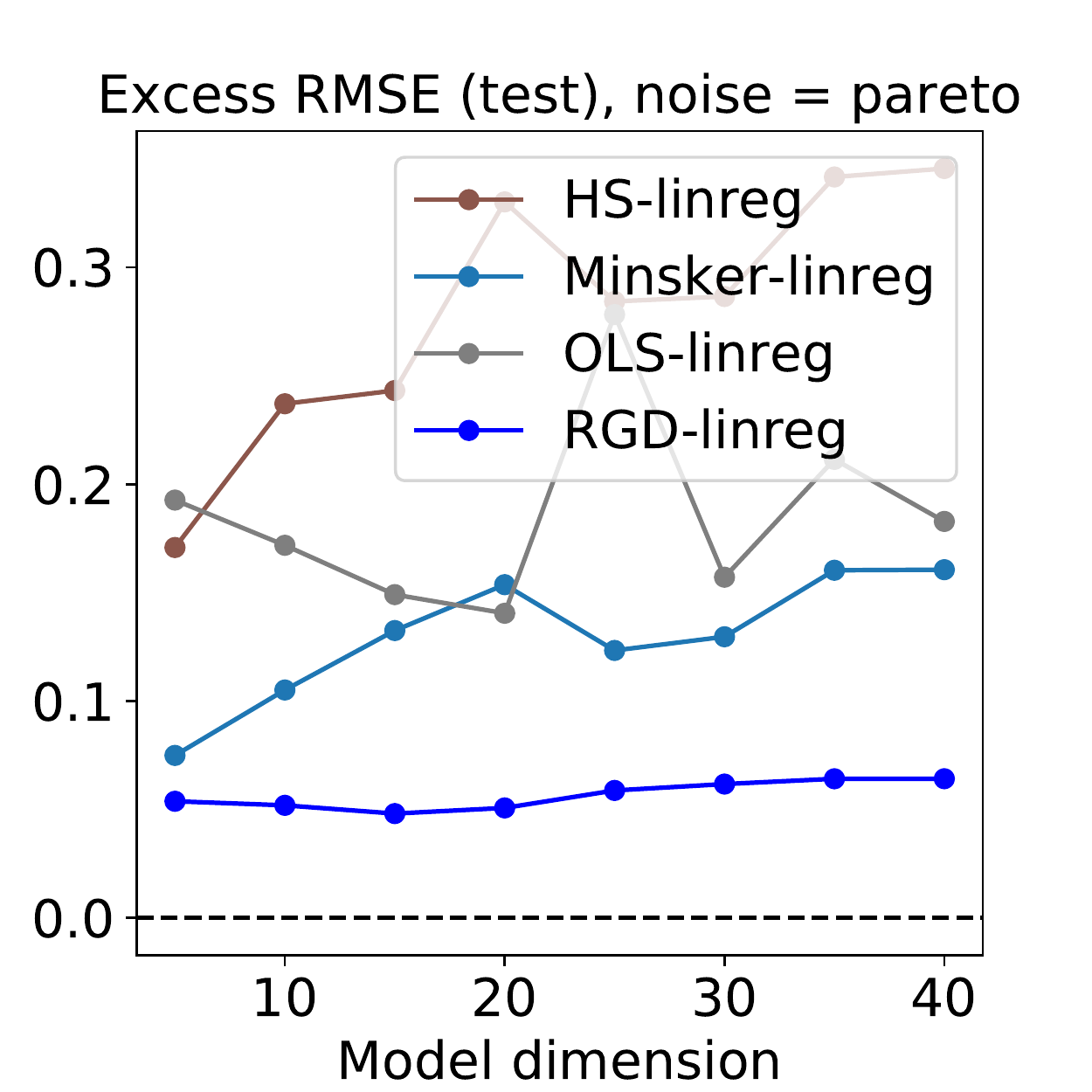}\,\includegraphics[width=0.25\textwidth]{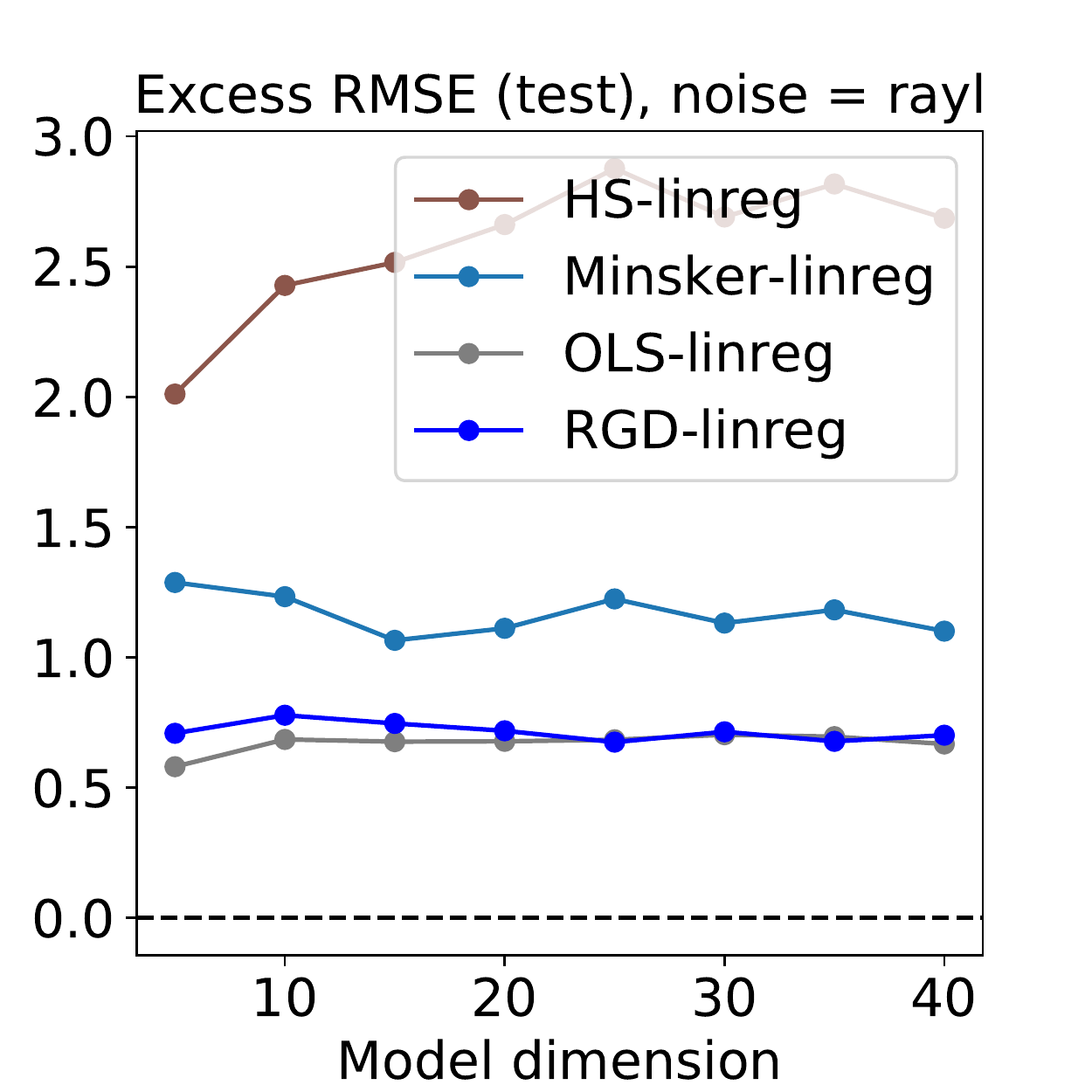}\,\includegraphics[width=0.25\textwidth]{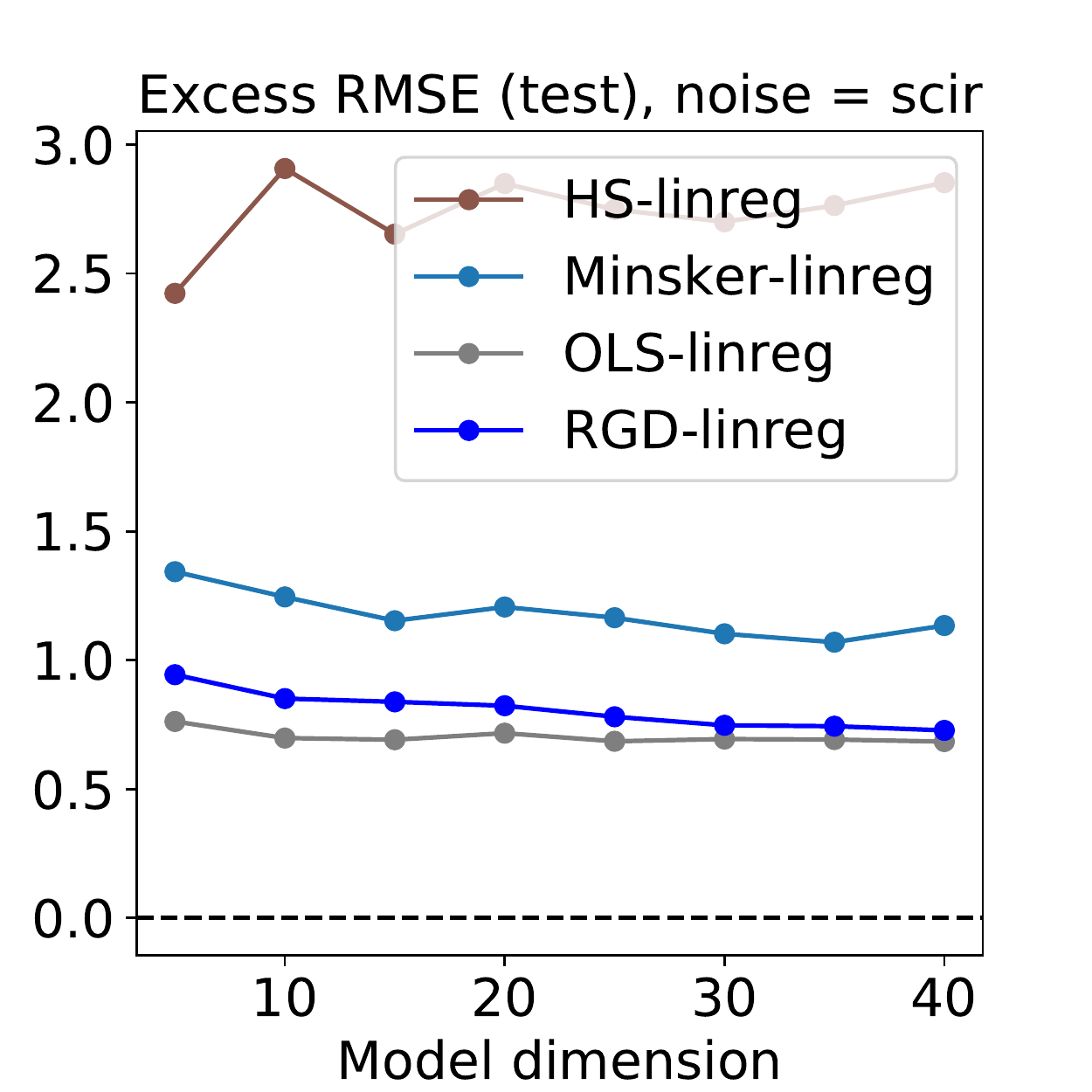}
\caption{Prediction error over dimensions $5 \leq d \leq 40$, with ratio $n/d = 6$ fixed, and noise level = $8$. Each plot corresponds to a distinct noise distribution.}
\label{fig:overDim_all_distros_1}
\end{figure}

\clearpage

\begin{figure}[t]
\centering
\includegraphics[width=0.25\textwidth]{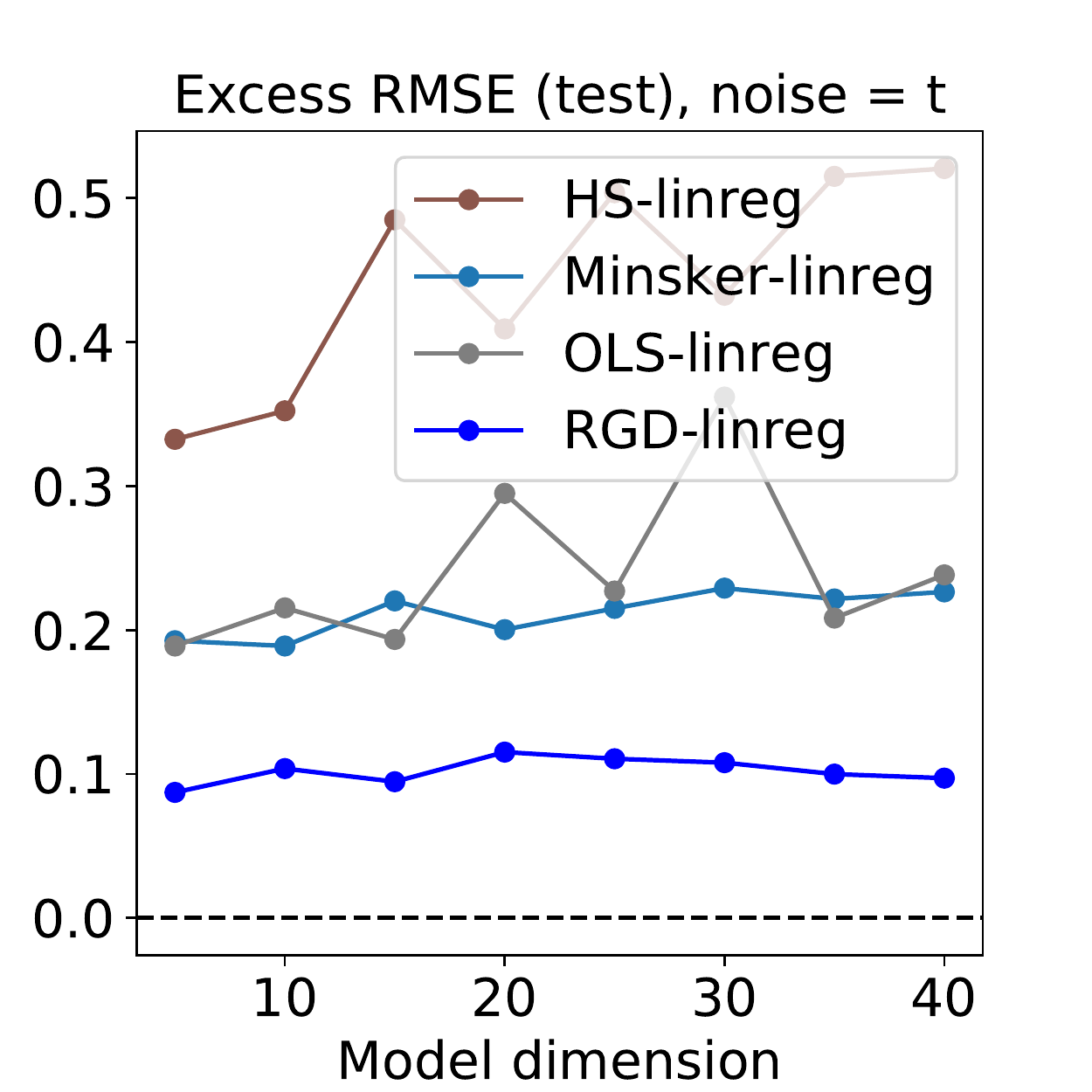}\,\includegraphics[width=0.25\textwidth]{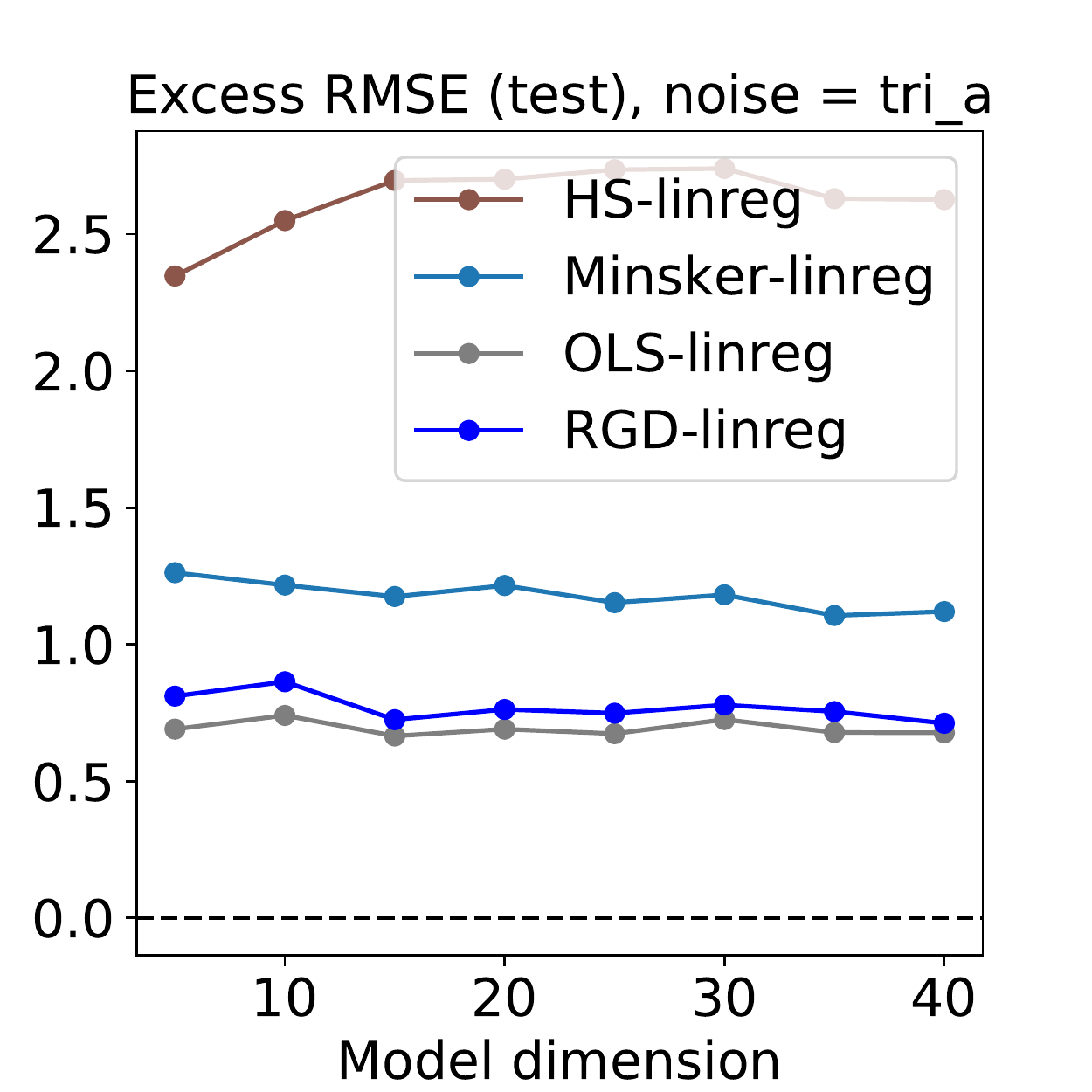}\,\includegraphics[width=0.25\textwidth]{linreg_overDim_risk_tri_s}\,\includegraphics[width=0.25\textwidth]{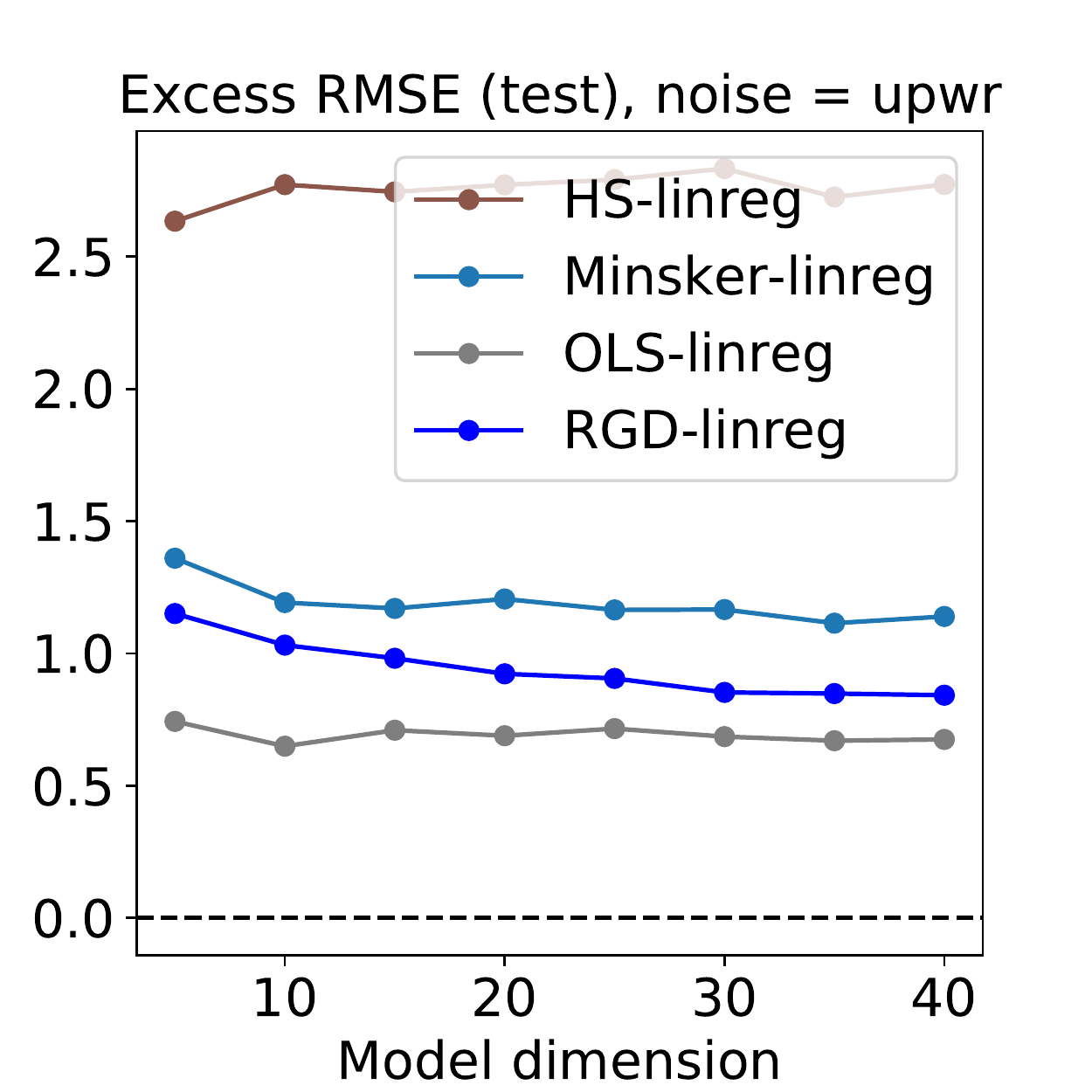}\\
\includegraphics[width=0.25\textwidth]{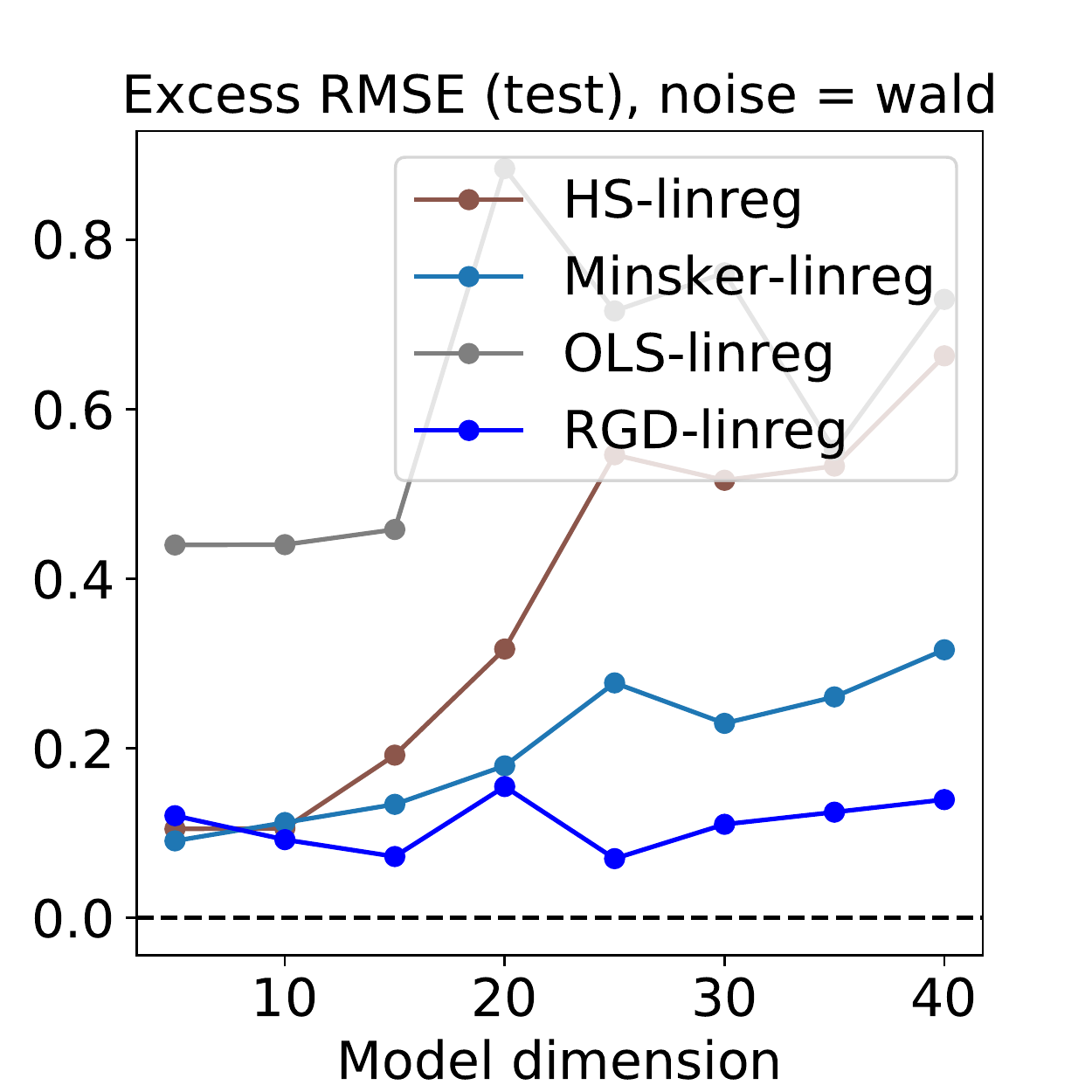}\,\includegraphics[width=0.25\textwidth]{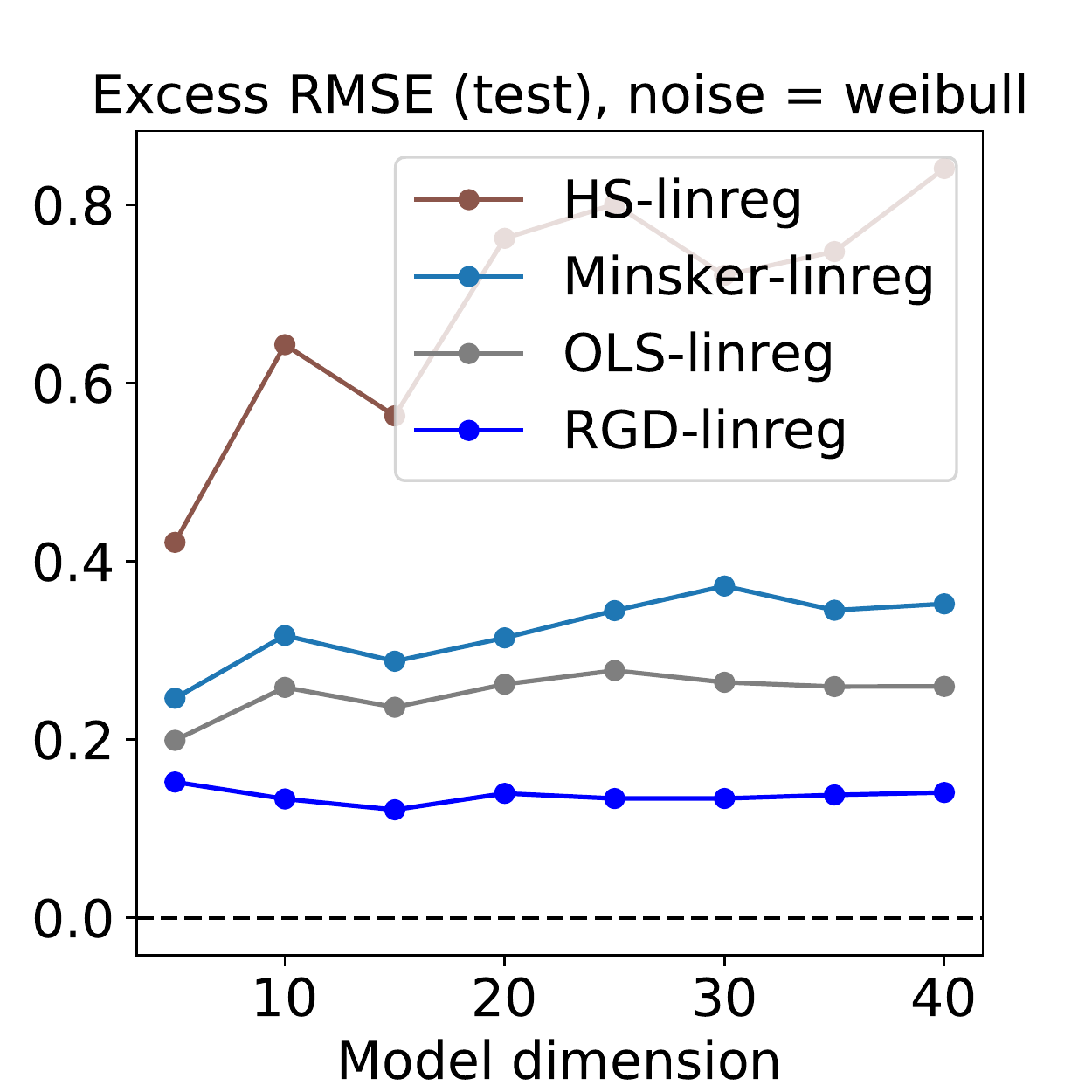}
\caption{Prediction error over dimensions $5 \leq d \leq 40$, with ratio $n/d = 6$ fixed, and noise level = $8$. Each plot corresponds to a distinct noise distribution.}
\label{fig:overDim_all_distros_2}
\end{figure}

\clearpage

{\small
\bibliographystyle{apalike}
\bibliography{refs_rgdmult.bib}
}

\end{document}